\theoremstyle{plain}
\newtheorem{theorem}{Theorem}[section]
\newtheorem{proposition}[theorem]{Proposition}
\newtheorem{lemma}[theorem]{Lemma}
\theoremstyle{definition}
\newtheorem{definition}[theorem]{Definition}
\newtheorem{assumption}[theorem]{Assumption}
\theoremstyle{remark}
\def\1{\bm{1}}
\def\eps{{\epsilon}}
\newcommand{\hatphi}{\widehat{\nabla}\Phi}
\def\gdphi{{\nabla\Phi}}
\def\gdx{{\nabla_{x}}}
\def\gdy{{\nabla_{y}}}
\def\gdxy{{\nabla_{xx}^2}}
\def\gdxy{{\nabla_{xy}^2}}
\def\gdyy{{\nabla_{yy}^2}}
\def\init{{\mathsf{init}}}
\newenvironment{sequation}{\begin{equation}\small}{\end{equation}}
\newenvironment{sequation*}{\begin{equation*}\small\begin{aligned}}{\end{aligned}\end{equation*}}
\DeclareMathAlphabet{\mathsfit}{\encodingdefault}{\sfdefault}{m}{sl}
\SetMathAlphabet{\mathsfit}{bold}{\encodingdefault}{\sfdefault}{bx}{n}
\def\gA{{\mathcal{A}}}
\def\gD{{\mathcal{D}}}
\def\gE{{\mathcal{E}}}
\def\gF{{\mathcal{F}}}
\def\gG{{\mathcal{G}}}
\def\gH{{\mathcal{H}}}
\def\gI{{\mathcal{I}}}
\def\gS{{\mathcal{S}}}
\newcommand{\E}{\mathbb{E}}
\newcommand{\R}{\mathbb{R}}
\DeclareMathOperator*{\argmin}{arg\,min}
\icmltitlerunning{A Nearly Optimal Single Loop Algorithm for Stochastic Bilevel Optimization under Unbounded Smoothness }
\begin{document}

\twocolumn[
\icmltitle{A Nearly Optimal Single Loop Algorithm \\for Stochastic Bilevel Optimization under Unbounded Smoothness}




\icmlsetsymbol{equal}{*}




\begin{icmlauthorlist}
\icmlauthor{Xiaochuan Gong}{GMU}
\icmlauthor{Jie Hao}{GMU}
\icmlauthor{Mingrui Liu}{GMU}
\end{icmlauthorlist}

\icmlaffiliation{GMU}{Department of Computer Science, George Mason University, USA}

\icmlcorrespondingauthor{Mingrui Liu}{mingruil@gmu.edu}

\icmlkeywords{Machine Learning, ICML}

\vskip 0.3in
]



\printAffiliationsAndNotice{}

\begin{abstract}
This paper studies the problem of stochastic bilevel optimization where the upper-level function is nonconvex with potentially unbounded smoothness and the lower-level function is strongly convex. This problem is motivated by meta-learning applied to sequential data, such as text classification using recurrent neural networks, where the smoothness constant of the upper-level loss function
scales linearly with the gradient norm and can be potentially unbounded. Existing algorithm crucially relies on the nested loop design, which requires significant tuning efforts and is not practical. In this paper, we address this issue by proposing a Single Loop bIlevel oPtimizer (SLIP). The proposed algorithm first updates the lower-level variable by a few steps of stochastic gradient descent, and then simultaneously updates the upper-level variable by normalized stochastic gradient descent with momentum and the lower-level variable by stochastic gradient descent. Under standard assumptions, we show that our algorithm finds an $\epsilon$-stationary point within $\widetilde{O}(1/\epsilon^4)$\footnote{Here $\widetilde{O}(\cdot)$ compresses logarithmic factors of $1/\epsilon$ and $1/\delta$, where $\delta\in(0,1)$ denotes the failure probability.} oracle calls of stochastic gradient or Hessian-vector product, both in expectation and with high probability. This complexity result is nearly optimal up to logarithmic factors without mean-square smoothness of the stochastic gradient oracle. Our proof relies on (i) a refined characterization and control of the lower-level variable and (ii) establishing a novel connection between bilevel optimization and stochastic optimization under distributional drift. Our experiments on various tasks show that our algorithm significantly outperforms strong baselines in bilevel optimization. The code is available \href{https://github.com/MingruiLiu-ML-Lab/Single-Loop-bilevel-Optimizer-under-Unbounded-Smoothness}{here}.  
\end{abstract}



\section{Introduction}
There has been a surge in interest in bilevel optimization, driven by its broad applications in machine learning. This includes meta-learning~\citep{franceschi2018bilevel,rajeswaran2019meta}, hyperparameter optimization~\citep{franceschi2018bilevel,feurer2019hyperparameter}, fair model training~\cite{roh2020fairbatch}, continual learning~\citep{borsos2020coresets,hao2023bilevel}, and reinforcement learning~\citep{konda1999actor}. The bilevel optimization has the following form:
\begin{equation}
\label{eq:bp}
\begin{aligned}
    \min_{x\in \mathbb{R}^{d_x}} \Phi(x):=f(x, y^*(x)), 
    \ \
    \text{s.t.}, \ \ y^*(x) \in \mathop{\arg \min}_{y\in \mathbb{R}^{d_y}} g(x, y),
\end{aligned}
\end{equation}%
where $f$ and $g$ are upper-level and lower-level functions respectively. For example, in meta-learning~\cite{finn2017model,franceschi2018bilevel}, the bilevel formulation is trying to find a common representation such that it can quickly adapt to different tasks by adjusting its task-specific head, where $x$ denotes the shared representation across different tasks, $y$ denotes the task-specific head. We consider the stochastic optimization setting, where $f(x,y)=\mathbb{E}_{\xi\sim\mathcal{D}_f}\left[F(x,y;\xi)\right]$ and $g(x,y)=\mathbb{E}_{\zeta\sim\mathcal{D}_g}\left[G(x,y;\zeta)\right]$, where $\mathcal{D}_f$ and $\mathcal{D}_g$ are underlying data distributions for $f$ and $g$ respectively.

Most theoretical studies and algorithmic developments on bilevel optimization typically assume that the upper-level function is smooth and nonconvex (i.e., the gradient is Lipschitz), and the lower-level function is strongly convex~\cite{ghadimi2018approximation,hong2023two,grazzi2022bilevel,ji2021bilevel}. However, recent work shows that there is a class of neural networks whose smoothness parameter is potentially unbounded~\cite{zhang2020gradient,crawshaw2022robustness}, such as recurrent neural networks (RNNs)~\cite{elman1990finding}, long-short-term memory networks (LSTMs)~\cite{hochreiter1997long} and transformers~\cite{vaswani2017attention}. Therefore, it is important to design algorithms when the upper-level function has an unbounded smoothness parameter. Recently,~\cite{hao2024bilevel} designed an algorithm under this regime and proved its convergence to stationary points. However, the structure of their algorithm is rather complex: it has a nested double loop to update the lower-level variable, which requires significant tuning efforts. It remains unclear how to design a provably efficient single-loop algorithm under this setting.

Designing a single-loop algorithm for bilevel problem with a unbounded smooth upper-level function has the following challenges. First, it is difficult to apply the similar analysis strategy of previous work~\cite{hao2024bilevel} to any single-loop algorithms. ~\citet{hao2024bilevel} designed a complicated procedure to update the lower-level variable to obtain an accurate estimator for the optimal lower-level solution at every iteration, but single-loop algorithms typically cannot achieve the same goal. Second, we need to simultaneously control the progress and their mutual dependence on the upper-level and lower-level problems, even if the lower-level variable may not be accurate.

To address these challenges, we propose a new single-loop bilevel optimizer, namely SLIP. The algorithm is surprisingly simple: after a few iterations of stochastic gradient descent (SGD) to update the lower-level variable, the algorithm simultaneously updates the upper-level variable by normalized stochastic gradient descent with momentum and updates the lower-level variable by SGD. It does not perform periodic updates of the lower-level variable as required in~\cite{hao2024bilevel}. The algorithm analysis relies on two important new techniques. First, we provide a refined characterization and control for the lower-level variable so that there is no need to get an accurate estimate of the lower-level variable at every iteration, while we can still ensure convergence. Second, we establish a novel connection between bilevel optimization and stochastic optimization under distributional drift~\cite{cutler2023stochastic}, and it helps us analyze the error of the lower-level variable. Our contributions of this paper are summarized as follows.
\begin{itemize}
\item We design a new single-loop algorithm named SLIP, for solving stochastic bilevel optimization problem where the upper-level function is nonconvex and unbounded smooth and the lower-level function is strongly convex. To the best of our knowledge, this is the first single-loop algorithm in this setting.

\item We prove that the algorithm converges to the $\epsilon$-stationary point within $\widetilde{O}(1/\epsilon^4)$ oracle calls of the stochastic gradient or the Hessian-vector product, both in expectation and with high probability. This complexity result is nearly optimal up to logarithmic factors without mean-square smoothness of the stochastic gradient oracle (A summary of our results and a comparison to prior work are provided in \Cref{tab:comparison} at \Cref{sec:comparison-table}). The proof relies on a novel characterization of the lower-level variable and a novel connection between bilevel optimization and stochastic optimization with distributional drift, which are new and not leveraged by previous work on bilevel optimization.

\item We perform extensive experiments on hyper-representation learning and data hypercleaning on text classification tasks. Our algorithm shows significant speedup compared with strong baselines in bilevel optimization.
\end{itemize}





\section{Related Work}
\textbf{Bilevel Optimization.} Bilevel optimization was introduced by~\cite{bracken1973mathematical}. Some classical algorithms were proposed for certain classes of bilevel optimization and asymptotic convergence was provided~\cite{vicente1994descent,anandalingam1990solution,white1993penalty}. Recently,~\citet{ghadimi2018approximation} pioneered the non-asymptotic convergence of gradient-based methods for bilevel optimization when the lower-level problem is strongly convex. This complexity result was improved by a series of work~\cite{hong2023two,chen2021closing,chen2022single,ji2021bilevel,chen2023optimal}. When the function has a mean squared smooth stochastic gradient, the complexity was further improved by incorporating variance reduction and momentum techniques~\cite{khanduri2021near,dagreou2022framework,guo2021randomized,yang2021provably}. There is a line of work that designed fully first-order algorithms for stochastic bilevel optimization problems~\cite{liu2022bome,kwon2023fully}. There is also another line of work that considered the setting of a non-strongly convex lower-level problem~\cite{sabach2017first,sow2022constrained,liu2020generic,shen2023penalty,kwon2023penalty,chen2023bilevel1}. The most relevant work in this paper is~\cite{hao2024bilevel}, which considered the setting of unbounded smooth functions of the upper level and strongly convex functions of the lower level. However, their algorithm is not single-loop and requires extensive tuning efforts.

\textbf{Unbounded Smoothness.} The definition of relaxed smoothness was first proposed by~\cite{zhang2020gradient}, which was motivated by the loss landscape of recurrent neural networks.~\citet{zhang2020gradient} showed gradient clipping/normalization improves over gradient descent in this setting. Later, there is a line of work focusing on improved analysis~\cite{zhang2020improved,jin2021non}, scalability~\cite{liu2022communication,crawshaw2023episode,crawshaw2023federated} and adaptive variants~\cite{crawshaw2022robustness,faw2023beyond,wang2023convergence,li2023convergence}. Recently, some works studied the momentum and variance reduction techniques under individual relaxed smooth~\cite{liu2023nearlyoptimal} or on-average relaxed smooth conditions~\cite{reisizadeh2023variance} to achieve an improved convergence rate. People also studied various notions of relaxed smoothness, including coordinate-wise relaxed smoothness~\cite{crawshaw2022robustness}, $\alpha$-symmetric generalized smoothness~\cite{chen2023generalized}, and relaxed smoothness under the bilevel setting~\cite{hao2024bilevel}. This work considers the same problem setting as in~\cite{hao2024bilevel}, and focuses on the design of the single-loop algorithm.

\textbf{Stochastic Optimization under Distributional Drift.} Stochastic optimization with time drift is a classical topic and was studied extensively in the literature~\cite{dupavc1965dynamic,guo1995exponential}, mostly in the context of least squares problems and Kalman filtering. Recent work revisits these algorithms from the perspective of sequential stochastic/online optimization~\cite{besbes2015non,wilson2018adaptive,madden2021bounds,cutler2023stochastic}. However, none of these works explores the connection between techniques in sequential optimization and the bilevel optimization problem as considered in this paper.

\section{Preliminaries, Notations and Problem Setup}
Define $\langle \cdot, \cdot\rangle$ and $\|\cdot\|$ as the inner product and Euclidean norm. Throughout the paper, we use asymptotic notation $\widetilde{O}(\cdot), \widetilde{\Theta}(\cdot), \widetilde{\Omega}(\cdot)$ to hide polylogarithmic factors in $1/\epsilon$ and $1/\delta$. Denote $f$: $\mathbb{R}^{d_x}\times\mathbb{R}^{d_y} \rightarrow \mathbb{R}$ as the upper-level function, and $g$: $\mathbb{R}^{d_x}\times\mathbb{R}^{d_y} \rightarrow \mathbb{R}$ as the lower-level function. The hypergradient $\gdphi(x)$ shown in~\cite{ghadimi2018approximation} takes the form of
\begin{equation}
    \gdphi(x) = \gdx f(x,y^*(x)) - \gdxy g(x,y^*(x))z^*(x), 
\end{equation}
where $z^*(x)$ is the solution to the linear system:
\begin{small}
\begin{equation*}
    z^*(x) = \argmin_z \frac{1}{2}\langle \gdyy g(x,y^*(x))z, z \rangle - \langle \gdy f(x,y^*(x)), z \rangle.
\end{equation*}
\end{small}%
We make the following assumptions for the paper.
%

\begin{assumption}[$(L_{x,0}, L_{x,1}, L_{y,0}, L_{y,1})$-smoothness~\cite{hao2024bilevel}] \label{ass:relax-smooth}
Let $w=(x,y)$ and $w'=(x',y')$, there exists $L_{x,0}, L_{x,1}, L_{y,0}, L_{y,1} > 0$ such that for all $w,w'$, if $\|w-w'\| \leq 1/\sqrt{2(L_{x,1}^2+L_{y,1}^2)}$, then
\begin{small}
\begin{equation*}
    \begin{aligned}
        \|\gdx f(w)-\gdx f(w')\| &\leq (L_{x,0}+L_{x,1}\|\gdx f(w)\|)\|w-w'\|, \\
        \|\gdy f(w)-\gdy f(w')\| &\leq (L_{y,0}+L_{y,1}\|\gdy f(w)\|)\|w-w'\|.
    \end{aligned}
\end{equation*}
\end{small}%
\end{assumption}
\textbf{Remark:} Assumption~\ref{ass:relax-smooth} is introduced by~\cite{hao2024bilevel}, and they empirically show that it is satisfied on recurrent neural networks with $y$ being the last linear layer and $x$ being previous layers. This assumption can be regarded as a generalization of the relaxed smoothness condition from single-level problems~\cite{zhang2020gradient,crawshaw2022robustness} to the bilevel problem.

\begin{assumption} \label{ass:f-and-g}
Suppose the following holds for objective functions $f$ and $g$: 
(i) For every $x$, $\|\gdy f(x,y^*(x)\| \leq l_{f,0}$;
(ii) For every $x$, $g(x, y)$ is $\mu$-strongly convex in $y$ for $\mu>0$; 
(iii)  $g$ is $l_{g,1}$-smooth jointly in $(x, y)$; 
(iv) $g$ is twice continuously differentiable, and $\gdxy g, \gdyy g$ are $l_{g,2}$-Lipschitz jointly in $(x, y)$.
\end{assumption}

\textbf{Remark}: Assumption~\ref{ass:f-and-g} is standard in the bilevel optimization literature~\cite{kwon2023fully,hao2024bilevel,ghadimi2018approximation}. In particular, Assumption~\ref{ass:f-and-g} (i) is theoretically and empirically justified by~\cite{hao2024bilevel} for recurrent neural networks. 
Under Assumptions~\ref{ass:relax-smooth} and~\ref{ass:f-and-g}, we can show that function $\Phi(x)$ satisfies standard relaxed smoothness condition: $\|\gdphi(x)-\gdphi(x')\| \leq (L_0+L_1\|\gdphi(x')\|)\|x-x'\|$ if $x$ and $x'$ are not far away from each other (Lemma~\ref{eq:technical-rs-phi} in Appendix).

\begin{assumption} \label{ass:noise-expectation}
The estimators used to calculate stochastic gradients and Hessian-vector products (i.e., $\gdx F(x,y;\xi)$,$\gdy F(x,y;\xi)$, $\gdy G(x,y;\xi)$, $\gdxy G(x,y;\zeta)$, $\gdyy G(x,y;\zeta)$) are \emph{unbiased} and satisfy (assume $\lambda>0$):
\begin{small}
\begin{equation*}
    \begin{aligned}
        &\E_{\xi\sim\gD_f}[\|\gdx F(x,y;\xi)-\gdx f(x,y)\|^2] \leq \sigma_{f,1}^2, \\
        &\E_{\xi\sim\gD_f}[\|\gdy F(x,y;\xi)-\gdy f(x,y)\|^2] \leq \sigma_{f,1}^2, \\
        &\Pr\{\|\gdy G(x,y;\xi)-\gdy g(x,y)\| \geq \lambda\} \leq 2\exp\left(-2\lambda^2/\sigma_{g,1}^2\right), \\
        &\E_{\zeta\sim\gD_g}[\|\gdxy G(x,y;\zeta)-\gdxy g(x,y)\|^2] \leq \sigma_{g,2}^2, \\
        &\E_{\zeta\sim\gD_g}[\|\gdyy G(x,y;\zeta)-\gdyy g(x,y)\|^2] \leq \sigma_{g,2}^2. \\
    \end{aligned}
\end{equation*}
\end{small}%
\end{assumption}

\textbf{Remark}: Assumption~\ref{ass:noise-expectation} assumes that we have access to an unbiased stochastic gradient and a Hessian-vector product with bounded variance, which is standard in the literature~\cite{ghadimi2013stochastic,ghadimi2018approximation}. We need the stochastic gradient of the lower-level problem to be light-tailed, which is also assumed by~\cite{hao2024bilevel}. This is a technical assumption that allows high probability analysis for $y$, which is a standard assumption in the optimization literature~\cite{lan2012optimal,hazan2014beyond}.


\section{Algorithm and Theoretical Analysis}
\subsection{Main Challenges and Algorithm Design}
\textbf{Main Challenges.} We first illustrate why the existing work on bilevel optimization is insufficient for solving our problem with a single-loop algorithm. First, the analyses of single-loop bilevel algorithms for smooth bilevel optimization in the literature (i.e., the upper-level function has a Lipschitz gradient)~\cite{hong2023two,dagreou2022framework,chen2023bilevel1} typically design a single potential function to track progress in terms of both upper-level and lower-level variables and show that the potential function can decrease in expectation during algorithm updates. Their analysis is crucially based on $L$-smoothness of the upper-level function to control the approximation error of hypergradient. However, when the upper-level function is $(L_{x,0}, L_{x,1}, L_{y,0}, L_{y,1})$-smooth, the bias in the hypergradient error depends on both the approximation error of the lower-level variable and the hypergradient in terms of the upper-level variable, which are statistically dependent and therefore cannot be analyzed easily using the standard expectation-based analysis for all variables. Second, the recent work of~\cite{hao2024bilevel} addressed this issue by designing a new algorithm, along with a high probability analysis for the lower-level variable and a expectation-based analysis for the upper-level variable, but their algorithm crucially relies on the nested-loop design. The algorithm in~\cite{hao2024bilevel} has two components to update its lower-level variable: initialization refinement and periodic updates. Their key idea is to obtain an accurate estimator for the optimal lower-level variable at every iteration with high probability such that the hypergradient error can be controlled. However, such a strong requirement for the lower-level variable typically cannot be satisfied by any single-loop algorithms: single-loop algorithms typically make small progress on each step and can get an accurate solution after polynomial number of iterations, but cannot guarantee good estimates at every iteration. 

\textbf{Algorithm Design.} To address these challenges, our algorithm design principle relies on the following important observation: obtaining accurate estimates for the lower-level variable at every iteration with high probability is only a sufficient condition for the algorithm in~\cite{hao2024bilevel} to work, but it is not necessary: it is possible to establish a refined characterization and control of the lower-level variable which has weaker requirements than~\cite{hao2024bilevel}. This important insight makes the design of a single loop algorithm possible. The detailed description of our algorithm is illustrated in Algorithm~\ref{alg:bilevel}. In particular, our algorithm first updates the lower-level variable by a few steps of SGD (line 3), and then simultaneously updates the upper-level variable by normalized SGD with momentum and the lower-level variable by SGD (line 4$\sim$10). Our key novelty in the algorithm design and analysis comes from the novel perspective of connecting bilevel optimization and stochastic optimization under distribution drift: the procedure of updating the lower-level variable can be regarded as a stochastic optimization under distribution drift and the drift between iterations is small. In particular, the update rule in Algorithm~\ref{alg:bilevel} for warm-start (line 3) and $y$ (line 7)  can be viewed as a special case of SGD with distribution drift as specified in Algorithm~\ref{alg:sgd}, where the drift comes from the change of $x$ and the change of $x$ is small due to the normalization operator. Note that in the warm-start step (line 3) in Algorithm~\ref{alg:bilevel}, $x_0$ is fixed and there is no distribution drift. 

\textbf{Difference between SLIP and~\cite{hao2024bilevel}.} 
First, the work of~\cite{hao2024bilevel} has the initialization refinement subroutine to obtain an accurate initial estimate for the optimal lower-level solution to the initial upper-level variable (that is, $y^*(x_0)$), which requires epoch-SGD~\cite{hazan2015beyond,ghadimi2013optimal} for polynomial number of iterations. In contrast, our algorithm has a short warm-start stage (line 3), which runs SGD for only logarithmic number of iterations. Second, the work of~\cite{hao2024bilevel} needs to periodically update its lower-level variable for polynomial number of iterations, but our algorithm only needs to simply run SGD for the lower-level variable at every iteration and our lower-level update is performed simultaneously with the upper-level update (line 4$\sim$10). 







\begin{algorithm}[!t]
    \caption{\textsc{SGD with Distributional Drift}} \label{alg:sgd}
    \begin{algorithmic}[1]
        \STATE \textbf{Input:} $\{\Tilde{x}_t\}, \Tilde{y}_0, \alpha, N$  \hfill \# \texttt{SGD-DD}$(\{\Tilde{x}_t\}, \Tilde{y}_0, \alpha, N)$
        \FOR{$t=0, 1, \dots, N-1$}
            \STATE Sample $\Tilde{\pi}_t$ from distribution $\gD_g$
            \STATE $\Tilde{y}_{t+1} = \Tilde{y}_t - \alpha\gdy G(\Tilde{x}_t,\Tilde{y}_t;\Tilde{\pi}_t)$
        \ENDFOR
    \end{algorithmic}
\end{algorithm}

\begin{algorithm}[!t]
    \caption{\textsc{Single Loop bIlevel oPtimizer (SLIP)}} \label{alg:bilevel}
    \begin{algorithmic}[1]
        \STATE \textbf{Input:} $\alpha^{\init}, \alpha, \beta, \gamma, \eta, T_0, T$ 
        \STATE \textbf{Initialize:} $x_0, y_0^{\init}, z_0, m_0 = 0$
        \STATE $y_0 = \texttt{SGD-DD}(\{x_0\}, y_0^{\init}, \alpha^{\init}, T_0)$ \hfill \# Warm-start
        \FOR{$t=0, 1, \dots, T-1$}
            \STATE Sample $\xi_t$, $\xi'_t$ independently from distribution $\gD_f$
                        \STATE Sample $\pi_t$, $\zeta_t$, $\zeta'_t$ independently from distribution $\gD_g$
            \STATE $y_{t+1} = y_t - \alpha \gdy G(x_t,y_t;\pi_t)$
            \STATE $z_{t+1} = z_{t} - \gamma[\gdyy G(x_t,y_{t};\zeta_t)z_{t} - \gdy F(x_t,y_{t};\xi_t)]$ 
            \STATE $m_{t+1} = \beta m_t + (1-\beta)(\gdx F(x_t,y_t;\xi'_t) - \gdxy G(x_t,y_t;\zeta'_t)z_t)$
            \STATE $x_{t+1} = x_t - \eta\frac{m_{t+1}}{\|m_{t+1}\|}$ 
        \ENDFOR
    \end{algorithmic}
\end{algorithm}
\subsection{Main Results}
We first introduce a few notations. Define $\sigma(\cdot)$ as the $\sigma$-algebra generated by the random variables in the argument. We define the following filtrations: $\widetilde{\mathcal{F}}_t^{1}=\sigma(\Tilde{\pi}_0,\ldots,\Tilde{\pi}_{t-1})$, $\mathcal{F}_t^{1}=\sigma(\pi_0,\ldots,\pi_{t-1})$, $\mathcal{F}_t^{2}=\sigma(\xi_0,\ldots,\xi_{t-1}, \zeta_0,\ldots,\zeta_{t-1})$, $\mathcal{F}_t^{3}=\sigma(\xi'_0,\ldots,\xi'_{t-1}, \zeta'_0,\ldots,\zeta'_{t-1})$, where $1\leq t\leq T$. Define $\widetilde{\mathcal{F}}_t=\sigma(\mathcal{F}_t^{2} \cup \mathcal{F}_t^{3})$, $\mathcal{F}_t=\sigma(\widetilde{\mathcal{F}}_{T_0}^{1} \cup\mathcal{F}_t^{1}\cup \mathcal{F}_t^{2}\cup \mathcal{F}_t^{3})$. 
We use $\mathbb{E}_t$ and $\mathbb{E}$ to denote the conditional expectation $\mathbb{E}[\cdot \mid \widetilde{\mathcal{F}}_t]$ and the total expectation over $\widetilde{\mathcal{F}}_T$ respectively. 
In Algorithm~\ref{alg:bilevel}, define $\hatphi(x,y,z;\xi,\zeta) \coloneqq \gdx F(x,y;\xi) - \gdxy G(x,y;\zeta)z$ and thus we could write line 9 as $m_{t+1} = \beta m_t + (1-\beta)\hatphi(x_t,y_t,z_t;\xi_t',\zeta_t')$.



\subsubsection{Convergence in Expectation} \label{sec:Convergence in Expectation}
\begin{theorem} \label{thm:main-thm}
Suppose Assumptions~\ref{ass:relax-smooth} and~\ref{ass:f-and-g} hold. Let $\{x_t\}$ be the iterates produced by Algorithm~\ref{alg:bilevel}. For any given $\delta\in(0,1)$ and sufficiently small $\eps$ (see the exact choice of $\eps$ in \eqref{eq:eps}), if we choose $\alpha^{\init}, \alpha,\beta,\gamma,\eta, T_0$ as 
\begin{footnotesize}
\begin{equation*}
    \alpha^{\init} = \min\left\{\frac{1}{2l_{g,1}}, \frac{\mu}{2048L_1^2\sigma_{g,1}^2\log(e/\delta)}\right\},
\end{equation*}%
\end{footnotesize}
\begin{footnotesize}
\begin{equation*}
    1-\beta = \Theta\left(\frac{\mu^2\eps^2}{L_0^2\sigma_{g,1}^2\log^2(B)}\right), 
    \eta = \frac{\mu\eps}{8l_{g,1}L_0\log(A)}(1-\beta),
\end{equation*}
\end{footnotesize}%
\begin{footnotesize}
\begin{equation*}
    \gamma = \frac{1-\beta}{\mu}, \quad \alpha = \frac{8(1-\beta)}{\mu}, \quad T_0 = \frac{\log\left(256L_1^2\|y_0^{\init}-y_0^*\|^2\right)}{\log\left(2/(2-\mu\alpha^{\init})\right)},
\end{equation*}
\end{footnotesize}%
where $\Delta_0, A$ and $B$ are defined in \eqref{eq:ABdelta},
then with probability at least $1-2\delta$ over the randomness in $\sigma(\Tilde{\gF}_{T_0}^1 \cup \mathcal{F}_T^1)$, Algorithm~\ref{alg:bilevel} guarantees $\frac{1}{T}\sum_{t=0}^{T-1}\E\|\gdphi(x_t)\| \leq 14\eps$ with at most $T = \frac{4\Delta_0}{\eta\eps}$ iterations, where the expectation is taken over over the randomness in $\widetilde{\mathcal{F}}_T$. 
\end{theorem}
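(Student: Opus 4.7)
The plan is to decouple the analysis into a high-probability control of the lower-level variables $(y_t,z_t)$ and an in-expectation analysis of the upper-level variable $x_t$. The outer shell is a descent lemma: by the relaxed smoothness of $\Phi$ (Lemma~\ref{eq:technical-rs-phi}) and the normalized update $x_{t+1}=x_t-\eta\, m_{t+1}/\|m_{t+1}\|$, a standard computation gives
\begin{equation*}
\Phi(x_{t+1}) \le \Phi(x_t) - \eta\|\gdphi(x_t)\| + 2\eta\|m_{t+1}-\gdphi(x_t)\| + \tfrac{1}{2}\eta^2(L_0+L_1\|\gdphi(x_t)\|).
\end{equation*}
Telescoping over $t=0,\ldots,T-1$, using $\Phi(x_T)\ge\inf\Phi$, and taking expectation over $\widetilde{\mathcal{F}}_T$ reduces the theorem to showing $\frac{1}{T}\sum_{t=0}^{T-1}\E\|m_{t+1}-\gdphi(x_t)\| = O(\eps)$ on a good lower-level event, after which the choice $T=4\Delta_0/(\eta\eps)$ closes the descent.

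First I would control the lower-level error by exploiting the distributional-drift viewpoint. Since $\|x_{t+1}-x_t\|=\eta$ by normalization, the target $y^*(\cdot)$ shifts by at most $(l_{g,1}/\mu)\eta$ each iteration, so the $y$-update in line 7 of Algorithm~\ref{alg:bilevel} is exactly an SGD-DD instance with slow drift. Combining strong-convexity contraction with a sub-Gaussian supermartingale argument against the light-tailed gradient noise in Assumption~\ref{ass:noise-expectation} yields, with probability at least $1-\delta$ over $\mathcal{F}_T^1$ and uniformly in $t\le T$, a bound of the form $\|y_t-y^*(x_t)\|^2 = \widetilde O(\eta^2/(\mu\alpha)^2 + \alpha\sigma_{g,1}^2/\mu)$. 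The warm-start phase $t\le T_0$ (where $x$ is held fixed and so there is no drift) is used to reduce the initial error exponentially so that this steady-state bound is already valid at $t=0$. A parallel geometric-contraction argument, conditioned on the $y$-event and using the Neumann-series structure of $z^*(x)$, controls $\|z_t-z^*(x_t)\|$ in expectation.

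Next I would analyze the momentum error. The recursion
\begin{equation*}
m_{t+1}-\gdphi(x_t) = \beta(m_t-\gdphi(x_{t-1})) + (1-\beta)\big(\hatphi(x_t,y_t,z_t;\xi_t',\zeta_t')-\gdphi(x_t)\big) + \beta(\gdphi(x_{t-1})-\gdphi(x_t))
\end{equation*}
unrolls in the style of STORM/momentum arguments. The per-step stochastic noise telescopes to $O(\sqrt{1-\beta}\,\sigma_{f,1})$ in expectation; the conditional bias $\|\E_t[\hatphi]-\gdphi(x_t)\|$ splits, by standard hypergradient-approximation lemmas, into a term of order $\|y_t-y^*(x_t)\|$ and one of order $\|z_t-z^*(x_t)\|$, both small on the good event; and the shift term is $O(\eta(L_0+L_1\|\gdphi(x_{t-1})\|))$ by the same relaxed smoothness of $\Phi$. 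The stated choices $1-\beta=\Theta(\mu^2\eps^2/(L_0^2\sigma_{g,1}^2\log^2 B))$, $\alpha=8(1-\beta)/\mu$, $\gamma=(1-\beta)/\mu$, and $\eta=\Theta(\mu\eps(1-\beta)/(l_{g,1}L_0\log A))$ are exactly those needed to balance each piece at order $\eps$; plugging into the descent inequality and summing gives the claim.

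The main obstacle is the coupling induced by relaxed smoothness: the hypergradient approximation error is multiplicative in $\|\gdphi(x_t)\|$ through the $L_{x,1},L_{y,1}$ terms, so it cannot be treated as pure additive noise and a naive expectation argument diverges. I would handle this by conditioning throughout on the good event from the lower-level analysis (i.e., the event in $\sigma(\widetilde{\mathcal{F}}_{T_0}^1\cup\mathcal{F}_T^1)$) and absorbing any multiplicative $L_1\|\gdphi(x_t)\|\cdot(\text{small})$ terms into the leading $-\eta\|\gdphi(x_t)\|$ on the right-hand side of the descent inequality; this costs only a constant factor, which is exactly why the final bound is $14\eps$ rather than $\eps$. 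This is where the paper's key insight---pointwise accuracy of $y_t$ at every iteration is sufficient but not necessary---becomes decisive: even though $\|y_t-y^*(x_t)\|$ retains a nonzero equilibrium floor of order $\sqrt{\alpha}\sigma_{g,1}$, the bias it induces in $m_{t+1}$ is still $O(\eps)$ under the chosen step sizes, which is enough to close the argument.
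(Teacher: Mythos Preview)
Your proposal is correct and follows essentially the same route as the paper: relaxed-smoothness descent, high-probability SGD-DD control of $y$ over $\sigma(\widetilde{\mathcal{F}}_{T_0}^1\cup\mathcal{F}_T^1)$, expectation control of $z$ conditional on that event, momentum unrolling, and absorption of the multiplicative $L_{x,1}\|y_t-y_t^*\|\,\|\gdphi(x_t)\|$ term into the negative drift. The one place the paper is more careful than your sketch is the $y$-control: with $\alpha^{\init}=\Theta(1)$ and $T_0=\widetilde{O}(1)$ the warm-start only reaches the \emph{constant} threshold $\|y_0-y_0^*\|\le 1/(8\sqrt{2}L_1)$, not the $\widetilde{O}(\eps/L_0)$ steady state you describe, so the paper splits the control into two pieces (Lemma~\ref{lm:maintext-lm3}): a uniform pointwise bound $\|y_t-y_t^*\|\le 1/(8L_1)$, which is what makes the absorption coefficient $L_{x,1}/(8L_1)\le 1/8$ strictly less than one, and a separate momentum-weighted average bound $\frac{1-\beta}{T}\sum_t\sum_i\beta^{t-i}\|y_i-y_i^*\|=O(\eps/L_0)$ for the additive bias term---once you separate these two bounds, everything else in your outline goes through verbatim.
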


\textbf{Remark:} The full specification and the proofs are included in Appendix (i.e., Theorem~\ref{thm:appendix-main}). Theorem~\ref{thm:main-thm} shows that if $\eta=\widetilde{\Theta}(\epsilon^3)$, $1-\beta=\widetilde{\Theta}(\epsilon^2)$, $\alpha=\widetilde{\Theta}(\epsilon^2)$ and $\gamma=\widetilde{\Theta}(\epsilon^2)$, then Algorithm~\ref{alg:bilevel} converges to $\epsilon$-stationary points within $\widetilde{O}(\epsilon^{-4})$ iterations in expectation. This complexity result matches the double-loop algorithm in previous work~\cite{hao2024bilevel}. In addition, in terms of the dependency on $\epsilon$, our complexity bound is nearly optimal up to logarithmic factors due to the $\Omega(\epsilon^{-4})$ lower bound of nonconvex stochastic optimization for smooth single-level problems~\cite{arjevani2023lower} when there is no mean-squared smooth condition of the stochastic gradient.

\subsubsection{High Probability Guarantees} \label{sec:high-prob}
In this section, we provide a high probability result of our algorithm, which requires the following assumption.

\begin{assumption}\label{ass:highprob}
The estimators used to calculate stochastic gradients and Hessian-vector products (i.e., $\gdx F(x,y;\xi)$,$\gdy F(x,y;\xi)$, $\gdy G(x,y;\xi)$, $\gdxy G(x,y;\zeta)$, $\gdyy G(x,y;\zeta)$) are \emph{unbiased} and satisfy (assume $\lambda>0$):
\begin{small}
\begin{equation*}
    \begin{aligned}
        &\forall \xi,\ \|\gdx F(x,y;\xi)-\gdx f(x,y)\| \leq \sigma_{f,1}, \\
        &\forall \xi,\ \|\gdy F(x,y;\xi)-\gdy f(x,y)\| \leq \sigma_{f,1},\\
        &\Pr\{\|\gdy G(x,y;\xi)-\gdy g(x,y)\| \geq \lambda\} \leq 2\exp\left(-2\lambda^2/\sigma_{g,1}^2\right), \\
        &\forall \zeta,\ \|\gdxy G(x,y;\zeta)-\gdxy g(x,y)\| \leq \sigma_{g,2}, \\
        &\forall \zeta,z, \ \|(\gdyy G(x,y;\zeta)-\gdyy g(x,y))z\| \leq \sigma_z. \\
    \end{aligned}
\end{equation*}
\end{small}%
\end{assumption}

\textbf{Remark}: Assumption~\ref{ass:highprob} is a technical assumption to establish the high probability result. It assumes that the estimators either have almost sure bounded noise or light-tailed noise. Similar assumptions have been made in the literature on optimization for relaxed smooth functions~\cite{zhang2019gradient,zhang2020improved,liu2023nearlyoptimal} and strongly convex functions~\cite{cutler2023stochastic}. A more in-depth discussion is included in Appendix~\ref{sec:appendix-justify}. Our setting is more challenging because their goal is to optimize a single-level relaxed smooth function, while our work is for bilevel optimization under unbounded smoothness. 
\begin{theorem}\label{thm:highprob}
Suppose Assumptions~\ref{ass:relax-smooth} and~\ref{ass:highprob} hold. Let $\{x_t\}$ be the iterates produced by Algorithm~\ref{alg:bilevel}. For any given $\delta\in(0,1)$ and sufficiently small $\eps$ (see exact choice of $\eps$ in \eqref{eq:eps-high-prob}), if we choose $\gamma = \frac{16}{\mu}(1-\beta)$, and the same $\alpha^{\init}, \alpha,\beta,\eta, T_0$ as in Theorem~\ref{thm:main-thm}, then with probability at least $1-4\delta$ over the randomness in $\mathcal{F}_T$, Algorithm~\ref{alg:bilevel} guarantees $\frac{1}{T}\sum_{t=0}^{T}\|\gdphi(x_t)\| \leq \eps$ with at most $T = \frac{4\Delta_0}{\eta\eps}$ iterations. 
\end{theorem}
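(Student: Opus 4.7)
The plan is to prove Theorem~\ref{thm:highprob} by mirroring the architecture of the expectation proof of Theorem~\ref{thm:main-thm}, but upgrading each expectation-based ingredient to a high-probability statement using the stronger noise conditions in Assumption~\ref{ass:highprob}. The argument decomposes into four parts: (i) a high-probability bound on the tracking error $\|y_t-y^*(x_t)\|$, (ii) a high-probability bound on the linear-system tracking error $\|z_t-z^*(x_t,y_t)\|$, (iii) a high-probability concentration of the momentum $m_{t+1}$ around $\gdphi(x_t)$, and (iv) a per-step descent inequality for the normalized update (line 10) that uses the relaxed smoothness of $\Phi$ established in Lemma~\ref{eq:technical-rs-phi}.

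For (i), I would view lines~3 and~7 through the lens of Algorithm~\ref{alg:sgd}: $y_t$ is an SGD iterate on the $\mu$-strongly convex lower-level objective whose minimizer drifts by at most $O(\|x_{t+1}-x_t\|)=O(\eta)$ per iteration due to the normalization. Combining the one-step contraction $(1-\mu\alpha/2)$ with the sub-Gaussian tail assumption on $\gdy G-\gdy g$, I would unroll the recursion and apply a Freedman-type martingale inequality to the resulting weighted sum of sub-Gaussian increments, obtaining $\|y_t-y^*(x_t)\|^2 \le \widetilde{O}\!\left(\alpha\sigma_{g,1}^2/\mu+\eta^2/(\mu\alpha)^2\right)$ for all $t\le T$ on an event of probability at least $1-\delta$. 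The warm-start in line~3 with $T_0=\widetilde{O}(1)$ steps ensures a small enough initial tracking error. This is essentially the distributional-drift SGD analysis of~\cite{cutler2023stochastic} adapted from almost-sure bounded to light-tailed noise.

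For (ii), the iterate $z_t$ solves a stochastic linear fixed-point recursion on the $\mu$-strongly convex quadratic in $z$, whose exact solution $z^*(x_t,y_t)$ drifts with $(x_t,y_t)$. Using the almost-sure bounds in Assumption~\ref{ass:highprob} (in particular, $\|(\gdyy G-\gdyy g)z\|\le\sigma_z$ and the bound on $\gdy F-\gdy f$) together with strong convexity, I would derive an analogous contraction-plus-noise recursion and apply Azuma--Hoeffding to get $\|z_t-z^*(x_t,y_t)\| \le \widetilde{O}(\sqrt{\gamma}\sigma_z+\eta/(\mu\gamma))$ uniformly in $t$ with probability $1-\delta$, where the drift term uses the already-controlled $y$-tracking from (i). For (iii), I would decompose $m_{t+1}-\gdphi(x_t)$ into (a) the exponentially decaying bias $\beta^{t+1}(m_0-\gdphi(x_0))$, (b) a drift term telescoping through $\|x_{s+1}-x_s\|=\eta$ and controlled by the relaxed smoothness of $\gdphi$, and (c) a martingale whose increments are almost-surely bounded by $O(\sigma_{f,1}+\sigma_{g,2}\|z_s\|)$ (using Assumption~\ref{ass:highprob}), where $\|z_s\|$ stays bounded with high probability by (ii) and boundedness of $z^*$. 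A scalar Azuma--Hoeffding bound on (c), together with (a) and (b), yields $\|m_{t+1}-\gdphi(x_t)\|\lesssim\eps$ for all $t$ with probability $1-\delta$.

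Finally, for (iv), I would combine the relaxed smoothness of $\Phi$ with the update $x_{t+1}-x_t=-\eta m_{t+1}/\|m_{t+1}\|$ (which keeps $\|x_{t+1}-x_t\|=\eta$ small enough for the relaxed-smoothness inequality to apply on a single step) to get a descent inequality of the form $\Phi(x_{t+1})\le\Phi(x_t)-\tfrac{\eta}{2}\|\gdphi(x_t)\|+2\eta\|m_{t+1}-\gdphi(x_t)\|+\tfrac{(L_0+L_1\|\gdphi(x_t)\|)\eta^2}{2}$, in the spirit of normalized SGD under generalized smoothness. Telescoping over $t=0,\dots,T-1$ and plugging in the high-probability bound from (iii), together with the choices of $\eta$ and $1-\beta$ in the theorem, yields $\frac{1}{T}\sum_{t=0}^{T-1}\|\gdphi(x_t)\|\le\eps$ after $T=4\Delta_0/(\eta\eps)$ iterations. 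A union bound over the good events for (i), (ii), (iii), and the warm-start stage gives the final probability $1-4\delta$.

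The main obstacle I anticipate is the tight coupling of the three high-probability events: the $y$-error, $z$-error, and $m$-error are statistically intertwined through the shared iterates, and the hypergradient approximation error involves products of these quantities (a $y$-error multiplied by $\|z\|$, plus an $L_1\|\gdphi(x_t)\|$-dependent term from relaxed smoothness). Carrying the martingale arguments in a conditional fashion so that each can be applied on top of the previous event, while keeping the logarithmic dependence in the union bound from inflating the constants that appear in the step-size prescriptions of the theorem, will be the most delicate part. The larger $\gamma=16(1-\beta)/\mu$ (compared to $(1-\beta)/\mu$ in Theorem~\ref{thm:main-thm}) is presumably the precise slack needed to absorb the extra logarithmic factors from the high-probability concentration of $z$.
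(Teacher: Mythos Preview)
Your four-stage architecture (warm-start, $y$-tracking, $z$-tracking, momentum concentration), each conditioned on the success of the previous ones and then combined via the chain rule of conditional probability to get $1-4\delta$, is exactly the paper's route. The paper likewise treats the $y$-update as SGD under distributional drift, builds a separate high-probability distance recursion for $z$ on the event that $y$-tracking succeeds, and then bounds the martingale part of the hypergradient error on the event that $z$-tracking succeeds.

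The one genuine gap is your claim in~(iii) that $\|m_{t+1}-\gdphi(x_t)\|\lesssim\eps$ holds \emph{for all $t$}. This is not achievable: at $t=0$ one has $m_1=(1-\beta)\hatphi(\cdot)$, so $\eps_0\approx -\gdphi(x_0)$, which is $O(1)$; more generally the momentum error carries the transient $\beta^{t+1}\|\gdphi(x_0)\|$ together with the initial $y$- and $z$-errors, none of which are $O(\eps)$. The paper's fix is the two-tiered control you only half-anticipate in your obstacle paragraph: one needs (a) \emph{constant-size} per-step bounds $\|y_t-y_t^*\|\le\tfrac{1}{8L_1}$ and $\|z_t-z_t^*\|\le 2\Delta_{z,0}$, which make the martingale increments a.s.\ bounded and absorb the $L_{x,1}\|y_t-y_t^*\|\|\gdphi(x_t)\|$ bias term into $\tfrac18\|\gdphi(x_t)\|$; and separately (b) $\eps$-level bounds on the \emph{weighted double sums} $\frac{1-\beta}{T}\sum_{t}\sum_{i\le t}\beta^{t-i}\|y_i-y_i^*\|$ and the analogue for $z$. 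The descent telescoping in~(iv) then acts on $\sum_t\|\eps_t\|$ directly, not through a pointwise bound. A smaller point: your $z$-recursion will pick up a cross term $O(\gamma^2\|y_t-y_t^*\|^2\|z_t-z_t^*\|^2)$ from the fact that the stochastic gradient of $h$ is evaluated at $y_t$ rather than $y_t^*$; this is why the $z$-analysis must be run \emph{under} the $y$-event so that the cross term is absorbed into the contraction factor, and is part of what the larger $\gamma=16(1-\beta)/\mu$ accommodates.
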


\textbf{Remark:} Theorem~\ref{thm:highprob} establishes a high probability result of bilevel optimization under unbounded smoothness. Note that the choice of $\eta$ is the same as in Theorem~\ref{thm:main-thm} (i.e., $\eta=\widetilde{\Theta}(\epsilon^3)$), therefore we can get $\widetilde{O}(\epsilon^{-4})$ iteration complexity to find an $\epsilon$-stationary point with high probability.  To the best our knowledge, this is the first high probability convergence guarantee for stochastic bilevel optimization. 

\subsection{Proof Sketch}
In this section, we mainly provide a proof sketch of Theorem~\ref{thm:main-thm} and briefly discuss the high probability proof of Theorem~\ref{thm:highprob}. The detailed proofs can be found in Appendix~\ref{sec:in-expectation} and~\ref{sec:high-prob-proof}, respectively. Define $y_t^*=y^*(x_t), z_t^*=z^*(x_t), \Tilde{y}_t^*=y^*(\Tilde{x}_t)$. Similar to the previous work~\cite{hao2024bilevel}, it is difficult for us to handle the hypergradient bias term $\mathbb{E}_{t}[\|\hatphi(x_t,y_t,z_t;\xi_t',\zeta_t')-\nabla \Phi(x_t)\|]$: the upper bound of this quantity depends on $L_{x,1}\|y_t-y_t^*\|\|\gdphi(x_t)\|$ due to Assumption~\ref{ass:relax-smooth}. The work of~\cite{hao2024bilevel} uses a double-loop procedure to ensure that $\|y_t-y_t^*\|$ is very small (that is, the same order as $\epsilon$) for every $t$ with high probability, which is too demanding and cannot hold for our proposed single-loop algorithm. 

To address this issue, the key idea is that we do not require $\|y_t-y_t^*\|$ to be small for every $t$, instead we only need $\|y_t-y_t^*\|$ to be smaller than some constant (i.e, $\frac{1}{8L_1}$) for every $t$ and the weighted average of $\|y_t-y_t^*\|$ over all iterations is smaller than $\epsilon$. In this way, we can also handle the hypergradient bias and establish the convergence. To this end, we introduce the following lemmas. Lemma~\ref{lm:maintext-lm1} is introduced to handle the approximation error of the lower-level variable for any slowly-changing upper-level sequences. Then we apply this lemma to the warm-start stage (line 3 in Algorithm~\ref{alg:bilevel}) and the stage of simultaneous updates for lower-level and upper-level variables (line 4$\sim$10), which ends up with Lemma~\ref{lm:maintext-lm2} and Lemma~\ref{lm:maintext-lm3} respectively. 

\begin{lemma} \label{lm:maintext-lm1}
Suppose Assumption~\ref{ass:f-and-g} holds, let $\{\tilde{y}_t\}$ be the iterates produced by Algorithm~\ref{alg:sgd} with any fixed input sequence $\{\tilde{x}_t\}$ such that $\|\tilde{x}_{t+1}-\tilde{x}_t\|\leq R$ for all $t\geq0$, and constant learning rate $\alpha\leq 1/(2l_{g,1})$. Then for any fixed $t\in[N]$ and $\delta\in(0,1)$, the following estimate holds with probability at least $1-\delta$ over the randomness in $\widetilde{\gF}_t^1$:
\begin{small}
\begin{equation*}
    \|\tilde{y}_t-\tilde{y}_t^*\|^2 \leq \left(1-\frac{\mu\alpha}{2}\right)^t\|\tilde{y}_0-\tilde{y}_0^*\|^2 + \left[\frac{8\alpha \sigma_{g,1}^2}{\mu} + \frac{4R^2l_{g,1}^2}{\mu^4\alpha^2}\right]\log\frac{e}{\delta},
\end{equation*}
\end{small}%
where $e$ denotes the base of natural logarithms. As a consequence, for all $t\in[N]$ and $\delta\in(0,1)$, the following holds with probability at least $1-\delta$ over the randomness in $\widetilde{\gF}_{N}^1$:
\begin{small}
\begin{equation*}
    \|\tilde{y}_t-\tilde{y}_t^*\|^2 \leq \left(1-\frac{\mu\alpha}{2}\right)^t\|\tilde{y}_0-\tilde{y}_0^*\|^2 + \left[\frac{8\alpha \sigma_{g,1}^2}{\mu} + \frac{4R^2l_{g,1}^2}{\mu^4\alpha^2}\right]\log\frac{eN}{\delta}.
\end{equation*}
\end{small}%
\end{lemma}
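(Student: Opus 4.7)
The plan is to derive a contraction-with-noise recursion for $\|\tilde y_t - \tilde y_t^*\|^2$, iterate it, and then apply an exponential-supermartingale / Chernoff argument to convert the accumulated stochastic error into a $\log(e/\delta)$ tail bound.

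First I would split the update as $\tilde y_{t+1} - \tilde y_{t+1}^* = A_t + B_t - \alpha\varepsilon_t$, where $A_t := \tilde y_t - \alpha\nabla_y g(\tilde x_t, \tilde y_t) - \tilde y_t^*$ is the deterministic gradient-descent piece, $B_t := \tilde y_t^* - \tilde y_{t+1}^*$ is the drift of the minimizer, and $\varepsilon_t := \nabla_y G(\tilde x_t, \tilde y_t; \tilde\pi_t) - \nabla_y g(\tilde x_t, \tilde y_t)$ is the stochastic noise, which has zero conditional mean given $\widetilde{\mathcal F}_t^1$ and is norm-sub-Gaussian with parameter $\sigma_{g,1}$. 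Since $g(\tilde x_t, \cdot)$ is $\mu$-strongly convex and $l_{g,1}$-smooth and $\alpha \leq 1/(2 l_{g,1}) \leq 2/(\mu + l_{g,1})$, standard co-coercivity yields $\|A_t\|^2 \leq (1-\mu\alpha)\|\tilde y_t - \tilde y_t^*\|^2$. Applying joint $l_{g,1}$-smoothness and $\mu$-strong convexity to the optimality conditions $\nabla_y g(x, y^*(x)) = 0$ gives the sensitivity bound $\|y^*(x_1)-y^*(x_2)\| \leq (l_{g,1}/\mu)\|x_1 - x_2\|$, hence $\|B_t\| \leq R l_{g,1}/\mu$. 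Combining through Young's inequality with weight $1+\mu\alpha/2$, that is $\|A_t+B_t\|^2 \leq (1+\mu\alpha/2)\|A_t\|^2 + (1+2/(\mu\alpha))\|B_t\|^2$, and expanding the squared norm yields
\[
\|\tilde y_{t+1} - \tilde y_{t+1}^*\|^2 \leq \bigl(1-\tfrac{\mu\alpha}{2}\bigr)\|\tilde y_t - \tilde y_t^*\|^2 + \tfrac{C_1 R^2 l_{g,1}^2}{\mu^3\alpha} + \alpha^2\|\varepsilon_t\|^2 - 2\alpha\langle A_t+B_t,\, \varepsilon_t\rangle,
\]
with $\|A_t+B_t\| \leq \|\tilde y_t-\tilde y_t^*\| + R l_{g,1}/\mu$. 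Unrolling and summing the geometric series of deterministic drift contributions gives $\sum_{s<t}(1-\mu\alpha/2)^{t-1-s} C_1 R^2 l_{g,1}^2/(\mu^3\alpha) = O(R^2 l_{g,1}^2/(\mu^4\alpha^2))$, which already matches the second bracketed term in the claim up to constants.

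The main obstacle is controlling the accumulated stochastic piece $S_t := \sum_{s<t}(1-\mu\alpha/2)^{t-1-s}[\alpha^2 \|\varepsilon_s\|^2 - 2\alpha \langle A_s + B_s,\varepsilon_s\rangle]$ with a $\log(e/\delta)$ tail factor despite the state-dependent inner product. My plan is an exponential-supermartingale argument in the spirit of Cutler--Drusvyatskiy--Harchaoui: choose $\lambda = c\mu/(\alpha\sigma_{g,1}^2)$ for a sufficiently small constant $c$, set $U_t := (1-\mu\alpha/2)^t \|\tilde y_0-\tilde y_0^*\|^2 + C_2[\alpha\sigma_{g,1}^2/\mu + R^2 l_{g,1}^2/(\mu^4\alpha^2)]$, and prove inductively that $M_t := \exp(\lambda \|\tilde y_t - \tilde y_t^*\|^2 - \lambda U_t)$ satisfies $\mathbb{E}[M_{t+1} \mid \widetilde{\mathcal F}_t^1] \leq M_t$. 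The key computation takes the conditional MGF of the two noise pieces: $\|\varepsilon_s\|^2$ contributes a sub-exponential factor proportional to $\lambda\alpha^2\sigma_{g,1}^2$ that is absorbed into the increment of $U_t$, while the cross term contributes a Gaussian-type factor $\exp(2\lambda^2\alpha^2\sigma_{g,1}^2 \|A_s+B_s\|^2)$. Using $\|A_s+B_s\|^2 \leq 2\|\tilde y_s - \tilde y_s^*\|^2 + 2 R^2 l_{g,1}^2/\mu^2$ and choosing $c$ small enough, the quadratic-in-$\|\tilde y_s-\tilde y_s^*\|^2$ factor is absorbed into the contraction gap between $1-\mu\alpha$ and $1-\mu\alpha/2$, closing the induction.

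Applying Markov's inequality to $M_t$ then yields the fixed-$t$ statement with probability at least $1-\delta$ and the advertised $\log(e/\delta)$ factor. The uniform-in-$t$ bound follows by a union bound over $t \in [N]$, which replaces $\log(e/\delta)$ by $\log(eN/\delta)$.
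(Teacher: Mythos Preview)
Your proposal is correct and follows essentially the same approach as the paper: both use the exponential-supermartingale/MGF argument of Cutler--Drusvyatskiy--Harchaoui for SGD under distributional drift, with the drift bounded via the Lipschitzness $\|\tilde y_t^*-\tilde y_{t+1}^*\|\le (l_{g,1}/\mu)R$. The only difference is packaging: the paper invokes their high-probability distance-tracking theorem as a black box (plugging in $D=l_{g,1}R/\mu$) and then applies a union bound, whereas you re-derive the one-step recursion and the MGF induction from scratch.
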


\textbf{Remark:} Lemma~\ref{lm:maintext-lm1} establishes the error of the lower-level problem at every iteration for \emph{any fixed slowly changing upper-level sequence $\{\tilde{x}_t\}$} with high probability. This lemma is a generalization of the techniques of stochastic optimization under distribution drift (e.g., Theorem 6 in~\cite{cutler2023stochastic}). It will be applied to two stages of our algorithm (warm-start stage in line 3, and simultaneous update stage in line 4$\sim$10) to control the lower-level error. 


\begin{lemma}[Warm-start] \label{lm:maintext-lm2}
Suppose Assumption~\ref{ass:f-and-g} holds and given any $\delta\in(0,1)$, let $\{y_t^{\init}\}$ be the iterates produced by Algorithm~\ref{alg:sgd} starting from $y_0^{\init}$ with $R=0$ (where $R$ is defined in Lemma~\ref{lm:maintext-lm1}, it means that $\tilde{x}_t=x_0$ for any $t$). Under the same choice of learning rate $\alpha^{\init}$ as in Theorem~\ref{thm:main-thm} and run Algorithm~\ref{alg:sgd} for $T_0=\frac{\log\left(256L_1^2\|y_0^{\init}-y_0^*\|^2\right)}{\log\left(2/(2-\mu\alpha^{\init})\right)} = \widetilde{O}(1)$ iterations,
Algorithm~\ref{alg:sgd} guarantees $\|y_{T_0}^{\init}-y_0^*\| \leq 1/(8\sqrt{2}L_1)$ with probability at least $1-\delta$ over the randomness in $\widetilde{\mathcal{F}}_{T_0}^{1}$ (we denote this event as $\gE_{\init}$). 
\end{lemma}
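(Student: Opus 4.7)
The plan is to apply Lemma~\ref{lm:maintext-lm1} directly with $R=0$, since the warm-start phase keeps the upper-level variable fixed at $x_0$ (so there is no distributional drift), and then verify that the prescribed choices of $\alpha^{\init}$ and $T_0$ make the two terms on the right-hand side each at most $1/(256 L_1^2)$. Summing these two bounds yields $\|y_{T_0}^{\init}-y_0^*\|^2\leq 1/(128L_1^2)$, which is exactly $1/(8\sqrt{2}L_1)^2$.

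More precisely, I would first observe that since $\alpha^{\init}\leq 1/(2l_{g,1})$, the learning-rate hypothesis of Lemma~\ref{lm:maintext-lm1} is satisfied. Applying that lemma at the single index $t=T_0$ (so we use the fixed-$t$ high-probability form, paying only $\log(e/\delta)$ rather than $\log(eN/\delta)$) gives, with probability at least $1-\delta$ over $\widetilde{\gF}_{T_0}^1$,
\begin{equation*}
\|y_{T_0}^{\init}-y_0^*\|^2 \leq \Bigl(1-\tfrac{\mu\alpha^{\init}}{2}\Bigr)^{T_0}\|y_0^{\init}-y_0^*\|^2 + \frac{8\alpha^{\init}\sigma_{g,1}^2}{\mu}\log\frac{e}{\delta}.
\end{equation*}

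Next I would control each term separately. For the noise-floor term, substituting the bound $\alpha^{\init}\leq \mu/(2048 L_1^2 \sigma_{g,1}^2 \log(e/\delta))$ from the definition of $\alpha^{\init}$ immediately yields $\tfrac{8\alpha^{\init}\sigma_{g,1}^2}{\mu}\log(e/\delta)\leq 1/(256L_1^2)$. For the transient term, the prescribed $T_0 = \log(256L_1^2\|y_0^{\init}-y_0^*\|^2)/\log(2/(2-\mu\alpha^{\init}))$ is precisely the smallest integer such that $(1-\mu\alpha^{\init}/2)^{T_0}\|y_0^{\init}-y_0^*\|^2\leq 1/(256L_1^2)$; this is a one-line calculation using $\log(2/(2-\mu\alpha^{\init}))=-\log(1-\mu\alpha^{\init}/2)$. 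Adding the two pieces gives $\|y_{T_0}^{\init}-y_0^*\|^2\leq 1/(128L_1^2)$, and taking the square root delivers the claim.

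There is essentially no deep obstacle here; the proof is a parameter-matching argument. The one subtlety worth double-checking is ensuring that the choice of $\alpha^{\init}$ balances the two error contributions at the target radius $1/(8\sqrt{2}L_1)$ rather than at an $\epsilon$-scale radius. This balance is what makes $T_0$ only $\widetilde{O}(1)$ — the warm-start need only bring $y$ into a constant-size neighborhood of $y_0^*$ (small enough to invoke the relaxed-smoothness Assumption~\ref{ass:relax-smooth} later), not into an $\epsilon$-neighborhood. A minor care is also needed to verify that $\log(256L_1^2\|y_0^{\init}-y_0^*\|^2)$ is positive (otherwise $T_0=0$ already suffices and the claim is trivial), so we may assume without loss of generality that $\|y_0^{\init}-y_0^*\|>1/(16L_1)$; under this mild assumption $T_0=\widetilde{O}(1)$ as stated.
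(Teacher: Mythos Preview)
Your proposal is correct and matches the paper's proof essentially step for step: apply Lemma~\ref{lm:maintext-lm1} with $R=0$ at the single index $t=T_0$, use $\alpha^{\init}\leq 1/(2l_{g,1})$ to satisfy the hypothesis, then verify that the choices of $\alpha^{\init}$ and $T_0$ make each of the two terms at most $1/(256L_1^2)$, sum, and take the square root. Your additional remarks (using the fixed-$t$ form to pay only $\log(e/\delta)$, and the triviality when $\|y_0^{\init}-y_0^*\|$ is already small) are correct observations that the paper leaves implicit.
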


\textbf{Remark:} Lemma~\ref{lm:maintext-lm2} shows that it requires at most a logarithmic number of iterations in the warm-start stage to get constant error of the lower-level variable, with high probability. This lemma is an application of Lemma~\ref{lm:maintext-lm1} with $R=0$ since the upper-level variable is fixed to be $x_0$.

\begin{lemma} \label{lm:maintext-lm3}
Under assumptions~\ref{ass:relax-smooth},~\ref{ass:f-and-g} and event $\gE_{\init}$, for any given $\delta\in(0,1)$ and sufficiently small $\eps$ (see exact choice of $\eps$ in \eqref{eq:eps}), under the same parameter choice as in Theorem~\ref{thm:main-thm}
and run Algorithm~\ref{alg:bilevel} for $T=\frac{4\Delta_0}{\eta\eps}$ iterations. Then for all $t\in[T]$, Algorithm~\ref{alg:bilevel} guarantees with probability at least $1-\delta$  over the randomness in $\mathcal{F}_T^1$ (we denote this event as $\gE_y$) that: 
\begin{enumerate}[(i)]
    \item $\|y_t-y_t^*\|\leq \frac{1}{8L_1}$,
    \item $\frac{1}{T}(1-\beta)\sum_{t=0}^{T-1}\sum_{i=0}^{t}\beta^{t-i}\|y_i-y_i^*\| \leq \frac{3}{32L_0}\eps$,
\end{enumerate}
where $\Delta_0$ is defined in \eqref{eq:ABdelta}.
\end{lemma}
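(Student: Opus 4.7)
The plan is to reduce Lemma~\ref{lm:maintext-lm3} to a single application of Lemma~\ref{lm:maintext-lm1} to the simultaneous-update stage (lines 4--10) of Algorithm~\ref{alg:bilevel}, with drift radius $R=\eta$. The normalization in line~10 guarantees $\|x_{t+1}-x_t\|\leq\eta$ almost surely, making $\eta$ the natural choice. Although Lemma~\ref{lm:maintext-lm1} is stated for a pre-fixed input sequence $\{\tilde{x}_t\}$, its proof only relies on (a) the almost-sure increment bound and (b) the martingale-difference property $\mathbb{E}[\nabla_y G(x_t,y_t;\pi_t)\mid\mathcal{F}_t]=\nabla_y g(x_t,y_t)$ together with the light-tailed noise assumption. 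Both continue to hold in our main loop, because $\pi_t$ is a fresh, independent sample at step $t$, while $(x_t,y_t)$ is $\mathcal{F}_t$-measurable.

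Granting this extension, I would proceed as follows. First, invoke Lemma~\ref{lm:maintext-lm1} with $\alpha=8(1-\beta)/\mu$, $R=\eta$, horizon $T$, and initial condition $\|y_0-y_0^*\|\leq 1/(8\sqrt{2}L_1)$ (the content of event $\mathcal{E}_{\init}$), obtaining with probability at least $1-\delta$ over $\mathcal{F}_T^1$ the uniform-in-$t$ bound
\begin{equation*}
\|y_t-y_t^*\|^2 \ \leq\ \Bigl(1-\tfrac{\mu\alpha}{2}\Bigr)^{t}\|y_0-y_0^*\|^2 \ +\ \Bigl[\tfrac{8\alpha\sigma_{g,1}^2}{\mu}+\tfrac{4\eta^2 l_{g,1}^2}{\mu^4\alpha^2}\Bigr]\log\tfrac{eT}{\delta}.
\end{equation*}
Second, substitute the parameter choices from Theorem~\ref{thm:main-thm}: since $\alpha=\widetilde{\Theta}(\epsilon^2)$ and $\eta=\widetilde{\Theta}(\epsilon^3)$ while $1-\beta=\widetilde{\Theta}(\epsilon^2)$, a direct calculation shows that both bracketed terms collapse to $\widetilde{O}(\epsilon^2/L_0^2)$ (the first via $\alpha\propto 1-\beta\propto 1/\log^2 B$, the second via $\eta^2/\alpha^2\propto 1/\log^2 A$). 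Combined with $\log(eT/\delta)=\widetilde{O}(1)$ and $\|y_0-y_0^*\|^2\leq 1/(128L_1^2)$, the right-hand side is at most $1/(64L_1^2)$ for sufficiently small $\epsilon$, yielding part~(i).

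For part~(ii), I would first swap the order of summation and telescope the geometric weights,
\begin{equation*}
\tfrac{1}{T}(1-\beta)\sum_{t=0}^{T-1}\sum_{i=0}^{t}\beta^{t-i}\|y_i-y_i^*\|\ \leq\ \tfrac{1}{T}\sum_{i=0}^{T-1}\|y_i-y_i^*\|,
\end{equation*}
reducing the claim to a bound on the time-averaged root error. Taking square roots in the bound from step one via $\sqrt{a+b}\leq\sqrt{a}+\sqrt{b}$, summing over $t$, and dividing by $T$, the transient term contributes $O(\|y_0-y_0^*\|/(T\mu\alpha))=\widetilde{O}(\epsilon^2)$, while the stationary term is a constant multiple of $\epsilon/L_0$ with extra inverse $\log$ slack inherited from the definitions of $1-\beta$ and $\eta$; tuning the logarithmic constants in the parameter schedule delivers exactly the target $\tfrac{3\epsilon}{32L_0}$.

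The main obstacle is step one, namely justifying that Lemma~\ref{lm:maintext-lm1} applies to the random, adapted sequence $\{x_t\}$ rather than a pre-fixed $\{\tilde{x}_t\}$. In SLIP, $x_t$ depends on past $y$-iterates through $m_t$ and $z_t$, so the \emph{fixed input} hypothesis is strictly violated. Resolving this requires re-examining the proof of Lemma~\ref{lm:maintext-lm1} and verifying that every martingale concentration step uses only the conditional-expectation identity above and the almost-sure increment bound, both of which survive the coupling introduced by the main loop through the filtration structure of Algorithm~\ref{alg:bilevel} and the independence of the fresh sample $\pi_t$ from $\mathcal{F}_t$.
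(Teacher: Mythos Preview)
Your proposal is correct and follows essentially the same route as the paper: apply Lemma~\ref{lm:maintext-lm1} with drift radius $R=\eta$ to the main loop, use the warm-start guarantee $\|y_0-y_0^*\|\le 1/(8\sqrt{2}L_1)$ from $\mathcal{E}_{\init}$, and then push through the parameter choices to control the two pieces of the right-hand side. For part~(i) this matches the paper exactly.

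For part~(ii) your argument differs slightly: you swap the order of summation to reduce the weighted double sum to $\tfrac{1}{T}\sum_{i}\|y_i-y_i^*\|$ and bound the resulting time average. The paper instead keeps the double sum and evaluates $(1-\beta)\sum_t\beta^t\sum_i(\sqrt{1-\mu\alpha/2}/\beta)^i$ directly, arriving at a transient contribution of $\tfrac{4\Delta_{y,0}}{T(\mu\alpha-4(1-\beta))}$; this relies on the exact relation $\mu\alpha=8(1-\beta)>4(1-\beta)$ to keep the denominator positive. Your route is cleaner and avoids that bookkeeping, yielding the same (indeed slightly better) constant via $1-\sqrt{1-\mu\alpha/2}\ge\mu\alpha/4$. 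Both arguments land on a transient term of order $\Delta_{y,0}/(T(1-\beta))$ and a stationary term of order $\epsilon/L_0$, which together stay under $3\epsilon/(32L_0)$.

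You are also right to flag the ``fixed versus adapted'' issue: Lemma~\ref{lm:maintext-lm1} is stated for a deterministic $\{\tilde{x}_t\}$, but in SLIP $x_t$ depends on past $y$-iterates. The paper simply writes ``By Lemma~\ref{lm:appendix-lm1} (we set $R=\eta$ here)'' without addressing this. Your proposed resolution --- re-run the MGF recursion (Proposition~\ref{prop:MGF-recursive control}) with the full filtration, using that $(x_t,y_t)$ is $\mathcal{F}_t$-measurable, $\pi_t$ is fresh, and $\|x_{t+1}-x_t\|\le\eta$ almost surely --- is the correct fix, and in fact makes the argument more rigorous than the paper's presentation.
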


\textbf{Remark}: Lemma~\ref{lm:maintext-lm3} provides a refined characterization of the lower-level error during the simultaneous update stage (line 4$\sim$10 in Algorithm~\ref{alg:bilevel}): the error of every iterate of $y$ is bounded by a constant and the weighted error of iterates of $y$ is small. Another important aspect of this lemma is that the statement holds with high probability over $\mathcal{F}_T^1$, which is \emph{independent of the randomness in $x$ and $z$} since $\mathcal{F}_T^1$ is independent of $\mathcal{F}_t^2$ and $\mathcal{F}_t^3$. This nice property is crucial for our subsequent analysis of the sequence $\{x_t\}$ and $\{z_t\}$ without worrying about the dependency issue on filtrations.


\begin{lemma} \label{lm:maintext-lm4}
Under Assumptions~\ref{ass:relax-smooth},~\ref{ass:f-and-g} and events $\gE_{\init} \cap \gE_y$, define $\eps_t = m_{t+1} - \gdphi(x_t)$ as the moving-average hypergradient estimation error. Then we have
\begin{small}
\begin{equation*}
    \begin{aligned}
        &\E\left[\sum_{t=0}^{T-1}\|\eps_t\|\right]
        \leq \left(\frac{\eta L_1\beta}{1-\beta} + \frac{1}{8}\right)\sum_{t=0}^{T-1}\E\|\gdphi(x_t)\| + O\left(\frac{T\eta L_0\beta}{1-\beta} \right. \\
        &\left. + T\sqrt{1-\beta}\sqrt{\sigma_{f,1}^2 + \frac{2l_{f,0}^2}{\mu^2}\sigma_{g,2}^2} + l_{g,1}\sqrt{T}\sqrt{\sum_{t=0}^{T-1}\E[\|z_t-z_t^*\|^2]} \right.\\
        &\left. + \frac{\|\gdphi(x_0)\|+L_0\Delta_{y,0}}{1-\beta} + L_0T\sqrt{\left(\frac{8\alpha \sigma_{g,1}^2}{\mu} + \frac{4\eta^2l_{g,1}^2}{\mu^4\alpha^2}\right)\log\frac{eT}{\delta}} \right)
    \end{aligned}
\end{equation*}
\end{small}%
where $\Delta_{y,0}$ is defined in \eqref{eq:ABdelta}, and the expectation is taken over the randomness in $\widetilde{\gF}_T$.
\end{lemma}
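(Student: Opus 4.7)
The plan is to derive a one-step recursion for $\eps_t$ and unroll it. From $m_{t+1}=\beta m_t+(1-\beta)\hatphi(x_t,y_t,z_t;\xi_t',\zeta_t')$ and $\eps_t=m_{t+1}-\gdphi(x_t)$, one obtains for $t\geq 1$,
\begin{equation*}
\eps_t=\beta\eps_{t-1}+\beta(\gdphi(x_{t-1})-\gdphi(x_t))+(1-\beta)(\hatphi(x_t,y_t,z_t;\xi_t',\zeta_t')-\gdphi(x_t)),
\end{equation*}
while $\eps_0$ collects $-\beta\gdphi(x_0)$ with the first bias and noise. Unrolling expresses each $\eps_t$ as four pieces: a geometrically decaying initial contribution of size $\beta^{t+1}(\|\gdphi(x_0)\|+L_0\Delta_{y,0})$; gradient-drift differences $\beta^{t-i+1}(\gdphi(x_{i-1})-\gdphi(x_i))$; conditional biases $(1-\beta)\beta^{t-i}(\E_i\hatphi_i-\gdphi(x_i))$; and a martingale-difference noise $(1-\beta)\beta^{t-i}(\hatphi_i-\E_i\hatphi_i)$. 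I then apply the triangle inequality, take expectations, and sum over $t$.

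\textbf{Drift and initial terms.} For the drift, since $\|x_t-x_{t-1}\|=\eta$ thanks to the normalization step, the relaxed smoothness of $\Phi$ (Lemma~\ref{eq:technical-rs-phi}) yields $\|\gdphi(x_{t-1})-\gdphi(x_t)\|\leq\eta(L_0+L_1\|\gdphi(x_t)\|)$ provided $\eta$ satisfies the short-step condition, which holds under the choice made in Theorem~\ref{thm:main-thm}. Summing $\sum_{t=0}^{T-1}\sum_{i\leq t}\beta^{t-i+1}(\cdot)$ and using $\sum_{k\geq 0}\beta^k=1/(1-\beta)$ contributes the coefficient $\eta L_1\beta/(1-\beta)$ on $\sum_t\E\|\gdphi(x_t)\|$ together with the term $T\eta L_0\beta/(1-\beta)$ in the remainder. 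Summing the initial pieces over $t$ carries a further $1/(1-\beta)$ factor and produces the $(\|\gdphi(x_0)\|+L_0\Delta_{y,0})/(1-\beta)$ contribution.

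\textbf{Bias.} The conditional bias satisfies
\begin{equation*}
\E_t\hatphi(x_t,y_t,z_t;\xi_t',\zeta_t')-\gdphi(x_t)=(\gdx f(x_t,y_t)-\gdx f(x_t,y_t^*))-(\gdxy g(x_t,y_t)z_t-\gdxy g(x_t,y_t^*)z_t^*).
\end{equation*}
Splitting the Hessian-vector difference as $\gdxy g(x_t,y_t)(z_t-z_t^*)+(\gdxy g(x_t,y_t)-\gdxy g(x_t,y_t^*))z_t^*$ and invoking Assumptions~\ref{ass:relax-smooth} and~\ref{ass:f-and-g}, the bias is dominated by $l_{g,1}\|z_t-z_t^*\|$ plus quantities of order $(L_0+L_1\|\gdphi(x_t)\|)\|y_t-y_t^*\|$. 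The $z$ contribution is aggregated by Cauchy--Schwarz into $l_{g,1}\sqrt{T}\sqrt{\sum_t\E\|z_t-z_t^*\|^2}$. The coupled term $L_1\|y_t-y_t^*\|\|\gdphi(x_t)\|$ is the pivotal one: on the event $\gE_y$ of Lemma~\ref{lm:maintext-lm3}(i) we have $L_1\|y_t-y_t^*\|\leq 1/8$ pointwise, which yields exactly the $\frac{1}{8}\sum_t\E\|\gdphi(x_t)\|$ coefficient. The pure $L_0\|y_t-y_t^*\|$ portion is controlled by the high-probability bound of Lemma~\ref{lm:maintext-lm1} applied to the simultaneous-update stage with drift $R=\eta$, which after summation yields the $L_0T\sqrt{(8\alpha\sigma_{g,1}^2/\mu+4\eta^2l_{g,1}^2/(\mu^4\alpha^2))\log(eT/\delta)}$ term.

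\textbf{Noise, and the main obstacle.} The noise is a martingale-difference sequence, since $(\xi_t',\zeta_t')$ is independent of $\widetilde{\gF}_t$, with per-step variance at most $\sigma_{f,1}^2+2l_{f,0}^2\sigma_{g,2}^2/\mu^2$ after using the deterministic bound $\|z_t^*\|\leq l_{f,0}/\mu$. Jensen's inequality and orthogonality of the noise increments give $\E\|(1-\beta)\sum_{i=0}^t\beta^{t-i}(\hatphi_i-\E_i\hatphi_i)\|\leq\sqrt{1-\beta}\sqrt{\sigma_{f,1}^2+2l_{f,0}^2\sigma_{g,2}^2/\mu^2}$, which sums to the $T\sqrt{1-\beta}\sqrt{\cdots}$ term. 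The chief difficulty is the coupled bias $L_1\|y_t-y_t^*\|\|\gdphi(x_t)\|$: the two factors are neither independent nor individually small. The key enabling observation is that Lemma~\ref{lm:maintext-lm3} places $\|y_t-y_t^*\|\leq 1/(8L_1)$ on an event measurable with respect to $\mathcal{F}_T^1$, which is independent of the filtrations driving $x_t$ and $z_t$; hence the pointwise cap passes cleanly through the expectation without entangling filtrations.
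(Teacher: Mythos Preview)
Your route is the paper's: unroll the momentum recursion into initial, drift, bias, and noise pieces; bound drift via $(L_0,L_1)$-smoothness; split the bias via Lemma~\ref{lm:hypergradient-bias}; and use $\gE_y$ (Lemma~\ref{lm:maintext-lm3}(i)) to cap the coupled term $L_{x,1}\|y_t-y_t^*\|\|\gdphi(x_t)\|$ by $\tfrac18\|\gdphi(x_t)\|$. Two bookkeeping gaps need to be patched.

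First, the per-step noise variance is \emph{not} bounded by $\sigma_{f,1}^2+2l_{f,0}^2\sigma_{g,2}^2/\mu^2$: the Jacobian-vector noise is $(\gdxy G-\gdxy g)z_t$, not $(\gdxy G-\gdxy g)z_t^*$, so the variance picks up an additional $2\sigma_{g,2}^2\|z_t-z_t^*\|^2$ (Lemma~\ref{lm:hyper-var}). After the martingale and Cauchy--Schwarz steps this yields an extra $\sigma_{g,2}\sqrt{1-\beta}\sqrt{T}\sqrt{\sum_t\E\|z_t-z_t^*\|^2}$, which the paper groups alongside the $l_{g,1}$ contribution and the $O(\cdot)$ absorbs.

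Second, the $L_0\Delta_{y,0}/(1-\beta)$ does \emph{not} come from the initial piece: with $m_0=0$ the unroll gives only $\beta^{t+1}\|\gdphi(x_0)\|$ there. It comes from the bias. The high-probability bound on $\|y_i-y_i^*\|$ (Lemma~\ref{lm:maintext-lm1}) has a transient $(1-\mu\alpha/2)^{i/2}\Delta_{y,0}$ part in addition to the stationary $\sqrt{(\cdots)\log(eT/\delta)}$ part you invoke; summing the double geometric series $(1-\beta)\sum_t\sum_i\beta^{t-i}(1-\mu\alpha/2)^{i/2}$ with $\mu\alpha=8(1-\beta)$ produces $O(1/(1-\beta))$, and the coefficient $L_{x,0}+L_{x,1}l_{g,1}l_{f,0}/\mu+l_{g,2}l_{f,0}/\mu$ is bounded by $L_0$. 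Your bias paragraph drops this transient contribution, which is precisely the $L_0\Delta_{y,0}/(1-\beta)$ term.
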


\textbf{Remark:} This lemma provides an upper bound for the cumulative hypergradient estimation error over $T$ iterations. Under the parameter setting of Theorem~\ref{thm:main-thm}, we can see that the RHS can be divided into two parts. The first part is the summation of history gradient norm, which can be dominated by a negative gradient term in the descent lemma. The second part consists of several error terms that grow sublinearly in terms of $T$ (because the averaged expected squared error for variable $z$ can be shown to grow sublinearly in $T$ as well). The fact that these error terms vanish during the optimization process is crucial to establish the convergence result. 



\textbf{Proof Sketch of High Probability Guarantees in Theorem~\ref{thm:highprob}}. The full proof is included in Appendix~\ref{sec:high-prob-proof} and we provide the roadmap of the proof here. The main difference between high probability guarantees and expectation guarantees is that we also need to provide a high probability analysis for variables $z$ and $x$. The idea of the proof is to build on the high probability result in $y$ and gradually establish high probability results for all variables. In particular, from Lemma~\ref{lm:maintext-lm2}, we know that the event $\gE_{\init}$ happens with probability $1-\delta$. Then from Lemma~\ref{lm:maintext-lm3}, we know that with probability at least $1-\delta$, we have $\gE_{y}$ happening, provided that $\gE_{\init}$ happens. The next step is to show under events $\gE_{y}$ and $\gE_{\init}$, we have a nice bound for $z$ with probability $1-\delta$ (this is proved in Lemma~\ref{lm:z-two-bound} in Appendix~\ref{sec:high-prob-proof}, and this event is denoted as $\gE_{z}$). Then under $\gE_{\init}$, $\gE_{y}$ and $\gE_{z}$, we can obtain good hypergradient error with probability $1-\delta$ (this is proved in Lemma~\ref{lm:MDS-var} in Appendix~\ref{sec:high-prob-proof}, and this event is denoted as $\gE_x$) and derive the final convergence result. Therefore, by the rule of conditional probability, we show that the good event (i.e., $\gE_x \cap \gE_y \cap \gE_z \cap \gE_{\init}$) happens with probability $(1-\delta)^4 \geq 1-4\delta$ for $\delta\in(0,1)$.



\section{Experiments}

\begin{figure*}[t]
\begin{center}
\subfigure[Training accuracy on SNLI]{\includegraphics[width=0.24\linewidth]{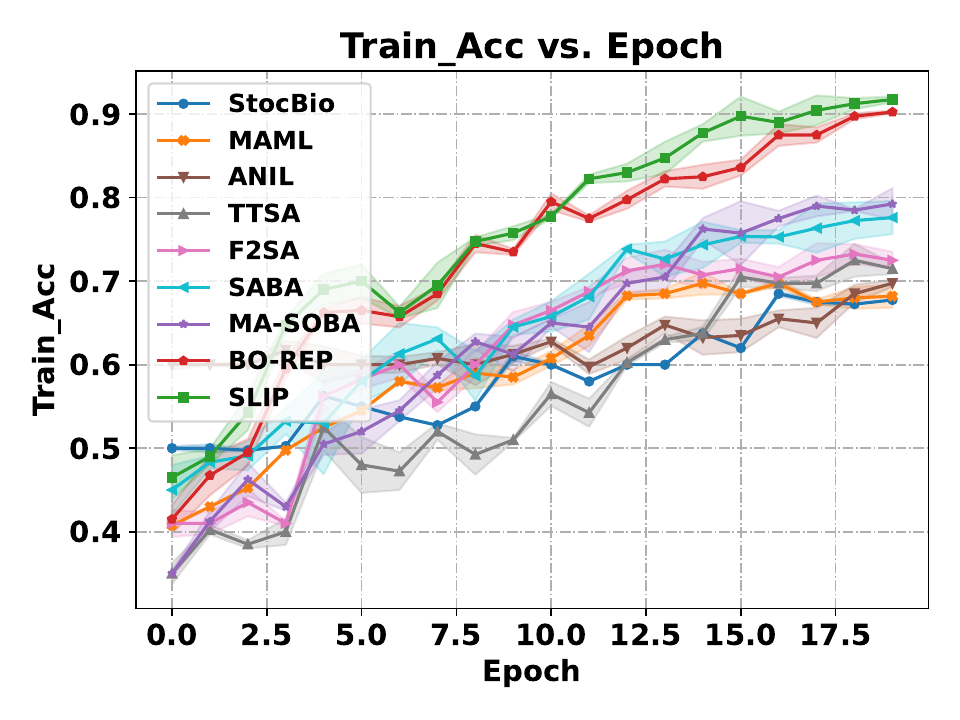}}   \   
\subfigure[Test accuracy on SNLI]{\includegraphics[width=0.24\linewidth]{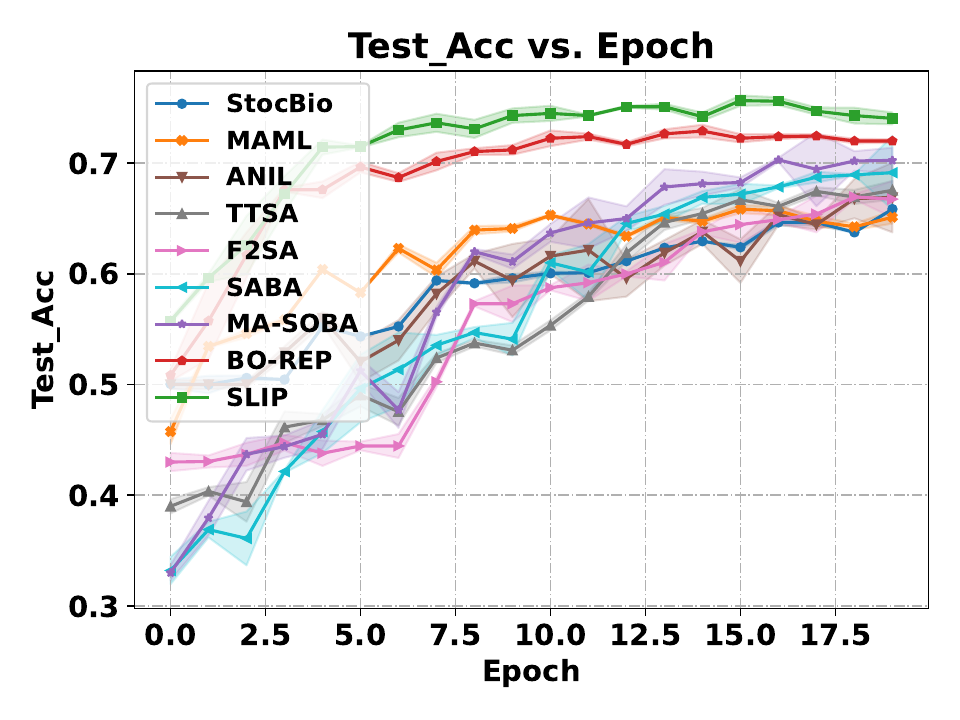}} \ 
\subfigure[Training accuracy on ARD]{\includegraphics[width=0.24\linewidth]{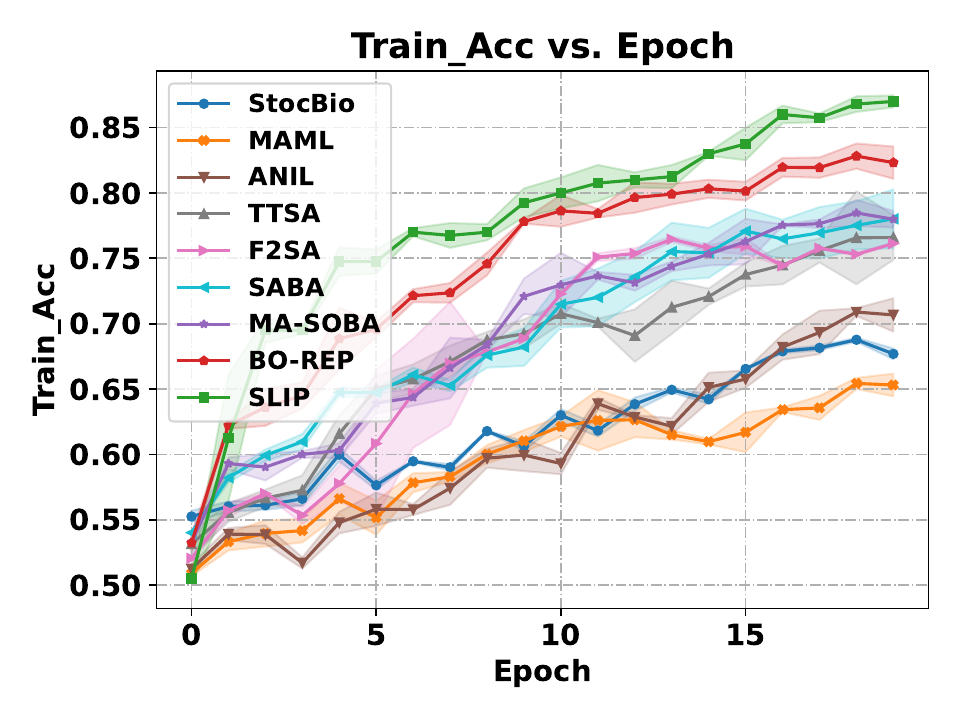}} \
\subfigure[Test accuracy on ARD]{\includegraphics[width=0.24\linewidth]{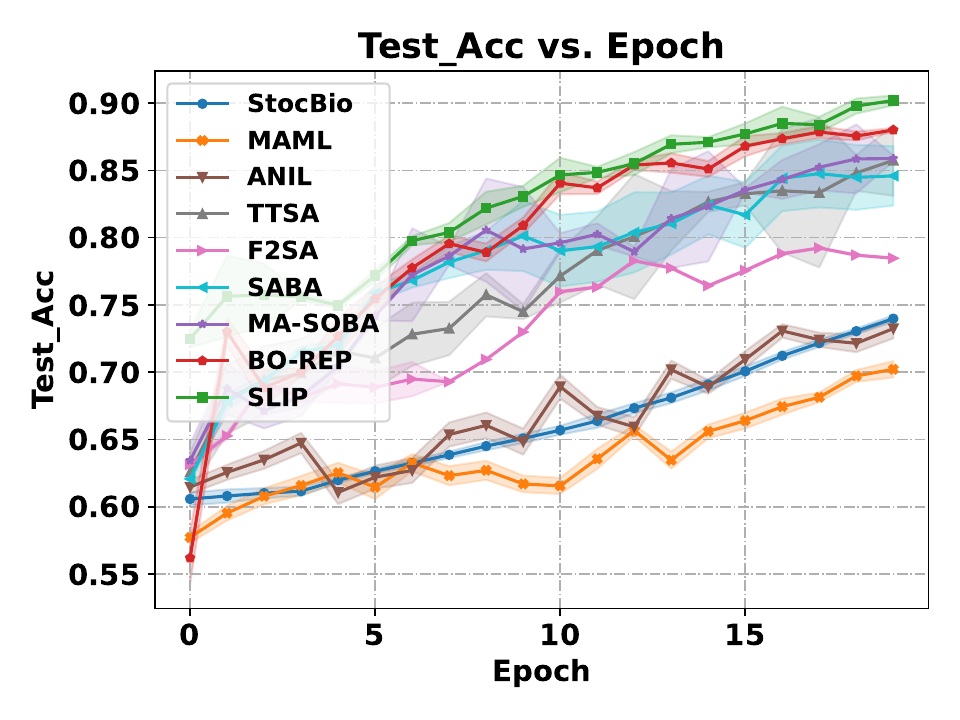}}
\end{center}
\caption{Comparison with bilevel optimization baselines on Hyper-representation. Figure (a) and (b) are the results in the SNLI dataset. Figures (c) and (d) are the results of the Amazon Review Dataset (ARD). }
\label{fig:acc_HR}
\end{figure*}

\subsection{Hyper-representation Learning} \label{sec:HR}
In this section, we conduct experiments on an important application of bilevel optimization with unbounded smooth upper-level function: hyper-representation learning for text classification (i.e., meta-learning). The goal of hyper-representation learning is to try to find a good model representation parameterized by $x$, such that it can quickly adapt to new tasks by quickly tuning the task-specific parameter $y_i$ by a few steps of gradient updates. The learning procedure can be formally characterized by bilevel optimization~\cite{ji2021bilevel, hao2024bilevel}.  We define a sequence of $K$ tasks, which consists of the training set $\{\mathcal{D}_i^{tr}\;|\; i=1, ..., K\}$ and validation set $\{\mathcal{D}_i^{val}\;|\; i=1, ..., K\}$. The loss function of the model on a uniformly sampled data set $\xi_i\sim D^{tr}_i$ can be denoted as $\mathcal{L}(x, y_i; \xi_i)$. 

From the perspective of bilevel optimization, the lower level function aims to find an optimal task-specific parameter $y_i^*$ on the training set $\mathcal{D}_{i}^{tr}$, given the meta parameter $x$. The upper-level function evaluates each $y_i^*, i=1,...,K$ on the corresponding validation set $\mathcal{D}_{i}^{val}$ and leverages all the gradient information to update the meta-parameter $x$. If we denote $y=(y_1, y_2,..., y_K)$, we can formalize this problem as follows:
\begin{equation}\label{eq:hr}
    \begin{aligned}
        & \min_{x}\frac{1}{K}\sum_{i=1}^{K}\frac{1}{|\mathcal{D}^{val}_i|}\sum_{\xi \in \mathcal{D}_{i}^{val}}\mathcal{L}(x, y^*(x); \xi), \text{ s.t.,}\; \\
        &  \ y^*(x) = \argmin_{y}\frac{1}{K}\sum_{i=1}^{K}\frac{1}{|\mathcal{D}_{i}^{tr}|}\sum_{\zeta\in \mathcal{D}^{tr}_i}\mathcal{L}(x, y_i; \zeta) + \frac{\mu}{2}\|y_i\|^2,
    \end{aligned}
\end{equation}

where the lower-level function contains a regularizer $\frac{\mu}{2}\|y_i\|^2$, which ensures that the lower-level function is strongly convex. In practice, a fully-connected layer is widely used as a classifier $y$ for the meta-learning model~\cite{bertinetto2018meta, ji2021bilevel}, and a different classifier $y_i$ will be used for a specific task. The remaining layers consist of recurrent neural layers, which are used to learn from natural language data. Therefore, the lower-level function satisfies the $\mu$-strongly convex assumption, and the upper-level function satisfies the nonconvex and unbounded smoothness assumptions (i.e., Assumption~\ref{ass:relax-smooth}). 

We performed the hyper-representation learning experiment for text classification task on Stanford Natural Language Inference (SNLI)~\cite{bowman2015large} and Amazon Review Dataset (ARD)~\cite{blitzer2006domain} datasets. SNLI is a naturalistic corpus of 570k pairs of sentences labeled ``entailment", ``contradiction", and ``independence". We construct $K=25$ tasks, where $\mathcal{D}_i^{tr}$ and $\mathcal{D}_i^{val}$ randomly sample two disjoint categories from the original data, respectively. ARD provides positive and negative customer reviews for 25 types of products. The following experimental setup is from~\cite{hao2024bilevel}: we choose three types (i.e., office products, automotive, and computer video games) as the validation set and the remaining types as the training set. The number of samples in training set and validation set keeps the same. In training phase, a good hyper-representation $x$ is obtained, and will be used in the test phase for performance evaluation. Note that $x$ is fixed during the test phase, and only task-specific parameter $y$ is fine-tuned for a quick adaptation.

We compare our algorithm SLIP with other baselines, including typical meta-learning algorithms MAML~\cite{rajeswaran2019meta} and ANIL~\cite{raghu2019rapid}, bilevel optimization method: StocBio~\cite{ji2021bilevel}, TTSA~\cite{hong2023two}, F$^2$SA~\cite{kwon2023fully}, SABA~\cite{dagreou2022framework}, MA-SOBA~\cite{chen2023bilevel1}, and BO-REP~\cite{hao2024bilevel}. For a fair comparison, we use a 2-layer recurrent neural network as the representation layers with input dimension=300, hidden dimension=4096 and output dimension=512.
A fully-connected layer is used as the classifier with its output dimention set to 2 on SNLI and 3 on ARD.

We run every method for $20$ epochs, with minibatch size $50$ for both training and validation set. Within each epoch, we run $500$ iterations of our algorithm to traverse the constructed training and validation task on SNLI, and $400$ iterations on ARD.  
The evolution of the training and test accuracy is shown in Figure~\ref{fig:acc_HR}~\footnote{Here an epoch means a full pass over the validation set, i.e., for upper-level variable $x$ update. For a more comprehensive comparison, we re-conduct experiments where an epoch means a full pass over the training set, i.e., for lower-level variable $y$ update. In this case there are fewer $x$ updates than the previous run due to the warm-start phase. The results show that SLIP is still empirically better than other baselines. See \Cref{sec:epoch-def} for more details.}. We can see that our proposed SLIP algorithm consistently outperforms all other baselines. More details can be found in Appendix~\ref{sec:exp_set_hr}.

\subsection{Data Hyper-cleaning}

\begin{figure*}[t]
\begin{center}
\subfigure[Training acc with $p=0.2$]{\includegraphics[width=0.245\linewidth]{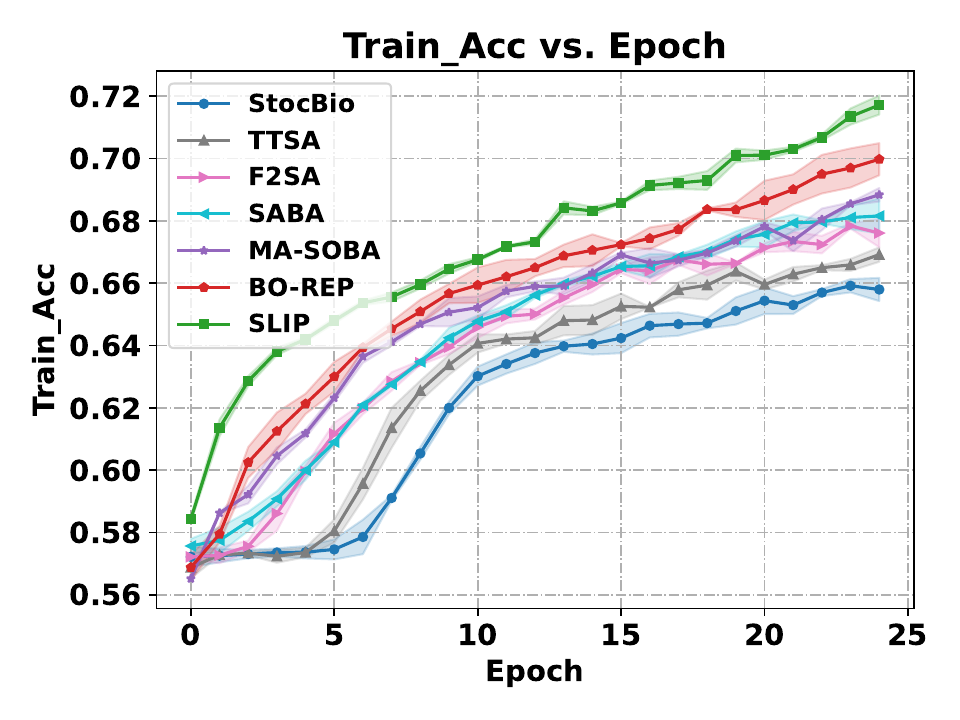}}   
\subfigure[Test acc with $p=0.2$]{\includegraphics[width=0.245\linewidth]{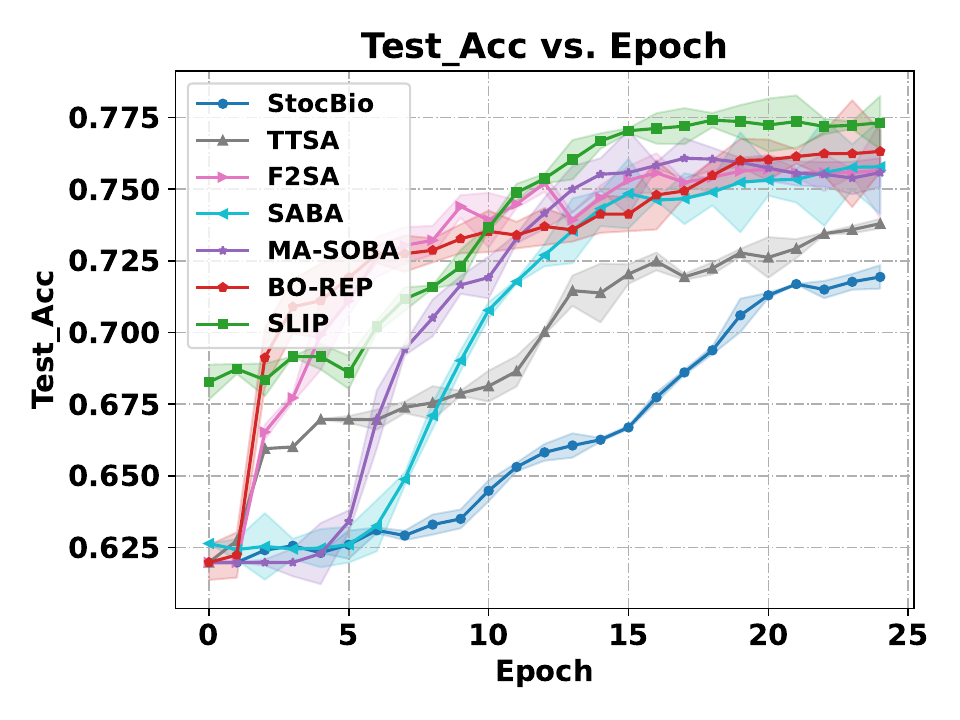}}   
\subfigure[Training acc with $p=0.4$]{\includegraphics[width=0.245\linewidth]{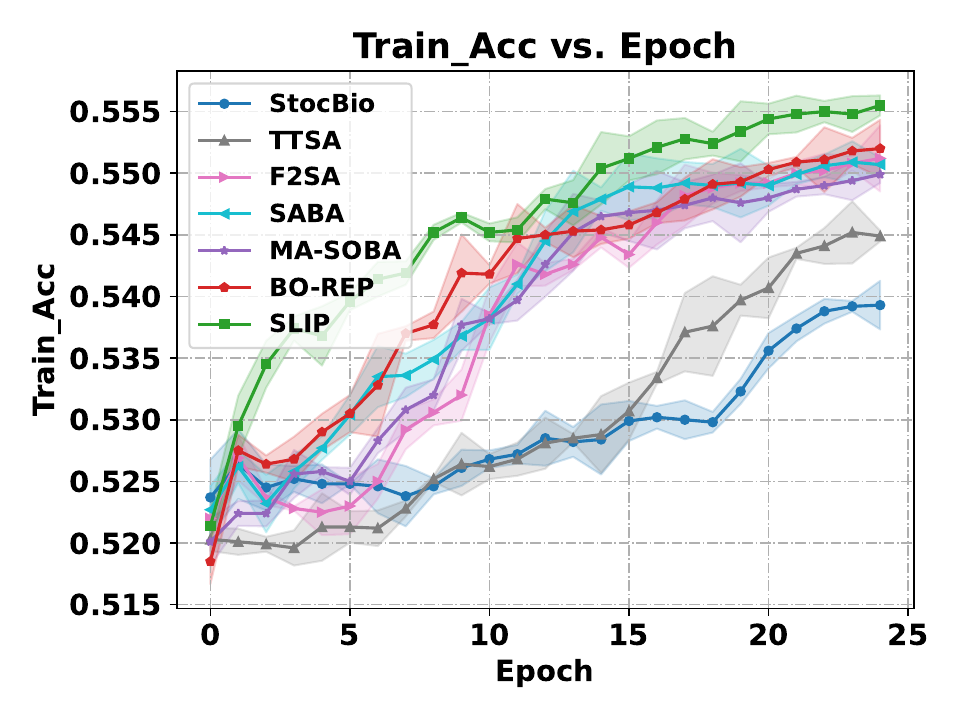}}  
\subfigure[Test acc with $p=0.4$]{\includegraphics[width=0.245\linewidth]{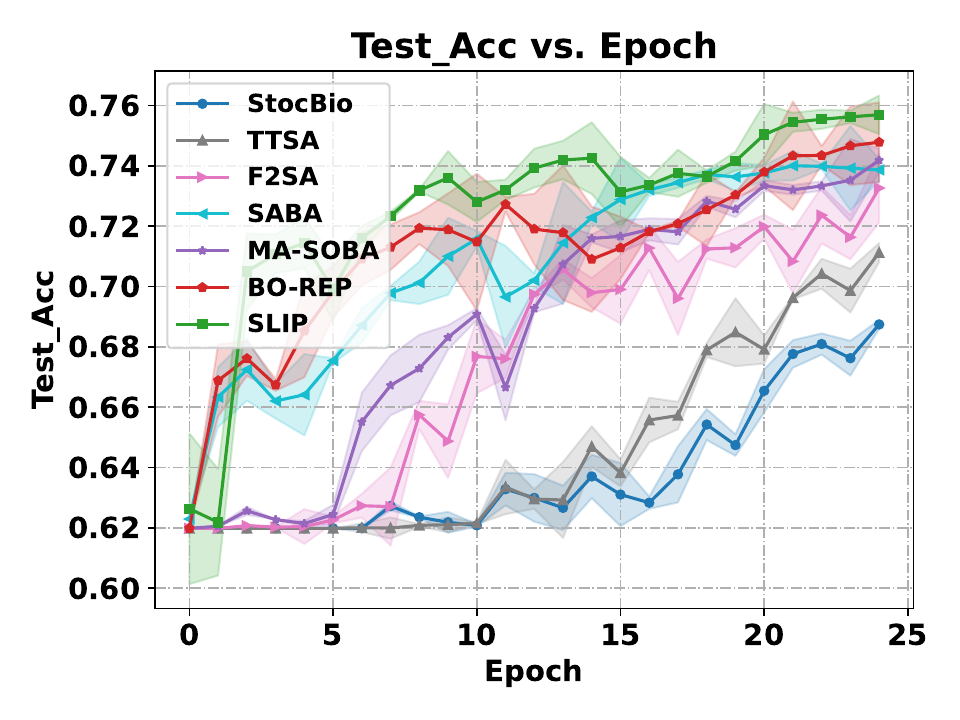}}
\end{center}
\caption{Comparison with bilevel optimization baselinses on data hyper-cleaning. Figure (a), (b) are the results with the corruption rate $p=0.2$. Figure (c), (d) are the results with the corruption rate $p=0.4$. }
\label{fig:acc_HC}
\end{figure*}

It is common that the data can be corrupted by noise, which presents a challenge to train a model at a certain level of noise. The data hyper-cleaning task~\cite{shaban2019truncated} aims to train a model on a corrupted set. An important approach to solving this problem is to learn a weight for each individual sample so that the weights associated with the noisy data will be assigned a small magnitude value by training. We typically solve the data hyper-cleaning task by bilevel optimization. Given an initial weight $x_i\in\mathbb{R}$ for each sample, a DNN model with parameter $y\in\mathbb{R}^{d}$, loss function $\mathcal{L}(\cdot)$, a corrupted training set $\mathcal{D}^{tr}$ and a non-corrupted validation set $\mathcal{D}^{val}$,  the model is trained on the weighted set $\mathcal{D}^{tr}$ and tries to achieve good performance on the non-corrupted validation set $\mathcal{D}^{val}$. Formally, this bilevel optimization can be formulated as
    \begin{equation}
        \begin{aligned}
            & \min_{x} \frac{1}{|\mathcal{D}^{val}|}\sum_{\xi\in \mathcal{D}^{val}}\mathcal{L}(y^*(x); \xi),\\ 
            & s.t., \ y^*(x)=\argmin_y \frac{1}{|\mathcal{D}^{tr}|}\sum_{\zeta_i \in \mathcal{D}^{tr}}\sigma(x_i)\mathcal{L}(y; \zeta_i) + \lambda \|y\|^2,
        \end{aligned}
    \end{equation}
where the activation function $\sigma(z) = \frac{1}{1+e^{-z}}$ and the parameter $\lambda>0$ is the regularizer factor. 

We performed the data hyper-cleaning experiment on Sentiment140~\cite{go2009twitter} for text classification. The Sentiment140 dataset provides a large corpus of tweets that have been automatically labeled with sentiment (positive or negative) based on the presence of emoticons. This data set is commonly used as noisy labels for sentiment classification. We preprocess the original training set by randomly sampling a certain proportion $p$ of data labels and flipping them, where $p$ is called the corruption rate. In this paper, we set the corruption rate to 0.2 and 0.4.

For all baselines, we adopt a 2-layer recurrent neural network which is the same as that mentioned in Section~\ref{sec:HR} with the input dimension = 300 and the output dimension=2. The sample weight $x_i$ is uniformly initialized to 1.0. We compare SLIP with other bilevel optimization algorithms, including StocBio~\cite{ji2021bilevel}, TTSA~\cite{hong2023two}, F$^2$SA~\cite{kwon2023fully}, SABA~\cite{dagreou2022framework}, MA-SOBA~\cite{chen2023bilevel1}, and BO-REP~\cite{hao2024bilevel}. We ran all the methods for 25 epochs, and it required $63$ iterations within each epoch to traverse the training and validation set with minibatch size = 512. 

The experimental results are presented in Figure~\ref{fig:acc_HC}, where (a) and (b) show the results with the corruption rate $p=0.2$ and Figure (c) and (d) show the results with the corruption rate $p=0.4$. Our SLIP outperforms all the baselines consistently, which demonstrates the effectiveness and superiority of our algorithm in data hyper-cleaning compared with others. More experimental details can be found in Appendix~\ref{sec:exp_set_hc}.


\begin{figure*}[t]
\begin{center}
\subfigure{\includegraphics[width=0.24\linewidth]{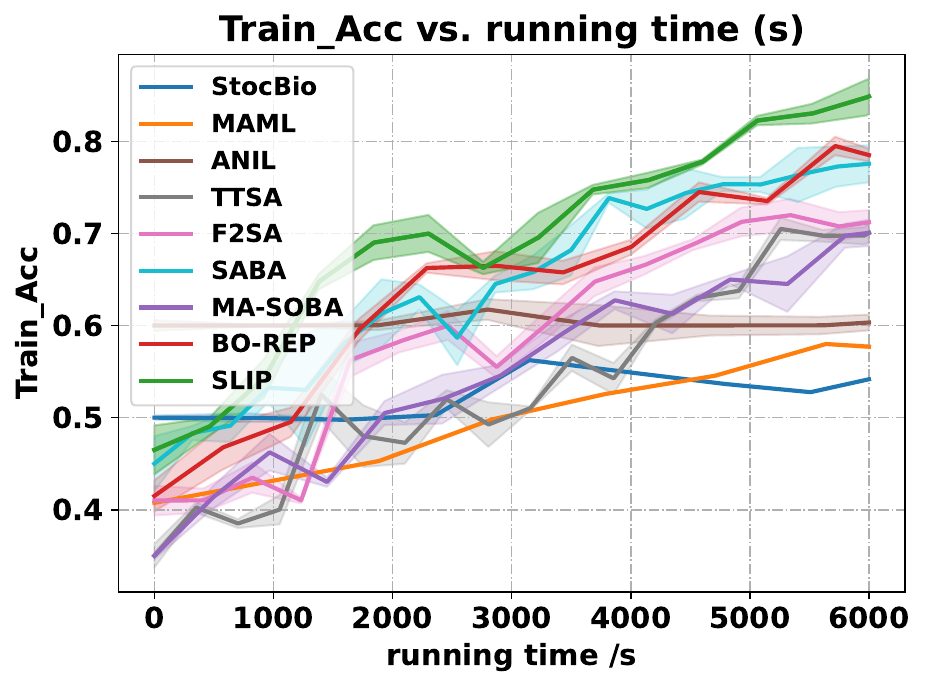}} \ \subfigure{\includegraphics[width=0.24\linewidth]{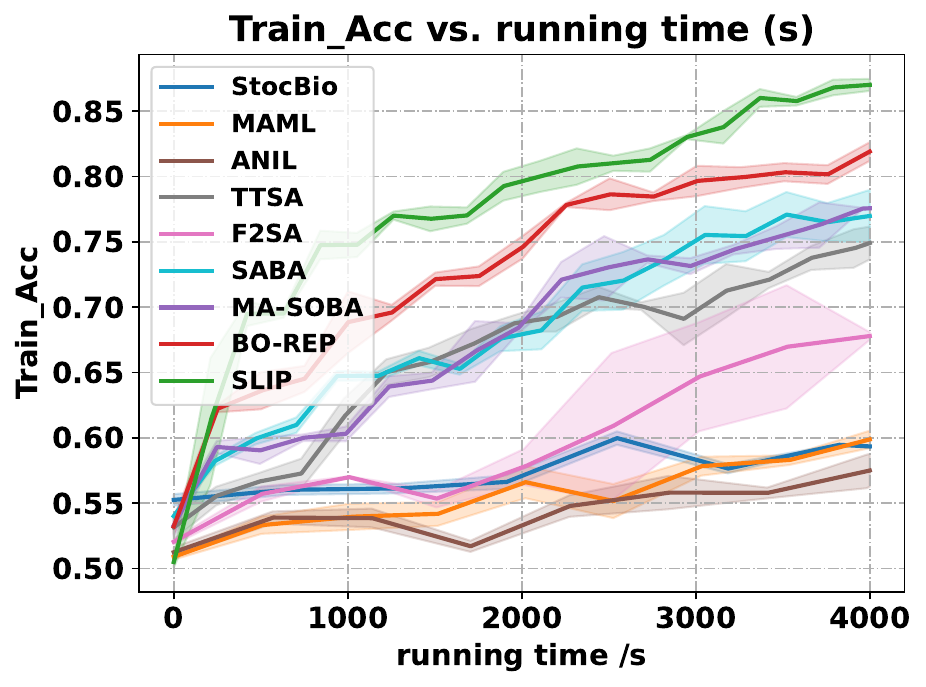}} \  
\subfigure{\includegraphics[width=0.24\linewidth]{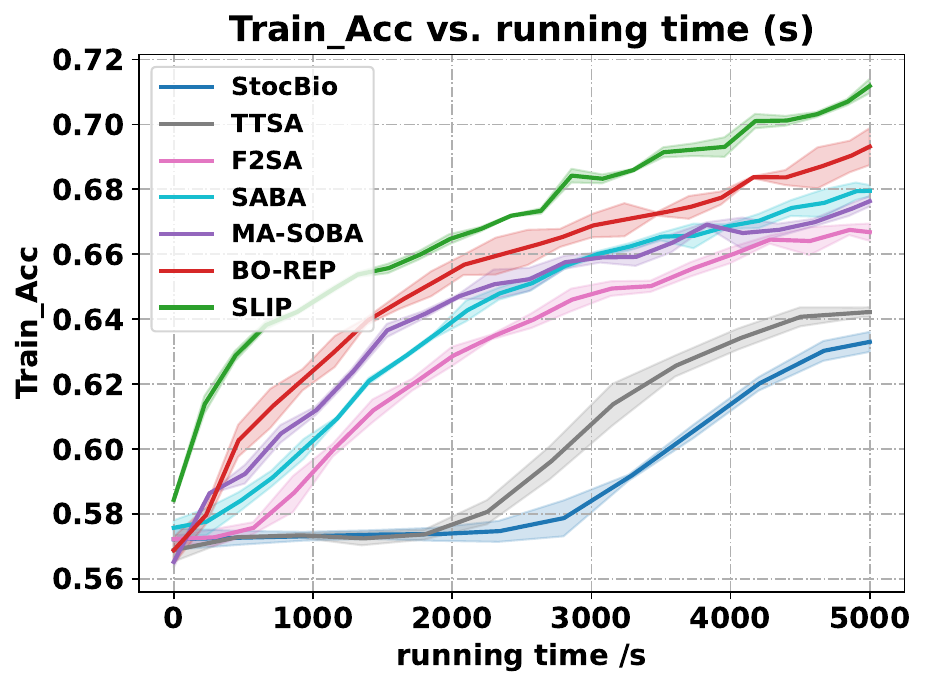}} \ 
\subfigure{\includegraphics[width=0.24\linewidth]{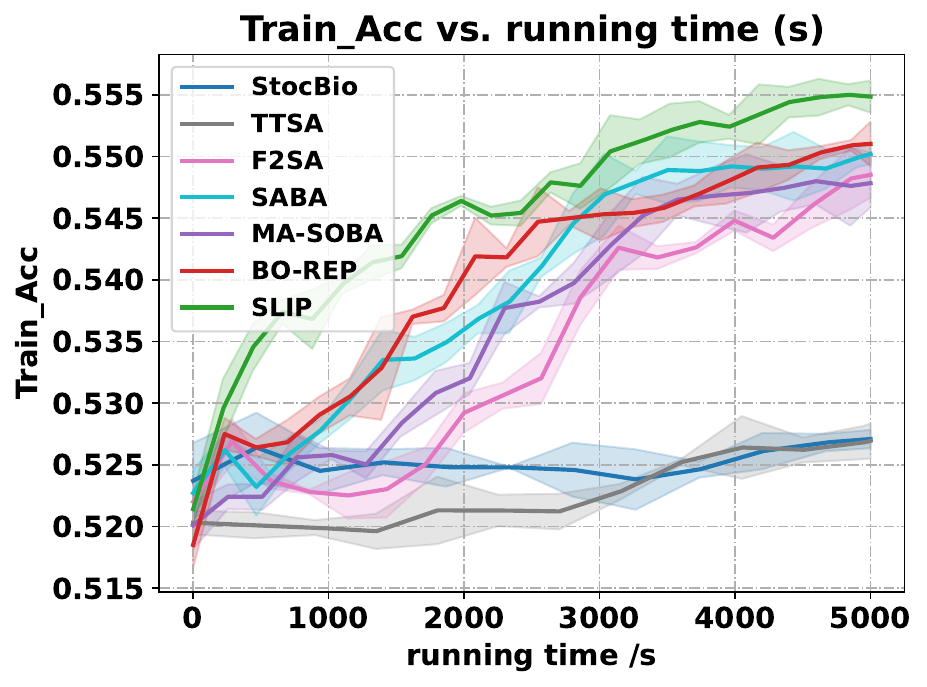}} \ 
\setcounter{subfigure}{0} 
\subfigure[Hyper-representation (SNLI)]{\includegraphics[width=0.24\linewidth]{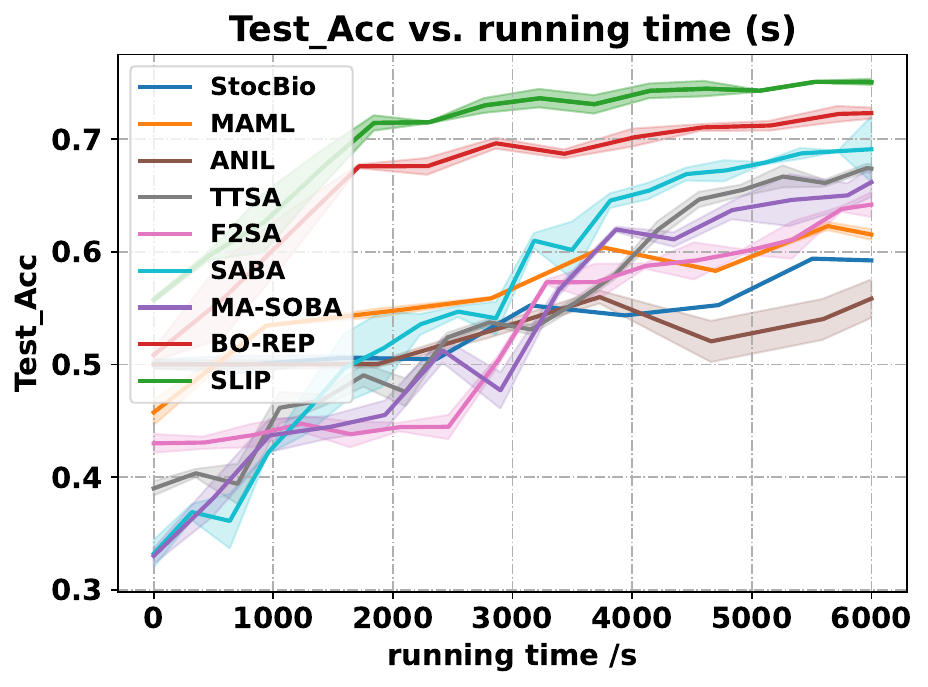}} \ 
\subfigure[Hyper-representation (ARD)]{\includegraphics[width=0.24\linewidth]{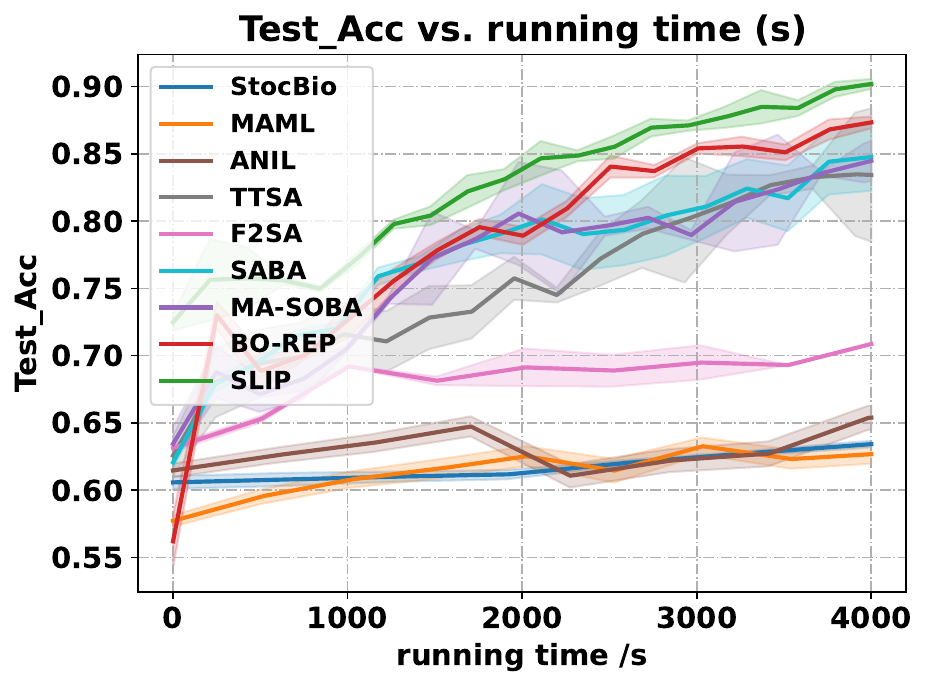}} \ 
\subfigure[Data Hyper-cleaning ($p$=0.2)]{\includegraphics[width=0.24\linewidth]{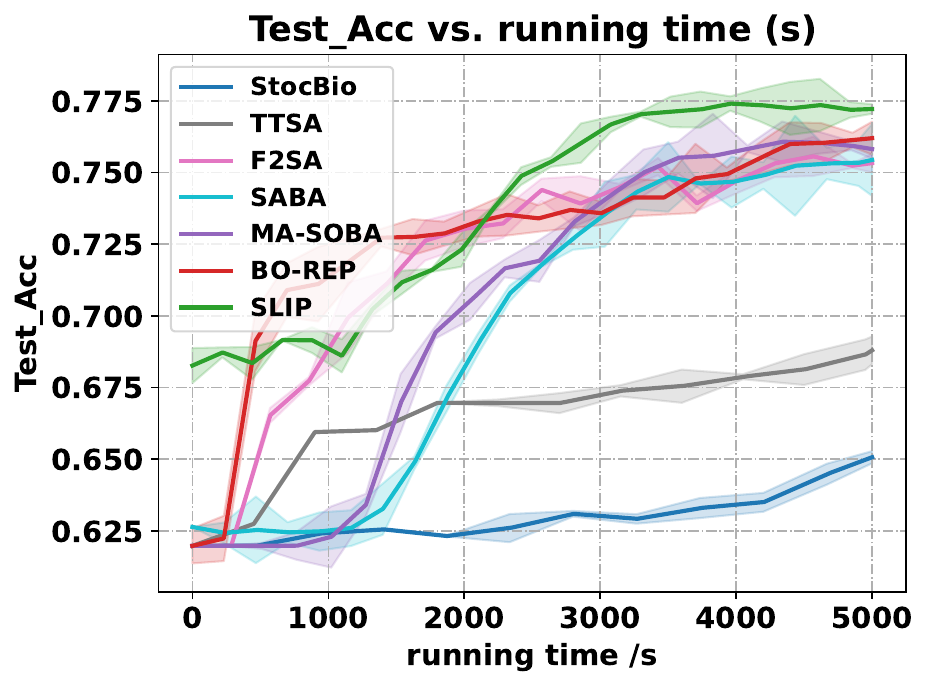}} \ 
\subfigure[Data Hyper-cleaning ($p$=0.4)]{\includegraphics[width=0.24\linewidth]{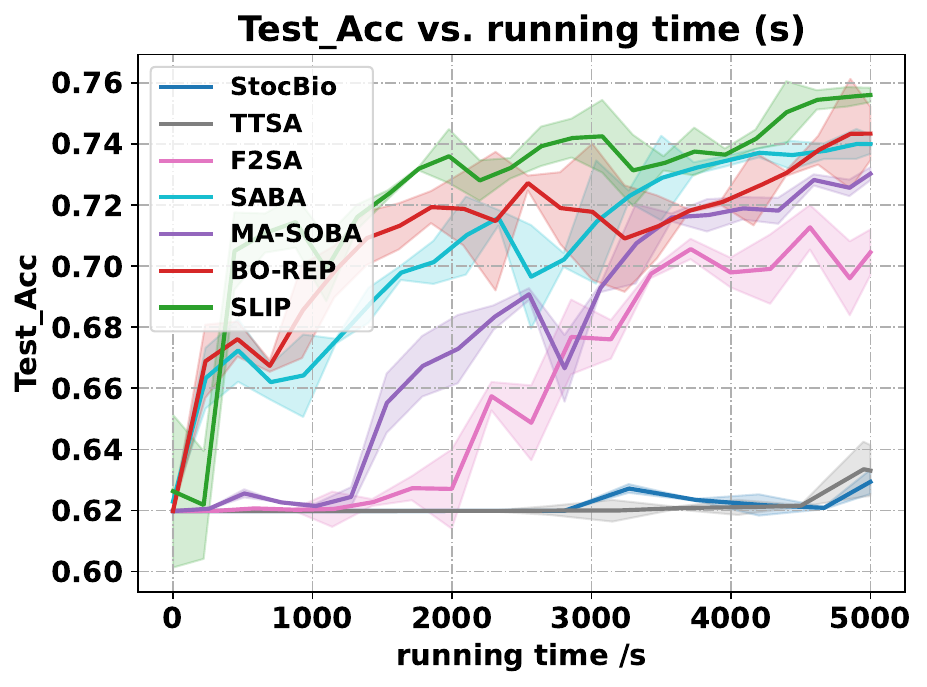}} \ 
\end{center}
\caption{Comparison on running time. (a) Results of Hyper-representation on SNLI dataset. (b) Results of Hyper-representation on Amazon Review Dataset (ARD). (c),  (d) Results of data Hyper-cleaning on Sentiment140 with corruption rate $p=0.2$ and $p=0.4$.}
\label{fig:acc_running_time}
\end{figure*}

\subsection{Running Time Comparison}
To evaluate the practical efficiency of various bilevel algorithms, we follow~\cite{ji2021bilevel, dagreou2022framework} to compare the performance of the baselines over the running time. Each algorithm runs separately on a single device with a NVIDIA RTX 6000 graphics card and an AMD EPYC 7513 32-Core Processor, while the training/test accuracy of the corresponding time is recorded. We show the results of accuracy versus running time in Figure~\ref{fig:acc_running_time}. In particular, Figures (a), (b) show the experimental results of hyper-representation learning on SNLI and ARD datasets, respectively. Figures (c), (d) show the results of data hyper-cleaning in the Sentiment140 data set, where the corruption rate is $p=0.2$ in (c) and $p=0.4$ in (d). Note that the same time scale is used for one figure for a fair comparison. We can observe that SLIP runs faster than other baselines in all the figures, and BO-REP follows. One interesting observation is that in the hyper-representation learning experiment, SLIP runs significantly faster than BO-REP~\cite{hao2024bilevel} (e.g., in Figure~\ref{fig:acc_running_time} (a), (b)). The reason is due to the single-loop nature of SLIP algorithm: when calculating the gradient of the previous layers' parameter (i.e., $x$ in~\eqref{eq:hr}), the gradient of the last layer's parameter (i.e., $y$ in \eqref{eq:hr}) is automatically calculated in the same backpropagation so that we do not need to recalculate it again for updating $y$, so it saves running time compared to BO-REP. In contrast, BO-REP needs multiple separate calculations of gradient w.r.t. $y$ during its periodic updates for $y$. 




\section{Conclusion}
In this paper, we studied the problem of stochastic bilevel optimization, where the upper-level function has potential unbounded smoothness and the lower-level problem is strongly convex. We have designed a single loop algorithm with $\widetilde{O}(1/\epsilon^{4})$ oracle calls to find an $\epsilon$-stationary point, where each oracle call invokes a stochastic gradient or a Hessian-vector product.  Our complexity bounds hold both in expectation and with high probability, under different assumptions. Our bound is nearly optimal due to the lower bounds of nonconvex stochastic optimization for single-level problems, when there is no mean squared smoothness of the stochastic gradient oracles~\cite{arjevani2023lower}.  In the future, we plan to further improve the complexity bounds for bilevel optimization under stronger assumptions. For example, one interesting question to investigate is to obtain $O(1/\epsilon^3)$ complexity under the individual relaxed smoothness assumption for the stochastic gradient/Hessian-vector product oracle.


\section*{Acknowledgements}
We would like to thank the anonymous reviewers for their helpful comments. This work has been supported by a grant from George Mason University, the Presidential Scholarship from George Mason University, a ORIEI seed funding from George Mason University, and a Cisco Faculty Research Award. The Computations were run on ARGO, a research computing cluster provided by the Office of Research Computing at George Mason University (URL: \url{https://orc.gmu.edu}). 

\section*{Impact Statement}
This paper presents work whose goal is to advance the field of Machine Learning. There are many potential societal consequences of our work, none which we feel must be specifically highlighted here.

\bibliography{ref}

\begin{thebibliography}{70}
\providecommand{\natexlab}[1]{#1}
\providecommand{\url}[1]{\texttt{#1}}
\expandafter\ifx\csname urlstyle\endcsname\relax
  \providecommand{\doi}[1]{doi: #1}\else
  \providecommand{\doi}{doi: \begingroup \urlstyle{rm}\Url}\fi

\bibitem[Anandalingam \& White(1990)Anandalingam and
  White]{anandalingam1990solution}
Anandalingam, G. and White, D.
\newblock A solution method for the linear static stackelberg problem using
  penalty functions.
\newblock \emph{IEEE Transactions on automatic control}, 35\penalty0
  (10):\penalty0 1170--1173, 1990.

\bibitem[Arbel \& Mairal(2021)Arbel and Mairal]{arbel2021amortized}
Arbel, M. and Mairal, J.
\newblock Amortized implicit differentiation for stochastic bilevel
  optimization.
\newblock \emph{arXiv preprint arXiv:2111.14580}, 2021.

\bibitem[Arjevani et~al.(2023)Arjevani, Carmon, Duchi, Foster, Srebro, and
  Woodworth]{arjevani2023lower}
Arjevani, Y., Carmon, Y., Duchi, J.~C., Foster, D.~J., Srebro, N., and
  Woodworth, B.
\newblock Lower bounds for non-convex stochastic optimization.
\newblock \emph{Mathematical Programming}, 199\penalty0 (1-2):\penalty0
  165--214, 2023.

\bibitem[Bertinetto et~al.(2018)Bertinetto, Henriques, Torr, and
  Vedaldi]{bertinetto2018meta}
Bertinetto, L., Henriques, J.~F., Torr, P.~H., and Vedaldi, A.
\newblock Meta-learning with differentiable closed-form solvers.
\newblock \emph{arXiv preprint arXiv:1805.08136}, 2018.

\bibitem[Besbes et~al.(2015)Besbes, Gur, and Zeevi]{besbes2015non}
Besbes, O., Gur, Y., and Zeevi, A.
\newblock Non-stationary stochastic optimization.
\newblock \emph{Operations research}, 63\penalty0 (5):\penalty0 1227--1244,
  2015.

\bibitem[Blitzer et~al.(2006)Blitzer, McDonald, and Pereira]{blitzer2006domain}
Blitzer, J., McDonald, R., and Pereira, F.
\newblock Domain adaptation with structural correspondence learning.
\newblock In \emph{Proceedings of the 2006 conference on empirical methods in
  natural language processing}, pp.\  120--128, 2006.

\bibitem[Borsos et~al.(2020)Borsos, Mutny, and Krause]{borsos2020coresets}
Borsos, Z., Mutny, M., and Krause, A.
\newblock Coresets via bilevel optimization for continual learning and
  streaming.
\newblock \emph{Advances in neural information processing systems},
  33:\penalty0 14879--14890, 2020.

\bibitem[Bowman et~al.(2015)Bowman, Angeli, Potts, and
  Manning]{bowman2015large}
Bowman, S.~R., Angeli, G., Potts, C., and Manning, C.~D.
\newblock A large annotated corpus for learning natural language inference.
\newblock \emph{arXiv preprint arXiv:1508.05326}, 2015.

\bibitem[Bracken \& McGill(1973)Bracken and McGill]{bracken1973mathematical}
Bracken, J. and McGill, J.~T.
\newblock Mathematical programs with optimization problems in the constraints.
\newblock \emph{Operations research}, 21\penalty0 (1):\penalty0 37--44, 1973.

\bibitem[Chen et~al.(2023{\natexlab{a}})Chen, Xu, and Zhang]{chen2023bilevel1}
Chen, L., Xu, J., and Zhang, J.
\newblock On bilevel optimization without lower-level strong convexity.
\newblock \emph{arXiv preprint arXiv:2301.00712}, 2023{\natexlab{a}}.

\bibitem[Chen et~al.(2021)Chen, Sun, and Yin]{chen2021closing}
Chen, T., Sun, Y., and Yin, W.
\newblock Closing the gap: Tighter analysis of alternating stochastic gradient
  methods for bilevel problems.
\newblock \emph{Advances in Neural Information Processing Systems},
  34:\penalty0 25294--25307, 2021.

\bibitem[Chen et~al.(2022)Chen, Sun, Xiao, and Yin]{chen2022single}
Chen, T., Sun, Y., Xiao, Q., and Yin, W.
\newblock A single-timescale method for stochastic bilevel optimization.
\newblock In \emph{International Conference on Artificial Intelligence and
  Statistics}, pp.\  2466--2488. PMLR, 2022.

\bibitem[Chen et~al.(2023{\natexlab{b}})Chen, Xiao, and
  Balasubramanian]{chen2023optimal}
Chen, X., Xiao, T., and Balasubramanian, K.
\newblock Optimal algorithms for stochastic bilevel optimization under relaxed
  smoothness conditions.
\newblock \emph{arXiv preprint arXiv:2306.12067}, 2023{\natexlab{b}}.

\bibitem[Chen et~al.(2023{\natexlab{c}})Chen, Zhou, Liang, and
  Lu]{chen2023generalized}
Chen, Z., Zhou, Y., Liang, Y., and Lu, Z.
\newblock Generalized-smooth nonconvex optimization is as efficient as smooth
  nonconvex optimization.
\newblock \emph{arXiv preprint arXiv:2303.02854}, 2023{\natexlab{c}}.

\bibitem[Crawshaw et~al.(2022)Crawshaw, Liu, Orabona, Zhang, and
  Zhuang]{crawshaw2022robustness}
Crawshaw, M., Liu, M., Orabona, F., Zhang, W., and Zhuang, Z.
\newblock Robustness to unbounded smoothness of generalized signsgd.
\newblock \emph{Advances in neural information processing systems}, 2022.

\bibitem[Crawshaw et~al.(2023{\natexlab{a}})Crawshaw, Bao, and
  Liu]{crawshaw2023episode}
Crawshaw, M., Bao, Y., and Liu, M.
\newblock Episode: Episodic gradient clipping with periodic resampled
  corrections for federated learning with heterogeneous data.
\newblock In \emph{The Eleventh International Conference on Learning
  Representations}, 2023{\natexlab{a}}.

\bibitem[Crawshaw et~al.(2023{\natexlab{b}})Crawshaw, Bao, and
  Liu]{crawshaw2023federated}
Crawshaw, M., Bao, Y., and Liu, M.
\newblock Federated learning with client subsampling, data heterogeneity, and
  unbounded smoothness: A new algorithm and lower bounds.
\newblock In \emph{Thirty-seventh Conference on Neural Information Processing
  Systems}, 2023{\natexlab{b}}.

\bibitem[Cutler et~al.(2023)Cutler, Drusvyatskiy, and
  Harchaoui]{cutler2023stochastic}
Cutler, J., Drusvyatskiy, D., and Harchaoui, Z.
\newblock Stochastic optimization under distributional drift.
\newblock \emph{Journal of Machine Learning Research}, 24\penalty0
  (147):\penalty0 1--56, 2023.

\bibitem[Dagr{\'e}ou et~al.(2022)Dagr{\'e}ou, Ablin, Vaiter, and
  Moreau]{dagreou2022framework}
Dagr{\'e}ou, M., Ablin, P., Vaiter, S., and Moreau, T.
\newblock A framework for bilevel optimization that enables stochastic and
  global variance reduction algorithms.
\newblock \emph{Advances in Neural Information Processing Systems},
  35:\penalty0 26698--26710, 2022.

\bibitem[Dagr{\'e}ou et~al.(2023)Dagr{\'e}ou, Moreau, Vaiter, and
  Ablin]{dagreou2023lower}
Dagr{\'e}ou, M., Moreau, T., Vaiter, S., and Ablin, P.
\newblock A lower bound and a near-optimal algorithm for bilevel empirical risk
  minimization.
\newblock \emph{arXiv preprint arXiv:2302.08766}, 2023.

\bibitem[Defazio et~al.(2014)Defazio, Bach, and
  Lacoste-Julien]{defazio2014saga}
Defazio, A., Bach, F., and Lacoste-Julien, S.
\newblock Saga: A fast incremental gradient method with support for
  non-strongly convex composite objectives.
\newblock In \emph{Advances in Neural Information Processing Systems}, pp.\
  1646--1654, 2014.

\bibitem[Dupa{\v{c}}(1965)]{dupavc1965dynamic}
Dupa{\v{c}}, V.
\newblock A dynamic stochastic approximation method.
\newblock \emph{The Annals of Mathematical Statistics}, pp.\  1695--1702, 1965.

\bibitem[Elman(1990)]{elman1990finding}
Elman, J.~L.
\newblock Finding structure in time.
\newblock \emph{Cognitive science}, 14\penalty0 (2):\penalty0 179--211, 1990.

\bibitem[Faw et~al.(2023)Faw, Rout, Caramanis, and Shakkottai]{faw2023beyond}
Faw, M., Rout, L., Caramanis, C., and Shakkottai, S.
\newblock Beyond uniform smoothness: A stopped analysis of adaptive sgd.
\newblock \emph{arXiv preprint arXiv:2302.06570}, 2023.

\bibitem[Feurer \& Hutter(2019)Feurer and Hutter]{feurer2019hyperparameter}
Feurer, M. and Hutter, F.
\newblock Hyperparameter optimization.
\newblock In \emph{Automated Machine Learning}, pp.\  3--33. Springer, Cham,
  2019.

\bibitem[Finn et~al.(2017)Finn, Abbeel, and Levine]{finn2017model}
Finn, C., Abbeel, P., and Levine, S.
\newblock Model-agnostic meta-learning for fast adaptation of deep networks.
\newblock In \emph{International conference on machine learning}, pp.\
  1126--1135. PMLR, 2017.

\bibitem[Franceschi et~al.(2018)Franceschi, Frasconi, Salzo, Grazzi, and
  Pontil]{franceschi2018bilevel}
Franceschi, L., Frasconi, P., Salzo, S., Grazzi, R., and Pontil, M.
\newblock Bilevel programming for hyperparameter optimization and
  meta-learning.
\newblock In \emph{International conference on machine learning}, pp.\
  1568--1577. PMLR, 2018.

\bibitem[Ghadimi \& Lan(2013{\natexlab{a}})Ghadimi and Lan]{ghadimi2013optimal}
Ghadimi, S. and Lan, G.
\newblock Optimal stochastic approximation algorithms for strongly convex
  stochastic composite optimization, ii: shrinking procedures and optimal
  algorithms.
\newblock \emph{SIAM Journal on Optimization}, 23\penalty0 (4):\penalty0
  2061--2089, 2013{\natexlab{a}}.

\bibitem[Ghadimi \& Lan(2013{\natexlab{b}})Ghadimi and
  Lan]{ghadimi2013stochastic}
Ghadimi, S. and Lan, G.
\newblock Stochastic first-and zeroth-order methods for nonconvex stochastic
  programming.
\newblock \emph{SIAM Journal on Optimization}, 23\penalty0 (4):\penalty0
  2341--2368, 2013{\natexlab{b}}.

\bibitem[Ghadimi \& Wang(2018)Ghadimi and Wang]{ghadimi2018approximation}
Ghadimi, S. and Wang, M.
\newblock Approximation methods for bilevel programming.
\newblock \emph{arXiv preprint arXiv:1802.02246}, 2018.

\bibitem[Go et~al.(2009)Go, Bhayani, and Huang]{go2009twitter}
Go, A., Bhayani, R., and Huang, L.
\newblock Twitter sentiment classification using distant supervision.
\newblock \emph{CS224N project report, Stanford}, 1\penalty0 (12):\penalty0
  2009, 2009.

\bibitem[Grazzi et~al.(2022)Grazzi, Pontil, and Salzo]{grazzi2022bilevel}
Grazzi, R., Pontil, M., and Salzo, S.
\newblock Bilevel optimization with a lower-level contraction: Optimal sample
  complexity without warm-start.
\newblock \emph{arXiv preprint arXiv:2202.03397}, 2022.

\bibitem[Guo \& Ljung(1995)Guo and Ljung]{guo1995exponential}
Guo, L. and Ljung, L.
\newblock Exponential stability of general tracking algorithms.
\newblock \emph{IEEE Transactions on Automatic Control}, 40\penalty0
  (8):\penalty0 1376--1387, 1995.

\bibitem[Guo et~al.(2021)Guo, Hu, Zhang, and Yang]{guo2021randomized}
Guo, Z., Hu, Q., Zhang, L., and Yang, T.
\newblock Randomized stochastic variance-reduced methods for multi-task
  stochastic bilevel optimization.
\newblock \emph{arXiv preprint arXiv:2105.02266}, 2021.

\bibitem[Hao et~al.(2023)Hao, Ji, and Liu]{hao2023bilevel}
Hao, J., Ji, K., and Liu, M.
\newblock Bilevel coreset selection in continual learning: A new formulation
  and algorithm.
\newblock In \emph{Thirty-seventh Conference on Neural Information Processing
  Systems}, 2023.

\bibitem[Hao et~al.(2024)Hao, Gong, and Liu]{hao2024bilevel}
Hao, J., Gong, X., and Liu, M.
\newblock Bilevel optimization under unbounded smoothness: A new algorithm and
  convergence analysis.
\newblock In \emph{The Twelfth International Conference on Learning
  Representations}, 2024.

\bibitem[Hazan \& Kale(2014)Hazan and Kale]{hazan2014beyond}
Hazan, E. and Kale, S.
\newblock Beyond the regret minimization barrier: optimal algorithms for
  stochastic strongly-convex optimization.
\newblock \emph{Journal of Machine Learning Research}, 15\penalty0
  (1):\penalty0 2489--2512, 2014.

\bibitem[Hazan et~al.(2015)Hazan, Levy, and Shalev-Shwartz]{hazan2015beyond}
Hazan, E., Levy, K.~Y., and Shalev-Shwartz, S.
\newblock Beyond convexity: Stochastic quasi-convex optimization.
\newblock \emph{arXiv preprint arXiv:1507.02030}, 2015.

\bibitem[Hochreiter \& Schmidhuber(1997)Hochreiter and
  Schmidhuber]{hochreiter1997long}
Hochreiter, S. and Schmidhuber, J.
\newblock Long short-term memory.
\newblock \emph{Neural computation}, 9\penalty0 (8):\penalty0 1735--1780, 1997.

\bibitem[Hong et~al.(2023)Hong, Wai, Wang, and Yang]{hong2023two}
Hong, M., Wai, H.-T., Wang, Z., and Yang, Z.
\newblock A two-timescale stochastic algorithm framework for bilevel
  optimization: Complexity analysis and application to actor-critic.
\newblock \emph{SIAM Journal on Optimization}, 33\penalty0 (1):\penalty0
  147--180, 2023.

\bibitem[Ji et~al.(2021)Ji, Yang, and Liang]{ji2021bilevel}
Ji, K., Yang, J., and Liang, Y.
\newblock Bilevel optimization: Convergence analysis and enhanced design.
\newblock In \emph{International conference on machine learning}, pp.\
  4882--4892. PMLR, 2021.

\bibitem[Jin et~al.(2021)Jin, Zhang, Wang, and Wang]{jin2021non}
Jin, J., Zhang, B., Wang, H., and Wang, L.
\newblock Non-convex distributionally robust optimization: Non-asymptotic
  analysis.
\newblock \emph{Advances in Neural Information Processing Systems},
  34:\penalty0 2771--2782, 2021.

\bibitem[Khanduri et~al.(2021)Khanduri, Zeng, Hong, Wai, Wang, and
  Yang]{khanduri2021near}
Khanduri, P., Zeng, S., Hong, M., Wai, H.-T., Wang, Z., and Yang, Z.
\newblock A near-optimal algorithm for stochastic bilevel optimization via
  double-momentum.
\newblock \emph{Advances in Neural Information Processing Systems (NeurIPS)},
  34:\penalty0 30271--30283, 2021.

\bibitem[Konda \& Tsitsiklis(1999)Konda and Tsitsiklis]{konda1999actor}
Konda, V. and Tsitsiklis, J.
\newblock Actor-critic algorithms.
\newblock \emph{Advances in neural information processing systems}, 12, 1999.

\bibitem[Kwon et~al.(2023{\natexlab{a}})Kwon, Kwon, Wright, and
  Nowak]{kwon2023penalty}
Kwon, J., Kwon, D., Wright, S., and Nowak, R.
\newblock On penalty methods for nonconvex bilevel optimization and first-order
  stochastic approximation.
\newblock \emph{arXiv preprint arXiv:2309.01753}, 2023{\natexlab{a}}.

\bibitem[Kwon et~al.(2023{\natexlab{b}})Kwon, Kwon, Wright, and
  Nowak]{kwon2023fully}
Kwon, J., Kwon, D., Wright, S., and Nowak, R.~D.
\newblock A fully first-order method for stochastic bilevel optimization.
\newblock In \emph{International Conference on Machine Learning}, pp.\
  18083--18113. PMLR, 2023{\natexlab{b}}.

\bibitem[Lan(2012)]{lan2012optimal}
Lan, G.
\newblock An optimal method for stochastic composite optimization.
\newblock \emph{Mathematical Programming}, 133\penalty0 (1-2):\penalty0
  365--397, 2012.

\bibitem[Li et~al.(2023)Li, Jadbabaie, and Rakhlin]{li2023convergence}
Li, H., Jadbabaie, A., and Rakhlin, A.
\newblock Convergence of adam under relaxed assumptions.
\newblock \emph{arXiv preprint arXiv:2304.13972}, 2023.

\bibitem[Liu et~al.(2022{\natexlab{a}})Liu, Ye, Wright, Stone, and
  Liu]{liu2022bome}
Liu, B., Ye, M., Wright, S., Stone, P., and Liu, Q.
\newblock Bome! bilevel optimization made easy: A simple first-order approach.
\newblock \emph{Advances in Neural Information Processing Systems},
  35:\penalty0 17248--17262, 2022{\natexlab{a}}.

\bibitem[Liu et~al.(2022{\natexlab{b}})Liu, Zhuang, Lei, and
  Liao]{liu2022communication}
Liu, M., Zhuang, Z., Lei, Y., and Liao, C.
\newblock A communication-efficient distributed gradient clipping algorithm for
  training deep neural networks.
\newblock \emph{Advances in Neural Information Processing Systems},
  35:\penalty0 26204--26217, 2022{\natexlab{b}}.

\bibitem[Liu et~al.(2020)Liu, Mu, Yuan, Zeng, and Zhang]{liu2020generic}
Liu, R., Mu, P., Yuan, X., Zeng, S., and Zhang, J.
\newblock A generic first-order algorithmic framework for bi-level programming
  beyond lower-level singleton.
\newblock In \emph{International Conference on Machine Learning}, pp.\
  6305--6315. PMLR, 2020.

\bibitem[Liu et~al.(2023)Liu, Jagabathula, and Zhou]{liu2023nearlyoptimal}
Liu, Z., Jagabathula, S., and Zhou, Z.
\newblock Near-optimal non-convex stochastic optimization under generalized
  smoothness.
\newblock \emph{arXiv preprint arXiv:2302.0603}, 2023.

\bibitem[Madden et~al.(2021)Madden, Becker, and Dall’Anese]{madden2021bounds}
Madden, L., Becker, S., and Dall’Anese, E.
\newblock Bounds for the tracking error of first-order online optimization
  methods.
\newblock \emph{Journal of Optimization Theory and Applications}, 189:\penalty0
  437--457, 2021.

\bibitem[Raghu et~al.(2019)Raghu, Raghu, Bengio, and Vinyals]{raghu2019rapid}
Raghu, A., Raghu, M., Bengio, S., and Vinyals, O.
\newblock Rapid learning or feature reuse? towards understanding the
  effectiveness of maml.
\newblock \emph{arXiv preprint arXiv:1909.09157}, 2019.

\bibitem[Rajeswaran et~al.(2019)Rajeswaran, Finn, Kakade, and
  Levine]{rajeswaran2019meta}
Rajeswaran, A., Finn, C., Kakade, S.~M., and Levine, S.
\newblock Meta-learning with implicit gradients.
\newblock In \emph{Advances in Neural Information Processing Systems
  (NeurIPS)}, pp.\  113--124, 2019.

\bibitem[Reisizadeh et~al.(2023)Reisizadeh, Li, Das, and
  Jadbabaie]{reisizadeh2023variance}
Reisizadeh, A., Li, H., Das, S., and Jadbabaie, A.
\newblock Variance-reduced clipping for non-convex optimization.
\newblock \emph{arXiv preprint arXiv:2303.00883}, 2023.

\bibitem[Roh et~al.(2020)Roh, Lee, Whang, and Suh]{roh2020fairbatch}
Roh, Y., Lee, K., Whang, S.~E., and Suh, C.
\newblock Fairbatch: Batch selection for model fairness.
\newblock \emph{arXiv preprint arXiv:2012.01696}, 2020.

\bibitem[Sabach \& Shtern(2017)Sabach and Shtern]{sabach2017first}
Sabach, S. and Shtern, S.
\newblock A first order method for solving convex bilevel optimization
  problems.
\newblock \emph{SIAM Journal on Optimization}, 27\penalty0 (2):\penalty0
  640--660, 2017.

\bibitem[Shaban et~al.(2019)Shaban, Cheng, Hatch, and
  Boots]{shaban2019truncated}
Shaban, A., Cheng, C.-A., Hatch, N., and Boots, B.
\newblock Truncated back-propagation for bilevel optimization.
\newblock In \emph{The 22nd International Conference on Artificial Intelligence
  and Statistics}, pp.\  1723--1732. PMLR, 2019.

\bibitem[Shen \& Chen(2023)Shen and Chen]{shen2023penalty}
Shen, H. and Chen, T.
\newblock On penalty-based bilevel gradient descent method.
\newblock \emph{arXiv preprint arXiv:2302.05185}, 2023.

\bibitem[Sow et~al.(2022)Sow, Ji, Guan, and Liang]{sow2022constrained}
Sow, D., Ji, K., Guan, Z., and Liang, Y.
\newblock A constrained optimization approach to bilevel optimization with
  multiple inner minima.
\newblock \emph{arXiv preprint arXiv:2203.01123}, 2022.

\bibitem[Vaswani et~al.(2017)Vaswani, Shazeer, Parmar, Uszkoreit, Jones, Gomez,
  Kaiser, and Polosukhin]{vaswani2017attention}
Vaswani, A., Shazeer, N., Parmar, N., Uszkoreit, J., Jones, L., Gomez, A.~N.,
  Kaiser, {\L}., and Polosukhin, I.
\newblock Attention is all you need.
\newblock \emph{Advances in neural information processing systems}, 30, 2017.

\bibitem[Vicente et~al.(1994)Vicente, Savard, and
  J{\'u}dice]{vicente1994descent}
Vicente, L., Savard, G., and J{\'u}dice, J.
\newblock Descent approaches for quadratic bilevel programming.
\newblock \emph{Journal of optimization theory and applications}, 81\penalty0
  (2):\penalty0 379--399, 1994.

\bibitem[Wang et~al.(2023)Wang, Zhang, Ma, and Chen]{wang2023convergence}
Wang, B., Zhang, H., Ma, Z., and Chen, W.
\newblock Convergence of adagrad for non-convex objectives: Simple proofs and
  relaxed assumptions.
\newblock In \emph{The Thirty Sixth Annual Conference on Learning Theory}, pp.\
   161--190. PMLR, 2023.

\bibitem[White \& Anandalingam(1993)White and Anandalingam]{white1993penalty}
White, D.~J. and Anandalingam, G.
\newblock A penalty function approach for solving bi-level linear programs.
\newblock \emph{Journal of Global Optimization}, 3:\penalty0 397--419, 1993.

\bibitem[Wilson et~al.(2018)Wilson, Veeravalli, and
  Nedi{\'c}]{wilson2018adaptive}
Wilson, C., Veeravalli, V.~V., and Nedi{\'c}, A.
\newblock Adaptive sequential stochastic optimization.
\newblock \emph{IEEE Transactions on Automatic Control}, 64\penalty0
  (2):\penalty0 496--509, 2018.

\bibitem[Yang et~al.(2021)Yang, Ji, and Liang]{yang2021provably}
Yang, J., Ji, K., and Liang, Y.
\newblock Provably faster algorithms for bilevel optimization.
\newblock \emph{Advances in Neural Information Processing Systems},
  34:\penalty0 13670--13682, 2021.

\bibitem[Zhang et~al.(2020{\natexlab{a}})Zhang, Jin, Fang, and
  Wang]{zhang2020improved}
Zhang, B., Jin, J., Fang, C., and Wang, L.
\newblock Improved analysis of clipping algorithms for non-convex optimization.
\newblock \emph{Advances in Neural Information Processing Systems},
  33:\penalty0 15511--15521, 2020{\natexlab{a}}.

\bibitem[Zhang et~al.(2020{\natexlab{b}})Zhang, He, Sra, and
  Jadbabaie]{zhang2019gradient}
Zhang, J., He, T., Sra, S., and Jadbabaie, A.
\newblock Why gradient clipping accelerates training: A theoretical
  justification for adaptivity.
\newblock \emph{International Conference on Learning Representations},
  2020{\natexlab{b}}.

\bibitem[Zhang et~al.(2020{\natexlab{c}})Zhang, He, Sra, and
  Jadbabaie]{zhang2020gradient}
Zhang, J., He, T., Sra, S., and Jadbabaie, A.
\newblock Why gradient clipping accelerates training: A theoretical
  justification for adaptivity.
\newblock In \emph{International Conference on Learning Representations},
  2020{\natexlab{c}}.

\end{thebibliography}
\bibliographystyle{icml2024}

\newpage
\appendix
\onecolumn

\section{Comparison Table of Stochastic Bilevel Optimization Algorithms}
\label{sec:comparison-table}

\begin{table}[!h]
    \centering
    \caption{Comparison of stochastic bilevel optimization algorithms in the nonconvex-strongly-convex setting under different smoothness assumptions on $f$ and $g$. The oracle complexity stands for the number of oracle calls to stochastic gradients and stochastic Hessian/Jacobian-vector products to find an $\eps$-stationary point. $\mathcal{C}_L^{a,k}$ denotes $a$-times differentiability with Lipschitz $k$-th order derivatives. ``SC'' means ``strongly-convex''. $\widetilde{O}(\cdot)$ compresses logarithmic factors of $1/\eps$ and $1/\delta$, where $\delta\in(0,1)$ denotes the failure probability.}
    \label{tab:comparison}
    \renewcommand{\arraystretch}{0.7}
    \setlength{\tabcolsep}{6pt}
    \resizebox{\textwidth}{!}{
    \begin{tabular}{ccccccc}
    \toprule[2pt]
    Method \tablefootnote{We omit the comparison with variance reduction-based methods (except for SABA \cite{dagreou2022framework}) that may achieve $\widetilde{O}(\eps^{-3})$ complexity under additional mean-squared smoothness assumptions on both upper-level and lower-level problems, e.g., VRBO, MRBO \cite{yang2021provably}; SUSTAIN \cite{khanduri2021near}; SVRB \cite{guo2021randomized}; or under the finite sum setting, e.g., SRBA \cite{dagreou2023lower}.} & Loop Style & Stochastic Setting & Oracle Complexity & Upper-Level $f$ & Lower-Level $g$ & Batch Size  \\
    \midrule[1pt]
    \makecell{BSA \\ \citep{ghadimi2018approximation}} & Double & General expectation & $\widetilde{O}(\eps^{-6})$     & $\mathcal{C}_{L}^{1,1}$      &  SC and $\mathcal{C}_{L}^{2,2}$     & $\widetilde{O}(1)$  \\      \midrule
    \makecell{StocBio \\ \citep{ji2021bilevel}} & Double & General expectation &  $\widetilde{O}(\eps^{-4})$      & $\mathcal{C}_{L}^{1,1}$       & SC and $\mathcal{C}_{L}^{2,2}$      & $\widetilde{O}(\eps^{-2}) $ \\ \midrule
    \makecell{AmIGO \\ \citep{arbel2021amortized}} & Double & General expectation &  $\widetilde{O}(\eps^{-4})$      &  $\mathcal{C}_{L}^{1,1}$      &  SC and $\mathcal{C}_{L}^{2,2} $    & $O(\eps^{-2}) $  \\ \midrule
    \makecell{ALSET \\ \citep{chen2021closing}} & Double / Single \tablefootnote{ALSET can converge without the need for double loops, but at the cost of a worse dependence on $\kappa\coloneqq l_{g,1}/\mu$ in oracle complexity.} & General expectation &  $O(\eps^{-4})$      &  $\mathcal{C}_{L}^{1,1}$     & SC and $\mathcal{C}_{L}^{2,2} $     & $O(1) $ \\ \midrule
    \makecell{TTSA \\ \citep{hong2023two}}  & Single & General expectation &  $\widetilde{O}(\eps^{-5})$     &    $\mathcal{C}_{L}^{1,1}$    &  SC and $\mathcal{C}_{L}^{2,2}  $   & $\widetilde{O}(1) $   \\ \midrule
    \makecell{F$^2$SA \\ \citep{kwon2023fully}}   & Single & General expectation &   $O(\eps^{-7})$     &   $\mathcal{C}_{L}^{1,1}$    &  SC and $\mathcal{C}_{L}^{2,2}$     &$O(1) $  \\ \midrule
    \makecell{SOBA \\ \citep{dagreou2022framework}}  & Single & Finite sum &   $O(\eps^{-4})$      &  $\mathcal{C}_{L}^{2,2}$     &  SC and $\mathcal{C}_{L}^{3,3}$     & $O(1) $ \\ \midrule
    \makecell{SABA \\ \citep{dagreou2022framework}}  & Single & Finite sum &   $O(N^{4/3}\eps^{-2})$\tablefootnote{SABA \cite{dagreou2022framework} studies finite-sum problem and adapts variance reduction technique SAGA \cite{defazio2014saga}, here $N = m+n$ denotes the total number of samples.}      &  $\mathcal{C}_{L}^{2,2}$     &  SC and $\mathcal{C}_{L}^{3,3}$     & $O(1) $ \\ \midrule
    \makecell{MA-SOBA \\ \citep{chen2023optimal}} & Single & General expectation &  $O(\eps^{-4})$       & $\mathcal{C}_{L}^{1,1}$      &  SC and $\mathcal{C}_{L}^{2,2}$     &$O(1)$ \\ \midrule
    \makecell{BO-REP \\ \cite{hao2024bilevel}} & Double & General expectation &  $\widetilde{O}(\eps^{-4})$      &  $(L_{x,0},L_{x,1},L_{y,0},L_{y,1})$-smooth      & SC and $\mathcal{C}_{L}^{2,2}$      &$O(1) $ \\ \midrule
    \rowcolor[rgb]{ .741,  .843,  .933} \Gape[1pt][1pt]{\makecell{SLIP \\ (This work, \Cref{thm:main-thm}; \\ expectation guarantee)}} & Single & General expectation &   $\widetilde{O}(\eps^{-4})$      &  $(L_{x,0},L_{x,1},L_{y,0},L_{y,1})$-smooth      & SC and $\mathcal{C}_{L}^{2,2}$      & $O(1)$ \\
    \rowcolor[rgb]{ .741,  .843,  .933} \Gape[1pt][1pt]{\makecell{SLIP \\ (This work, \Cref{thm:highprob}; \\ high probability guarantee)}} & Single & General expectation &   $\widetilde{O}(\eps^{-4})$      &  $(L_{x,0},L_{x,1},L_{y,0},L_{y,1})$-smooth      & SC and $\mathcal{C}_{L}^{2,2}$      & $O(1)$ \\
    \bottomrule[2pt]
    \end{tabular}}%
\end{table}%

\section{Properties of $(L_{x,0}, L_{x,1}, L_{y,0}, L_{y,1})$-Smoothness Assumption (Assumption~\ref{ass:relax-smooth})}
\subsection{Definitions of Relaxed Smoothness}

The standard relaxed smoothness assumption originally introduced in \cite{zhang2020gradient} is defined in Definition~\ref{def:rs-jingzhao}.

\begin{definition}[\cite{zhang2020gradient}] \label{def:rs-jingzhao}
A twice differentiable function $F$ is $(K_0,K_1)$-smooth if $\|\nabla^2F(w)\|\leq K_0 + K_1\|\nabla F(w)\|$ for any $w$.
\end{definition}

The following Definition~\ref{def:rs-jikai} is an alternative definition for the $(K_0, K_1)$-smoothness. It does not need $f$ to be twice differentiable and is strictly weaker than $L$-smoothness.

\begin{definition}[Remark 2.3 in \cite{zhang2020improved}] \label{def:rs-jikai}
A differentiable function $F$ is $(K_0,K_1)$-smooth if $\|\nabla F(w)-\nabla F(w')\|\leq (K_0+K_1\|\nabla F(w)\|)\|w-w'\|$ for any $\|w-w'\|\leq 1/K_1$.
\end{definition}

\subsection{Relationships between Assumption~\ref{ass:relax-smooth} and Standard Relaxed Smoothness}

The folowing lemma shows that $(L_{x,0}, L_{x,1}, L_{y,0}, L_{y,1})$-smoothness assumption (e.g., Assumption~\ref{ass:relax-smooth}) can recover the standard relaxed smoothness (e.g., Definition~\ref{def:rs-jikai}) when the upper-level variable $x$ and the lower-level variable $y$ have the same smoothness constants.

\begin{lemma}[Lemma 6 in \cite{hao2024bilevel}]
Let $L_{x,0}=L_{y,0}=K_0/2$ and $L_{x,1}=L_{y,1}=K_1/2$, then Assumption~\ref{ass:relax-smooth} implies that for any $w=(x,y),w'=(x',y')$ such that $\|w-w'\|\leq 1/K_1$, we have 
\begin{equation*}
    \|\nabla_w f(w) - \nabla_w f(w')\| \leq (K_0 + K_1\|\nabla_w f(w)\|)\|w-w'\|.
\end{equation*}%
In other words, $(L_{x,0}, L_{x,1}, L_{y,0}, L_{y,1})$-smoothness assumption (Assumption~\ref{ass:relax-smooth}) can recover the standard relaxed smoothness assumption (Definition~\ref{def:rs-jikai}).
\end{lemma}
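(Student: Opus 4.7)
The lemma follows from Assumption~\ref{ass:relax-smooth} applied block-wise together with two elementary norm inequalities; I would structure the argument in three short steps.

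First, I would verify that the admissibility radius of Assumption~\ref{ass:relax-smooth} matches the hypothesis of the lemma under the substitution $L_{x,0}=L_{y,0}=K_0/2$ and $L_{x,1}=L_{y,1}=K_1/2$. A direct calculation gives $\sqrt{2(L_{x,1}^2+L_{y,1}^2)}=\sqrt{2(K_1^2/4+K_1^2/4)}=K_1$, so the two block-wise inequalities guaranteed by Assumption~\ref{ass:relax-smooth} are valid whenever $\|w-w'\|\leq 1/K_1$, exactly as assumed in the lemma.

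Next, because $\nabla_w f(w) = (\nabla_x f(w), \nabla_y f(w))$, the block decomposition combined with the inequality $\sqrt{a^2+b^2}\leq a+b$ for $a,b\geq 0$ gives
\begin{equation*}
    \|\nabla_w f(w) - \nabla_w f(w')\| \leq \|\nabla_x f(w) - \nabla_x f(w')\| + \|\nabla_y f(w) - \nabla_y f(w')\|.
\end{equation*}
Applying Assumption~\ref{ass:relax-smooth} to each block and collecting constants, the right-hand side is bounded by $\bigl[K_0 + \tfrac{K_1}{2}\bigl(\|\nabla_x f(w)\| + \|\nabla_y f(w)\|\bigr)\bigr]\|w-w'\|$.

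Finally, I would apply Cauchy--Schwarz to the two-term sequence to obtain $\|\nabla_x f(w)\|+\|\nabla_y f(w)\|\leq \sqrt{2}\,\|\nabla_w f(w)\|$, which yields
\begin{equation*}
    \|\nabla_w f(w) - \nabla_w f(w')\| \leq \Bigl(K_0 + \tfrac{K_1}{\sqrt{2}}\|\nabla_w f(w)\|\Bigr)\|w-w'\| \leq \bigl(K_0 + K_1\|\nabla_w f(w)\|\bigr)\|w-w'\|,
\end{equation*}
which is precisely the desired relaxed smoothness inequality. I do not anticipate any substantive obstacle; the whole argument is a mechanical combination of block-wise smoothness with two sub-additivity inequalities. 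The only step that requires mild care is ensuring that the three $\sqrt{2}$ factors (in the admissibility radius, in the block-to-full-gradient reduction, and in the Cauchy--Schwarz bound) line up so that the intermediate coefficient $K_1/\sqrt{2}$ is safely absorbed into $K_1$ without degrading either of the claimed constants.
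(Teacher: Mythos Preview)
Your proposal is correct. The paper does not include its own proof of this lemma (it is merely cited as Lemma~6 in \cite{hao2024bilevel}), but your three-step argument---matching the admissibility radius, splitting the full gradient into blocks via $\sqrt{a^2+b^2}\leq a+b$, and recombining via $a+b\leq\sqrt{2}\sqrt{a^2+b^2}$---is the natural and essentially unique way to prove it, and all constants line up as you note.
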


\section{Auxiliary Lemmas for Bilevel Optimization Problems}

\setcounter{equation}{0}
\renewcommand{\theequation}{C.\arabic{equation}}



\begin{lemma}[Hypergradient formula, Lemma 7 in \cite{hao2024bilevel}] \label{lm:hyper-formulation}
The hypergradient $\gdphi(x)$ takes the forms of 
\begin{equation*}
    \begin{aligned}
        \gdphi(x)
        &= \gdx f(x,y^*(x)) - \gdxy g(x,y^*(x))[\gdyy g(x,y^*(x))]^{-1}\gdy f(x, y^*(x)) \\
        &= \gdx f(x,y^*(x)) - \gdxy g(x,y^*(x))z^*(x), \\
    \end{aligned}
\end{equation*}%
where $z^*(x)$ is the solution to the following linear system:
\begin{equation*}
    z^*(x) = \argmin_z \frac{1}{2}\langle \gdyy g(x,y^*(x))z, z \rangle - \langle \gdy f(x,y^*(x)), z \rangle.
\end{equation*}%
\end{lemma}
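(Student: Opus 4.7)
The plan is to derive the closed-form hypergradient by combining the chain rule on $\Phi(x)=f(x,y^*(x))$ with the implicit function theorem applied to the first-order optimality condition of the lower-level problem, and then to identify the resulting expression with the minimizer of the auxiliary quadratic.

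First I would apply the chain rule to $\Phi(x)=f(x,y^*(x))$, writing
\begin{equation*}
\nabla\Phi(x)=\nabla_x f(x,y^*(x))+[\nabla y^*(x)]^\top \nabla_y f(x,y^*(x)),
\end{equation*}
where $\nabla y^*(x)\in\mathbb{R}^{d_y\times d_x}$ denotes the Jacobian of $y^*(\cdot)$. To make sense of $\nabla y^*(x)$, I would invoke Assumption~\ref{ass:f-and-g}: since $g(x,\cdot)$ is $\mu$-strongly convex, $y^*(x)$ is the unique minimizer of $g(x,\cdot)$ and hence satisfies the identity $\nabla_y g(x,y^*(x))=0$ for every $x$. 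Because $g$ is twice continuously differentiable with $\nabla_{yy}^2 g$ Lipschitz and $\mu$-strongly convex in $y$ (so $\nabla_{yy}^2 g(x,y^*(x))$ is invertible), the implicit function theorem applies: differentiating $\nabla_y g(x,y^*(x))=0$ in $x$ yields
\begin{equation*}
\nabla_{xy}^2 g(x,y^*(x))+\nabla_{yy}^2 g(x,y^*(x))\,\nabla y^*(x)=0,
\end{equation*}
so that $\nabla y^*(x)=-[\nabla_{yy}^2 g(x,y^*(x))]^{-1}\nabla_{yx}^2 g(x,y^*(x))$.

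Substituting this Jacobian back into the chain rule expression and using symmetry of the Hessian $\nabla_{yy}^2 g$ (so $[\nabla_{yy}^2 g]^{-1}$ is symmetric) together with $(\nabla_{yx}^2 g)^\top=\nabla_{xy}^2 g$ gives
\begin{equation*}
\nabla\Phi(x)=\nabla_x f(x,y^*(x))-\nabla_{xy}^2 g(x,y^*(x))\,[\nabla_{yy}^2 g(x,y^*(x))]^{-1}\,\nabla_y f(x,y^*(x)),
\end{equation*}
which is the first claimed identity. Finally, to obtain the second form, I would note that the quadratic
\begin{equation*}
Q(z):=\tfrac{1}{2}\langle \nabla_{yy}^2 g(x,y^*(x))z,z\rangle-\langle \nabla_y f(x,y^*(x)),z\rangle
\end{equation*}
is strongly convex (by $\mu$-strong convexity of $g$ in $y$), so its unique minimizer $z^*(x)$ is characterized by $\nabla Q(z^*(x))=0$, i.e.\ $\nabla_{yy}^2 g(x,y^*(x))z^*(x)=\nabla_y f(x,y^*(x))$, giving $z^*(x)=[\nabla_{yy}^2 g(x,y^*(x))]^{-1}\nabla_y f(x,y^*(x))$ and hence the second identity.

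The only real obstacle is a careful justification of differentiability of $y^*(\cdot)$: I must check that the implicit function theorem applies, which requires invertibility of $\nabla_{yy}^2 g(x,y^*(x))$ (immediate from $\mu$-strong convexity) and $C^1$-regularity of $\nabla_y g$ (granted by Assumption~\ref{ass:f-and-g}(iv)). Everything else is bookkeeping with the chain rule and symmetry of the Hessian; since this lemma is quoted directly from \cite{hao2024bilevel}, I would keep the proof short and refer to that reference for the detailed regularity verification.
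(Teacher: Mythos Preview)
Your proposal is correct and follows the standard derivation: chain rule on $\Phi(x)=f(x,y^*(x))$, implicit differentiation of the lower-level optimality condition $\nabla_y g(x,y^*(x))=0$, and identification of $z^*(x)$ as the minimizer of the strongly convex quadratic. The paper itself does not supply a proof of this lemma---it is stated as a quotation of Lemma~7 in \cite{hao2024bilevel} and left without argument---so your write-up is exactly the kind of self-contained justification one would expect, and matches the classical derivation in \cite{ghadimi2018approximation} on which the cited result ultimately rests. One trivial typo: in your displayed implicit-differentiation identity you write $\nabla_{xy}^2 g$ but in the next line use $\nabla_{yx}^2 g$; the latter is the correct $d_y\times d_x$ block that appears when differentiating $\nabla_y g(x,y^*(x))=0$ in $x$.
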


\begin{lemma}[Lipschitz property, Lemma 8 in \cite{hao2024bilevel}] \label{lm:lip-y*z*}
Suppose Assumptions~\ref{ass:relax-smooth} and~\ref{ass:f-and-g} hold, then we have 
\begin{itemize}
    \item $y^*(x)$ is $(l_{g,1}/\mu)$-Lipschitz continuous.
    \item $z^*(x)$ is $l_{z^*}$-Lipschitz continuous, i.e., 
    \begin{equation*}
        \|z^*(x)-z^*(x')\| \leq l_{z^*}\|x-x'\| \quad \text{if} \quad 
        \|x-x'\| \leq \frac{1}{\sqrt{2(1+l_{g,1}^2/\mu^2)(L_{x,1}^2+L_{y,1}^2)}},
    \end{equation*}%
    where $l_{z^*}$ is defined as 
    \begin{equation*}
        l_{z^*} \coloneqq \sqrt{1+\frac{l_{g,1}^2}{\mu^2}}\left(\frac{l_{g,2}l_{f,0}}{\mu^2} + \frac{1}{\mu}(L_{y,0}+L_{y,1}l_{f,0})\right).
    \end{equation*}%
\end{itemize}
\end{lemma}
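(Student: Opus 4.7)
The plan is to prove the two Lipschitz bounds separately, since the bound on $z^*$ will reuse the bound on $y^*$. For the first part, I would start from the first-order optimality condition $\nabla_y g(x, y^*(x)) = 0$, which holds for every $x$ because $g(x,\cdot)$ is strongly convex (Assumption~\ref{ass:f-and-g}(ii)). For two points $x, x'$, I would chain
$$\mu\|y^*(x)-y^*(x')\|^2 \leq \langle \nabla_y g(x, y^*(x)) - \nabla_y g(x, y^*(x')),\, y^*(x)-y^*(x')\rangle,$$
replace $\nabla_y g(x, y^*(x)) = 0 = \nabla_y g(x', y^*(x'))$ to rewrite the right-hand side as $\langle \nabla_y g(x', y^*(x'))-\nabla_y g(x, y^*(x')),\, y^*(x)-y^*(x')\rangle$, and then use Cauchy--Schwarz together with the $l_{g,1}$-joint smoothness of $g$ (Assumption~\ref{ass:f-and-g}(iii)) to peel off one factor of $\|y^*(x)-y^*(x')\|$. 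This yields exactly $\|y^*(x)-y^*(x')\| \leq (l_{g,1}/\mu)\|x-x'\|$.

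For the $z^*$ bound, the starting point is the closed form $z^*(x) = [\nabla_{yy}^2 g(x, y^*(x))]^{-1}\nabla_y f(x, y^*(x))$ from Lemma~\ref{lm:hyper-formulation}. Writing $A(x) := \nabla_{yy}^2 g(x, y^*(x))$ and $b(x) := \nabla_y f(x, y^*(x))$, I would use the standard resolvent identity
$$A(x)^{-1}b(x) - A(x')^{-1}b(x') = A(x)^{-1}\bigl(b(x)-b(x')\bigr) + A(x)^{-1}\bigl(A(x')-A(x)\bigr)A(x')^{-1}b(x'),$$
and bound the three groups of factors separately. Strong convexity gives $\|A(x)^{-1}\|,\|A(x')^{-1}\| \leq 1/\mu$; Assumption~\ref{ass:f-and-g}(i) gives $\|b(x')\| \leq l_{f,0}$; and the $l_{g,2}$-Lipschitzness of $\nabla_{yy}^2 g$ together with the first part yields
$$\|A(x)-A(x')\| \leq l_{g,2}\sqrt{\|x-x'\|^2+\|y^*(x)-y^*(x')\|^2} \leq l_{g,2}\sqrt{1+l_{g,1}^2/\mu^2}\,\|x-x'\|.$$

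The only delicate step is bounding $\|b(x)-b(x')\|$, since $f$ satisfies only the $(L_{x,0},L_{x,1},L_{y,0},L_{y,1})$-smoothness of Assumption~\ref{ass:relax-smooth}, which requires the argument displacement to be at most $1/\sqrt{2(L_{x,1}^2+L_{y,1}^2)}$. Here I would invoke the precondition on $\|x-x'\|$: combined with the $y^*$ Lipschitz bound, it guarantees $\|(x,y^*(x))-(x',y^*(x'))\| \leq \sqrt{1+l_{g,1}^2/\mu^2}\,\|x-x'\| \leq 1/\sqrt{2(L_{x,1}^2+L_{y,1}^2)}$, so the relaxed smoothness inequality for $\nabla_y f$ applies. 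Using $\|\nabla_y f(x,y^*(x))\| \leq l_{f,0}$ on the right-hand side then gives $\|b(x)-b(x')\| \leq (L_{y,0}+L_{y,1}l_{f,0})\sqrt{1+l_{g,1}^2/\mu^2}\,\|x-x'\|$. Summing the two contributions through the resolvent identity collapses to the claimed constant $l_{z^*}$.

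The main obstacle I anticipate is this last step: one has to make sure the relaxed smoothness condition is applicable throughout the chain, which forces the explicit restriction $\|x-x'\| \leq 1/\sqrt{2(1+l_{g,1}^2/\mu^2)(L_{x,1}^2+L_{y,1}^2)}$ in the statement. Everything else is a careful but routine bookkeeping of constants, and the final expression matches $l_{z^*}$ term by term: $l_{g,2}l_{f,0}/\mu^2$ comes from the Hessian-perturbation term and $(L_{y,0}+L_{y,1}l_{f,0})/\mu$ from the gradient-perturbation term, with the common factor $\sqrt{1+l_{g,1}^2/\mu^2}$ pulled out.
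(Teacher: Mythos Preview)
Your proposal is correct and follows the standard argument; the paper itself does not give a proof, as this lemma is quoted verbatim from \cite{hao2024bilevel}. Both parts---the strong-convexity/optimality trick for $y^*$ and the resolvent-identity decomposition for $z^*$---are exactly the expected route, and your bookkeeping of the relaxed-smoothness locality condition is accurate.
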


\begin{lemma}[$(L_0,L_1)$-Smoothness, Lemma 9 in \cite{hao2024bilevel}] \label{eq:technical-rs-phi}
Suppose Assumptions~\ref{ass:relax-smooth} and~\ref{ass:f-and-g} hold. Then for any $x, x'$ we have
\begin{equation*}
    \|\gdphi(x)-\gdphi(x')\| \leq (L_0 + L_1\|\gdphi(x')\|)\|x-x'\| 
    \quad \text{if} \quad
    \|x-x'\| \leq \frac{1}{\sqrt{2(1+l_{g,1}^2/\mu^2)(L_{x,1}^2+L_{y,1}^2)}},
\end{equation*}%
where $(L_0,L_1)$-smoothness constant $L_0$ and $L_1$ are defined as
\begin{equation*}
    L_0 = \sqrt{1+\frac{l_{g,1}^2}{\mu^2}}\left(L_{x,0} + L_{x,1}\frac{l_{g,1}l_{f,0}}{\mu} + \frac{l_{g,1}}{\mu}(L_{y,0}+L_{y,1}l_{f,0}) + l_{f,0}\frac{l_{g,1}l_{g,2}+l_{g,2}\mu}{\mu^2}\right) \quad \text{and} \quad
    L_1 = \sqrt{1+\frac{l_{g,1}^2}{\mu^2}}L_{x,1}.
\end{equation*}%
\end{lemma}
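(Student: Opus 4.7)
\textbf{Proof plan for Lemma~\ref{eq:technical-rs-phi}.}

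The plan is to start from the closed-form hypergradient formula provided by Lemma~\ref{lm:hyper-formulation} and decompose $\gdphi(x) - \gdphi(x')$ into two terms:
\begin{equation*}
\gdphi(x) - \gdphi(x') = \underbrace{\bigl[\gdx f(x,y^*(x)) - \gdx f(x',y^*(x'))\bigr]}_{=:A} + \underbrace{\bigl[\gdxy g(x',y^*(x'))z^*(x') - \gdxy g(x,y^*(x))z^*(x)\bigr]}_{=:B}.
\end{equation*}
My first step is to verify the ``small distance'' prerequisite needed to invoke Assumption~\ref{ass:relax-smooth}. Setting $w=(x,y^*(x))$ and $w'=(x',y^*(x'))$, the Lipschitz property of $y^*$ from Lemma~\ref{lm:lip-y*z*} gives $\|w-w'\|\leq \sqrt{1+l_{g,1}^2/\mu^2}\,\|x-x'\|$, so the hypothesis $\|x-x'\|\leq 1/\sqrt{2(1+l_{g,1}^2/\mu^2)(L_{x,1}^2+L_{y,1}^2)}$ translates exactly into $\|w-w'\|\leq 1/\sqrt{2(L_{x,1}^2+L_{y,1}^2)}$, i.e.\ the domain on which Assumption~\ref{ass:relax-smooth} is valid. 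The same bound on $\|x-x'\|$ is also within the regime in which $z^*$ is $l_{z^*}$-Lipschitz.

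Next I bound the two pieces. For $A$, I apply the $x$-gradient inequality of Assumption~\ref{ass:relax-smooth} with the endpoint $w'$ used inside the bracket (this version of relaxed smoothness is equivalent by interchanging $w\leftrightarrow w'$), then convert $\|\gdx f(x',y^*(x'))\|$ back to $\|\gdphi(x')\|$ using Lemma~\ref{lm:hyper-formulation}, the bound $\|\gdxy g\|\leq l_{g,1}$ from Assumption~\ref{ass:f-and-g}, and $\|z^*(x')\|\leq l_{f,0}/\mu$ (which follows from $z^*=[\gdyy g]^{-1}\gdy f$ together with strong convexity and Assumption~\ref{ass:f-and-g}(i)). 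This yields
\begin{equation*}
\|A\| \leq \sqrt{1+\tfrac{l_{g,1}^2}{\mu^2}}\Bigl(L_{x,0} + L_{x,1}\tfrac{l_{g,1}l_{f,0}}{\mu} + L_{x,1}\|\gdphi(x')\|\Bigr)\|x-x'\|.
\end{equation*}
For $B$, I insert and subtract $\gdxy g(x,y^*(x))z^*(x')$ to split
\begin{equation*}
B = \bigl[\gdxy g(x',y^*(x')) - \gdxy g(x,y^*(x))\bigr]z^*(x') + \gdxy g(x,y^*(x))\bigl[z^*(x') - z^*(x)\bigr],
\end{equation*}
then combine $l_{g,2}$-Lipschitzness of $\gdxy g$ (Assumption~\ref{ass:f-and-g}(iv)) with $\|z^*(x')\|\leq l_{f,0}/\mu$ for the first term, and $\|\gdxy g\|\leq l_{g,1}$ with the $l_{z^*}$-Lipschitzness of $z^*$ from Lemma~\ref{lm:lip-y*z*} for the second, giving
\begin{equation*}
\|B\| \leq \Bigl(\tfrac{l_{g,2}l_{f,0}}{\mu}\sqrt{1+\tfrac{l_{g,1}^2}{\mu^2}} + l_{g,1} l_{z^*}\Bigr)\|x-x'\|.
\end{equation*}

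Finally I substitute the explicit expression for $l_{z^*}$ from Lemma~\ref{lm:lip-y*z*} and factor $\sqrt{1+l_{g,1}^2/\mu^2}$ out of every term, which regroups the non-$\|\gdphi(x')\|$ contributions into $L_{x,0} + L_{x,1}l_{g,1}l_{f,0}/\mu + l_{g,1}(L_{y,0}+L_{y,1}l_{f,0})/\mu + l_{f,0}(l_{g,1}l_{g,2}+l_{g,2}\mu)/\mu^2$; this is precisely the bracket in the definition of $L_0$, while the $\|\gdphi(x')\|$ coefficient is $\sqrt{1+l_{g,1}^2/\mu^2}L_{x,1}=L_1$. The main obstacle is purely bookkeeping: making sure the four additive pieces of $L_0$ are assembled correctly from the $A$-bound (first two pieces), the $z^*$-Lipschitz part of $B$ (third piece and part of the fourth), and the $\gdxy g$-Lipschitz part of $B$ (remaining part of the fourth, using the identity $l_{f,0}l_{g,2}(l_{g,1}+\mu)/\mu^2 = l_{f,0}l_{g,2}/\mu + l_{g,1}l_{g,2}l_{f,0}/\mu^2$). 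No further estimation beyond the triangle inequality is needed once these substitutions are carried out.
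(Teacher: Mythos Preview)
Your proof is correct and follows the natural route: decompose via the hypergradient formula, verify the distance prerequisite through the $l_{g,1}/\mu$-Lipschitzness of $y^*$, bound $A$ by Assumption~\ref{ass:relax-smooth} (swapping the roles of $w,w'$) together with the identity $\gdx f(x',y^*(x'))=\gdphi(x')+\gdxy g(x',y^*(x'))z^*(x')$, bound $B$ by the add--subtract trick with $\gdxy g$-Lipschitzness and $z^*$-Lipschitzness, and then collapse the constants using the explicit form of $l_{z^*}$. The bookkeeping identity $l_{g,2}l_{f,0}/\mu + l_{g,1}l_{g,2}l_{f,0}/\mu^2 = l_{f,0}(l_{g,1}l_{g,2}+l_{g,2}\mu)/\mu^2$ is exactly what is needed to recover the stated $L_0$.

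The paper does not supply its own proof here; it simply cites the result as Lemma~9 of \cite{hao2024bilevel}. Your argument is the standard derivation one would expect in that reference, and there is nothing to distinguish methodologically.
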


\begin{lemma}[Descent inequality, Lemma 10 in \cite{hao2024bilevel}] \label{eq:technical-descent}
Suppose Assumptions~\ref{ass:relax-smooth} and~\ref{ass:f-and-g} hold. 
Then for any $x, x'$ we have
\begin{equation*}
    \Phi(x) \leq \Phi(x') + \langle \gdphi(x'),x-x' \rangle + \frac{L_0+L_1\|\gdphi(x')\|}{2}\|x-x'\|^2
    \quad \text{if} \quad
    \|x-x'\| \leq \frac{1}{\sqrt{2(1+l_{g,1}^2/\mu^2)(L_{x,1}^2+L_{y,1}^2)}}.
\end{equation*}
\end{lemma}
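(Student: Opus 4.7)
The plan is to derive this descent inequality as a direct consequence of the $(L_0, L_1)$-smoothness of $\Phi$ already established in Lemma~\ref{eq:technical-rs-phi}. This parallels the standard derivation of the classical descent lemma from $L$-smoothness, but one must be careful that the intermediate points along the segment from $x'$ to $x$ remain within the prescribed neighborhood so that the relaxed smoothness bound is applicable.

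First, I would express $\Phi(x) - \Phi(x')$ via the fundamental theorem of calculus along the line segment $x_t = x' + t(x - x')$, $t \in [0,1]$:
\begin{equation*}
\Phi(x) - \Phi(x') = \int_0^1 \langle \gdphi(x_t),\, x - x' \rangle \, dt.
\end{equation*}
Adding and subtracting $\gdphi(x')$ inside the inner product gives
\begin{equation*}
\Phi(x) - \Phi(x') = \langle \gdphi(x'),\, x - x' \rangle + \int_0^1 \langle \gdphi(x_t) - \gdphi(x'),\, x - x' \rangle \, dt.
\end{equation*}

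Second, I would bound the residual integral. By Cauchy--Schwarz, the integrand is at most $\|\gdphi(x_t) - \gdphi(x')\| \cdot \|x - x'\|$. Since $\|x_t - x'\| = t\|x - x'\| \leq \|x - x'\|$ and the hypothesis gives $\|x - x'\| \leq 1/\sqrt{2(1+l_{g,1}^2/\mu^2)(L_{x,1}^2+L_{y,1}^2)}$, the pair $(x_t, x')$ satisfies the proximity condition of Lemma~\ref{eq:technical-rs-phi}. Applying that lemma with the roles $(x,x') \mapsto (x_t,x')$ yields
\begin{equation*}
\|\gdphi(x_t) - \gdphi(x')\| \leq (L_0 + L_1 \|\gdphi(x')\|)\, t \, \|x - x'\|.
\end{equation*}

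Third, substituting this bound into the residual integral and using $\int_0^1 t\, dt = 1/2$ gives
\begin{equation*}
\int_0^1 \langle \gdphi(x_t) - \gdphi(x'),\, x - x' \rangle \, dt \leq \frac{L_0 + L_1 \|\gdphi(x')\|}{2} \|x - x'\|^2,
\end{equation*}
which, combined with the earlier decomposition, yields the desired descent inequality. There is no serious obstacle here: the only point meriting care is the convexity argument that every $x_t$ on the segment is within the allowed neighborhood of $x'$, which is immediate from $t \in [0,1]$. The entire argument is a short telescoping of the $(L_0, L_1)$-smoothness bound along a line segment.
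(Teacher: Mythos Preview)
Your proposal is correct and follows the standard route: the paper does not prove this lemma itself but cites it from \cite{hao2024bilevel}, and the argument there is precisely the one you outline---integrate along the segment, invoke the $(L_0,L_1)$-smoothness of Lemma~\ref{eq:technical-rs-phi} at each intermediate point (which is licit because $\|x_t-x'\|\le\|x-x'\|$ keeps you in the admissible neighborhood with anchor $x'$), and collect the factor $\int_0^1 t\,dt=1/2$. Nothing is missing.
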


\section{Omitted Proofs in Section~\ref{sec:Convergence in Expectation}} \label{sec:in-expectation}

\setcounter{equation}{0}
\renewcommand{\theequation}{D.\arabic{equation}}

\subsection{Tracking the minimizer of lower-level function: high-probability guarantees} \label{sec:track-y}



In this section we present a few useful lemmas in \cite{cutler2023stochastic}. In particular, Lemma~\ref{lm:one-step improvement} and~\ref{lm:distance recursion} serve as a starting point and provide standard one-step improvement guarantee and distance recursion formulation for tracking the minimizer of the lower-level objective function. Then one can apply Proposition~\ref{prop:MGF-recursive control}, which is a technical result for recursive control, to obtain the distance tracking result with high probability as shown in Lemma~\ref{prop:high-prob distance}. We present some of the proofs under notations in this paper as below for completeness.

\begin{lemma}[One-step improvement, Lemma 2 and 23 in \cite{cutler2023stochastic}] \label{lm:one-step improvement}
Consider Algorithm~\ref{alg:sgd} with arbitrary sequence $\{\Tilde{x}_t\}$, for any $y\in\R^{d_y}$ and $t\geq1$, we have the following estimate:
\begin{equation}
    \begin{aligned}
        2\alpha(g(\tilde{x}_t,\tilde{y}_{t+1}) - g(\tilde{x}_t,y)) \leq (1-\mu\alpha)\|\tilde{y}_t-y\|^2 - \|\tilde{y}_{t+1}-y\|^2 + 2\alpha\langle u_t,y-\tilde{y}_t \rangle + \frac{\alpha^2}{1-l_{g,1}\alpha}\|u_t\|^2.
    \end{aligned}
\end{equation}
where we define $u_t = \gdy G(\tilde{x}_t,\tilde{y}_t;\tilde{\pi}_t) - \gdy g(\tilde{x}_t,\tilde{y}_t)$ to be the bias of the gradient estimator at time $t$.
\end{lemma}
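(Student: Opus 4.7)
The plan is to follow the standard one-step analysis for SGD on a strongly convex smooth function, but with a careful combining of the Euclidean expansion and the smoothness descent inequality so that the coefficient of $\|u_t\|^2$ lands exactly at $\alpha^2/(1-l_{g,1}\alpha)$. Write $s_t \coloneqq \gdy g(\tilde{x}_t,\tilde{y}_t)$ so the update reads $\tilde{y}_{t+1} = \tilde{y}_t - \alpha(s_t+u_t)$.

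First I would expand the squared distance to $y$:
\begin{equation*}
\|\tilde{y}_{t+1}-y\|^2 = \|\tilde{y}_t-y\|^2 - 2\alpha\langle s_t+u_t,\, \tilde{y}_t-y\rangle + \alpha^2\|s_t+u_t\|^2,
\end{equation*}
and rearrange to isolate $2\alpha\langle s_t, \tilde{y}_t-y\rangle$. Then, using $\mu$-strong convexity of $g(\tilde{x}_t,\cdot)$ from Assumption~\ref{ass:f-and-g}(ii), I would lower-bound this inner product by $2\alpha(g(\tilde{x}_t,\tilde{y}_t)-g(\tilde{x}_t,y)) + \mu\alpha\|\tilde{y}_t-y\|^2$. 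Substituting and collecting terms yields the desired inequality with $g(\tilde{x}_t,\tilde{y}_t)$ on the left-hand side instead of $g(\tilde{x}_t,\tilde{y}_{t+1})$.

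To bridge this, I would add $2\alpha$ times the $l_{g,1}$-smoothness descent inequality of $g(\tilde{x}_t,\cdot)$ (Assumption~\ref{ass:f-and-g}(iii)):
\begin{equation*}
g(\tilde{x}_t,\tilde{y}_{t+1}) - g(\tilde{x}_t,\tilde{y}_t) \leq -\alpha\langle s_t, s_t+u_t\rangle + \tfrac{l_{g,1}\alpha^2}{2}\|s_t+u_t\|^2.
\end{equation*}
Combining this with the identity $\|s_t+u_t\|^2 - 2\langle s_t,s_t+u_t\rangle = \|u_t\|^2 - \|s_t\|^2$ gives everything the target inequality requires, except for a residual $l_{g,1}\alpha^3\|s_t+u_t\|^2 - \alpha^2\|s_t\|^2$ that must be absorbed into $\frac{\alpha^2}{1-l_{g,1}\alpha}\|u_t\|^2$.

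The main obstacle, and the key step, is choosing Young's inequality with exactly the right parameter. I would write $\|s_t+u_t\|^2 \leq (1+\theta)\|s_t\|^2 + (1+1/\theta)\|u_t\|^2$ and set $\theta = 1/(l_{g,1}\alpha) - 1$, which is positive under the assumption $\alpha \leq 1/(2l_{g,1})$ used throughout the paper. With this pick, $l_{g,1}\alpha(1+\theta)-1 = 0$ kills the $\|s_t\|^2$ term, while the $\|u_t\|^2$ coefficient consolidates as $\alpha^2(1 + l_{g,1}\alpha(1+1/\theta)) = \alpha^2/(1-l_{g,1}\alpha)$, giving the stated bound. The whole argument is deterministic (conditional on $\tilde{\pi}_t$), so no probabilistic machinery is needed here; the delicacy lies entirely in the algebraic tuning of $\theta$ so that the redundant $\|s_t\|^2$ mass is absorbed for free and the $\|u_t\|^2$ factor matches the advertised form.
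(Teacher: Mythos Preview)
Your proof is correct and complete; the choice $\theta = 1/(l_{g,1}\alpha)-1$ indeed annihilates the $\|s_t\|^2$ term and produces exactly the advertised coefficient $\alpha^2/(1-l_{g,1}\alpha)$ on $\|u_t\|^2$.

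Your route differs from the paper's, though both are standard. The paper takes a \emph{proximal} viewpoint: it starts from the smoothness descent inequality, then invokes the fact that $\tilde{y}_{t+1}$ is the minimizer of the quadratic $h(y)=\langle \gdy G(\tilde{x}_t,\tilde{y}_t;\tilde{\pi}_t),y-\tilde{y}_t\rangle+\tfrac{1}{2\alpha}\|y-\tilde{y}_t\|^2$ (hence $h(y)-h(\tilde{y}_{t+1})\geq \tfrac{1}{2\alpha}\|y-\tilde{y}_{t+1}\|^2$), applies Young's inequality to $\langle u_t,\tilde{y}_t-\tilde{y}_{t+1}\rangle$ with parameter $\delta_t=\alpha/(1-l_{g,1}\alpha)$ so that the $\|\tilde{y}_{t+1}-\tilde{y}_t\|^2$ term vanishes, and then uses strong convexity. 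You instead do a \emph{direct Euclidean expansion} of $\|\tilde{y}_{t+1}-y\|^2$, apply strong convexity to get $g(\tilde{x}_t,\tilde{y}_t)$ on the left, bridge to $g(\tilde{x}_t,\tilde{y}_{t+1})$ via the descent lemma, and apply Young to $\|s_t+u_t\|^2$ so that the residual $\|s_t\|^2$ term vanishes. The two cancellations are essentially dual to each other (since $\|\tilde{y}_{t+1}-\tilde{y}_t\|^2=\alpha^2\|s_t+u_t\|^2$), and both tunings are equivalent. Your argument is slightly more elementary in that it avoids the three-point/minimizer lemma, while the paper's proximal framing generalizes more readily to non-Euclidean geometries; for the present Euclidean setting there is no practical difference.
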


\begin{proof}[Proof of Lemma~\ref{lm:one-step improvement}]
Since $g(x,y)$ is $l_{g,1}$-smooth jointly in $(x,y)$, we have
\begin{equation*}
    \begin{aligned}
        g(\tilde{x}_t,\tilde{y}_{t+1})
        &\leq g(\tilde{x}_t,\tilde{y}_t) + \langle \gdy g(\tilde{x}_t,\tilde{y}_t),\tilde{y}_{t+1}-\tilde{y}_t \rangle + \frac{l_{g,1}}{2}\|\tilde{y}_{t+1}-\tilde{y}_t\|^2 \\
        &= g(\tilde{x}_t,\tilde{y}_t) + \langle \gdy G(\tilde{x}_t,\tilde{y}_t;\tilde{\pi}_t),\tilde{y}_{t+1}-\tilde{y}_t \rangle + \frac{l_{g,1}}{2}\|\tilde{y}_{t+1}-\tilde{y}_t\|^2 + \langle u_t,\tilde{y}_t-\tilde{y}_{t+1} \rangle. \\
    \end{aligned}
\end{equation*}
For any given $\delta_t>0$, applying Young's inequality yields
\begin{equation*}
    \begin{aligned}
        \langle u_t,\tilde{y}_t-\tilde{y}_{t+1} \rangle \leq \frac{\delta_t}{2}\|u_t\|^2 + \frac{1}{2\delta_t}\|\tilde{y}_{t+1}-\tilde{y}_t\|^2.
    \end{aligned}
\end{equation*}
Then for any given $y\in\R^{d_y}$, we have
\begin{sequation*}
    \begin{aligned}
        g(\tilde{x}_t,\tilde{y}_{t+1})
        &\leq g(\tilde{x}_t,\tilde{y}_t) + \langle \gdy G(\tilde{x}_t,\tilde{y}_t;\tilde{\pi}_t),\tilde{y}_{t+1}-\tilde{y}_t \rangle + \frac{l_{g,1}+\delta_t^{-1}}{2}\|\tilde{y}_{t+1}-\tilde{y}_t\|^2 + \frac{\delta_t}{2}\|u_t\|^2 \\
        &= g(\tilde{x}_t,\tilde{y}_t) + \langle \gdy G(\tilde{x}_t,\tilde{y}_t;\tilde{\pi}_t),\tilde{y}_{t+1}-\tilde{y}_t \rangle + \frac{1}{2\alpha}\|\tilde{y}_{t+1}-\tilde{y}_t\|^2 + \frac{l_{g,1}+\delta_t^{-1}-\alpha^{-1}}{2}\|\tilde{y}_{t+1}-\tilde{y}_t\|^2 + \frac{\delta_t}{2}\|u_t\|^2 \\
        &\leq g(\tilde{x}_t,\tilde{y}_t) + \langle \gdy G(\tilde{x}_t,\tilde{y}_t;\tilde{\pi}_t),y-\tilde{y}_t \rangle + \frac{1}{2\alpha}\|y-\tilde{y}_t\|^2 - \frac{1}{2\alpha}\|y-\tilde{y}_{t+1}\|^2 + \frac{l_{g,1}+\delta_t^{-1}-\alpha^{-1}}{2}\|\tilde{y}_{t+1}-\tilde{y}_t\|^2 + \frac{\delta_t}{2}\|u_t\|^2, \\
    \end{aligned}
\end{sequation*}%
where the last inequality holds since $\tilde{y}_{t+1} = \tilde{y}_t - \alpha \gdy g(\tilde{x}_t,\tilde{y}_t;\tilde{\pi}_t)$ is the unique minimizer of the $\alpha^{-1}$-strongly convex function $h(y) = \langle \gdy g(\tilde{x}_t,\tilde{y}_t;\tilde{\pi}_t),y-\tilde{y}_t \rangle + \frac{1}{2\alpha}\|y-\tilde{y}_t\|^2$, and thus $h(y) - h(\tilde{y}_{t+1}) \geq \frac{1}{2\alpha}\|y-\tilde{y}_{t+1}\|^2$ holds for any $y\in\R^{d_y}$. 
Now by $\mu$-strong convexity of $g(x,y)$ in terms of $y$, we estimate
\begin{equation*}
    \begin{aligned}
        g(\tilde{x}_t,\tilde{y}_t) + \langle \gdy G(\tilde{x}_t,\tilde{y}_t;\tilde{\pi}_t),y-\tilde{y}_t \rangle
        &= g(\tilde{x}_t,\tilde{y}_t) + \langle \gdy g(\tilde{x}_t,\tilde{y}_t),y-\tilde{y}_t \rangle + \langle u_t,y-\tilde{y}_t \rangle \\
        &\leq g(\tilde{x}_t,y) - \frac{l_{g,1}}{2}\|y-\tilde{y}_t\|^2 + \langle u_t,y-\tilde{y}_t \rangle. \\
    \end{aligned}
\end{equation*}
Therefore we have
\begin{equation*}
    \begin{aligned}
        g(\tilde{x}_t,\tilde{y}_{t+1}) 
        \leq g(\tilde{x}_t,y) - \frac{l_{g,1}}{2}\|y-\tilde{y}_t\|^2 &+ \langle u_t,y-\tilde{y}_t \rangle + \frac{1}{2\alpha}\|y-\tilde{y}_t\|^2 - \frac{1}{2\alpha}\|y-\tilde{y}_{t+1}\|^2 \\
        &+ \frac{l_{g,1}+\delta_t^{-1}-\alpha^{-1}}{2}\|\tilde{y}_{t+1}-\tilde{y}_t\|^2 + \frac{\delta_t}{2}\|u_t\|^2. \\
    \end{aligned}
\end{equation*}
Finally, taking $\delta_t = \alpha/(1-l_{g,1}\alpha)$ and rearranging yields
\begin{equation*}
    \begin{aligned}
        2\alpha(g(\tilde{x}_t,\tilde{y}_{t+1}) - g(\tilde{x}_t,y)) \leq (1-\mu\alpha)\|\tilde{y}_t-y\|^2 - \|\tilde{y}_{t+1}-y\|^2 + 2\alpha\langle u_t,y-\tilde{y}_t \rangle + \frac{\alpha^2}{1-l_{g,1}\alpha}\|u_t\|^2.
    \end{aligned}
\end{equation*}
\end{proof}

\begin{lemma}[Distance recursion, Lemma 25 in \cite{cutler2023stochastic}] \label{lm:distance recursion}
Consider Algorithm~\ref{alg:sgd} with arbitrary sequence $\{\Tilde{x}_t\}$, for any $t\geq1$, we have the following recursion:
\begin{equation}
    \begin{aligned}
        \|\tilde{y}_{t+1}-\tilde{y}_{t+1}^*\|^2 \leq (1-\mu\alpha)\|\tilde{y}_t-\tilde{y}_t^*\|^2 + 2\alpha\langle u_t,\tilde{y}_t^*-\tilde{y}_t \rangle + \frac{\alpha^2}{1-l_{g,1}\alpha}\|u_t\|^2 + \left(1+\frac{1}{\mu\alpha}\right)D_t^2,
    \end{aligned}
\end{equation}
where we define $D_t \coloneqq \|\tilde{y}_t^*-\tilde{y}_{t+1}^*\|$ to be the minimizer drift at time $t$.
\end{lemma}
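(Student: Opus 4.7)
The plan is to combine Lemma~\ref{lm:one-step improvement} with the $\mu$-strong convexity of $g(\tilde x_t,\cdot)$, and then transfer from the reference point $\tilde y_t^*$ to $\tilde y_{t+1}^*$ via a carefully tuned Young's inequality. The calculation proceeds in three short stages, and the key point is that the parameter in Young's inequality must be matched to $\mu\alpha$ so that two prefactors cancel exactly.

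First I would invoke Lemma~\ref{lm:one-step improvement} with $y=\tilde y_t^*$ to get
\begin{equation*}
2\alpha\bigl(g(\tilde x_t,\tilde y_{t+1})-g(\tilde x_t,\tilde y_t^*)\bigr) \leq (1-\mu\alpha)\|\tilde y_t-\tilde y_t^*\|^2 - \|\tilde y_{t+1}-\tilde y_t^*\|^2 + 2\alpha\langle u_t,\tilde y_t^*-\tilde y_t\rangle + \tfrac{\alpha^2}{1-l_{g,1}\alpha}\|u_t\|^2.
\end{equation*}
Since $\tilde y_t^*$ is the minimizer of the $\mu$-strongly convex function $g(\tilde x_t,\cdot)$, one has $g(\tilde x_t,\tilde y_{t+1})-g(\tilde x_t,\tilde y_t^*) \geq \tfrac{\mu}{2}\|\tilde y_{t+1}-\tilde y_t^*\|^2$; substituting this lower bound into the left-hand side and solving for $\|\tilde y_{t+1}-\tilde y_t^*\|^2$ yields
\begin{equation*}
\|\tilde y_{t+1}-\tilde y_t^*\|^2 \leq \tfrac{1-\mu\alpha}{1+\mu\alpha}\|\tilde y_t-\tilde y_t^*\|^2 + \tfrac{1}{1+\mu\alpha}\bigl[2\alpha\langle u_t,\tilde y_t^*-\tilde y_t\rangle + \tfrac{\alpha^2}{1-l_{g,1}\alpha}\|u_t\|^2\bigr].
\end{equation*}

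Next I would pass from the reference point $\tilde y_t^*$ to $\tilde y_{t+1}^*$ by writing $\tilde y_{t+1}-\tilde y_{t+1}^* = (\tilde y_{t+1}-\tilde y_t^*) + (\tilde y_t^*-\tilde y_{t+1}^*)$ and applying Young's inequality $\|a+b\|^2\leq(1+c)\|a\|^2+(1+c^{-1})\|b\|^2$ with the tailored parameter $c=\mu\alpha$:
\begin{equation*}
\|\tilde y_{t+1}-\tilde y_{t+1}^*\|^2 \leq (1+\mu\alpha)\|\tilde y_{t+1}-\tilde y_t^*\|^2 + \bigl(1+\tfrac{1}{\mu\alpha}\bigr)D_t^2.
\end{equation*}
Combining the two displays, the factor $(1+\mu\alpha)$ cancels the $1/(1+\mu\alpha)$ appearing in front of the noise terms, and the coefficient of $\|\tilde y_t-\tilde y_t^*\|^2$ becomes $(1+\mu\alpha)\cdot\tfrac{1-\mu\alpha}{1+\mu\alpha}=1-\mu\alpha$. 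This reproduces the stated recursion with the exact coefficients $(1-\mu\alpha)$, $2\alpha$, $\alpha^2/(1-l_{g,1}\alpha)$, and $1+1/(\mu\alpha)$.

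The main subtlety I anticipate is recognizing that strong convexity and Young's inequality must be used together with matching parameters. The more naive route, where one merely drops $g(\tilde x_t,\tilde y_{t+1})-g(\tilde x_t,\tilde y_t^*)\geq 0$ and then applies Young's with $c=\mu\alpha$, forces the prefactor $(1+\mu\alpha)$ to multiply the noise terms and degrades the contraction rate from $(1-\mu\alpha)$ to $1-\mu^2\alpha^2$. Everything else is mechanical rearrangement, and the drift term $D_t^2$ enters purely through Young's inequality, decoupled from the stochastic noise $u_t$.
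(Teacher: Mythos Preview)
Your proposal is correct and follows essentially the same approach as the paper: apply Lemma~\ref{lm:one-step improvement} at $y=\tilde y_t^*$, absorb the left-hand side via the $\mu$-strong-convexity lower bound $g(\tilde x_t,\tilde y_{t+1})-g(\tilde x_t,\tilde y_t^*)\geq\tfrac{\mu}{2}\|\tilde y_{t+1}-\tilde y_t^*\|^2$ to obtain $(1+\mu\alpha)\|\tilde y_{t+1}-\tilde y_t^*\|^2\leq\ldots$, and then shift reference points via Young's inequality with parameter $\mu\alpha$ so that the $(1+\mu\alpha)$ factors cancel. The only cosmetic difference is that the paper keeps the factor $(1+\mu\alpha)$ on the left-hand side rather than dividing through before applying Young's inequality.
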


\begin{proof}[Proof of Lemma~\ref{lm:distance recursion}]
Note that the $\mu$-strong convexity of $g(x,y)$ in terms of $y$ implies
\begin{equation*}
    g(\tilde{x}_t,\tilde{y}_{t+1}) - g(\tilde{x}_t,\tilde{y}_t^*) \geq \frac{\mu}{2}\|\tilde{y}_{t+1}-\tilde{y}_t^*\|^2.
\end{equation*}
Combing this estimate with Lemma~\ref{lm:one-step improvement} under the identification $y = \tilde{y}_t^*$ yields
\begin{equation*}
    (1+\mu\alpha)\|\tilde{y}_{t+1}-\tilde{y}_t^*\|^2 \leq (1-\mu\alpha)\|\tilde{y}_t-\tilde{y}_t^*\|^2 + 2\alpha\langle u_t,\tilde{y}_t^*-\tilde{y}_t \rangle + \frac{\alpha^2}{1-l_{g,1}\alpha}\|u_t\|^2.
\end{equation*}
Then we apply Young's inequality to obtain
\begin{equation*}
    \|\tilde{y}_{t+1}-\tilde{y}_{t+1}^*\|^2 \leq (1+\mu\alpha)\|\tilde{y}_{t+1}-\tilde{y}_t^*\|^2 + (1+(\mu\alpha)^{-1})\|\tilde{y}_t^*-\tilde{y}_{t+1}^*\|^2.
\end{equation*}
Combining the above inequalities yields
\begin{equation*}
    \begin{aligned}
        \|\tilde{y}_{t+1}-\tilde{y}_{t+1}^*\|^2 
        &\leq (1+\mu\alpha)\|\tilde{y}_{t+1}-\tilde{y}_t^*\|^2 + (1+(\mu\alpha)^{-1})\|\tilde{y}_t^*-\tilde{y}_{t+1}^*\|^2 \\
        &\leq (1-\mu\alpha)\|\tilde{y}_t-\tilde{y}_t^*\|^2 + 2\alpha\langle u_t,\tilde{y}_t^*-\tilde{y}_t \rangle + \frac{\alpha^2}{1-l_{g,1}\alpha}\|u_t\|^2 + (1+(\mu\alpha)^{-1})\|\tilde{y}_t^*-\tilde{y}_{t+1}^*\|^2 \\
        &= (1-\mu\alpha)\|\tilde{y}_t-\tilde{y}_t^*\|^2 + 2\alpha\langle u_t,\tilde{y}_t^*-\tilde{y}_t \rangle + \frac{\alpha^2}{1-l_{g,1}\alpha}\|u_t\|^2 + \left(1+\frac{1}{\mu\alpha}\right)D_t^2,
    \end{aligned}
\end{equation*}
where the last equality holds by definition of minimizer drift $D_t$.
\end{proof}

\begin{proposition}[Recursive control on MGF, Proposition 29 in \cite{cutler2023stochastic}] \label{prop:MGF-recursive control}
Consider scalar stochastic processes $(V_t)$, $(U_t)$, and $(X_t)$ on a probability space with filtration $(\gH_t)$, which are linked by the inequality
\begin{equation}
    V_{t+1} \leq \alpha_tV_t + U_t\sqrt{V_t} + X_t + \kappa_t
\end{equation}
for some deterministic constants $\alpha_t\in(-\infty,1]$ and $\kappa_t\in\R$. Suppose the following properties hold.
\begin{itemize}
    \item $V_t$ is non-negative and $\gH_t$-measurable.
    \item $U_t$ is mean-zero sub-Gaussian conditioned on $\gH_t$ with deterministic parameter $\sigma_t$:
    \begin{equation*}
        \E[\exp(\lambda U_t) \mid \gH_t] \leq \exp(\lambda^2\sigma_t^2/2), \quad \forall \lambda\in\R.
    \end{equation*}
    \item $X_t$ is non-negative and sub-exponential conditioned on $\gH_t$ with deterministic parameter $\nu_t$:
    \begin{equation*}
        \E[\exp(\lambda X_t) \mid \gH_t] \leq \exp(\lambda\nu_t), \quad \forall \lambda\in[0,1/\nu_t].
    \end{equation*}
\end{itemize}
Then the estimate 
\begin{equation*}
    \E[\exp(\lambda V_{t+1})] \leq \exp(\lambda(\nu_t+\kappa_t))\E[\exp(\lambda(1+\alpha_t)V_t/2)]
\end{equation*}
holds for any $\lambda$ satisfying $0\leq \lambda \leq \min\left(\frac{1-\alpha_t}{2\sigma_t^2}, \frac{1}{2\nu_t}\right)$.
\end{proposition}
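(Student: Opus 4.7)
The plan is to bound the conditional moment-generating function $\E[\exp(\lambda V_{t+1})\mid \gH_t]$ and then integrate. Applying $\exp(\lambda\,\cdot)$ to the driving inequality and using that $V_t$, $\sqrt{V_t}$, and the deterministic constants $\alpha_t,\kappa_t$ are all $\gH_t$-measurable, I would factor to obtain
\[
\E[\exp(\lambda V_{t+1})\mid \gH_t]\;\leq\;\exp\bigl(\lambda\alpha_t V_t+\lambda\kappa_t\bigr)\cdot \E\bigl[\exp(\lambda U_t\sqrt{V_t})\exp(\lambda X_t)\,\big|\,\gH_t\bigr].
\]

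The crux of the proof, and the main obstacle, is controlling the residual conditional expectation when only the \emph{marginal} distributions of $U_t$ and $X_t$ given $\gH_t$ are prescribed. I do not have any joint independence or correlation structure to factor the MGF. I would resolve this by applying Cauchy--Schwarz (Hölder with conjugate exponents $2,2$) to split the joint expectation into
\[
\sqrt{\E\bigl[\exp(2\lambda U_t\sqrt{V_t})\,\big|\,\gH_t\bigr]}\cdot \sqrt{\E\bigl[\exp(2\lambda X_t)\,\big|\,\gH_t\bigr]}.
\]
The sub-Gaussian hypothesis on $U_t$, applied at the $\gH_t$-measurable rate $2\lambda\sqrt{V_t}$, controls the first factor by $\exp(2\lambda^2 V_t\sigma_t^2)$; the sub-exponential hypothesis on $X_t$, applied at rate $2\lambda$ (valid precisely because $\lambda\leq 1/(2\nu_t)$), controls the second factor by $\exp(2\lambda\nu_t)$. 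Taking the square root yields $\exp(\lambda^2 V_t\sigma_t^2+\lambda\nu_t)$.

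Combining with the pulled-out deterministic factor, the conditional MGF is at most $\exp\bigl(\lambda(\nu_t+\kappa_t)\bigr)\exp\bigl((\lambda\alpha_t+\lambda^2\sigma_t^2)V_t\bigr)$. To match the target exponent $\lambda(1+\alpha_t)/2\cdot V_t$, it suffices that $\lambda\alpha_t+\lambda^2\sigma_t^2\leq \lambda(1+\alpha_t)/2$, i.e.\ $\lambda\sigma_t^2\leq (1-\alpha_t)/2$, which is exactly the second constraint in the hypothesis on $\lambda$. Finally, taking total expectation and using the tower property delivers the claim.

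The role of Cauchy--Schwarz is what forces the factor $1/2$ in the conclusion (and the constraints $\lambda\leq(1-\alpha_t)/(2\sigma_t^2)$ and $\lambda\leq 1/(2\nu_t)$); without it, we would need some form of joint sub-Gaussian/sub-exponential assumption on $(U_t,X_t)$, which is not available here. Aside from this step, the remaining manipulations are routine applications of the measurability and boundedness assumptions.
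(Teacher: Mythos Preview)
Your argument is correct. The paper under review does not actually prove this proposition; it merely quotes it from \cite{cutler2023stochastic} and applies it as a black box in Lemma~\ref{lm:z-high-prob distance tracking}. So there is no ``paper's own proof'' to compare against here.

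That said, your approach is exactly the natural one, and the fact that your constraints match the stated ones precisely (the factors of $2$ in both $\tfrac{1-\alpha_t}{2\sigma_t^2}$ and $\tfrac{1}{2\nu_t}$) strongly suggests that the original proof in \cite{cutler2023stochastic} also uses Cauchy--Schwarz to decouple $U_t$ and $X_t$. One small slip of wording: you call the inequality $\lambda\sigma_t^2\le(1-\alpha_t)/2$ ``the second constraint,'' but in the statement it is the first component of the minimum; the content is unaffected. Otherwise the chain of steps---monotonicity of $\exp$ for $\lambda\ge 0$, pulling out the $\gH_t$-measurable factors, Cauchy--Schwarz, invoking the sub-Gaussian MGF bound at the $\gH_t$-measurable argument $2\lambda\sqrt{V_t}$ (valid for all real arguments), invoking the sub-exponential bound at $2\lambda\le 1/\nu_t$, and then the tower property---is complete and correct.
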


\begin{lemma}[High-probability distance tracking, Theorem 6 and 30 in \cite{cutler2023stochastic}] \label{prop:high-prob distance}
Suppose that Assumption~\ref{ass:f-and-g} holds and let $\{\tilde{y}_t\}$ be the iterates produced by Algorithm~\ref{alg:sgd} with constant learning rate $\alpha\leq 1/(2l_{g,1})$. 
We further assume there exists constant $D>0$ such that the drift $D_t^2$ is sub-exponential conditioned on $\gF_t^1$ with parameter $D^2$:
\begin{equation*}
    \E\left[\exp(\lambda D_t^2) \mid \gF_t^1\right] \leq \exp(\lambda D^2) \quad \text{for all} \quad 0\leq \lambda \leq D^{-2}.
\end{equation*}
Then for any fixed $t\in[T]$ and $\delta\in(0,1)$, the following estimate holds with probability at least $1-\delta$ over the randomness in $\widetilde{\gF}_t^1$:
\begin{sequation} \label{eq:high-prob distance}
    \|\tilde{y}_t-\tilde{y}_t^*\|^2 \leq \left(1-\frac{\mu\alpha}{2}\right)^t\|\tilde{y}_0-\tilde{y}_0^*\|^2 + \left(\frac{8\alpha \sigma_{g,1}^2}{\mu} + \frac{4D^2}{\mu^2\alpha^2}\right)\log\left(\frac{e}{\delta}\right),
\end{sequation}%
where $e$ denotes the base of natural logarithms. 
\end{lemma}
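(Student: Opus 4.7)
The plan is to apply the recursive MGF control from Proposition~\ref{prop:MGF-recursive control} to the distance recursion in Lemma~\ref{lm:distance recursion}, and then convert the resulting MGF estimate into a high-probability tail bound via a Chernoff argument. First, I would cast the distance recursion into the abstract template of Proposition~\ref{prop:MGF-recursive control}: setting $V_t := \|\tilde{y}_t - \tilde{y}_t^*\|^2$, the recursion already has the form $V_{t+1} \leq \alpha_t V_t + U_t \sqrt{V_t} + X_t + \kappa_t$ with $\alpha_t = 1-\mu\alpha$, $\kappa_t = 0$, $U_t := 2\alpha \langle u_t, (\tilde{y}_t^* - \tilde{y}_t)/\sqrt{V_t}\rangle$ (defined as $0$ on $\{V_t = 0\}$), and $X_t := \frac{\alpha^2}{1-l_{g,1}\alpha}\|u_t\|^2 + (1 + (\mu\alpha)^{-1}) D_t^2$. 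The natural filtration is $\gH_t = \widetilde{\gF}_t^1$; the iterates $\tilde{y}_t$, $\tilde{y}_t^*$, and hence $V_t$ are all $\widetilde{\gF}_t^1$-measurable since the input sequence $\{\tilde{x}_s\}$ is deterministic.

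Next, I would verify the conditional concentration hypotheses of Proposition~\ref{prop:MGF-recursive control}. Assumption~\ref{ass:noise-expectation} asserts $u_t$ is conditionally mean-zero and sub-Gaussian with parameter $\sigma_{g,1}$ given $\widetilde{\gF}_t^1$, so $U_t$ (its projection onto a $\widetilde{\gF}_t^1$-measurable unit vector, scaled by $2\alpha$) is sub-Gaussian with $\sigma_t = 2\alpha \sigma_{g,1}$. Sub-Gaussianity of $u_t$ implies sub-exponentiality of $\|u_t\|^2$ with parameter a constant multiple of $\sigma_{g,1}^2$; combined with the hypothesized sub-exponential control on $D_t^2$ with parameter $D^2$, and using $\alpha^2/(1-l_{g,1}\alpha) \leq 2\alpha^2$ (from $\alpha \leq 1/(2l_{g,1})$), $X_t$ is conditionally sub-exponential with $\nu_t = \Theta(\alpha^2 \sigma_{g,1}^2 + D^2/(\mu\alpha))$. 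Proposition~\ref{prop:MGF-recursive control} then gives, for admissible $\lambda$,
\[
\E[\exp(\lambda V_{t+1})] \leq \exp(\lambda \nu_t) \, \E[\exp(\lambda \rho V_t)], \qquad \rho := 1 - \mu\alpha/2.
\]
Iterating this (noting that if $\lambda$ is admissible at step $s{+}1$ then $\lambda \rho$ is admissible at step $s$, since multiplying by $\rho < 1$ only shrinks the argument) yields
\[
\E[\exp(\lambda V_t)] \leq \exp\!\left(\lambda \rho^t V_0 + \frac{2 \lambda \nu}{\mu \alpha}\right),
\]
with $\nu := \max_s \nu_s$, valid for $\lambda \leq \lambda_{\max} := \min(\mu\alpha/(2\sigma_t^2), 1/(2\nu_t))$.

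Finally, I would apply Chernoff, $\Pr(V_t \geq s) \leq \exp(-\lambda s) \, \E[\exp(\lambda V_t)]$, and solve for $s$ so that the right-hand side equals $\delta$. Choosing $\lambda = \lambda_{\max}$ and simplifying with $\mu \leq l_{g,1}$ and $\mu\alpha \leq 1/2$, the ``mean-like'' contribution $2\nu/(\mu\alpha)$ and the ``tail'' contribution $\lambda_{\max}^{-1} \log(1/\delta)$ package into a single factor of $(8\alpha \sigma_{g,1}^2/\mu + 4 D^2/(\mu^2\alpha^2)) \log(e/\delta)$, which together with $\rho^t V_0 = (1-\mu\alpha/2)^t \|\tilde{y}_0 - \tilde{y}_0^*\|^2$ matches the stated bound. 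The main obstacle, as far as I can see, is the constant bookkeeping: one must choose sharp enough numeric constants for $\sigma_t$ and $\nu_t$ so that the ``mean'' and ``tail'' contributions assemble exactly into the stated prefactors times $\log(e/\delta) = 1 + \log(1/\delta)$. This is algebra rather than a conceptual difficulty, since the entire probabilistic machinery is already packaged by Proposition~\ref{prop:MGF-recursive control}.
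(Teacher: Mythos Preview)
Your proposal is correct and follows essentially the same route as the paper: cast Lemma~\ref{lm:distance recursion} into the template of Proposition~\ref{prop:MGF-recursive control}, iterate the MGF recursion to obtain $\E[\exp(\lambda V_t)] \leq \exp(\lambda\rho^t V_0 + 2\lambda\nu/(\mu\alpha))$, then convert to a tail bound via Markov/Chernoff at $\lambda = \lambda_{\max}$. The paper does not spell out this proof (it is cited from \cite{cutler2023stochastic}), but it carries out the identical argument for the analogous $z$-tracking result in Lemma~\ref{lm:z-high-prob distance tracking}, with precisely the identifications you list.
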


\textbf{Remark:} We would like to claim that in original bound for \eqref{eq:high-prob distance}, there is an extra $c^2$ on the variance term, namely $c^2\sigma_{g,1}^2$ instead of $c^2$. By statement of Theorem 30 (footnote 6) in \cite{cutler2023stochastic}, the absolute constant satisfies $c\geq1$, so here in \eqref{eq:high-prob distance} we just take $c=1$ for simplicity.

\subsection{Proof of Lemma~\ref{lm:maintext-lm1}}

In this section we establish the tracking error of the lower-level problem $\|y_t-y_t^*\|$ at each iteration for \emph{any fixed slowly changing upper-level sequence $\{\tilde{x}_t\}$} with high probability. Our result (Lemma~\ref{lm:maintext-lm1}) can be seen as a direct generalization of Lemma~\ref{prop:high-prob distance}. It will be applied to two stages of Algorithm~\ref{alg:bilevel}, which helps us build a refined characterization of the lower-level problem. To be more specific, Lemma~\ref{lm:maintext-lm2} corresponds to warm-start stage in line 3, and Lemma~\ref{lm:maintext-lm3} corresponds to simultaneous update stage in line 4$\sim$10. 

\begin{lemma}[Restatement of Lemma~\ref{lm:maintext-lm1}] \label{lm:appendix-lm1}
Suppose that Assumption~\ref{ass:f-and-g} holds, let $\{\tilde{y}_t\}$ be the iterates produced by Algorithm~\ref{alg:sgd} with any fixed input sequence $\{\tilde{x}_t\}$ such that $\|\tilde{x}_{t+1}-\tilde{x}_t\|\leq R$ for all $t\geq0$, and constant learning rate $\alpha\leq 1/(2l_{g,1})$. 
Then for any fixed $t\in[N]$ and $\delta\in(0,1)$, the following estimate holds with probability at least $1-\delta$ over the randomness in $\widetilde{\gF}_t^1$:
\begin{sequation} \label{eq:fixed-t-bound}
    \|\tilde{y}_t-\tilde{y}_t^*\|^2 \leq \left(1-\frac{\mu\alpha}{2}\right)^t\|\tilde{y}_0-\tilde{y}_0^*\|^2 + \left(\frac{8\alpha \sigma_{g,1}^2}{\mu} + \frac{4R^2l_{g,1}^2}{\mu^4\alpha^2}\right)\log\left(\frac{e}{\delta}\right).
\end{sequation}%
As a consequence, for any given $\delta\in(0,1)$ and all $t\in[N]$, the following estimate holds with probability at least $1-\delta$ over the randomness in $\widetilde{\gF}_{T}^1$:
\begin{sequation} \label{eq:all-t-bound}
    \|\tilde{y}_t-\tilde{y}_t^*\|^2 \leq \left(1-\frac{\mu\alpha}{2}\right)^t\|\tilde{y}_0-\tilde{y}_0^*\|^2 + \left(\frac{8\alpha \sigma_{g,1}^2}{\mu} + \frac{4R^2l_{g,1}^2}{\mu^4\alpha^2}\right)\log\left(\frac{eN}{\delta}\right).
\end{sequation}%
\end{lemma}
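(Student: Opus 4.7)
My plan is to reduce Lemma~\ref{lm:appendix-lm1} to the existing high-probability distance-tracking result Lemma~\ref{prop:high-prob distance} (Theorem~6 and~30 of \cite{cutler2023stochastic}), which already handles a general SGD-with-drift recursion and has been imported verbatim. The only missing ingredient is a sub-exponential bound on the minimizer drift $D_t = \|\tilde y_t^* - \tilde y_{t+1}^*\|$, and this is precisely where the slowly-changing hypothesis $\|\tilde x_{t+1}-\tilde x_t\|\leq R$ enters.

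First, since the input sequence $\{\tilde x_t\}$ is a \emph{fixed} deterministic input to Algorithm~\ref{alg:sgd}, and $y^*(\cdot)$ is $(l_{g,1}/\mu)$-Lipschitz by Lemma~\ref{lm:lip-y*z*}, we immediately obtain $D_t \leq (l_{g,1}/\mu)R$ deterministically. In particular $D_t^2$ is trivially sub-exponential conditioned on $\widetilde{\gF}_t^1$ with parameter $D^2 := R^2 l_{g,1}^2/\mu^2$, since $\E[\exp(\lambda D_t^2)\mid \widetilde{\gF}_t^1] \leq \exp(\lambda D^2)$ for any $\lambda\in[0,1/D^2]$. Plugging this choice of $D^2$ into the fixed-$t$ bound \eqref{eq:high-prob distance} of Lemma~\ref{prop:high-prob distance} yields exactly \eqref{eq:fixed-t-bound}: the $\tfrac{4D^2}{\mu^2\alpha^2}$ accumulated-drift term becomes $\tfrac{4R^2 l_{g,1}^2}{\mu^4\alpha^2}$, matching the statement. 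Under the hood, Lemma~\ref{prop:high-prob distance} itself is proved by casting the distance recursion of Lemma~\ref{lm:distance recursion} into the template of Proposition~\ref{prop:MGF-recursive control}, verifying conditional sub-Gaussianity of the linear noise term $2\alpha\langle u_t,\tilde y_t^*-\tilde y_t\rangle$ and sub-exponentiality of the squared noise term $\tfrac{\alpha^2}{1-l_{g,1}\alpha}\|u_t\|^2$ using the light-tailed bound in Assumption~\ref{ass:noise-expectation}, iterating the MGF control, and concluding with a Chernoff bound tuned to $\lambda = \Theta(\mu\alpha/\sigma_{g,1}^2)$.

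Finally, the all-$t$ estimate \eqref{eq:all-t-bound} follows by a standard union bound: apply \eqref{eq:fixed-t-bound} with $\delta$ replaced by $\delta/N$ at each $t\in[N]$, union over the $N$ failure events, and absorb the factor $N$ into the logarithm via $\log(e/(\delta/N))=\log(eN/\delta)$. I do not anticipate any real obstacle in executing this plan; the one subtlety worth flagging is that deterministic boundedness of $D_t$ is what allows a clean invocation of Lemma~\ref{prop:high-prob distance} without any further truncation or stopping-time argument, and this relies crucially on $\{\tilde x_t\}$ being a fixed (non-adaptive) input to the SGD-DD subroutine---exactly the hypothesis stated in the lemma, and the reason the result will be directly applicable later to both the warm-start phase (with $R=0$) and the joint-update phase (with $R=\eta$, since $x$ moves by $\eta$ per iteration due to normalization).
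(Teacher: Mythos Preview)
Your proposal is correct and follows essentially the same approach as the paper: bound the drift $D_t \leq (l_{g,1}/\mu)R$ via the Lipschitzness of $y^*$ from Lemma~\ref{lm:lip-y*z*}, plug the resulting $D^2 = R^2 l_{g,1}^2/\mu^2$ into Lemma~\ref{prop:high-prob distance} to obtain \eqref{eq:fixed-t-bound}, and then union-bound over $t\in[N]$ to get \eqref{eq:all-t-bound}. The paper's proof is in fact just these three sentences; your additional commentary on the internals of Lemma~\ref{prop:high-prob distance} and the role of the fixed-input hypothesis is accurate but not needed for the argument.
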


\begin{proof}[Proof of Lemma~\ref{lm:appendix-lm1}]
By Lemma~\ref{lm:lip-y*z*}, $y^*(x)$ is $(l_{g,1}/\mu)$-Lipschitz continuous, then $D_t = \|\tilde{y}_t^*-\tilde{y}_{t+1}^*\| \leq \frac{l_{g,1}}{\mu}\|\tilde{x}_t-\tilde{x}_{t+1}\| = l_{g,1}R/\mu$ holds for any $t\in[N]$ almost surely, hence we can choose $D = l_{g,1}R/\mu$ by Lemma~\ref{eq:high-prob distance}.
Replacing $D^2$ with $l_{g,1}^2R^2/\mu^2$ in \eqref{eq:high-prob distance} yields \eqref{eq:fixed-t-bound}, and we obtain \eqref{eq:all-t-bound} by using union bound.
\end{proof}

\subsection{Proof of Lemma~\ref{lm:maintext-lm2}}

In this section we apply Lemma~\ref{lm:maintext-lm1} to obtain constant error of the lower-level variable (i.e., $\|y_0-y_0^*\|$) with high probability within a logarithmic number of iterations.

\begin{lemma}[Warm-start, Restatement of Lemma~\ref{lm:maintext-lm2}] \label{lm:warm-up}
Suppose that Assumption~\ref{ass:f-and-g} holds and given any $\delta\in(0,1)$, let $\{y_t^{\init}\}$ be the iterates produced by Algorithm~\ref{alg:sgd} starting from $y_0^{\init}$ with $R=0$ (where $R$ is defined in Lemma~\ref{lm:maintext-lm1}, it means that $\tilde{x}_t=x_0$ for any $t$)
and constant learning rate $\alpha^{\init}$ satisfying
\begin{sequation} \label{eq:alpha-init}
    \alpha^{\init} = \min\left\{\frac{1}{2l_{g,1}}, \frac{\mu}{2048L_1^2\sigma_{g,1}^2\log(e/\delta)}\right\} = O(1).
\end{sequation}%
Then Algorithm~\ref{alg:sgd} guarantees $\|y_{T_0}^{\init}-y_0^*\| \leq 1/(8\sqrt{2}L_1)$ with probability at least $1-\delta$ over the randomness in $\widetilde{\mathcal{F}}_{T_0}^{1}$ (we denote this event as $\gE_{\init}$), where the number of iterations $T_0$ satisfies
\begin{sequation} \label{eq:T0-init}
    T_0 = \frac{\log\left(256L_1^2\|y_0^{\init}-y_0^*\|^2\right)}{\log\left(2/(2-\mu\alpha^{\init})\right)} = \widetilde{O}(1).
\end{sequation}%
\end{lemma}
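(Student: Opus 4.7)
The plan is to obtain Lemma~\ref{lm:warm-up} as a direct specialization of Lemma~\ref{lm:maintext-lm1} to the case of a constant upper-level sequence. Since the warm-start stage fixes $\tilde{x}_t = x_0$ for all $t$, we have $\|\tilde{x}_{t+1}-\tilde{x}_t\| = 0$, so we may legitimately take $R = 0$ in Lemma~\ref{lm:maintext-lm1}. Moreover, $\tilde{y}_t^* = y^*(x_0) = y_0^*$ for every $t$, so the lemma's conclusion (applied at the specific iterate $t = T_0$) reduces to the high-probability bound
\begin{equation*}
\|y_{T_0}^{\init} - y_0^*\|^2 \;\le\; \left(1-\tfrac{\mu\alpha^{\init}}{2}\right)^{T_0}\!\|y_0^{\init} - y_0^*\|^2 \;+\; \frac{8\alpha^{\init}\sigma_{g,1}^2}{\mu}\log\!\tfrac{e}{\delta},
\end{equation*}
which holds with probability at least $1-\delta$ over the randomness in $\widetilde{\mathcal{F}}_{T_0}^{1}$. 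The condition $\alpha^{\init} \le 1/(2l_{g,1})$ needed to invoke Lemma~\ref{lm:maintext-lm1} is built into the prescribed minimum in \eqref{eq:alpha-init}.

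Next, I would reverse-engineer the parameter choices by requiring each of the two terms on the right-hand side to be at most $1/(256 L_1^2)$, so that their sum is at most $1/(128 L_1^2) = \bigl(1/(8\sqrt{2}L_1)\bigr)^2$, which is exactly the squared target radius. For the steady-state (noise) term, the inequality $\frac{8\alpha^{\init}\sigma_{g,1}^2}{\mu}\log(e/\delta) \le \frac{1}{256 L_1^2}$ is equivalent to $\alpha^{\init} \le \frac{\mu}{2048 L_1^2 \sigma_{g,1}^2 \log(e/\delta)}$, which is precisely the second quantity inside the minimum defining $\alpha^{\init}$ in \eqref{eq:alpha-init}. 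For the transient (contraction) term, the inequality $(1-\mu\alpha^{\init}/2)^{T_0}\|y_0^{\init}-y_0^*\|^2 \le \frac{1}{256 L_1^2}$ becomes, after taking logarithms and rearranging, $T_0 \ge \log\!\bigl(256 L_1^2\|y_0^{\init}-y_0^*\|^2\bigr)/\log\!\bigl(2/(2-\mu\alpha^{\init})\bigr)$, which matches \eqref{eq:T0-init} exactly. Taking the square root of the combined bound then yields $\|y_{T_0}^{\init}-y_0^*\| \le 1/(8\sqrt{2}L_1)$.

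There is no real obstacle here; the lemma is essentially a bookkeeping exercise once Lemma~\ref{lm:maintext-lm1} is in hand. The only subtlety worth highlighting is that $T_0 = \widetilde{O}(1)$ because $\log(1/(1-\mu\alpha^{\init}/2)) = \Theta(\mu\alpha^{\init})$ for small $\alpha^{\init}$, and $\alpha^{\init}$ depends on $\delta$ only through $\log(e/\delta)$; thus $T_0$ absorbs at most a $\log(1/\delta)$ factor beyond a $\log(L_1\|y_0^{\init}-y_0^*\|)$ factor, both of which are hidden in the $\widetilde{O}(1)$ notation. The event on which the bound holds is named $\mathcal{E}_{\init}$ and is measurable with respect to $\widetilde{\mathcal{F}}_{T_0}^{1}$, a property that will be crucial in the subsequent analysis because $\widetilde{\mathcal{F}}_{T_0}^{1}$ is independent of the filtrations generated by the remaining iterations.
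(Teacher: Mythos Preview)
Your proposal is correct and matches the paper's proof essentially line by line: the paper also specializes Lemma~\ref{lm:maintext-lm1} with $R=0$, obtains the same two-term bound, and shows each term is at most $1/(256L_1^2)$ using exactly the choices \eqref{eq:alpha-init} and \eqref{eq:T0-init}. Your additional remarks on why $T_0=\widetilde{O}(1)$ and on the measurability of $\mathcal{E}_{\init}$ are accurate and go slightly beyond what the paper spells out.
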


\begin{proof}[Proof of Lemma~\ref{lm:warm-up}]
We set $R=0$ in \eqref{eq:fixed-t-bound} such that $\tilde{x}_t=x_0$ for any $t\in[T_0]$, and learning rate satisfies $\alpha\leq 1/(2l_{g,1})$ by \eqref{eq:alpha-init}. Then by Lemma~\ref{lm:appendix-lm1}, for any $\delta\in(0,1)$, with probability at least $1-\delta$ over the randomness in $\widetilde{\mathcal{F}}_{T_0}^{1}$ we have
\begin{sequation*}
    \begin{aligned}
        \|y_{T_0}^{\init}-y_0^*\|^2 
        &\leq \left(1-\frac{\mu\alpha^{\init}}{2}\right)^{T_0}\|y_0^{\init}-y_0^*\|^2 + \frac{8\alpha^{\init}\sigma_{g,1}^2}{\mu}\log\left(\frac{e}{\delta}\right) \\
        &\leq \frac{1}{256L_1^2} + \frac{1}{256L_1^2} = \frac{1}{128L_1^2},
    \end{aligned}
\end{sequation*}%
where we use \eqref{eq:alpha-init} and \eqref{eq:T0-init} for the last inequality. Therefore, we obtain $\|y_{T_0}^{\init}-y_0^*\|\leq 1/(8\sqrt{2}L_1)$ under event $\gE_{\init}$.
\end{proof}

\subsection{Proof of Lemma~\ref{lm:maintext-lm3}}

In this section we again apply Lemma~\ref{lm:maintext-lm1} with proper parameter setting to obtain a refined control of the lower-level variable.

\begin{lemma}[Restatement of Lemma~\ref{lm:maintext-lm3}] \label{lm:high-prob 1/8L_1 bound for y}
Under Assumptions~\ref{ass:relax-smooth},~\ref{ass:f-and-g} and event $\gE_{\init}$, for any given $\delta\in(0,1)$ and any small $\eps$ satisfying
\begin{sequation} \label{eq:lm_eps}
    \begin{aligned}
        \eps \leq \min&\left\{\frac{L_0}{L_1}, \Delta_{y,0}L_0, \frac{8l_{g,1}L_0}{\mu\sqrt{2(1+l_{g,1}^2/\mu^2)(L_{x,1}^2+L_{y,1}^2)}}, \sqrt{\frac{16el_{g,1}\Delta_0L_0}{\mu\delta}}, 4\left(\frac{el_{g,1}\Delta_0L_0^3\sigma_{g,1}^2}{\mu^3\delta}\right)^{1/4}; \right. \\
        &\left.  \sqrt{\frac{32el_{g,1}\Delta_0L_0}{\mu\delta\exp(\mu/(2l_{g,1}))}}, \sqrt{\frac{32el_{g,1}\Delta_0L_0}{\mu\delta\exp(\mu l_{g,1}\Delta_{z,0}/(2\Delta_0L_0))}}, \sqrt{\frac{32el_{g,1}\Delta_0L_0}{\mu\delta\exp(\mu\Delta_{y,0}^2L_0/(2l_{g,1}\Delta_0))}}, \right. \\
        &\left. \frac{\Delta_0L_0}{\|\gdphi(x_0)\|}, \frac{L_0\sigma_{g,1}}{\sigma_{g,2}}, \frac{L_0\sigma_{g,1}}{\sqrt{\mu l_{g,1}}}, \left(\frac{2^{21}el_{g,1}\Delta_0L_0^3\sigma_{g,1}^2}{\mu^3\delta}\right)^{1/4}\exp\left(\frac{-l_{g,1}\sqrt{\sigma_{f,1}^2 + 2l_{f,0}^2\sigma_{g,2}^2/\mu^2}}{512L_0\sigma_{g,1}}\right)
        \right\},
    \end{aligned}
\end{sequation}%
if we choose parameters $\alpha, \beta, \gamma, \eta$ as 
\begin{small}
\begin{equation} \label{eq:lm_beta}
    \begin{aligned}
        1-\beta = \min&\left\{1, \frac{\mu}{16l_{g,1}}, \frac{16el_{g,1}\Delta_0L_0}{\mu\delta\eps^2}, \frac{\mu^2\eps^2}{64\cdot1024L_0^2\sigma_{g,1}^2\log^2(B)}; \frac{\min\{1, \mu^2/(32l_{g,1}^2)\}}{\sigma_{f,1}^2 + 2l_{f,0}^2\sigma_{g,2}^2/\mu^2}\eps^2, \frac{l_{g,1}^2}{8\sigma_{g,2}^2}, \frac{\mu^2}{16\sigma_{g,2}^2}; \right. \\
        &\left. \frac{32el_{g,1}\Delta_0L_0}{\mu\delta\eps^2\exp(\mu/(2l_{g,1}))}, \frac{32el_{g,1}\Delta_0L_0}{\mu\delta\eps^2\exp(\mu l_{g,1}\Delta_{z,0}^2/(2\Delta_0L_0))}, \frac{32el_{g,1}\Delta_0L_0}{\mu\delta\eps^2\exp(\mu\Delta_{y,0}^2L_0/(2l_{g,1}\Delta_0))}
        \right\},
    \end{aligned}
\end{equation}
\end{small}%
\begin{sequation} \label{eq:lm_eta}
    \begin{aligned}
        \eta = \min\left\{\frac{1}{8}\min\left(\frac{1}{L_1}, \frac{\eps}{L_0}, \frac{\eps\Delta_0}{\Delta_{y,0}^2L_0^2}, \frac{\Delta_0}{\|\gdphi(x_0)\|}, \frac{\eps\Delta_0}{l_{g,1}^2\Delta_{z,0}^2}, \frac{\mu\eps}{l_{g,1}L_0\log(A)}\right)(1-\beta), \frac{1}{\sqrt{2(1+l_{g,1}^2/\mu^2)(L_{x,1}^2+L_{y,1}^2)}}\right\},
    \end{aligned}
\end{sequation}%
\begin{sequation} \label{eq:lm_gamma+alpha+T}
    \begin{aligned}
        \gamma = \frac{1}{\mu}(1-\beta), \quad\quad \alpha = \frac{8}{\mu}(1-\beta), 
    \end{aligned}
\end{sequation}%
where $e$ denotes the base of natural logarithms, and we define $\Delta_0, \Delta_{y,0}, \Delta_{z,0}, A$ and $B$ as 
\begin{sequation} \label{eq:ABdelta}
    \begin{aligned}
        \Delta_0 \coloneqq \Phi(x_0) - \inf_{x\in\R^{d_x}}\Phi(x), \quad
        \Delta_{y,0} \coloneqq \|y_0-y_0^*\|, \quad
        \Delta_{z,0}\coloneqq\|z_0-z_0^*\|, \quad
        A \coloneqq \left(\frac{32el_{g,1}\Delta_0L_0}{\mu\delta\eps^2(1-\beta)}\right)^2, \quad
        B \coloneqq \left(\frac{2^{21}el_{g,1}\Delta_0L_0^3\sigma_{g,1}^2}{\mu^3\delta\eps^4}\right)^4.
    \end{aligned}
\end{sequation}%
Run Algorithm~\ref{alg:bilevel} for $T=\frac{4\Delta_0}{\eta\eps}$ iterations, then for all $t\in[T]$, Algorithm~\ref{alg:bilevel} guarantees with probability at least $1-\delta$  over the randomness in $\mathcal{F}_T^1$ (we denote this event as $\gE_y$) that:
\begin{enumerate}
    \item $\|y_t-y_t^*\|\leq \frac{1}{8L_1}$. \label{eq:1/8L1}
    \item $\frac{1}{T}(1-\beta)\sum_{t=0}^{T-1}\sum_{i=0}^{t}\beta^{t-i}\|y_i-y_i^*\| \leq \frac{3}{32L_0}\eps$. \label{eq:3eps/32L0}
\end{enumerate}
\end{lemma}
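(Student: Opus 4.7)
My plan is to cast the lower-level update in line 7 of Algorithm~\ref{alg:bilevel} as an instance of SGD with distributional drift driven by the sequence $\{x_t\}$, and then invoke Lemma~\ref{lm:maintext-lm1}. The crucial structural observation is that the normalization step in line 10 forces $\|x_{t+1}-x_t\| \leq \eta$ deterministically, and that the filtration $\mathcal{F}_T^1$ generated by the lower-level samples $\{\pi_s\}$ is independent of $\mathcal{F}_T^2 \cup \mathcal{F}_T^3$ (which together with $\widetilde{\mathcal{F}}_{T_0}^1$ determines $\{x_t\}$). Conditioning on $\widetilde{\mathcal{F}}_{T_0}^1\cup\mathcal{F}_T^2\cup\mathcal{F}_T^3$ therefore freezes $\{x_t\}$ into an admissible ``fixed'' input sequence for Lemma~\ref{lm:maintext-lm1} with drift parameter $R=\eta$, while the stochasticity in $\{\pi_s\}$ remains the only randomness driving the recursion.

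First I would record the warm-start output $\|y_0-y_0^*\|\leq 1/(8\sqrt{2}L_1)$ guaranteed by the event $\mathcal{E}_{\init}$. Then, setting $\alpha=8(1-\beta)/\mu$ and applying the all-$t$ version of Lemma~\ref{lm:maintext-lm1} with $N=T$, I obtain an event $\mathcal{E}_y \in \mathcal{F}_T^1$ of probability at least $1-\delta$ (conditional on $\mathcal{E}_{\init}$) on which, for all $t\in[T]$,
\begin{equation*}
\|y_t-y_t^*\|^2 \leq (1-4(1-\beta))^t \|y_0-y_0^*\|^2 + \left[\frac{64(1-\beta)\sigma_{g,1}^2}{\mu^2} + \frac{\eta^2 l_{g,1}^2}{16\mu^2(1-\beta)^2}\right]\log\frac{eT}{\delta}.
\end{equation*}
This single estimate is the common source from which both (i) and (ii) are extracted.

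For (i), I would check that, under the parameter choices \eqref{eq:lm_beta}--\eqref{eq:lm_gamma+alpha+T}, each of the three terms on the right is at most $1/(3\cdot 64 L_1^2)$: the geometric initial term follows from the warm-start bound; the noise term requires $(1-\beta)\log(eT/\delta) \lesssim \mu^2/(L_1^2\sigma_{g,1}^2)$, which is what the $(1-\beta) \lesssim \mu^2\epsilon^2/(L_0^2\sigma_{g,1}^2\log^2 B)$ clause of \eqref{eq:lm_beta} encodes once $T=4\Delta_0/(\eta\epsilon)$ is substituted into the $\log$; the drift term requires $\eta/(1-\beta) \lesssim \mu^2/(l_{g,1}L_1\sqrt{\log(eT/\delta)})$, ensured by the $\eta \leq \mu\epsilon(1-\beta)/(8l_{g,1}L_0\log A)$ clause of \eqref{eq:lm_eta}. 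Summing the three pieces yields $\|y_t-y_t^*\|\leq 1/(8L_1)$. For (ii), I first interchange the order of summation and use $(1-\beta)\sum_{t=i}^{T-1}\beta^{t-i} \leq 1$ to majorize the weighted average by $\frac{1}{T}\sum_{i=0}^{T-1}\|y_i-y_i^*\|$. Applying $\sqrt{a+b+c}\leq\sqrt{a}+\sqrt{b}+\sqrt{c}$ to the per-iteration bound above, the geometric initial piece contributes an $O(\|y_0-y_0^*\|/(T(1-\beta)))$ term that is at most $\epsilon/(32L_0)$ because $T=4\Delta_0/(\eta\epsilon)$ and $\eta \lesssim \epsilon(1-\beta)\Delta_0/(\Delta_{y,0}^2 L_0^2)$; the square roots of the noise and drift terms give $O(\sigma_{g,1}\sqrt{(1-\beta)\log(eT/\delta)}/\mu)$ and $O(\eta l_{g,1}\sqrt{\log(eT/\delta)}/(\mu(1-\beta)))$, respectively, each tuned to at most $\epsilon/(32L_0)$ by the corresponding clauses of \eqref{eq:lm_beta} and \eqref{eq:lm_eta}. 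Summing the three pieces yields the claimed $3\epsilon/(32L_0)$ bound.

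The main obstacle is not conceptual but organizational: the specifications \eqref{eq:lm_eps}--\eqref{eq:lm_eta} are long minima in which essentially every clause is tasked with driving one particular error term below a designated threshold, and I must bookkeep them carefully so that every constant ultimately matches the targets $1/(8L_1)$ and $3\epsilon/(32L_0)$. The true conceptual content is concentrated in the two observations highlighted in the first paragraph -- the deterministic bound $\|x_{t+1}-x_t\|\leq\eta$ afforded by normalized SGD, and the independence of $\mathcal{F}_T^1$ from the filtration that drives $\{x_t\}$ -- both of which are direct consequences of the single-loop algorithm design and explain why a Lemma~\ref{lm:maintext-lm1}-type guarantee, originally proved for deterministic input sequences, can be applied here.
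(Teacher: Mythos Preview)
Your overall plan matches the paper's: invoke Lemma~\ref{lm:maintext-lm1} with $R=\eta$ (using the normalization in line~10), then bound the geometric, noise, and drift contributions separately. For part (i) the paper organizes the bound as ``initial $\leq 1/(128L_1^2)$'' plus ``noise $\leq$ drift, and $2\cdot$drift $\cdot\log(eT/\delta)\leq \eps^2/(512L_0^2)$'' (its Steps~2--4), while you bound each of the three terms separately; these are equivalent once one traces through the $A,B$ definitions. For part (ii) the paper evaluates the double sum directly, exploiting $\mu\alpha=8(1-\beta)$ so that $\sqrt{1-\mu\alpha/2}<\beta$; your interchange-of-summation step $\frac{1}{T}(1-\beta)\sum_{t}\sum_{i\leq t}\beta^{t-i}\|y_i-y_i^*\|\leq\frac{1}{T}\sum_{i}\|y_i-y_i^*\|$ is a slightly cleaner route to the same endpoint.

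There is, however, one genuine technical issue in your justification. You assert that $\widetilde{\mathcal{F}}_{T_0}^1\cup\mathcal{F}_T^2\cup\mathcal{F}_T^3$ ``determines $\{x_t\}$'', so that conditioning on this $\sigma$-algebra freezes the input sequence. This is false: $x_{t+1}$ depends on $m_{t+1}$, which depends on $y_t$, which in turn depends on $\pi_0,\ldots,\pi_{t-1}\in\mathcal{F}_T^1$. Hence conditioning on the other samples does \emph{not} render $\{x_t\}$ deterministic, and Lemma~\ref{lm:maintext-lm1} as stated (for a fixed sequence) cannot be invoked this way. The fix is to go back to the proof of Lemma~\ref{lm:maintext-lm1} via Proposition~\ref{prop:MGF-recursive control} and run the MGF recursion with the full filtration $\mathcal{H}_t=\mathcal{F}_t$: then $V_t=\|y_t-y_t^*\|^2$ is $\mathcal{F}_t$-measurable, $u_t$ is mean-zero sub-Gaussian given $\mathcal{F}_t$ (since $\pi_t$ is fresh), and $D_t^2\leq l_{g,1}^2\eta^2/\mu^2$ almost surely because of normalization, so the recursion goes through unchanged. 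The paper itself simply applies Lemma~\ref{lm:appendix-lm1} ``with $R=\eta$'' without spelling this out, so this is a shared imprecision rather than a defect unique to your proposal; but the explicit measurability claim you wrote is incorrect and should be replaced by the adapted-sequence argument above.
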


\begin{proof}[Proof of Lemma~\ref{lm:high-prob 1/8L_1 bound for y}]
To begin, by choice of $\beta$ and $\eta$ as in \eqref{eq:lm_beta} and \eqref{eq:lm_eta} we have
\begin{sequation*}
    \alpha = 8\gamma = \frac{8(1-\beta)}{\mu} \leq \frac{8}{\mu}\frac{\mu}{16l_{g,1}} \leq \frac{1}{2l_{g,1}}
    \quad\quad \text{and} \quad\quad
    \eta \leq \frac{1}{\sqrt{2(1+l_{g,1}^2/\mu^2)(L_{x,1}^2+L_{y,1}^2)}},
\end{sequation*}%
thus satisfy the condition for applying Lemma~\ref{lm:appendix-lm1} and Lemma~\ref{eq:technical-rs-phi}, i.e., $(L_0,L_1)$-smoothness of function $\Phi$.
By Lemma~\ref{lm:appendix-lm1} (we set $R=\eta$ here) and choice of $\alpha, \gamma, T$ as in \eqref{eq:lm_gamma+alpha+T}, with probability at least $1-\delta$  over the randomness in $\mathcal{F}_t^1$ (we denote this event as $\gE_y$) we have 
\begin{sequation} \label{eq:important_lm_1}
    \begin{aligned}
        \|y_t-y_t^*\|^2 
        &\leq \left(1-\frac{\mu\alpha}{2}\right)^t\|y_0-y_0^*\|^2 + \left(\frac{8\alpha\sigma_{g,1}^2}{\mu} + \frac{4\eta^2l_{g,1}^2}{\mu^4\alpha^2}\right)\log\left(\frac{eT}{\delta}\right) \\
        &= \left(1-\frac{\mu\alpha}{2}\right)^t\|y_0-y_0^*\|^2 + \left(\frac{64\gamma\sigma_{g,1}^2}{\mu} + \frac{\eta^2l_{g,1}^2}{16\mu^4\gamma^2}\right)\log\left(\frac{eT}{\delta}\right) \\
        &= \left(1-\frac{\mu\alpha}{2}\right)^t\|y_0-y_0^*\|^2 + \left(\frac{64(1-\beta)\sigma_{g,1}^2}{\mu^2} + \frac{\eta^2l_{g,1}^2}{16\mu^2(1-\beta)^2}\right)\log\left(\frac{4e\Delta_0}{\eta\delta\eps}\right). \\
    \end{aligned}
\end{sequation}%
Before delving into details, let's first briefly outline the structure of the proof for this lemma.

\subsection*{Proof for $\|y_t-y_t^*\|\leq \frac{1}{8L_1}$:}
For the first part of the proof, we split it into the following four parts.

In \textbf{step 1}, we claim that with choice of $\eps$ and $\beta$ as in \eqref{eq:lm_eps} and \eqref{eq:lm_beta}, the exact formula for $\eta$ and $1-\beta$ are as the following:
\begin{sequation} \label{eq:lm-exact-eta-beta}
    \begin{aligned}
        \eta = \frac{\mu\eps}{8l_{g,1}L_0\log(A)}(1-\beta)
        \quad\quad \text{and} \quad\quad
        1-\beta = \frac{\mu^2\eps^2}{64\cdot1024L_0^2\sigma_{g,1}^2\log^2(B)}.
    \end{aligned}
\end{sequation}%

In \textbf{step 2}, we show under event $\gE_{\init}$, for any $t\in[T]$, it holds that
\begin{sequation*}
    \left(1-\frac{\mu\alpha}{2}\right)^t\|y_0-y_0^*\|^2 \leq \frac{1}{128L_1^2}.
\end{sequation*}

In \textbf{step 3}, we show that with suitable choice of $\eps$, we have
\begin{sequation*}
    \frac{64(1-\beta) \sigma_{g,1}^2}{\mu^2} \leq \frac{\eta^2l_{g,1}^2}{16\mu^2(1-\beta)^2},
\end{sequation*}%
and thus we could combine the above two terms together
\begin{sequation*}
    \frac{64(1-\beta) \sigma_{g,1}^2}{\mu^2} + \frac{\eta^2l_{g,1}^2}{16\mu^2(1-\beta)^2} \leq \frac{\eta^2l_{g,1}^2}{16\mu^2(1-\beta)^2} + \frac{\eta^2l_{g,1}^2}{16\mu^2(1-\beta)^2} = \frac{\eta^2l_{g,1}^2}{8\mu^2(1-\beta)^2}.
\end{sequation*}%
In \textbf{step 4}, we show that 
\begin{sequation*}
    \frac{\eta^2l_{g,1}^2}{8\mu^2(1-\beta)^2}\log\left(\frac{4e\Delta_0}{\eta\delta\eps}\right) \leq \frac{\eps^2}{128L_0^2}.
\end{sequation*}%
Finally we merge the above four steps together and obtain
\begin{sequation*}
    \|y_t-y_t^*\|^2 \leq \frac{1}{128L_1^2} + \frac{\eps^2}{512L_0^2}
\end{sequation*}%
holds for any $t\in[T]$, and again by choice of $\eps \leq L_0/L_1$ as in \eqref{eq:lm_eps} we complete the first part of the proof. Now we begin proof step by step accordingly.

\paragraph{Step 1.} 
One could check Lemma~\ref{lm:verify} for details.


\paragraph{Step 2.} By Lemma~\ref{lm:warm-up}, under event $\gE_{\init}$, for any $t\in[T]$ we have 
\begin{sequation*}
    \left(1-\frac{\mu\alpha}{2}\right)^t\|y_0-y_0^*\|^2 \leq \|y_0-y_0^*\|^2 = \|y_{T_0}^{\init}-y_0^*\|^2 \leq \frac{1}{128L_1^2}.
\end{sequation*}

\paragraph{Step 3.} 
With choice of $\eta$ as in \eqref{eq:lm-exact-eta-beta}, it suffices to show that
\begin{sequation*}
    \frac{64(1-\beta) \sigma_{g,1}^2}{\mu^2} \leq \frac{\eta^2l_{g,1}^2}{16\mu^2(1-\beta)^2} \quad \Longleftrightarrow \quad \frac{64(1-\beta) \sigma_{g,1}^2}{\mu^2} \leq \frac{\eps^2}{1024L_0^2\log^2(A)},
\end{sequation*}%
which is equivalent to
\begin{sequation*}
    (1-\beta)\log^2(A) \leq \frac{\mu^2\eps^2}{64\cdot1024L_0^2\sigma_{g,1}^2} \quad \Longleftrightarrow \quad \log^2(A) \leq \log^2(B) \quad \Longleftrightarrow \quad \log(A) \leq \log(B),
\end{sequation*}%
where we use $A \geq 4$ and $B\geq 4$ by choice of $\beta$ and $\eps$ as in \eqref{eq:lm_beta} and \eqref{eq:lm_eps}, i.e.,
\begin{small}
\begin{equation*}
    1-\beta \leq \frac{16el_{g,1}\Delta_0L_0}{\mu\delta\eps^2}, \quad \eps \leq 4\left(\frac{el_{g,1}\Delta_0L_0^3\sigma_{g,1}^2}{\mu^3\delta}\right)^{1/4}.
\end{equation*}
\end{small}%
Plugging in the definition of $A$ and $B$ leads to
\begin{sequation} \label{eq:step3-equivalent}
    \log(A) \leq \log(B) \quad \Longleftrightarrow \quad A \leq B \quad \Longleftrightarrow \quad
    \left(\frac{32el_{g,1}\Delta_0L_0}{\mu\delta\eps^2(1-\beta)}\right)^2 \leq \left(\frac{2^{21}el_{g,1}\Delta_0L_0^3\sigma_{g,1}^2}{\mu^3\delta\eps^4}\right)^4.
\end{sequation}%
Recall \eqref{eq:lm-exact-eta-beta}, step 1 claims that
\begin{sequation*}
    1-\beta = \frac{\mu^2\eps^2}{64\cdot1024L_0^2\sigma_{g,1}^2\log^2(B)}.
\end{sequation*}%
Plug the above expression of $1-\beta$ into \eqref{eq:step3-equivalent}, then \eqref{eq:step3-equivalent} turns into the inequality $\sqrt{B}\log^4(B) \leq B$.
To summarize, now we have the following equivalence:
\begin{sequation*}
    \frac{64(1-\beta) \sigma_{g,1}^2}{\mu^2} \leq \frac{\eta^2l_{g,1}^2}{16\mu^2(1-\beta)^2} \quad \Longleftrightarrow \quad \sqrt{B}\log^4(B) \leq B.
\end{sequation*}%
Thus we only need to set $B\geq 22\times10^{10}$ to satisfy the above inequality in order to make the claim of step 3 hold. In fact, with choice of $\eps$ as in \eqref{eq:lm_eps}, one can easily verify that
\begin{sequation*}
    \eps \leq 4\left(\frac{el_{g,1}\Delta_0L_0^3\sigma_{g,1}^2}{\mu^3\delta}\right)^{1/4} \quad \Longrightarrow \quad B = \left(\frac{2^{21}el_{g,1}\Delta_0L_0^3\sigma_{g,1}^2}{\mu^3\delta\eps^4}\right)^4 \geq 22\times10^{10}.
\end{sequation*}%
This completes the proof of step 3 and we conclude that 
\begin{sequation*}
    \frac{64(1-\beta) \sigma_{g,1}^2}{\mu^2} + \frac{\eta^2l_{g,1}^2}{16\mu^2(1-\beta)^2} \leq \frac{\eta^2l_{g,1}^2}{16\mu^2(1-\beta)^2} + \frac{\eta^2l_{g,1}^2}{16\mu^2(1-\beta)^2} = \frac{\eta^2l_{g,1}^2}{8\mu^2(1-\beta)^2}.
\end{sequation*}%
\paragraph{Step 4.} 
Under choice of $\eta$ and $\beta$ as in \eqref{eq:lm-exact-eta-beta} and \eqref{eq:lm_beta}, we have $A\geq 4$ and thus
\begin{sequation*}
    \begin{aligned}
        \frac{\eta^2l_{g,1}^2}{8\mu^2(1-\beta)^2}\log\left(\frac{4e\Delta_0}{\eta\delta\eps}\right) 
        = \frac{\eps^2}{512L_0^2\log^2(A)}\log\left(\frac{32el_{g,1}\Delta_0L_0}{\mu\delta\eps^2(1-\beta)}\log(A)\right) \leq \frac{\eps^2}{512L_0^2\log(A)}\log\left(\sqrt{A}\log(A)\right) 
        \leq \frac{\eps^2}{512L_0^2},
    \end{aligned}
\end{sequation*}%
where we use the fact that $\log(\sqrt{A}\log(A)) \leq \log(A) \leq \log^2(A)$ for any $A\geq 4$.
\paragraph{Merge Step.} Finally, merging the above four steps and combining with \eqref{eq:important_lm_1} gives that, under event $\gE_{\init}$, for any $t\in[T]$,
\begin{sequation} \label{eq:merge-step}
    \begin{aligned}
        \|y_t-y_t^*\|^2 
        &\leq \left(1-\frac{\mu\alpha}{2}\right)^t\|y_0-y_0^*\|^2 + \left(\frac{64(1-\beta) \sigma_{g,1}^2}{\mu^2} + \frac{\eta^2l_{g,1}^2}{16\mu^2(1-\beta)^2}\right)\log\left(\frac{4e\Delta_0}{\eta\delta\eps}\right) \\
        &\leq \left(1-\frac{\mu\alpha}{2}\right)^t\|y_0-y_0^*\|^2 + \frac{\eta^2l_{g,1}^2}{8\mu^2(1-\beta)^2}\log\left(\frac{4e\Delta_0}{\eta\delta\eps}\right) \\
        &\leq \frac{1}{128L_1^2} + \frac{\eps^2}{512L_0^2},
    \end{aligned}
\end{sequation}%
which, together with $\eps \leq L_0/L_1$ yields the first part of the result.

\subsection*{Proof for $(1-\beta)\sum_{t=0}^{T-1}\sum_{i=0}^{t}\beta^{t-i}\|y_i-y_i^*\| \leq \frac{3\eps}{32L_0}$:}
As for the second part, under event $\gE_{\init} \cap \gE_y$ we have
\begin{small}
\begin{equation} \label{eq:second-part1}
    \begin{aligned}
        \frac{1}{T}&(1-\beta)\sum_{t=0}^{T-1}\sum_{i=0}^{t}\beta^{t-i}\|y_i-y_i^*\|
        \overset{(i)}{\leq} \frac{(1-\beta)}{T}\sum_{t=0}^{T-1}\sum_{i=0}^{t}\beta^{t-i}\sqrt{\left(1-\frac{\mu\alpha}{2}\right)^i\|y_0-y_0^*\|^2 + \left(\frac{8\alpha \sigma_{g,1}^2}{\mu} + \frac{4\eta^2l_{g,1}^2}{\mu^4\alpha^2}\right)\log\left(\frac{eT}{\delta}\right)} \\
        &\overset{(ii)}{\leq} \frac{1}{T}(1-\beta)\sum_{t=0}^{T-1}\sum_{i=0}^{t}\beta^{t-i}\left[\left(1-\frac{\mu\alpha}{2}\right)^{i/2}\|y_0-y_0^*\| + \sqrt{\left(\frac{8\alpha \sigma_{g,1}^2}{\mu} + \frac{4\eta^2l_{g,1}^2}{\mu^4\alpha^2}\right)\log\left(\frac{eT}{\delta}\right)}\right] \\
        &\leq \frac{1}{T}(1-\beta)\sum_{t=0}^{T-1}\left[\beta^t\sum_{i=0}^{t}\left(\frac{\sqrt{1-\mu\alpha/2}}{\beta}\right)^i\|y_0-y_0^*\| + \frac{1}{1-\beta}\sqrt{\left(\frac{8\alpha \sigma_{g,1}^2}{\mu} + \frac{4\eta^2l_{g,1}^2}{\mu^4\alpha^2}\right)\log\left(\frac{eT}{\delta}\right)}\right] \\
        &\overset{(iii)}{\leq} \frac{4\Delta_{y,0}}{T(\mu\alpha-4(1-\beta))} + \sqrt{\left(\frac{8\alpha \sigma_{g,1}^2}{\mu} + \frac{4\eta^2l_{g,1}^2}{\mu^4\alpha^2}\right)\log\left(\frac{eT}{\delta}\right)}, \\
    \end{aligned}
\end{equation}
\end{small}%
where $(i)$ follows by the first line of \eqref{eq:important_lm_1}, $(ii)$ follows by $\sqrt{a+b}\leq \sqrt{a}+\sqrt{b}$ for $a,b\geq0$, $(iii)$ follows since we have
\begin{small}
\begin{equation*}
    \begin{aligned}
        (1-\beta)\sum_{t=0}^{T-1}\beta^t\sum_{i=0}^{t}\left(\frac{\sqrt{1-\mu\alpha/2}}{\beta}\right)^i
        &\leq (1-\beta)\sum_{t=0}^{T-1}\beta^t\frac{\beta}{\beta-\sqrt{1-\mu\alpha/2}}
        \leq \frac{\beta}{\beta-\sqrt{1-\mu\alpha/2}}
        \leq \frac{\beta\left(\beta+\sqrt{1-\mu\alpha/2}\right)}{\mu\alpha/2-(1-\beta^2)} \\
        &\leq \frac{2}{\mu\alpha/2-(1-\beta)(1+\beta)}
        \leq \frac{2}{\mu\alpha/2-2(1-\beta)}
        = \frac{4}{\mu\alpha-4(1-\beta)},
    \end{aligned}
\end{equation*}
\end{small}%
and also by definition of $\Delta_{y,0}$ in \eqref{eq:ABdelta} and $\mu\alpha = 8(1-\beta)$ by \eqref{eq:lm_gamma+alpha+T} and hence $\mu\alpha-4(1-\beta)>0$.

Moreover, we have
\begin{sequation} \label{eq:second-part2}
    \begin{aligned}
        \frac{4\Delta_{y,0}}{T(\mu\alpha-4(1-\beta))} + \sqrt{\left(\frac{8\alpha \sigma_{g,1}^2}{\mu} + \frac{4\eta^2l_{g,1}^2}{\mu^4\alpha^2}\right)\log\left(\frac{eT}{\delta}\right)}
        &\quad\overset{(i)}{\leq} \frac{\Delta_{y,0}}{T(1-\beta)} + \sqrt{\frac{\eps^2}{512L_0^2}}
        \overset{(ii)}{=} \frac{\eta\eps\Delta_{y,0}}{4\Delta_0(1-\beta)} + \frac{\eps}{16\sqrt{2}L_0} \\
        &\overset{(iii)}{\leq} \frac{\eps}{32L_0} + \frac{\eps}{16\sqrt{2}L_0} 
        \leq \frac{3}{32L_0}\eps,
    \end{aligned}
\end{sequation}%
where $(i)$ follows from $\mu\alpha = 8(1-\beta)$, \eqref{eq:important_lm_1} and \eqref{eq:merge-step}, $(ii)$ follows from the choice of $T=4\Delta_0/(\eta\eps)$ and $(iii)$ follows from $\eta \leq \eps\Delta_0(1-\beta)/(8\Delta_{y,0}^2L_0^2) \leq \Delta_0(1-\beta)/(8\Delta_{y,0}L_0)$ by $\eps\leq \Delta_{y,0}L_0$ from \eqref{eq:eps}.

Combining \eqref{eq:second-part1} and \eqref{eq:second-part2} finally yields $(1-\beta)\sum_{t=0}^{T-1}\sum_{i=0}^{t}\beta^{t-i}\|y_i-y_i^*\| \leq \frac{3\eps}{32L_0}$.
\end{proof}

\subsection{Proof of Lemma~\ref{lm:maintext-lm4}}

In this section, we aim to leverage the cumulative moving average hypergradient estimation error, namely Lemma~\ref{lm:maintext-lm4}. To this end, we present Lemma~\ref{lm:sum-bound-z},~\ref{lm:hyper-var} and~\ref{lm:hypergradient-bias} to give upper bound for the cumulative error of the linear system estimator, the variance as well as the bias of the hypergradient estimator, respectively. It's worth noting that we can use Lemma~\ref{lm:maintext-lm3} and independence of filtration to handle the most difficult part in Lemma~\ref{lm:hypergradient-bias}, namely $L_{x,1}\|y_t-y_t^*\|\|\nabla\Phi(x_t)\|$.

\begin{lemma} \label{lm:sum-bound-z}
Under Assumptions~\ref{ass:relax-smooth},~\ref{ass:f-and-g} and event $\gE_{\init} \cap \gE_y$, if we choose 
$\gamma\leq\min\left\{1/(4\mu), \mu/(16\sigma_{g,2}^2), \alpha/8\right\}$, 
then we have the following estimate:
\begin{small}
\begin{equation}
    \begin{aligned}
        &\sum_{t=0}^{T-1}\E[\|z_t-z_t^*\|^2]
        \leq \frac{1}{\mu\gamma}\|z_0-z_0^*\|^2 + \frac{10(1-\mu\gamma)}{\mu^3(\alpha-2\gamma)}\left(\frac{l_{g,2}^2l_{f,0}^2}{\mu^2}+(L_{y,0}+L_{y,1}l_{f,0})^2\right)\|y_0-y_0^*\|^2 \\
        &\ \ + T\left\{\frac{2\gamma}{\mu}\left(\frac{2l_{f,0}^2}{\mu^2}\sigma_{g,2}^2+\sigma_{f,1}^2\right) + \frac{4l_{z^*}^2}{\mu^2}\frac{\eta^2}{\gamma^2} + \frac{5}{\mu^2}\left(\frac{l_{g,2}^2l_{f,0}^2}{\mu^2}+(L_{y,0}+L_{y,1}l_{f,0})^2\right) \left(\frac{8\alpha \sigma_{g,1}^2}{\mu} + \frac{4\eta^2l_{g,1}^2}{\mu^4\alpha^2}\right)\log\left(\frac{eT}{\delta}\right)\right\},
    \end{aligned}
\end{equation}
\end{small}%
where the expectation is taken over the randomness in $\widetilde{\gF}_T$.
\end{lemma}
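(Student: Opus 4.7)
The plan is to derive a one-step recursion for $\E_t\|z_{t+1}-z_{t+1}^*\|^2$ that exhibits a $(1-\Theta(\mu\gamma))$ contraction on $\|z_t-z_t^*\|^2$, plus three additive perturbations --- stochastic variance, a bias proportional to $\|y_t-y_t^*\|^2$, and a drift $\|z_{t+1}^*-z_t^*\|^2$ --- and then to telescope, substituting Lemma~\ref{lm:high-prob 1/8L_1 bound for y} (via \eqref{eq:important_lm_1}) for the cumulative $\|y_t-y_t^*\|^2$ term. First, using $\gdyy g(x_t,y_t^*)z_t^* = \gdy f(x_t,y_t^*)$ and taking conditional expectation in $\xi_t,\zeta_t$, the $z$-update becomes
\[
\E_t[z_{t+1}-z_t^*] = (I-\gamma\gdyy g(x_t,y_t))(z_t-z_t^*) - \gamma\Delta_t',
\]
where $\Delta_t' := [\gdyy g(x_t,y_t)-\gdyy g(x_t,y_t^*)]z_t^* - [\gdy f(x_t,y_t)-\gdy f(x_t,y_t^*)]$. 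By Assumption~\ref{ass:relax-smooth} together with the $l_{g,2}$-Lipschitz property of $\gdyy g$ and $\|z_t^*\|\leq l_{f,0}/\mu$, one gets $\|\Delta_t'\|^2 \leq 2(l_{g,2}^2 l_{f,0}^2/\mu^2 + (L_{y,0}+L_{y,1}l_{f,0})^2)\|y_t-y_t^*\|^2$. For the variance of $z_{t+1}$, independence of $\xi_t$ and $\zeta_t$ gives $\gamma^2\E_t\|\text{noise}_t\|^2 \leq \gamma^2[\sigma_{f,1}^2 + 2\sigma_{g,2}^2(l_{f,0}^2/\mu^2 + \|z_t-z_t^*\|^2)]$.

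Next, combining these via $\E_t\|z_{t+1}-z_t^*\|^2 = \|\E_t[z_{t+1}-z_t^*]\|^2 + \gamma^2\E_t\|\text{noise}_t\|^2$ and applying Young's inequality with parameter $\mu\gamma$ to split the contraction piece from $\gamma\Delta_t'$, I use $\mu I \preceq \gdyy g(x_t,y_t) \preceq l_{g,1}I$ to deduce $\|I-\gamma\gdyy g(x_t,y_t)\|\leq 1-\mu\gamma$ (since $\gamma\leq 1/(4\mu)\leq 1/l_{g,1}$ is guaranteed). The condition $\gamma\leq\mu/(16\sigma_{g,2}^2)$ is exactly what allows the leftover $2\gamma^2\sigma_{g,2}^2\|z_t-z_t^*\|^2$ variance term to be absorbed into the contraction. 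To transition from $z_t^*$ to $z_{t+1}^*$, Lemma~\ref{lm:lip-y*z*} together with $\|x_{t+1}-x_t\|=\eta$ gives $\|z_{t+1}^*-z_t^*\|\leq l_{z^*}\eta$, and a final Young step with parameter chosen to preserve the $(1-\mu\gamma)$ rate produces the recursion
\[
\E_t\|z_{t+1}-z_{t+1}^*\|^2 \leq (1-\mu\gamma)\|z_t-z_t^*\|^2 + \tfrac{c_1\gamma}{\mu}\|y_t-y_t^*\|^2 + c_2\gamma^2\bigl(\tfrac{2l_{f,0}^2}{\mu^2}\sigma_{g,2}^2 + \sigma_{f,1}^2\bigr) + \tfrac{c_3 l_{z^*}^2\eta^2}{\mu\gamma}
\]
for absolute constants $c_1,c_2,c_3$.

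Finally, I unroll and sum over $t\in\{0,\dots,T-1\}$. The initial error gives $\sum_t(1-\mu\gamma)^t\leq 1/(\mu\gamma)$, yielding $\frac{1}{\mu\gamma}\|z_0-z_0^*\|^2$. The per-step constant noise and drift contributions each pick up one factor of $1/(\mu\gamma)$ on summing, producing the $T$-proportional $\frac{2\gamma}{\mu}(\tfrac{2l_{f,0}^2}{\mu^2}\sigma_{g,2}^2+\sigma_{f,1}^2) + \frac{4l_{z^*}^2}{\mu^2}\frac{\eta^2}{\gamma^2}$ terms. For the cumulative bias contribution, substitute \eqref{eq:important_lm_1}: $\|y_s-y_s^*\|^2 \leq (1-\mu\alpha/2)^s\|y_0-y_0^*\|^2 + \bigl(\tfrac{8\alpha\sigma_{g,1}^2}{\mu} + \tfrac{4\eta^2 l_{g,1}^2}{\mu^4\alpha^2}\bigr)\log\tfrac{eT}{\delta}$. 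The decaying initial-error part enters through the double sum
\[
\sum_{t=0}^{T-1}\sum_{s=0}^{t-1}(1-\mu\gamma)^{t-1-s}(1-\mu\alpha/2)^s = \sum_{t=0}^{T-1}\frac{(1-\mu\gamma)^t - (1-\mu\alpha/2)^t}{\mu(\alpha/2-\gamma)} \leq \frac{2(1-\mu\gamma)}{\mu^2\gamma(\alpha-2\gamma)},
\]
producing the $\frac{10(1-\mu\gamma)}{\mu^3(\alpha-2\gamma)}$ coefficient in front of $\|y_0-y_0^*\|^2$; the denominator $\alpha-2\gamma$ is precisely what makes the condition $\gamma\leq\alpha/8$ indispensable. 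The logarithmic noise term from \eqref{eq:important_lm_1} contributes the final $T\cdot\frac{5}{\mu^2}(\cdots)\log(eT/\delta)$ piece after the same $1/(\mu\gamma)$ telescoping.

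The main technical obstacle is handling the $\sigma_{g,2}^2\|z_t\|^2$ appearing in the variance of the Hessian-vector estimate: since this involves $z_t$ (not $z_t^*$), a naive bound spoils the contraction. The fix is the split $\|z_t\|^2\leq 2\|z_t-z_t^*\|^2 + 2l_{f,0}^2/\mu^2$, after which the $\|z_t-z_t^*\|^2$ piece is folded back into the contraction factor --- requiring exactly the $\gamma\leq\mu/(16\sigma_{g,2}^2)$ smallness hypothesis to preserve the $(1-\mu\gamma)$ rate --- while the clean $l_{f,0}^2/\mu^2$ piece appears cleanly in the noise term. A secondary point is to verify the sign $\alpha-2\gamma>0$ and the resulting two-rate geometric sum, which is where the condition $\gamma\leq\alpha/8$ (ensuring $\alpha-2\gamma\geq 6\gamma$) enters the calculation.
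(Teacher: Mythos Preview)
Your proposal is essentially correct and mirrors the paper's argument: the paper simply cites ``Lemma 13 in \cite{hao2024bilevel}'' for the one-step recursion
\[
\E_t[\|z_{t+1}-z_{t+1}^*\|^2] \leq (1-\mu\gamma)\|z_t-z_t^*\|^2 + \tfrac{5\gamma}{\mu}C_y\|y_t-y_t^*\|^2 + 2\gamma^2\bigl(\tfrac{2l_{f,0}^2}{\mu^2}\sigma_{g,2}^2+\sigma_{f,1}^2\bigr) + \tfrac{4}{\mu\gamma}l_{z^*}^2\eta^2,
\]
then unrolls, substitutes \eqref{eq:important_lm_1} for $\|y_i-y_i^*\|^2$, and evaluates the same two-rate geometric double sum $\sum_t\sum_i(1-\mu\gamma)^{t-i-1}(1-\mu\alpha/2)^i \leq \tfrac{2(1-\mu\gamma)}{\mu^2\gamma(\alpha-2\gamma)}$ you wrote down. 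Your derivation of the recursion from scratch (splitting $\|z_t\|^2$ and absorbing the $2\gamma^2\sigma_{g,2}^2\|z_t-z_t^*\|^2$ piece via $\gamma\leq\mu/(16\sigma_{g,2}^2)$) is exactly what that cited lemma does.

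One concrete slip: you write ``$\gamma\leq 1/(4\mu)\leq 1/l_{g,1}$ is guaranteed,'' but since $\mu\leq l_{g,1}$ this inequality goes the wrong way. The contraction $\|I-\gamma\gdyy g(x_t,y_t)\|\leq 1-\mu\gamma$ does require $\gamma\lesssim 1/l_{g,1}$, and the lemma's stated hypotheses $\gamma\leq\min\{1/(4\mu),\mu/(16\sigma_{g,2}^2),\alpha/8\}$ do not by themselves imply this. In the paper's actual parameter regime one has $\alpha\leq 1/(2l_{g,1})$ (the SGD stepsize condition in Lemma~\ref{lm:appendix-lm1}), so $\gamma\leq\alpha/8\leq 1/(16l_{g,1})$ and the issue disappears; but strictly as stated your justification for the contraction is incomplete and should invoke this context rather than the incorrect chain $1/(4\mu)\leq 1/l_{g,1}$.
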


\begin{proof}[Proof of Lemma~\ref{lm:sum-bound-z}]
Follow the same procedure as Lemma 13 in \cite{hao2024bilevel}, we have 
\begin{sequation*}
    \begin{aligned}
        \E_t[\|z_{t+1}-z_{t+1}^*\|^2] 
        \leq (1-\mu\gamma)\|z_t-z_t^*\|^2 + \frac{5\gamma}{\mu}&\left(\frac{l_{g,2}^2l_{f,0}^2}{\mu^2}+(L_{y,0}+L_{y,1}l_{f,0})^2\right)\|y_t-y_t^*\|^2 
        + 2\left(\frac{2l_{f,0}^2}{\mu^2}\sigma_{g,2}^2+\sigma_{f,1}^2\right)\gamma^2 + \frac{4}{\mu\gamma}l_{z^*}^2\eta^2. \\
    \end{aligned}
\end{sequation*}%
Under event $\gE_{\init} \cap \gE_y$, we have 
\begin{small}
\begin{equation*}
    \begin{aligned}
        \|y_t-y_t^*\|^2 \leq \left(1-\frac{\mu\alpha}{2}\right)^t\|y_0-y_0^*\|^2 + \left(\frac{8\alpha \sigma_{g,1}^2}{\mu} + \frac{4\eta^2l_{g,1}^2}{\mu^4\alpha^2}\right)\log\left(\frac{eT}{\delta}\right),
    \end{aligned}
\end{equation*}
\end{small}%
which gives
\begin{small}
\begin{equation*}
    \begin{aligned}
        \E[\|z_t&-z_t^*\|^2] 
        \leq (1-\mu\gamma)^{t}\|z_0-z_0^*\|^2 + \sum_{i=0}^{t-1}(1-\mu\gamma)^{t-i-1}\left\{2\left(\frac{2l_{f,0}^2}{\mu^2}\sigma_{g,2}^2+\sigma_{f,1}^2\right)\gamma^2 + \frac{4}{\mu\gamma}l_{z^*}^2\eta^2 \right.\\
        &+ \left. \frac{5\gamma}{\mu}\left(\frac{l_{g,2}^2l_{f,0}^2}{\mu^2}+(L_{y,0}+L_{y,1}l_{f,0})^2\right) \left[\left(1-\frac{\mu\alpha}{2}\right)^{i}\|y_0-y_0^*\|^2 + \left(\frac{8\alpha \sigma_{g,1}^2}{\mu} + \frac{4\eta^2l_{g,1}^2}{\mu^4\alpha^2}\right)\log\left(\frac{eT}{\delta}\right)\right]\right\}. \\
    \end{aligned}
\end{equation*}
\end{small}%
Taking summation yields
\begin{small}
\begin{equation*}
    \begin{aligned}
        \sum_{t=0}^{T-1}\E[\|z_t&-z_t^*\|^2] 
        \leq \sum_{t=0}^{T-1}(1-\mu\gamma)^{t}\|z_0-z_0^*\|^2 + \sum_{t=0}^{T-1}\sum_{i=0}^{t-1}(1-\mu\gamma)^{t-i-1}\left\{2\left(\frac{2l_{f,0}^2}{\mu^2}\sigma_{g,2}^2+\sigma_{f,1}^2\right)\gamma^2 + \frac{4}{\mu\gamma}l_{z^*}^2\eta^2 \right.\\
        &+ \left. \frac{5\gamma}{\mu}\left(\frac{l_{g,2}^2l_{f,0}^2}{\mu^2}+(L_{y,0}+L_{y,1}l_{f,0})^2\right) \left[\left(1-\frac{\mu\alpha}{2}\right)^{i}\|y_0-y_0^*\|^2 + \left(\frac{8\alpha \sigma_{g,1}^2}{\mu} + \frac{4\eta^2l_{g,1}^2}{\mu^4\alpha^2}\right)\log\left(\frac{eT}{\delta}\right)\right]\right\} \\
        \leq{}& \sum_{t=0}^{T-1}(1-\mu\gamma)^{t}\|z_0-z_0^*\|^2 + \sum_{t=0}^{T-1}\sum_{i=0}^{t-1}(1-\mu\gamma)^{t-i-1}\left\{2\left(\frac{2l_{f,0}^2}{\mu^2}\sigma_{g,2}^2+\sigma_{f,1}^2\right)\gamma^2 + \frac{4}{\mu\gamma}l_{z^*}^2\eta^2 \right.\\
        &+ \left. \frac{5\gamma}{\mu}\left(\frac{l_{g,2}^2l_{f,0}^2}{\mu^2}+(L_{y,0}+L_{y,1}l_{f,0})^2\right) \left(\frac{8\alpha \sigma_{g,1}^2}{\mu} + \frac{4\eta^2l_{g,1}^2}{\mu^4\alpha^2}\right)\log\left(\frac{eT}{\delta}\right)\right\} \\
        &+ \frac{5\gamma}{\mu}\left(\frac{l_{g,2}^2l_{f,0}^2}{\mu^2}+(L_{y,0}+L_{y,1}l_{f,0})^2\right)\sum_{t=0}^{T-1}\sum_{i=0}^{t-1}(1-\mu\gamma)^{t-i-1}\left(1-\frac{\mu\alpha}{2}\right)^i\|y_0-y_0^*\|^2. \\
    \end{aligned}
\end{equation*}
\end{small}%
Since $\gamma\leq \alpha/8$, then $\alpha-2\gamma > 0$, for any $t_0\in[T]$ we have 
\begin{small}
\begin{equation*}
    \begin{aligned}
        \sum_{t=0}^{T-1}\sum_{i=0}^{t-1}(1-\mu\gamma)^{t-i-1}\left(1-\frac{\mu\alpha}{2}\right)^i = \sum_{t=0}^{T-1}(1-\mu\gamma)^{t-1}\sum_{i=0}^{t-1}\left(\frac{1-\mu\alpha/2}{1-\mu\gamma}\right)^i \leq \sum_{t=0}^{T-1}(1-\mu\gamma)^{t-1}\frac{2(1-\mu\gamma)}{\mu(\alpha-2\gamma)} \leq \frac{2(1-\mu\gamma)}{\mu^2\gamma(\alpha-2\gamma)}.
    \end{aligned}
\end{equation*}
\end{small}%
Therefore, we conclude that 
\begin{small}
\begin{equation*}
    \begin{aligned}
        \sum_{t=0}^{T-1}&\E[\|z_t-z_t^*\|^2]
        \leq \frac{1}{\mu\gamma}\|z_0-z_0^*\|^2 + \frac{10(1-\mu\gamma)}{\mu^3(\alpha-2\gamma)}\left(\frac{l_{g,2}^2l_{f,0}^2}{\mu^2}+(L_{y,0}+L_{y,1}l_{f,0})^2\right)\|y_0-y_0^*\|^2 \\
        &+ T\left\{\frac{2\gamma}{\mu}\left(\frac{2l_{f,0}^2}{\mu^2}\sigma_{g,2}^2+\sigma_{f,1}^2\right) + \frac{4l_{z^*}^2}{\mu^2}\frac{\eta^2}{\gamma^2} + \frac{5}{\mu^2}\left(\frac{l_{g,2}^2l_{f,0}^2}{\mu^2}+(L_{y,0}+L_{y,1}l_{f,0})^2\right) \left(\frac{8\alpha \sigma_{g,1}^2}{\mu} + \frac{4\eta^2l_{g,1}^2}{\mu^4\alpha^2}\right)\log\left(\frac{eT}{\delta}\right)\right\}.
    \end{aligned}
\end{equation*}
\end{small}%
\end{proof}

\begin{lemma}[Variance, Lemma 14 in \cite{hao2024bilevel}] \label{lm:hyper-var}
Under Assumptions~\ref{ass:relax-smooth},~\ref{ass:f-and-g}, we have
\begin{sequation}
    \begin{aligned}
        \E[\|\hatphi(x_t,y_t,z_t;\xi_t',\zeta_t') - \E_t[\hatphi(x_t,y_t,z_t;\xi_t',\zeta_t')]\|^2] \leq \sigma_{f,1}^2 + \frac{2l_{f,0}^2}{\mu^2}\sigma_{g,2}^2 + 2\sigma_{g,2}^2\E[\|z_t-z_t^*\|^2], 
    \end{aligned}
\end{sequation}%
where we define $\hatphi(x_t,y_t,z_t;\xi_t',\zeta_t') = \gdx F(x_t,y_t;\xi_t') - \gdxy G(x_t,y_t;\zeta_t')z_k$ as the hypergradient estimator, and the total expectation is taken over the randomness in $\widetilde{\gF}_T$.
\end{lemma}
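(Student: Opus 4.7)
The plan is to decompose the centered hypergradient estimator into two conditionally mean-zero pieces driven by independent noise, apply a Pythagorean identity, and bound each piece separately. Specifically, I would write
\[
\hatphi(x_t, y_t, z_t; \xi_t', \zeta_t') - \E_t[\hatphi(x_t, y_t, z_t; \xi_t', \zeta_t')] = A_t + B_t,
\]
where $A_t := \gdx F(x_t, y_t; \xi_t') - \gdx f(x_t, y_t)$ isolates the noise of the upper-level gradient oracle and $B_t := -\bigl(\gdxy G(x_t, y_t; \zeta_t') - \gdxy g(x_t, y_t)\bigr) z_t$ isolates the Jacobian-vector product noise. Since $\xi_t'$ and $\zeta_t'$ are drawn independently of each other and both independent of $\widetilde{\mathcal{F}}_t$, both $A_t$ and $B_t$ are conditionally mean-zero, and the conditional cross term $\E_t[\langle A_t, B_t\rangle]$ vanishes, giving $\E_t[\|A_t + B_t\|^2] = \E_t[\|A_t\|^2] + \E_t[\|B_t\|^2]$.

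For the first summand, Assumption~\ref{ass:noise-expectation} yields $\E_t[\|A_t\|^2] \leq \sigma_{f,1}^2$ directly. For the second summand, note that $z_t$ depends only on samples indexed strictly before $t$ and is therefore $\widetilde{\mathcal{F}}_t$-measurable, while $\zeta_t'$ is independent of $\widetilde{\mathcal{F}}_t$; hence I may treat $z_t$ as a fixed vector conditionally and obtain $\E_t[\|B_t\|^2] \leq \sigma_{g,2}^2 \|z_t\|^2$ by applying the variance bound on $\gdxy G$ (Assumption~\ref{ass:noise-expectation}) to the fixed vector $z_t$. Next, I would split $z_t = (z_t - z_t^*) + z_t^*$ and apply the elementary inequality $\|z_t\|^2 \leq 2\|z_t - z_t^*\|^2 + 2\|z_t^*\|^2$. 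The deterministic piece is controlled via the closed form from Lemma~\ref{lm:hyper-formulation}: $z^*(x) = [\gdyy g(x, y^*(x))]^{-1} \gdy f(x, y^*(x))$. Strong convexity of $g$ in $y$ gives $\|[\gdyy g]^{-1}\| \leq 1/\mu$, and Assumption~\ref{ass:f-and-g}(i) gives $\|\gdy f(x, y^*(x))\| \leq l_{f,0}$, so $\|z_t^*\| \leq l_{f,0}/\mu$. Putting everything together, taking total expectation, and collecting terms yields
\[
\E[\|\hatphi - \E_t[\hatphi]\|^2] \leq \sigma_{f,1}^2 + \frac{2 l_{f,0}^2}{\mu^2}\sigma_{g,2}^2 + 2\sigma_{g,2}^2\,\E[\|z_t - z_t^*\|^2],
\]
as claimed.

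Once the filtration setup is fixed, the proof is essentially bookkeeping with no substantive obstacle. The only subtlety worth explicitly verifying is the independence structure: one must confirm that the primed samples $(\xi_t', \zeta_t')$ used in the hypergradient estimator are independent of the unprimed samples $\xi_0, \ldots, \xi_{t-1}, \zeta_0, \ldots, \zeta_{t-1}$ that determine $z_t$. This follows immediately from the construction of $\mathcal{F}_t^2$ and $\mathcal{F}_t^3$ as separate filtrations in the algorithm description, which is the entire reason the algorithm samples $\xi_t$ and $\xi_t'$ (respectively $\zeta_t$ and $\zeta_t'$) independently at each step.
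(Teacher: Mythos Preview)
Your proof is correct and follows the standard decomposition that the cited reference \cite{hao2024bilevel} uses: the paper itself does not reprove this lemma but simply invokes Lemma~14 of \cite{hao2024bilevel}, and your argument (independence of $\xi_t'$ and $\zeta_t'$ to kill the cross term, then $\|z_t\|^2 \le 2\|z_t-z_t^*\|^2 + 2\|z_t^*\|^2$ with $\|z_t^*\|\le l_{f,0}/\mu$) is exactly that proof. One minor imprecision in your closing remark: $z_t$ is not determined solely by the \emph{unprimed} samples, since it depends on $x_0,\dots,x_{t-1}$ which in turn depend on $\xi'_0,\dots,\xi'_{t-1},\zeta'_0,\dots,\zeta'_{t-1}$; the correct (and sufficient) statement, which you already made earlier in your argument, is that $z_t$ is $\widetilde{\mathcal{F}}_t$-measurable and $(\xi_t',\zeta_t')$ are fresh samples independent of $\widetilde{\mathcal{F}}_t$.
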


\begin{lemma}[Bias, Lemma 4 in \cite{hao2024bilevel}] \label{lm:hypergradient-bias}
Under Assumptions~\ref{ass:relax-smooth},~\ref{ass:f-and-g}, we have
\begin{sequation}
    \begin{aligned}
        \|\E_t[\hatphi(x_t,y_t,z_t;\xi_t',\zeta_t')] - \gdphi(x_t)\| \leq L_{x,1}\|y_t-y_t^*\|\|\gdphi(x_t)\| + \left(L_{x,0} + L_{x,1}\frac{l_{g,1}l_{f,0}}{\mu} + \frac{l_{g,2}l_{f,0}}{\mu}\right)\|y_t-y_t^*\| + l_{g,1}\|z_t-z_t^*\|.
    \end{aligned}
\end{sequation}%
\end{lemma}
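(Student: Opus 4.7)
The plan is to decompose the bias into an $f$-part and a $g$-part by first computing the conditional expectation exactly, and then bounding each piece via the relaxed smoothness assumption plus a triangle-inequality split.

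First, since $x_t, y_t, z_t$ are $\widetilde{\gF}_t$-measurable and the stochastic oracles $\gdx F(\cdot;\xi_t')$ and $\gdxy G(\cdot;\zeta_t')$ are unbiased (Assumption~\ref{ass:noise-expectation}) and independent of $\widetilde{\gF}_t$, we have $\E_t[\hatphi(x_t,y_t,z_t;\xi_t',\zeta_t')]=\gdx f(x_t,y_t)-\gdxy g(x_t,y_t)z_t$. Using the hypergradient formula $\gdphi(x_t)=\gdx f(x_t,y_t^*)-\gdxy g(x_t,y_t^*)z_t^*$ from Lemma~\ref{lm:hyper-formulation}, the bias splits as
\begin{equation*}
\E_t[\hatphi_t]-\gdphi(x_t) = \bigl[\gdx f(x_t,y_t)-\gdx f(x_t,y_t^*)\bigr] - \bigl[\gdxy g(x_t,y_t)z_t-\gdxy g(x_t,y_t^*)z_t^*\bigr].
\end{equation*}

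For the $f$-term, I would apply Assumption~\ref{ass:relax-smooth} with $w=(x_t,y_t^*)$ and $w'=(x_t,y_t)$ (the range condition $\|y_t-y_t^*\|\le 1/\sqrt{2(L_{x,1}^2+L_{y,1}^2)}$ holds under $\gE_y$ via Lemma~\ref{lm:maintext-lm3}), yielding $\|\gdx f(x_t,y_t)-\gdx f(x_t,y_t^*)\|\le (L_{x,0}+L_{x,1}\|\gdx f(x_t,y_t^*)\|)\|y_t-y_t^*\|$. The crucial trick is to convert $\|\gdx f(x_t,y_t^*)\|$ into $\|\gdphi(x_t)\|$: rewriting $\gdx f(x_t,y_t^*)=\gdphi(x_t)+\gdxy g(x_t,y_t^*)z_t^*$ and using $\|\gdxy g\|\le l_{g,1}$ (Assumption~\ref{ass:f-and-g}(iii)) together with $\|z_t^*\|\le l_{f,0}/\mu$ (which follows from $\|\gdy f(x_t,y_t^*)\|\le l_{f,0}$ and $\|[\gdyy g]^{-1}\|\le 1/\mu$), I get $\|\gdx f(x_t,y_t^*)\|\le \|\gdphi(x_t)\|+l_{g,1}l_{f,0}/\mu$. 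This produces exactly the $L_{x,1}\|y_t-y_t^*\|\|\gdphi(x_t)\|$ and $(L_{x,0}+L_{x,1}l_{g,1}l_{f,0}/\mu)\|y_t-y_t^*\|$ summands.

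For the $g$-term, a standard add-and-subtract gives
\begin{equation*}
\|\gdxy g(x_t,y_t)z_t-\gdxy g(x_t,y_t^*)z_t^*\|\le \|\gdxy g(x_t,y_t)\|\,\|z_t-z_t^*\|+\|\gdxy g(x_t,y_t)-\gdxy g(x_t,y_t^*)\|\,\|z_t^*\|,
\end{equation*}
and applying $\|\gdxy g\|\le l_{g,1}$, the $l_{g,2}$-Lipschitzness of $\gdxy g$ (Assumption~\ref{ass:f-and-g}(iv)), and $\|z_t^*\|\le l_{f,0}/\mu$ produces $l_{g,1}\|z_t-z_t^*\|+(l_{g,2}l_{f,0}/\mu)\|y_t-y_t^*\|$. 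Combining the $f$-part and $g$-part via the triangle inequality yields the claimed bound.

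The only mildly delicate step is the conversion from $\|\gdx f(x_t,y_t^*)\|$ to $\|\gdphi(x_t)\|$ — this is what makes the bound express the unavoidable coupling between lower-level error and the hypergradient magnitude under unbounded smoothness, and it is the reason the downstream analysis needs the refined control of $\|y_t-y_t^*\|$ in Lemma~\ref{lm:maintext-lm3}. Everything else is routine triangle inequalities and direct applications of Assumptions~\ref{ass:relax-smooth} and~\ref{ass:f-and-g}.
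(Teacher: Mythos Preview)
Your proof is correct and follows exactly the argument behind the cited Lemma~4 of \cite{hao2024bilevel}: compute the conditional expectation, split the bias into the $\gdx f$-difference and the $\gdxy g\cdot z$-difference, apply Assumption~\ref{ass:relax-smooth} anchored at $(x_t,y_t^*)$ together with the rewrite $\gdx f(x_t,y_t^*)=\gdphi(x_t)+\gdxy g(x_t,y_t^*)z_t^*$ and the bound $\|z_t^*\|\le l_{f,0}/\mu$, and finish the $g$-part via $\|\gdxy g\|\le l_{g,1}$ and the $l_{g,2}$-Lipschitzness. The paper does not reprove this lemma (it is a direct citation), so there is nothing further to compare.
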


With Lemma~\ref{lm:sum-bound-z},~\ref{lm:hyper-var} and~\ref{lm:hypergradient-bias}, we are now ready to leverage the cumulative estimation error for the hypergradient.

\begin{lemma}[Restatement of Lemma~\ref{lm:maintext-lm4}] \label{lm:moving-average error in expectation}
Under Assumptions~\ref{ass:relax-smooth},~\ref{ass:f-and-g} and event $\gE_{\init} \cap \gE_y$, define $\eps_t = m_{t+1} - \gdphi(x_t)$ to be the moving-average hypergradient estimation error. Then we have
\begin{sequation} \label{eq:moving-average error sum}
    \begin{aligned}
        \E\left[\sum_{t=0}^{T-1}\|\eps_t\|\right]
        \leq{}& \left(\frac{\eta L_1\beta}{1-\beta} + \frac{1}{8}\right)\sum_{t=0}^{T-1}\E\|\gdphi(x_t)\| + T\sqrt{1-\beta}\sqrt{\sigma_{f,1}^2 + \frac{2l_{f,0}^2}{\mu^2}\sigma_{g,2}^2} + \frac{T\eta L_0\beta}{1-\beta} \\
        &+ \sqrt{T}\left(\sqrt{2}\sigma_{g,2}\sqrt{1-\beta} + l_{g,1}\right)\sqrt{\sum_{t=0}^{T-1}\E[\|z_t-z_t^*\|^2]} + \frac{\beta}{1-\beta}\|m_0-\gdphi(x_0)\| \\
        &+ \left(L_{x,0} + L_{x,1}\frac{l_{g,1}l_{f,0}}{\mu} + \frac{l_{g,2}l_{f,0}}{\mu}\right)\left[\frac{4\Delta_{y,0}}{\mu\alpha-4(1-\beta)} + T\sqrt{\left(\frac{8\alpha \sigma_{g,1}^2}{\mu} + \frac{4\eta^2l_{g,1}^2}{\mu^4\alpha^2}\right)\log\left(\frac{eT}{\delta}\right)}\right], \\
    \end{aligned}
\end{sequation}%
where $\Delta_{y,0}$ is defined in \eqref{eq:ABdelta}, and the expectation is taken over the randomness in $\widetilde{\gF}_T$.
\end{lemma}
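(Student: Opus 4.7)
The plan is to unroll the momentum recursion for $\eps_t = m_{t+1} - \gdphi(x_t)$ and bound the resulting sum in expectation. Writing $\hatphi_t := \hatphi(x_t,y_t,z_t;\xi_t',\zeta_t')$ and using $m_{t+1} = \beta m_t + (1-\beta)\hatphi_t$, one obtains for $t \geq 1$
\begin{equation*}
\eps_t = \beta \eps_{t-1} + \beta\bigl(\gdphi(x_{t-1}) - \gdphi(x_t)\bigr) + (1-\beta)\bigl(\hatphi_t - \gdphi(x_t)\bigr),
\end{equation*}
together with $\eps_0 = \beta(m_0 - \gdphi(x_0)) + (1-\beta)(\hatphi_0 - \gdphi(x_0))$. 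Unrolling gives $\eps_t$ as a sum of three pieces: an initialization term $\beta^{t+1}(m_0 - \gdphi(x_0))$, a consecutive-gradient drift term $\sum_{i=1}^t \beta^{t-i+1}(\gdphi(x_{i-1}) - \gdphi(x_i))$, and a stochastic estimation term $(1-\beta)\sum_{i=0}^t \beta^{t-i}(\hatphi_i - \gdphi(x_i))$. I would further decompose the last piece into its bias $\E_i[\hatphi_i]-\gdphi(x_i)$ and its mean-zero variance $\hatphi_i - \E_i[\hatphi_i]$.

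For the drift term, since the choice of $\eta$ in Lemma~\ref{lm:maintext-lm3} forces $\|x_{i-1}-x_i\| = \eta$ to lie within the relaxed-smoothness radius, Lemma~\ref{eq:technical-rs-phi} gives $\|\gdphi(x_{i-1}) - \gdphi(x_i)\| \leq \eta(L_0 + L_1\|\gdphi(x_{i-1})\|)$. Summing against $\beta^{t-i+1}$ and then over $t$ with $\sum_{t=i}^{T-1}\beta^{t-i+1}\leq \beta/(1-\beta)$ yields the contributions $T\eta L_0\beta/(1-\beta)$ and $\eta L_1\beta/(1-\beta)\cdot\sum_t \E\|\gdphi(x_t)\|$. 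For the bias, Lemma~\ref{lm:hypergradient-bias} decomposes it into three pieces; the most delicate one, $L_{x,1}\|y_i-y_i^*\|\|\gdphi(x_i)\|$, is neutralized by the uniform control $\|y_i-y_i^*\|\leq 1/(8L_1)$ from Lemma~\ref{lm:maintext-lm3}(i) combined with $L_{x,1}\leq L_1$, turning it into $\tfrac{1}{8}\|\gdphi(x_i)\|$; the remaining $C_1\|y_i-y_i^*\|$ and $l_{g,1}\|z_i-z_i^*\|$ terms are carried along.

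For the variance piece, the key is to exploit the independence of the fresh samples $\{(\xi_i',\zeta_i')\}_{i\geq 0}$ across $i$, which yields the martingale-difference orthogonality
\begin{equation*}
\E\Big\|(1-\beta)\sum_{i=0}^t \beta^{t-i}(\hatphi_i - \E_i[\hatphi_i])\Big\|^2 = (1-\beta)^2\sum_{i=0}^t \beta^{2(t-i)}\, \E\|\hatphi_i - \E_i[\hatphi_i]\|^2.
\end{equation*}
Combined with Jensen and Lemma~\ref{lm:hyper-var}, then $\sqrt{a+b}\leq \sqrt{a}+\sqrt{b}$, this produces the crucial $\sqrt{1-\beta}$ gain and splits the variance contribution into a $z$-independent piece $T\sqrt{1-\beta}\sqrt{\sigma_{f,1}^2+2l_{f,0}^2\sigma_{g,2}^2/\mu^2}$ and a $z$-dependent piece. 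For the latter and for the $z$-part of the bias, a final Cauchy--Schwarz converts $\sum_i (\text{kernel})\E\|z_i-z_i^*\|$ into $\sqrt{T}\sqrt{\sum_i \E\|z_i-z_i^*\|^2}$, yielding the combined $(\sqrt{2}\sigma_{g,2}\sqrt{1-\beta}+l_{g,1})\sqrt{T}\sqrt{\sum_t \E\|z_t-z_t^*\|^2}$ structure.

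The main obstacle is the non-gradient $y$-bias sum $(1-\beta)\sum_{t,i}\beta^{t-i}\|y_i-y_i^*\|$, since the conclusion of Lemma~\ref{lm:maintext-lm3}(ii) already folds in the target $\eps$ and is therefore not directly expressible as a parameter-level bound in the statement. I would instead invoke the raw distance-tracking estimate of Lemma~\ref{lm:maintext-lm1} (valid under $\gE_y$), which gives $\|y_i-y_i^*\|\leq (1-\mu\alpha/2)^{i/2}\Delta_{y,0} + \sqrt{(8\alpha\sigma_{g,1}^2/\mu + 4\eta^2 l_{g,1}^2/(\mu^4\alpha^2))\log(eT/\delta)}$. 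Swapping summation order and bounding the geometric-in-$i$ kernel $(1-\mu\alpha/2)^{i/2}/\beta^i$ via the inequality $\mu\alpha = 8(1-\beta) > 4(1-\beta)$ (exactly as in eq.~\eqref{eq:second-part1}) produces the initialization piece $4\Delta_{y,0}/(\mu\alpha-4(1-\beta))$ plus the $T\sqrt{(\cdots)\log(eT/\delta)}$ piece, both multiplied by the constant $C_1 = L_{x,0} + L_{x,1} l_{g,1} l_{f,0}/\mu + l_{g,2}l_{f,0}/\mu$. Assembling all contributions via the standard identities $\sum_{t=0}^{T-1}\sum_{i=0}^t \beta^{t-i+1}\leq T\beta/(1-\beta)$ and $(1-\beta)\sum_{t=0}^{T-1}\sum_{i=0}^t\beta^{t-i}\leq T$ delivers the claimed bound.
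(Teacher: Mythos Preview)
Your proposal is correct and follows essentially the same approach as the paper: unroll the momentum recursion into initialization, drift, and stochastic pieces; bound the drift via $(L_0,L_1)$-smoothness, split the stochastic piece into variance (handled by martingale orthogonality and Lemma~\ref{lm:hyper-var}) and bias (handled by Lemma~\ref{lm:hypergradient-bias}, with the $L_{x,1}\|y_i-y_i^*\|\|\gdphi(x_i)\|$ term neutralized via $\|y_i-y_i^*\|\le 1/(8L_1)$); then use Cauchy--Schwarz for the $z$-terms and the raw distance-tracking bound (as in \eqref{eq:second-part1}) for the non-gradient $y$-bias. Every step you outline matches the paper's argument and its referenced auxiliary results.
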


\begin{proof}[Proof of Lemma~\ref{lm:moving-average error in expectation}]
We define $\eps_t = m_{t+1} - \gdphi(x_t)$, $\hat{\eps}_t = \hatphi(x_t,y_t,z_t;\xi_t',\zeta_t') - \gdphi(x_t)$ and $S(a,b) = \gdphi(a) - \gdphi(b)$. It is easy to see that $\|S(a,b)\| \leq (L_0+L_1\|\gdphi(a)\|)\|a-b\|$ by $(L_0,L_1)$-smoothness of function $\Phi$. Specifically, $\|x_t-x_{t+1}\|=\eta$ holds true for any $t$ and thus $\|S(x_t,x_{t+1})\|\leq (L_0+L_1\|\gdphi(x_t)\|)\eta$. By definition of $\eps_t, \hat{\eps}_t$ and $S(a,b)$, we have the following recursion:
\begin{sequation} \label{eq:error-recursion}
    \eps_{t+1} = \beta\eps_t + \beta S(x_t,x_{t+1}) + (1-\beta)\hat{\eps}_{t+1}.
\end{sequation}%
Then we apply \eqref{eq:error-recursion} recursively and obtain
\begin{sequation*}
    \eps_t = \beta^{t+1}(m_0-\gdphi(x_0)) + \beta\sum_{i=0}^{t-1}\beta^{t-i-1}S(x_i,x_{i+1}) + (1-\beta)\sum_{i=0}^{t}\beta^{t-i}\hat{\eps}_{i},
\end{sequation*}%
which gives
\begin{small}
\begin{equation} \label{eq:estimation-recursion}
    \begin{aligned}
        \|\eps_t\| 
        &\leq \beta^{t+1}\|m_0-\gdphi(x_0)\| + \eta\beta\sum_{i=0}^{t-1}\beta^{t-i-1}(L_0+L_1\|\gdphi(x_i)\|) + (1-\beta)\left\|\sum_{i=0}^{t}\beta^{t-i}\hat{\eps}_{i}\right\|. \\
    \end{aligned}
\end{equation}
\end{small}%
Taking summation and total expectation yields
\begin{small}
    \begin{align}
        &\E\left[\sum_{t=0}^{T-1}\|\eps_t\|\right]
        \leq{} (1-\beta)\sum_{t=0}^{T-1}\E\left\|\sum_{i=0}^{t}\beta^{t-i}\hat{\eps}_{i}\right\| + \frac{\eta L_1\beta}{1-\beta}\sum_{t=0}^{T-1}\E\|\gdphi(x_t)\| + \frac{T\eta L_0\beta}{1-\beta} + \frac{\beta}{1-\beta}\|m_0-\gdphi(x_0)\| \notag \\
        &\leq (1-\beta)\sum_{t=0}^{T-1}\E\left\|\sum_{i=0}^{t}\beta^{t-i}\left(\hatphi(x_i,y_i,z_i;\xi_i',\zeta_i')-\E_i[\hatphi(x_i,y_i,z_i;\xi_i',\zeta_i')]\right)\right\| + \frac{\eta L_1\beta}{1-\beta}\sum_{t=0}^{T-1}\E\|\gdphi(x_t)\| \label{eq:error-bound1-1} \\
        &\quad+ (1-\beta)\sum_{t=0}^{T-1}\E\left\|\sum_{i=0}^{t}\beta^{t-i}\left(\E_{i}[\hatphi(x_i,y_i,z_i;\xi_i',\zeta_i')]-\gdphi(x_i)\right)\right\| 
        + \frac{T\eta L_0\beta}{1-\beta} + \frac{\beta}{1-\beta}\|m_0-\gdphi(x_0)\|. \label{eq:error-bound1-2}
    \end{align}
\end{small}%
For the first term of \eqref{eq:error-bound1-1}, we follow the same procedure as equation (68) of Lemma 5 in \cite{hao2024bilevel} and obtain
\begin{small}
\begin{equation} \label{eq:sum-hyper-var}
    \begin{aligned}
        (1-\beta)&\sum_{t=0}^{T-1}\E\left\|\sum_{i=0}^{t}\beta^{t-i}\left(\hatphi(x_i,y_i,z_i;\xi_i',\zeta_i')-\E_i[\hatphi(x_i,y_i,z_i;\xi_i',\zeta_i')]\right)\right\| \\
        &\quad\quad\quad \leq T\sqrt{1-\beta}\sqrt{\sigma_{f,1}^2 + \frac{2l_{f,0}^2}{\mu^2}\sigma_{g,2}^2} + \sqrt{2}\sigma_{g,2}\sqrt{T}\sqrt{1-\beta}\sqrt{\sum_{t=0}^{T-1}\E[\|z_t-z_t^*\|^2]}.
    \end{aligned}
\end{equation}
\end{small}%
For the first term of \eqref{eq:error-bound1-2}, by triangle inequality and Lemma~\ref{lm:hypergradient-bias} we have
\begin{small}
\begin{equation} \label{eq:sum-hyper-bias}
    \begin{aligned}
        &(1-\beta)\sum_{t=0}^{T-1}\E\left\|\sum_{i=0}^{t}\beta^{t-i}\left(\E_{i}[\hatphi(x_i,y_i,z_i;\xi_i',\zeta_i')]-\gdphi(x_i)\right)\right\| 
        \leq (1-\beta)\sum_{t=0}^{T-1}\sum_{i=0}^{t}\beta^{t-i}l_{g,1}\E[\|z_i-z_i^*\|] \\
        &\ \  + (1-\beta)\sum_{t=0}^{T-1}\sum_{i=0}^{t}\beta^{t-i}L_{x,1}\|y_i-y_i^*\|\E\|\gdphi(x_i)\| + (1-\beta)\sum_{t=0}^{T-1}\sum_{i=0}^{t}\beta^{t-i}\left(L_{x,0} + L_{x,1}\frac{l_{g,1}l_{f,0}}{\mu} + \frac{l_{g,2}l_{f,0}}{\mu}\right)\|y_i-y_i^*\|. \\
    \end{aligned}
\end{equation}
\end{small}%
Now we proceed to upper bound the right-hand side of \eqref{eq:sum-hyper-bias}, respectively.

For the first term on right-hand side of \eqref{eq:sum-hyper-bias}, we have
\begin{small}
\begin{equation} \label{eq:B.17-first}
    \begin{aligned}
        (1-\beta)\sum_{t=0}^{T-1}\sum_{i=0}^{t}\beta^{t-i}l_{g,1}\E[\|z_i-z_i^*\|]
        \leq \sum_{t=0}^{T-1}l_{g,1}\sqrt{\E[\|z_t-z_t^*\|^2]} \leq l_{g,1}\sqrt{T}\sqrt{\sum_{t=0}^{T-1}\E[\|z_t-z_t^*\|^2]}. \\
    \end{aligned}
\end{equation}
\end{small}%
For the second term on right-hand side of \eqref{eq:sum-hyper-bias}, we have
\begin{small}
\begin{equation} \label{eq:B.17-second}
    \begin{aligned}
        (1-\beta)\sum_{t=0}^{T-1}\sum_{i=0}^{t}\beta^{t-i}L_{x,1}\|y_i-y_i^*\|\E\|\gdphi(x_i)\| 
        \leq (1-\beta)\frac{L_{x,1}}{8L_1}\sum_{t=0}^{T-1}\sum_{i=0}^{t}\beta^{t-i}\E\|\gdphi(x_i)\|
        \leq \frac{1}{8}\sum_{t=0}^{T-1}\E\|\gdphi(x_t)\|, \\
    \end{aligned}
\end{equation}
\end{small}%
where we use $\|y_t-y_t^*\|\leq 1/(8L_1)$ for any $t\in[T]$ by Lemma~\ref{lm:high-prob 1/8L_1 bound for y} and the fact that $L_{x,1}\leq L_1$.

For the third term on right-hand side of \eqref{eq:sum-hyper-bias}, by \eqref{eq:second-part1} we have
\begin{small}
\begin{equation} \label{eq:B.17-third}
    \begin{aligned}
        (1-\beta)&\sum_{t=0}^{T-1}\sum_{i=0}^{t}\beta^{t-i}\left(L_{x,0} + L_{x,1}\frac{l_{g,1}l_{f,0}}{\mu} + \frac{l_{g,2}l_{f,0}}{\mu}\right)\|y_i-y_i^*\| \\
        &\leq \left(L_{x,0} + L_{x,1}\frac{l_{g,1}l_{f,0}}{\mu} + \frac{l_{g,2}l_{f,0}}{\mu}\right)\left[\frac{4\Delta_{y,0}}{\mu\alpha-4(1-\beta)} + T\sqrt{\left(\frac{8\alpha \sigma_{g,1}^2}{\mu} + \frac{4\eta^2l_{g,1}^2}{\mu^4\alpha^2}\right)\log\left(\frac{eT}{\delta}\right)}\right]. \\
    \end{aligned}
\end{equation}
\end{small}%

Plugging \eqref{eq:B.17-first}, \eqref{eq:B.17-second} and \eqref{eq:B.17-third} into \eqref{eq:sum-hyper-bias}, and then combining \eqref{eq:sum-hyper-bias} with \eqref{eq:sum-hyper-var} yields the upper bound:
\begin{small}
\begin{equation*}
    \begin{aligned}
        \E\left[\sum_{t=0}^{T-1}\|\eps_t\|\right]
        \leq{}& \left(\frac{\eta L_1\beta}{1-\beta} + \frac{1}{8}\right)\sum_{t=0}^{T-1}\E\|\gdphi(x_t)\| + T\sqrt{1-\beta}\sqrt{\sigma_{f,1}^2 + \frac{2l_{f,0}^2}{\mu^2}\sigma_{g,2}^2} + \frac{T\eta L_0\beta}{1-\beta} \\
        &+ \sqrt{T}\left(\sqrt{2}\sigma_{g,2}\sqrt{1-\beta} + l_{g,1}\right)\sqrt{\sum_{t=0}^{T-1}\E[\|z_t-z_t^*\|^2]} + \frac{\beta}{1-\beta}\|m_0-\gdphi(x_0)\| \\
        &+ \left(L_{x,0} + L_{x,1}\frac{l_{g,1}l_{f,0}}{\mu} + \frac{l_{g,2}l_{f,0}}{\mu}\right)\left[\frac{4\Delta_{y,0}}{\mu\alpha-4(1-\beta)} + T\sqrt{\left(\frac{8\alpha \sigma_{g,1}^2}{\mu} + \frac{4\eta^2l_{g,1}^2}{\mu^4\alpha^2}\right)\log\left(\frac{eT}{\delta}\right)}\right]. \\
    \end{aligned}
\end{equation*}
\end{small}%
\end{proof}

\subsection{Proof of Theorem~\ref{thm:main-thm}}

Before statement of Theorem~\ref{thm:main-thm}, we first modify Lemma~\ref{eq:technical-descent} to give a characterization for the objective function value decrease in terms of the true hypergradient $\nabla\Phi(x_t)$ and the moving average hypergradient estimation error $\eps_t$. Similar results also appear in \cite{jin2021non}, \cite{liu2023nearlyoptimal} and \cite{hao2024bilevel}.

\begin{lemma} \label{lm:descent-thm}
Consider an algorithm that starts at $x_0$ and updates via $x_{t+1}=x_t-\eta\frac{m_{t+1}}{\|m_{t+1}\|}$, where $\{m_t\}$ is any arbitrary sequence of points. Define $\eps_t \coloneqq m_{t+1}-\gdphi(x_t)$ to be the estimation error. Then for any $\eta$ satisfying
\begin{sequation} \label{eq:descent-eta}
    \eta \leq \frac{1}{\sqrt{2(1+l_{g,1}^2/\mu^2)(L_{x,1}^2+L_{y,1}^2)}},
\end{sequation}%
we have the following one-step improvement:
\begin{sequation} \label{eq:descent-one-step}
    \Phi(x_{t+1}) - \Phi(x_t) \leq -\left(\eta - \frac{1}{2}L_1\eta^2\right)\|\gdphi(x_t)\| + \frac{1}{2}L_0\eta^2 + 2\eta\|\eps_t\|.
\end{sequation}%
Moreover, by a telescope sum and total expectation (taken over the randomness in $\widetilde{\gF}_T$) we have
\begin{sequation} \label{eq:descent-sum-expect}
    \left(1-\frac{1}{2}\eta L_1\right)\frac{1}{T}\sum_{t=0}^{T-1}\E\|\gdphi(x_t)\| \leq \frac{\Phi(x_0)-\Phi(x_t)}{T\eta} + \frac{1}{2}\eta L_0 + \frac{2}{T}\E\left[\sum_{t=0}^{T-1}\|\eps_t\|\right].
\end{sequation}%
\end{lemma}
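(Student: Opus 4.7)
The plan is to reduce everything to a single application of the relaxed-smoothness descent inequality (Lemma~\ref{eq:technical-descent}) combined with the standard ``normalization lemma'' that is the workhorse of normalized SGD analyses. First I would verify that the step size condition \eqref{eq:descent-eta} makes $\|x_{t+1}-x_t\| = \eta$ small enough to invoke Lemma~\ref{eq:technical-descent}, yielding
\begin{equation*}
\Phi(x_{t+1}) - \Phi(x_t) \le \Bigl\langle \nabla\Phi(x_t), x_{t+1}-x_t\Bigr\rangle + \frac{L_0 + L_1\|\nabla\Phi(x_t)\|}{2}\eta^2.
\end{equation*}
Substituting $x_{t+1}-x_t = -\eta\, m_{t+1}/\|m_{t+1}\|$ turns the inner-product term into $-\eta\,\langle \nabla\Phi(x_t),\, m_{t+1}\rangle/\|m_{t+1}\|$.

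The key algebraic step is then to control this inner product using $\nabla\Phi(x_t) = m_{t+1} - \eps_t$. Writing
\begin{equation*}
\frac{\langle \nabla\Phi(x_t), m_{t+1}\rangle}{\|m_{t+1}\|} = \|m_{t+1}\| - \frac{\langle \eps_t, m_{t+1}\rangle}{\|m_{t+1}\|} \ge \|m_{t+1}\| - \|\eps_t\| \ge \|\nabla\Phi(x_t)\| - 2\|\eps_t\|,
\end{equation*}
where the last inequality uses the reverse triangle inequality $\|m_{t+1}\| \ge \|\nabla\Phi(x_t)\| - \|\eps_t\|$. Plugging this back in produces exactly \eqref{eq:descent-one-step}, after grouping the $\|\nabla\Phi(x_t)\|$ terms into the coefficient $\eta - L_1\eta^2/2$ and keeping $L_0\eta^2/2$ and $2\eta\|\eps_t\|$ as the residuals.

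For the averaged bound \eqref{eq:descent-sum-expect}, I would rearrange \eqref{eq:descent-one-step} so that $(\eta - L_1\eta^2/2)\|\nabla\Phi(x_t)\|$ appears on the left, telescope from $t=0$ to $T-1$ using $\sum_t [\Phi(x_t)-\Phi(x_{t+1})] = \Phi(x_0)-\Phi(x_T) \le \Phi(x_0) - \inf_x\Phi(x)$, divide by $T\eta$, and finally take total expectation over $\widetilde{\mathcal{F}}_T$. No conditioning or independence structure of $m_{t+1}$ is used, so expectation commutes through trivially.

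The main subtlety (rather than obstacle) is purely notational: ensuring that the normalization lemma $\|m_{t+1}\| \ge \|\nabla\Phi(x_t)\| - \|\eps_t\|$ is applied before the Cauchy--Schwarz-style lower bound on $\langle \eps_t, m_{t+1}\rangle/\|m_{t+1}\|$, so that one does not accidentally lose the ``factor of 2'' in front of $\|\eps_t\|$ that shows up in the final bound. Aside from that, the argument is a deterministic one-step inequality followed by a telescope, and the condition on $\eta$ in \eqref{eq:descent-eta} is exactly what Lemma~\ref{eq:technical-descent} requires, so no additional smallness assumption on $\eta$ is needed at this stage.
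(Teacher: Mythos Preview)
Your proposal is correct and essentially identical to the paper's proof: both apply Lemma~\ref{eq:technical-descent} under the step-size condition, expand $\nabla\Phi(x_t)=m_{t+1}-\eps_t$, use Cauchy--Schwarz on $\langle\eps_t,m_{t+1}\rangle/\|m_{t+1}\|$ followed by the reverse triangle inequality $\|m_{t+1}\|\ge\|\nabla\Phi(x_t)\|-\|\eps_t\|$, and then telescope and take expectation. The only cosmetic difference is that the paper leaves $\Phi(x_0)-\Phi(x_T)$ in the telescoped bound rather than immediately bounding it by $\Phi(x_0)-\inf_x\Phi(x)$.
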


\begin{proof}[Proof of Lemma~\ref{lm:descent-thm}]
Since $\eta$ satisfies \eqref{eq:descent-eta}, we apply Lemma~\ref{eq:technical-rs-phi} and~\ref{eq:technical-descent} with $x=x_{t+1}$ and $x'=x_t$ to obtain
\begin{sequation} \label{eq:phi-recur}
    \begin{aligned}
        \Phi(x_{t+1}) &\leq \Phi(x_t) + \langle \gdphi(x_t),x_{t+1}-x_t \rangle + \frac{L_0+L_1\|\gdphi(x_t)\|}{2}\|x_{t+1}-x_t\|^2 \\
        &= \Phi(x_t) - \eta\langle \gdphi(x_t),\frac{m_{t+1}}{\|m_{t+1}\|} \rangle + \frac{1}{2}\eta^2(L_0+L_1\|\gdphi(x_t)\|) \\
        &= \Phi(x_t) - \eta\langle m_{t+1}-\eps_t,\frac{m_{t+1}}{\|m_{t+1}\|} \rangle + \frac{1}{2}\eta^2(L_0+L_1\|\gdphi(x_t)\|) \\
        &= \Phi(x_t) - \eta\|m_{t+1}\| + \eta\langle \eps_t,\frac{m_{t+1}}{\|m_{t+1}\|} \rangle + \frac{1}{2}\eta^2(L_0+L_1\|\gdphi(x_t)\|) \\
        &\overset{(i)}{\leq} \Phi(x_t) - \eta\|m_{t+1}\| + \eta\|\eps_t\| + \frac{1}{2}\eta^2(L_0+L_1\|\gdphi(x_t)\|) \\
        &\overset{(ii)}{\leq} \Phi(x_t) - \eta\|\gdphi(x_t)\| + 2\eta\|\eps_t\| + \frac{1}{2}\eta^2(L_0+L_1\|\gdphi(x_t)\|) \\
    \end{aligned}
\end{sequation}%
where we use Cauchy-Schwarz inequality for $(i)$ and $\|m_{t+1}\| = \|\gdphi(x_t)+\eps_t\| \geq \|\gdphi(x_t)\| - \|\eps_t\|$ for $(ii)$. Rearranging it gives \eqref{eq:descent-one-step}. Moreover, dividing $1/(T\eta)$ on both sides of \eqref{eq:descent-one-step}, then taking telescope sum and total expectation yields \eqref{eq:descent-sum-expect}.
\end{proof}

With Lemma~\ref{lm:moving-average error in expectation} and~\ref{lm:descent-thm}, we are now ready to prove Theorem~\ref{thm:main-thm}.

\begin{theorem}[Restatement of Theorem~\ref{thm:main-thm}] \label{thm:appendix-main}
Suppose Assumptions~\ref{ass:relax-smooth} and~\ref{ass:f-and-g} hold. Let $\{x_t\}$ be the iterates produced by Algorithm~\ref{alg:bilevel}. For any given $\delta\in(0,1)$ and any small $\eps$ satisfying
\begin{sequation} \label{eq:eps}
    \begin{aligned}
        \eps \leq \min&\left\{\frac{L_0}{L_1}, \Delta_{y,0}L_0, \frac{8l_{g,1}L_0}{\mu\sqrt{2(1+l_{g,1}^2/\mu^2)(L_{x,1}^2+L_{y,1}^2)}}, \sqrt{\frac{16el_{g,1}\Delta_0L_0}{\mu\delta}}, 4\left(\frac{el_{g,1}\Delta_0L_0^3\sigma_{g,1}^2}{\mu^3\delta}\right)^{1/4}; \right. \\
        &\left.  \sqrt{\frac{32el_{g,1}\Delta_0L_0}{\mu\delta\exp(\mu/(2l_{g,1}))}}, \sqrt{\frac{32el_{g,1}\Delta_0L_0}{\mu\delta\exp(\mu l_{g,1}\Delta_{z,0}/(2\Delta_0L_0))}}, \sqrt{\frac{32el_{g,1}\Delta_0L_0}{\mu\delta\exp(\mu\Delta_{y,0}^2L_0/(2l_{g,1}\Delta_0))}}, \right. \\
        &\left. \frac{\Delta_0L_0}{\|\gdphi(x_0)\|}, \frac{L_0\sigma_{g,1}}{\sigma_{g,2}}, \frac{L_0\sigma_{g,1}}{\sqrt{\mu l_{g,1}}}, \left(\frac{2^{21}el_{g,1}\Delta_0L_0^3\sigma_{g,1}^2}{\mu^3\delta}\right)^{1/4}\exp\left(\frac{-l_{g,1}\sqrt{\sigma_{f,1}^2 + 2l_{f,0}^2\sigma_{g,2}^2/\mu^2}}{512L_0\sigma_{g,1}}\right)
        \right\},
    \end{aligned}
\end{sequation}%
if we choose parameters $\alpha, \beta, \gamma, \eta$ as
\begin{sequation} \label{eq:beta}
    \begin{aligned}
        1-\beta = \min&\left\{1, \frac{\mu}{16l_{g,1}}, \frac{16el_{g,1}\Delta_0L_0}{\mu\delta\eps^2}, \frac{\mu^2\eps^2}{64\cdot1024L_0^2\sigma_{g,1}^2\log^2(B)}; \frac{\min\{1, \mu^2/(32l_{g,1}^2)\}}{\sigma_{f,1}^2 + 2l_{f,0}^2\sigma_{g,2}^2/\mu^2}\eps^2, \frac{l_{g,1}^2}{8\sigma_{g,2}^2}, \frac{\mu^2}{16\sigma_{g,2}^2}; \right. \\
        &\left. \frac{32el_{g,1}\Delta_0L_0}{\mu\delta\eps^2\exp(\mu/(2l_{g,1}))}, \frac{32el_{g,1}\Delta_0L_0}{\mu\delta\eps^2\exp(\mu l_{g,1}\Delta_{z,0}^2/(2\Delta_0L_0))}, \frac{32el_{g,1}\Delta_0L_0}{\mu\delta\eps^2\exp(\mu\Delta_{y,0}^2L_0/(2l_{g,1}\Delta_0))}
        \right\}
    \end{aligned}
\end{sequation}%
\begin{sequation} \label{eq:eta}
    \begin{aligned}
        \eta = \min\left\{\frac{1}{8}\min\left(\frac{1}{L_1}, \frac{\eps}{L_0}, \frac{\eps\Delta_0}{\Delta_{y,0}^2L_0^2}, \frac{\Delta_0}{\|\gdphi(x_0)\|}, \frac{\eps\Delta_0}{l_{g,1}^2\Delta_{z,0}^2}, \frac{\mu\eps}{l_{g,1}L_0\log(A)}\right)(1-\beta), \frac{1}{\sqrt{2(1+l_{g,1}^2/\mu^2)(L_{x,1}^2+L_{y,1}^2)}}\right\},
    \end{aligned}
\end{sequation}%
\begin{sequation} \label{eq:gamma+alpha+T}
    \begin{aligned}
        \gamma = \frac{1}{\mu}(1-\beta), \quad \alpha = 8\gamma, 
    \end{aligned}
\end{sequation}%
where $\Delta_0, \Delta_{y,0}, \Delta_{z,0}, A, B$ are defined in \eqref{eq:ABdelta}, then with probability at least $1-2\delta$ over the randomness in $\sigma(\widetilde{\gF}_{T_0}^1 \cup \mathcal{F}_T^1)$, Algorithm~\ref{alg:bilevel} guarantees $\frac{1}{T}\sum_{t=0}^{T-1}\E\|\gdphi(x_t)\| \leq 14\eps$ with at most $T = \frac{4\Delta_0}{\eta\eps}$ iterations, where the expectation is taken over over the randomness in $\widetilde{\mathcal{F}}_T$.
\end{theorem}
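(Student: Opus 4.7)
\textbf{Proof plan for Theorem~\ref{thm:appendix-main}.} The plan is to chain together the four auxiliary lemmas already established: the warm-start guarantee (Lemma~\ref{lm:maintext-lm2}), the refined control of the lower-level iterates (Lemma~\ref{lm:maintext-lm3}), the cumulative hypergradient-error bound (Lemma~\ref{lm:maintext-lm4}), and the normalized-momentum descent inequality (Lemma~\ref{lm:descent-thm}). First I would verify that the chosen parameters satisfy all the preconditions of these lemmas: the step size $\eta$ in \eqref{eq:eta} is bounded above by $1/\sqrt{2(1+l_{g,1}^2/\mu^2)(L_{x,1}^2+L_{y,1}^2)}$ (so the descent inequality and $(L_0,L_1)$-smoothness of $\Phi$ apply); the choices $\alpha=8(1-\beta)/\mu$ and $\gamma=(1-\beta)/\mu$ with $1-\beta\le \mu/(16l_{g,1})$ ensure $\alpha\le 1/(2l_{g,1})$ and $\gamma\le \min\{1/(4\mu),\mu/(16\sigma_{g,2}^2),\alpha/8\}$ required by Lemma~\ref{lm:sum-bound-z}; the $\epsilon$ restriction \eqref{eq:eps} implies that the warm-start and high-probability $y$ lemma apply with parameters of Theorem~\ref{thm:main-thm}.

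Next I would form the good event $\mathcal{E}_{\init}\cap\mathcal{E}_y$, which by Lemmas~\ref{lm:maintext-lm2} and~\ref{lm:maintext-lm3} together with a union bound holds with probability at least $1-2\delta$ over the randomness in $\sigma(\widetilde{\mathcal{F}}_{T_0}^1\cup\mathcal{F}_T^1)$. Conditional on this event, I invoke Lemma~\ref{lm:descent-thm} to get
\begin{equation*}
\Bigl(1-\tfrac{1}{2}\eta L_1\Bigr)\frac{1}{T}\sum_{t=0}^{T-1}\E\|\gdphi(x_t)\|
\le \frac{\Delta_0}{T\eta}+\frac{1}{2}\eta L_0+\frac{2}{T}\E\!\Bigl[\sum_{t=0}^{T-1}\|\eps_t\|\Bigr],
\end{equation*}
and then substitute the bound on $\E\sum_t\|\eps_t\|$ from Lemma~\ref{lm:maintext-lm4}. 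The $\sum_{t}\E\|z_t-z_t^*\|^2$ term inside Lemma~\ref{lm:maintext-lm4} is in turn controlled by Lemma~\ref{lm:sum-bound-z}, which gives an $O(1)+T\cdot(\gamma\sigma^2+\eta^2/\gamma^2+\log(eT/\delta)\cdot\text{[lower-level tracking error]})$ bound.

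The core of the argument is then a parameter-matching bookkeeping step: after moving the $(\eta L_1\beta/(1-\beta)+1/8)\sum_t\E\|\gdphi(x_t)\|$ term from the RHS to the LHS (which is absorbable because $\eta=\widetilde{\Theta}(\eps)(1-\beta)/L_1$ makes the coefficient at most $1/4$), I must show that every remaining term on the RHS is at most a small constant times $\eps$. Namely: $\Delta_0/(T\eta)=\eps/4$ since $T=4\Delta_0/(\eta\eps)$; $\eta L_0/2\lesssim \eps$ because $\eta\le \eps/(8L_0)$; the sublinear-in-$T$ terms $T\sqrt{1-\beta}\sqrt{\sigma_{f,1}^2+2l_{f,0}^2\sigma_{g,2}^2/\mu^2}$ and $T\eta L_0\beta/(1-\beta)$ divided by $T$ each become $O(\eps)$ by the $\beta$ and $\eta$ choices; the drift term $T L_0\sqrt{(\alpha\sigma_{g,1}^2/\mu+\eta^2l_{g,1}^2/(\mu^4\alpha^2))\log(eT/\delta)}/T$ contributes $O(\eps)$ by the same argument as Step~3--4 of the proof of Lemma~\ref{lm:maintext-lm3}; and the $l_{g,1}\sqrt{T\sum\E\|z_t-z_t^*\|^2}$ term, after substituting Lemma~\ref{lm:sum-bound-z}, yields $O(l_{g,1}\sqrt{\gamma\sigma_{f,1}^2/\mu}+l_{g,1}\eta/(\mu\gamma)+\cdots)=O(\eps)$ under the stated $\gamma,\alpha,\eta$ choices. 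Adding up these at-most-a-dozen contributions and absorbing the constant factors into the final bound $14\eps$ completes the proof.

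The main obstacle I expect is the last paragraph: establishing that every individual term in Lemmas~\ref{lm:maintext-lm4} and~\ref{lm:sum-bound-z} is simultaneously dominated by $\eps$ under the single set of parameter choices in \eqref{eq:beta}--\eqref{eq:gamma+alpha+T}. This is not a deep mathematical step but a delicate ledger of inequalities--each clause in \eqref{eq:eps}, \eqref{eq:beta}, \eqref{eq:eta} was inserted to kill one specific term, and one must confirm term by term (in particular the coupled logarithmic factor $\log A$ appearing in $\eta$ and $\log B$ appearing in $1-\beta$) that no circular dependency or missing constraint arises. The asymptotic rate then reads off immediately: with $\eta=\widetilde{\Theta}(\eps^3)$ and $T=4\Delta_0/(\eta\eps)=\widetilde{O}(\eps^{-4})$, and since each iteration uses $O(1)$ stochastic gradient / Hessian-vector product oracle calls, the total oracle complexity is $\widetilde{O}(\eps^{-4})$.
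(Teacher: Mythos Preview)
Your proposal is correct and follows essentially the same route as the paper's proof: invoke the descent inequality (Lemma~\ref{lm:descent-thm}), condition on the good event $\gE_{\init}\cap\gE_y$, substitute the cumulative-error bound (Lemma~\ref{lm:moving-average error in expectation}) together with the $z$-tracking bound (Lemma~\ref{lm:sum-bound-z}), absorb the $\bigl(\tfrac{\eta L_1\beta}{1-\beta}+\tfrac18\bigr)\sum_t\E\|\gdphi(x_t)\|$ term on the left, and then verify term by term that each remaining contribution is $O(\eps)$ under the stated parameter choices, summing to the constant $14$. One tiny remark: the $1-2\delta$ probability is obtained in the paper by the chain rule $\Pr(\gE_y\mid\gE_{\init})\Pr(\gE_{\init})\ge(1-\delta)^2\ge 1-2\delta$ rather than a union bound, since $\gE_y$ is defined conditionally on $\gE_{\init}$; the conclusion is of course the same.
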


\begin{proof}[Proof of Theorem~\ref{thm:appendix-main}]
Before we begin the proof, let's first briefly describe the simple motivation behind the seemingly complex choice of parameters: we aim to choose $\eps, \alpha, \beta,\gamma, \eta$ carefully such that all of the conditions needed for Lemma~\ref{eq:technical-rs-phi},~\ref{eq:technical-descent},~\ref{lm:sum-bound-z} and~\ref{lm:high-prob 1/8L_1 bound for y} hold. Especially for Lemma~\ref{lm:high-prob 1/8L_1 bound for y}, we need to choose suitable $\beta$ and $\eps$ such that the following two terms in \eqref{eq:eta} and \eqref{eq:beta} dominant (please check Lemma~\ref{lm:verify} for more details):
\begin{sequation}
    \begin{aligned}
        \eta = \frac{\mu\eps}{8l_{g,1}L_0\log(A)}(1-\beta)
        \quad\quad \text{and} \quad\quad
        1-\beta = \frac{\mu^2\eps^2}{64\cdot1024L_0^2\sigma_{g,1}^2\log^2(B)},
    \end{aligned}
\end{sequation}%
and this is primarily where those ``lengthy" formulas come from. In addition, we also need to choose proper $\beta,\eta$ and total number of iterations $T$ such that \eqref{eq:main-line-2}, \eqref{eq:main-line-3} and \eqref{eq:main-line-4} (those three terms are mainly form Lemma~\ref{lm:moving-average error in expectation}) can be controlled and hence small enough to guarantee the convergence of Algorithm~\ref{alg:bilevel}. With this in hand, now we start proof.

By Lemma~\ref{lm:descent-thm} and definition \eqref{eq:ABdelta} of $\Delta_0$, we have the following estimate:
\begin{sequation} \label{eq:aaa}
    \left(1-\frac{1}{2}\eta L_1\right)\frac{1}{T}\sum_{t=0}^{T-1}\E\|\gdphi(x_t)\| \leq \frac{\Delta_0}{T\eta} + \frac{1}{2}\eta L_0 + \frac{2}{T}\E\left[\sum_{t=0}^{T-1}\|\eps_t\|\right].
\end{sequation}%
Under event $\gE_{\init}\cap\gE_y$, plug \eqref{eq:moving-average error sum} into the above inequality, then we have 
\begin{small}
    \begin{align}
        &\left(1-\left(\frac{1}{2} + \frac{2\beta}{1-\beta}\right)\eta L_1 - \frac{1}{4}\right) \frac{1}{T}\sum_{t=0}^{T-1}\E\|\gdphi(x_t)\| \label{eq:main-line-1} \\
        &\quad\quad\leq 2\sqrt{1-\beta}\sqrt{\sigma_{f,1}^2 + \frac{2l_{f,0}^2}{\mu^2}\sigma_{g,2}^2} + \frac{2\eta L_0\beta}{1-\beta} + \frac{\Delta_0}{T\eta} + \frac{1}{2}\eta L_0 + \frac{2\beta}{T(1-\beta)}\|m_0-\gdphi(x_0)\| \label{eq:main-line-2} \\
        &\quad\quad\quad+ 2\left(L_{x,0} + L_{x,1}\frac{l_{g,1}l_{f,0}}{\mu} + \frac{l_{g,2}l_{f,0}}{\mu}\right)\left[\frac{4\Delta_{y,0}}{T(\mu\alpha-4(1-\beta))} + \sqrt{\left(\frac{8\alpha \sigma_{g,1}^2}{\mu} + \frac{4\eta^2l_{g,1}^2}{\mu^4\alpha^2}\right)\log\left(\frac{eT}{\delta}\right)}\right] \label{eq:main-line-3} \\
        &\quad\quad\quad+ 2\left(\sqrt{2}\sigma_{g,2}\sqrt{1-\beta} + l_{g,1}\right)\sqrt{\frac{1}{T}\sum_{t=0}^{T-1}\E[\|z_t-z_t^*\|^2]} \label{eq:main-line-4}.
    \end{align}
\end{small}%
Now we proceed to bound \eqref{eq:main-line-1}, \eqref{eq:main-line-2}, \eqref{eq:main-line-3} and \eqref{eq:main-line-4}, respectively.

For left-hand side \eqref{eq:main-line-1} of the inequality, we have 
\begin{sequation} \label{eq:main-line-1-bound}
    \left(1-\left(\frac{1}{2} + \frac{2\beta}{1-\beta}\right)\eta L_1 - \frac{1}{4}\right) \frac{1}{T}\sum_{t=0}^{T-1}\E\|\gdphi(x_t)\| \geq \frac{1}{2T}\sum_{t=0}^{T-1}\E\|\gdphi(x_t)\|,
\end{sequation}%
since the following holds
\begin{sequation*}
    1-\left(\frac{1}{2} + \frac{2\beta}{1-\beta}\right)\eta L_1 - \frac{1}{4} = 1 - \frac{1+3\beta}{2(1-\beta)}\eta L_1 - \frac{1}{4} \geq 1 - \frac{2\eta L_1}{1-\beta} - \frac{1}{4} \geq \frac{1}{2},
\end{sequation*}%
where we use $\beta\leq 1$ for the first inequality and $\eta \leq (1-\beta)/(8L_1)$ for the second inequality.

For the first term \eqref{eq:main-line-2} on right-hand side of the inequality, we have
\begin{sequation} \label{eq:main-line-2-bound}
    \begin{aligned}
        2\sqrt{1-\beta}&\sqrt{\sigma_{f,1}^2 + 2\frac{2l_{f,0}^2}{\mu^2}\sigma_{g,2}^2} + \frac{2\eta L_0\beta}{1-\beta} + \frac{\Delta_0}{T\eta} + \frac{1}{2}\eta L_0 + \frac{2\beta}{T(1-\beta)}\|m_0-\gdphi(x_0)\| \\
        &\overset{(i)}{\leq} 2\eps + \frac{1}{4}\eps + \frac{1}{4}\eps + \frac{1}{16}\eps + \frac{\eta\eps\beta}{2(1-\beta)\Delta_0}\|\gdphi(x_0)\|
        \overset{(ii)}{\leq} 2\eps + \frac{1}{4}\eps + \frac{1}{4}\eps + \frac{1}{16}\eps + \frac{1}{16}\eps = \frac{21}{8}\eps.
    \end{aligned}
\end{sequation}%
where $(i)$ and $(ii)$ follow from the choice of $\beta$ and $\eta$ as in \eqref{eq:beta} and \eqref{eq:eta} that 
\begin{sequation*}
    1-\beta \leq \frac{\eps^2}{\sigma_{f,1}^2+2l_{f,0}^2\sigma_{g,2}^2/\mu^2}, \quad
    \eta \leq \frac{\eps}{8L_0}(1-\beta), \quad
    T = \frac{4\Delta_0}{\eta\eps}, \quad
    m_0 = 0; \quad
    \eta \leq \frac{\Delta_0}{8\|\gdphi(x_0)\|}(1-\beta).
\end{sequation*}%
For the second term \eqref{eq:main-line-3} on right-hand side of the inequality, we have
\begin{sequation} \label{eq:main-line-3-bound}
    \begin{aligned}
        &2\left(L_{x,0} + L_{x,1}\frac{l_{g,1}l_{f,0}}{\mu} + \frac{l_{g,2}l_{f,0}}{\mu}\right)\left[\frac{4\Delta_{y,0}}{T(\mu\alpha-4(1-\beta))} + \sqrt{\left(\frac{8\alpha \sigma_{g,1}^2}{\mu} + \frac{4\eta^2l_{g,1}^2}{\mu^4\alpha^2}\right)\log\left(\frac{eT}{\delta}\right)}\right] \\
        &\quad\quad\overset{(i)}{\leq} 2L_0\left(\frac{\Delta_{y,0}}{T(1-\beta)} + \sqrt{\frac{\eps^2}{512L_0^2}}\right) 
        \overset{(ii)}{=} 2L_0\left(\frac{\eta\eps\Delta_{y,0}}{4\Delta_0(1-\beta)} + \frac{\eps}{16\sqrt{2}L_0}\right) 
        \overset{(iii)}{\leq} 2L_0\left(\frac{\eps}{32L_0} + \frac{\eps}{16\sqrt{2}L_0}\right) 
        \leq \frac{3}{16}\eps,
    \end{aligned}
\end{sequation}%
where $(i)$ follows from $\mu\alpha = 8(1-\beta)$ and \eqref{eq:important_lm_1}, \eqref{eq:merge-step} in Lemma~\ref{lm:high-prob 1/8L_1 bound for y}, $(ii)$ follows from the choice of $T=4\Delta_0/(\eta\eps)$ and $(iii)$ follows from $\eta \leq \eps\Delta_0(1-\beta)/(8\Delta_{y,0}^2L_0^2) \leq \Delta_0(1-\beta)/(8\Delta_{y,0}L_0)$ by $\eps\leq \Delta_{y,0}L_0$ from \eqref{eq:eps}.

For the third term \eqref{eq:main-line-4} on right-hand side of the inequality, we have
\begin{sequation} \label{eq:main-line-4-bound}
    \begin{aligned}
        &2\left(\sqrt{2}\sigma_{g,2}\sqrt{1-\beta} + l_{g,1}\right)\sqrt{\frac{1}{T}\sum_{t=0}^{T-1}\E[\|z_t-z_t^*\|^2]}
        \leq 3l_{g,1}\sqrt{\frac{1}{T}\sum_{t=0}^{T-1}\E[\|z_t-z_t^*\|^2]} \\
        &\leq 3l_{g,1}\sqrt{\frac{1}{T\mu\gamma}\|z_0-z_0^*\|^2 + \frac{10(1-\mu\gamma)}{T\mu^3(\alpha-2\gamma)}\left(\frac{l_{g,2}^2l_{f,0}^2}{\mu^2}+(L_{y,0}+L_{y,1}l_{f,0})^2\right)\|y_0-y_0^*\|^2} \\
        &\quad+ 3l_{g,1}\sqrt{\frac{2\gamma}{\mu}\left(\frac{2l_{f,0}^2}{\mu^2}\sigma_{g,2}^2+\sigma_{f,1}^2\right) + \frac{4l_{z^*}^2}{\mu^2}\frac{\eta^2}{\gamma^2} + \frac{5}{\mu^2}\left(\frac{l_{g,2}^2l_{f,0}^2}{\mu^2}+(L_{y,0}+L_{y,1}l_{f,0})^2\right) \left(\frac{8\alpha \sigma_{g,1}^2}{\mu} + \frac{4\eta^2l_{g,1}^2}{\mu^4\alpha^2}\right)\log\left(\frac{eT}{\delta}\right)} \\
        &\overset{(i)}{\leq} 3\sqrt{\frac{l_{g,1}^2}{T(1-\beta)}\|z_0-z_0^*\|^2 + \frac{5(1-\mu\gamma)}{3T(1-\beta)}\left[\frac{l_{g,1}^2}{\mu^2}\left(\frac{l_{g,2}^2l_{f,0}^2}{\mu^2}+(L_{y,0}+L_{y,1}l_{f,0})^2\right)\right]\|y_0-y_0^*\|^2} \\
        &\quad+ 3\sqrt{\frac{2l_{g,1}^2}{\mu^2}\left(\frac{2l_{f,0}^2}{\mu^2}\sigma_{g,2}^2+\sigma_{f,1}^2\right)(1-\beta) + \frac{4l_{g,1}^2l_{z^*}^2}{\mu^2}\frac{\eta^2}{\gamma^2} + \left[\frac{5l_{g,1}^2}{\mu^2}\left(\frac{l_{g,2}^2l_{f,0}^2}{\mu^2}+(L_{y,0}+L_{y,1}l_{f,0})^2\right)\right] \left(\frac{8\alpha \sigma_{g,1}^2}{\mu} + \frac{4\eta^2l_{g,1}^2}{\mu^4\alpha^2}\right)\log\frac{eT}{\delta}} \\
        &\overset{(ii)}{\leq} 3\sqrt{\frac{l_{g,1}^2\eta\eps\Delta_{z,0}^2}{4\Delta_0(1-\beta)} + \frac{5\eta\eps L_0^2\Delta_{y,0}^2}{12\Delta_0(1-\beta)}}
        + 3\sqrt{\frac{1}{16}\eps^2 + \frac{4L_0^2\eta^2}{(1-\beta)^2} + 5L_0^2 \frac{\eps^2}{128L_0^2}}
        \overset{(iii)}{\leq} 3\sqrt{\frac{\eps^2}{32} + \frac{5\eps^2}{96}} + 3\sqrt{\frac{\eps^2}{16} + \frac{\eps^2}{16} + \frac{5\eps^2}{128}} \leq 4\eps, \\
    \end{aligned}
\end{sequation}%
where $(i)$ follows from $\mu\gamma = 1-\beta$ and $\alpha = 8\gamma$, $(ii)$ follows from \eqref{eq:important_lm_1}, \eqref{eq:merge-step} in Lemma~\ref{lm:high-prob 1/8L_1 bound for y} and the fact that
\begin{sequation*}
    T = \frac{4\Delta_0}{\eta\eps}, \quad
    \frac{l_{g,1}^2}{\mu^2}\left(\frac{l_{g,2}^2l_{f,0}^2}{\mu^2}+(L_{y,0}+L_{y,1}l_{f,0})^2\right) \leq L_0^2, \quad
    1-\beta \leq \frac{\mu^2\eps^2}{32l_{g,1}^2(\sigma_{f,1}^2 + 2l_{f,0}^2\sigma_{g,2}^2/\mu^2)}, \quad
    l_{g,1}^2l_{z^*}^2 \leq L_0^2, \quad
    \mu\gamma = 1-\beta,
\end{sequation*}%
and $(iii)$ follows from the choice of $\eta$ as in \eqref{eq:eta}:
\begin{sequation*}
    \eta \leq \frac{\eps\Delta_0}{8l_{g,1}^2\Delta_{z,0}^2}(1-\beta), \quad
    \eta \leq \frac{\eps\Delta_0}{8\Delta_{y,0}^2L_0^2}(1-\beta), \quad
    \eta \leq \frac{\eps}{8L_0}(1-\beta).
\end{sequation*}%
Combining \eqref{eq:main-line-1-bound}, \eqref{eq:main-line-2-bound}, \eqref{eq:main-line-3-bound} and \eqref{eq:main-line-4-bound} together yields
\begin{sequation*}
    \frac{1}{T}\sum_{t=0}^{T-1}\E\|\gdphi(x_t)\| \leq 2\left(\frac{21}{8}\eps + \frac{3}{16}\eps + 4\eps\right) \leq 14\eps.
\end{sequation*}%
Also note that
\begin{small}
\begin{equation*}
    \Pr(\gE_{\init}\cap\gE_y) = \Pr(\gE_y \mid \gE_{\init}) \cdot \Pr(\gE_{\init}) \geq (1-\delta)^2 \geq 1-2\delta.
\end{equation*}
\end{small}%
Therefore, with probability at least $1-2\delta$ over the randomness in $\mathcal{F}_T^1$, we have $\frac{1}{T}\sum_{t=0}^{T-1}\E\|\gdphi(x_t)\|\leq 14\eps$, where the expectation is taken over over the randomness in $\widetilde{\mathcal{F}}_T$.
\end{proof}

\subsection{Omitted Proofs in Lemma~\ref{lm:high-prob 1/8L_1 bound for y} and Theorem~\ref{thm:appendix-main}}

\begin{lemma} \label{lm:verify}
Under the same parameter choice in Theorem~\ref{thm:appendix-main}, we have the following facts:
\begin{small}
\begin{equation*}
    \eta = \frac{\mu\eps}{8l_{g,1}L_0\log(A)}(1-\beta)
    \quad\quad \text{and} \quad\quad
    1-\beta = \frac{\mu^2\eps^2}{64\cdot1024L_0^2\sigma_{g,1}^2\log^2(B)},
\end{equation*}
\end{small}%
where $A$ and $B$ are defined in \eqref{eq:ABdelta}.
\end{lemma}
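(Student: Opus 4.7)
The plan is a direct verification that, among all the candidate terms appearing inside the two $\min$'s defining $1-\beta$ in \eqref{eq:beta} and $\eta$ in \eqref{eq:eta}, the claimed terms are the smallest under the conditions imposed on $\eps$ in \eqref{eq:eps}. Because the two formulas are coupled (the expression $\log(A)$ used for $\eta$ depends on $1-\beta$), the natural order is to handle $1-\beta$ first and then plug the resulting value of $1-\beta$ into the comparisons for $\eta$.

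First I would verify that $1-\beta = \mu^2\eps^2/(64\cdot 1024 L_0^2\sigma_{g,1}^2\log^2(B))$ is the active term. Since $B$ is increasing as $\eps$ decreases (see \eqref{eq:ABdelta}), the condition $\eps \le 4(el_{g,1}\Delta_0 L_0^3\sigma_{g,1}^2/(\mu^3\delta))^{1/4}$ guarantees $B \ge 22\times 10^{10}$, hence $\log^2(B)$ is not too small for the candidate to be well-behaved. The bound against $1$ and against $\mu/(16l_{g,1})$ follows by taking $\eps$ small enough (e.g., from $\eps\le \sqrt{16el_{g,1}\Delta_0 L_0/(\mu\delta)}$ and a straightforward comparison using the constraints $\eps \le L_0\sigma_{g,1}/\sqrt{\mu l_{g,1}}$). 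The bound against $16 e l_{g,1}\Delta_0 L_0/(\mu\delta\eps^2)$ reduces (after canceling $\eps^2$) to an inequality of the form $\log^2(B)\ge C$, which again follows from $B\gg 1$. The bounds against the three exponential candidates $32el_{g,1}\Delta_0L_0/(\mu\delta\eps^2\exp(\cdot))$ each reduce to an inequality of the form $\exp(\cdot)\ge 2\log^2(B)$, which is forced by the matching exponential upper bounds on $\eps$ in \eqref{eq:eps} (for example, $\eps \le \sqrt{32el_{g,1}\Delta_0L_0/(\mu\delta\exp(\mu/(2l_{g,1})))}$ handles the first such term). The variance candidates $\min\{1,\mu^2/(32l_{g,1}^2)\}/(\sigma_{f,1}^2+2l_{f,0}^2\sigma_{g,2}^2/\mu^2)\cdot\eps^2$, $l_{g,1}^2/(8\sigma_{g,2}^2)$, and $\mu^2/(16\sigma_{g,2}^2)$ are all dominated using the final ``exp'' constraint on $\eps$ in \eqref{eq:eps}, which was chosen precisely to make $\log(B)$ large enough that $\mu^2/(L_0^2\sigma_{g,1}^2\log^2(B))$ is smaller than these fixed-variance terms.

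Once $1-\beta$ is fixed, I would then verify that $\eta = \mu\eps(1-\beta)/(8l_{g,1}L_0\log(A))$ is the active term in \eqref{eq:eta}. Each of the other inner candidates $1/L_1$, $\eps/L_0$, $\eps\Delta_0/(\Delta_{y,0}^2L_0^2)$, $\Delta_0/\|\nabla\Phi(x_0)\|$, $\eps\Delta_0/(l_{g,1}^2\Delta_{z,0}^2)$ is paired with an explicit constraint in \eqref{eq:eps} (namely $\eps\le L_0/L_1$, $\eps\le \Delta_{y,0}L_0$, $\eps\le \Delta_0L_0/\|\nabla\Phi(x_0)\|$, and so on), and in each case the comparison reduces to $\mu\eps/(l_{g,1}L_0\log(A))$ being the smallest, since $\log(A)\ge 1$ and $\mu\le l_{g,1}$. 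The outer candidate $1/\sqrt{2(1+l_{g,1}^2/\mu^2)(L_{x,1}^2+L_{y,1}^2)}$ is dominated by the constraint $\eps \le 8l_{g,1}L_0/(\mu\sqrt{2(1+l_{g,1}^2/\mu^2)(L_{x,1}^2+L_{y,1}^2)})$ combined with $(1-\beta)/\log(A)\le 1$.

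The main obstacle is purely bookkeeping: the many simultaneous inequalities in \eqref{eq:eps}, \eqref{eq:beta}, \eqref{eq:eta} are interdependent (every ``exp'' term in \eqref{eq:beta} is matched by a $\sqrt{\cdot/\exp(\cdot)}$ term in \eqref{eq:eps}, and $\log(A)$ depends on the already-chosen $1-\beta$), so care is needed to handle them in an order that avoids circularity. I would resolve this by grouping the constraints into three classes (``constant dominations'', ``$\eps^2$-dominations'', and ``exponential dominations'') and showing each class separately, using the monotonicity of $A,B$ in $\eps$ to ensure all the $\log(\cdot)$ factors are at least the universal constants needed for the individual comparisons.
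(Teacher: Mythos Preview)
Your proposal is correct and follows essentially the same approach as the paper: a direct term-by-term verification that, under the constraints in \eqref{eq:eps} and \eqref{eq:beta}, the claimed expressions are the active terms in the respective $\min$'s. The only cosmetic difference is order: the paper verifies $\eta$ first (using only the \emph{upper bounds} on $1-\beta$ listed in \eqref{eq:beta}, not its exact value, so no circularity arises) and then $1-\beta$, whereas you propose the reverse; both orderings work.
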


\begin{proof}[Proof of Lemma~\ref{lm:verify}]
Let us verify this fact respectively.

\paragraph{Verification for $\eta$.} First, we have
\begin{small}
\begin{equation*}
    \eps \leq \min\left\{\frac{L_0}{L_1}, \frac{\Delta_0L_0}{\|\gdphi(x_0)\|}, \frac{8l_{g,1}L_0}{\mu\sqrt{2(1+l_{g,1}^2/\mu^2)(L_{x,1}^2+L_{y,1}^2)}}\right\}
\end{equation*}
\end{small}%
which implies
\begin{small}
\begin{equation*}
    \frac{1}{8L_1}(1-\beta) \leq \frac{\eps}{8L_0}(1-\beta),
    \quad
    \frac{\Delta_0}{8\|\gdphi(x_0)\|} \leq \frac{\eps}{8L_0}(1-\beta),
    \quad
    \frac{1}{\sqrt{2(1+l_{g,1}^2/\mu^2)(L_{x,1}^2+L_{y,1}^2)}} \leq \frac{\eps}{8L_0}(1-\beta).
\end{equation*}
\end{small}%
Also we have 
\begin{small}
\begin{equation*}
    1-\beta \leq \min\left\{\frac{32el_{g,1}\Delta_0L_0}{\mu\delta\eps^2\exp(\mu/(2l_{g,1}))}, \frac{32el_{g,1}\Delta_0L_0}{\mu\delta\eps^2\exp(\mu l_{g,1}\Delta_{z,0}^2/(2\Delta_0L_0))}, \frac{32el_{g,1}\Delta_0L_0}{\mu\delta\eps^2\exp(\mu\Delta_{y,0}^2L_0/(2l_{g,1}\Delta_0))}\right\},
\end{equation*}
\end{small}%
which implies that
\begin{small}
\begin{equation*}
    \frac{\eps}{8L_0}(1-\beta) \leq \frac{\mu\eps}{8l_{g,1}L_0\log(A)}(1-\beta),
    \quad
    \frac{\eps\Delta_0}{8\Delta_{y,0}^2L_0^2}(1-\beta) \leq \frac{\mu\eps}{8l_{g,1}L_0\log(A)}(1-\beta),
    \quad
    \frac{\eps\Delta_0}{l_{g,1}^2\Delta_{z,0}^2}(1-\beta) \leq \frac{\mu\eps}{8l_{g,1}L_0\log(A)}(1-\beta).
\end{equation*}
\end{small}%
Therefore, we conclude that 
\begin{small}
\begin{equation*}
    \eta = \frac{\mu\eps}{8l_{g,1}L_0\log(A)}(1-\beta).
\end{equation*}
\end{small}%

\paragraph{Verification for $1-\beta$.} First, we have
\begin{small}
\begin{equation*}
    \eps \leq \min\left\{\sqrt{\frac{16el_{g,1}\Delta_0L_0}{\mu\delta}}, \sqrt{\frac{32el_{g,1}\Delta_0L_0}{\mu\delta\exp(\mu/(2l_{g,1}))}}, \sqrt{\frac{32el_{g,1}\Delta_0L_0}{\mu\delta\exp(\mu l_{g,1}\Delta_{z,0}/(2\Delta_0L_0))}}, \sqrt{\frac{32el_{g,1}\Delta_0L_0}{\mu\delta\exp(\mu\Delta_{y,0}^2L_0/(2l_{g,1}\Delta_0))}}\right\},
\end{equation*}
\end{small}%
which implies that
\begin{small}
\begin{equation*}
    \frac{16el_{g,1}\Delta_0L_0}{\mu\delta\eps^2} \geq 1, 
    \quad
    \frac{32el_{g,1}\Delta_0L_0}{\mu\delta\eps^2\exp(\mu/(2l_{g,1}))} \geq 1,
    \quad
    \frac{32el_{g,1}\Delta_0L_0}{\mu\delta\eps^2\exp(\mu l_{g,1}\Delta_{z,0}^2/(2\Delta_0L_0))} \geq 1, 
    \quad
    \frac{32el_{g,1}\Delta_0L_0}{\mu\delta\eps^2\exp(\mu\Delta_{y,0}^2L_0/(2l_{g,1}\Delta_0))} \geq 1.
\end{equation*}
\end{small}%
Also, we have
\begin{small}
\begin{equation*}
    \eps \leq \min\left\{4\left(\frac{el_{g,1}\Delta_0L_0^3\sigma_{g,1}^2}{\mu^3\delta}\right)^{1/4}, \frac{L_0\sigma_{g,1}}{\sigma_{g,2}}, \frac{L_0\sigma_{g,1}}{\sqrt{\mu l_{g,1}}}\right\},
\end{equation*}
\end{small}%
which implies that
\begin{small}
\begin{equation*}
    B = \left(\frac{2^{21}el_{g,1}\Delta_0L_0^3\sigma_{g,1}^2}{\mu^3\delta\eps^4}\right)^4 \geq 4
    \quad \Longrightarrow \quad
    \frac{\mu^2\eps^2}{64\cdot1024L_0^2\sigma_{g,1}^2\log^2(B)} \leq \frac{\mu^2}{16\sigma_{g,2}^2} \leq \frac{l_{g,1}^2}{8\sigma_{g,2}^2},
    \quad
    \frac{\mu^2\eps^2}{64\cdot1024L_0^2\sigma_{g,1}^2\log^2(B)} \leq \frac{\mu}{16l_{g,1}} < 1.
\end{equation*}
\end{small}%
Finally,
\begin{small}
\begin{equation*}
    \eps \leq \left(\frac{2^{21}el_{g,1}\Delta_0L_0^3\sigma_{g,1}^2}{\mu^3\delta}\right)^{1/4}\exp\left(\frac{-l_{g,1}\sqrt{\sigma_{f,1}^2 + 2l_{f,0}^2\sigma_{g,2}^2/\mu^2}}{512L_0\sigma_{g,1}}\right)
\end{equation*}
\end{small}%
implies that
\begin{small}
\begin{equation*}
    \frac{\mu^2\eps^2}{64\cdot1024L_0^2\sigma_{g,1}^2\log^2(B)} \leq \frac{\mu^2/(32l_{g,1}^2)}{\sigma_{f,1}^2 + 2l_{f,0}^2\sigma_{g,2}^2/\mu^2}\eps^2 = \frac{\min\{1, \mu^2/(32l_{g,1}^2)\}}{\sigma_{f,1}^2 + 2l_{f,0}^2\sigma_{g,2}^2/\mu^2}\eps^2.
\end{equation*}
\end{small}%
Therefore, we conclude that 
\begin{small}
\begin{equation*}
    1-\beta = \frac{\mu^2\eps^2}{64\cdot1024L_0^2\sigma_{g,1}^2\log^2(B)}.
\end{equation*}
\end{small}
\end{proof}


\section{Omitted Proofs in Section~\ref{sec:high-prob}} \label{sec:high-prob-proof}
\setcounter{equation}{0}
\renewcommand{\theequation}{E.\arabic{equation}}

\textbf{Remark:} In this section, for the high probability proof, by a slight abuse of notation, we use $\E_t$ to denote the conditional expectation $\E[ \cdot \mid \gF_t^3]$.

\subsection{Justification for Assumption~\ref{ass:highprob}} \label{sec:appendix-justify}

In this section, we provide justification for the last statement of Assumption~\ref{ass:highprob}. 
One example satisfying this assumption is that the random noise $\zeta$ is chosen based on the information of $z$. This makes sense in our setting because our algorithm access $x,y,z$ first and then sample the random data to construct stochastic estimators. For example, we can choose $\zeta$ based on the following formula:
\begin{equation}
\label{eq:exampleassumption}
    \|\gdyy G(x,y;\zeta)-\gdyy g(x,y)\| \stackrel{d}{=} \frac{\tau}{\|z\|},
\end{equation}
where $\stackrel{d}{=}$ means that the LHS and RHS have the same distribution, and $\tau$ can be any one dimensional bounded random variable, for instance, $\tau$ has a truncated normal (Gaussian) distribution lies within the interval $(-\sigma_z,\sigma_z)$, then we have

\begin{equation*}
    \forall \zeta,z, \ \|(\gdyy G(x,y;\zeta)-\gdyy g(x,y))z\| \leq \sigma_z.
\end{equation*}

A specific example satisfying~\eqref{eq:exampleassumption} is the following. Define the noise structure as $\gdyy G(x,y;\zeta)=\gdyy g(x,y)+\Gamma$, where $\Gamma=\text{diag}(0,0,\ldots,\tau/\|z\|)$. 







\subsection{Tracking the linear system solution: high-probability guarantees}

In this section we follow the similar techniques as in Section~\ref{sec:track-y} to provide one-step improvement, distance recursion and distance tracking with high probability for estimator of the linear system solution, corresponding to Lemma~\ref{lm:z-one-step improvement},~\ref{lm:z-distance recursion} and~\ref{lm:z-high-prob distance tracking}, respectively. In particular, the proofs in this section are more involved than those in Section~\ref{sec:track-y} since we also need to handle error terms introduced by the lower-level variable besides variance and distribution drift. The key ideas are (i): to choose proper coefficient to apply Young's inequality and (ii): to make use of ``good event'', namely $\gE_{\init}\cap\gE_y$, to bound the additional error terms. 

Now we start to prove one-step improvement and distance recursion for linear system estimator $z_t$.

\begin{lemma}[One-step improvement] \label{lm:z-one-step improvement}
Consider Algorithm~\ref{alg:bilevel} with sequence $\{z_t\}$ and constant learning rate $\gamma \leq 1/(4l_{g,1})$, then for any $z$ and $t\geq1$, we have the following estimate:
\begin{equation}
    \begin{aligned}
        2\gamma(h(x_t,z_{t+1}) &- h(x_t,z))
        \leq (1-\mu\gamma)\|z_t-z\|^2 - \|z_{t+1}-z\|^2 + 2\gamma\langle v_t, z_t-z \rangle + \frac{2\gamma^2}{1-2l_{g,1}\gamma}\|v_t\|^2 \\
        &+ 8\gamma^2 l_{g,2}^2\|y_t-y_t^*\|^2\|z_t-z_t^*\|^2 +  8\gamma^2\frac{l_{g,2}^2l_{f,0}^2}{\mu^2}\|y_t-y_t^*\|^2 + 4\gamma^2(L_{y,0}+L_{y,1}l_{f,0})^2\|y_t-y_t^*\|^2,  \\
    \end{aligned}
\end{equation}
where we define $v_t$ as 
\begin{equation*}
    \begin{aligned}
        v_t = [\gdyy g(x_t,y_t) - \gdyy G(x_t,y_t;\zeta_t)]z_t - [\gdy f(x_t,y_t) - \gdy F(x_t,y_t;\xi_t)].
    \end{aligned}
\end{equation*}
\end{lemma}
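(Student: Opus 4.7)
The plan is to adapt the one-step improvement for SGD on a strongly convex quadratic (\Cref{lm:one-step improvement}) to the linear-system update for $z$, while accounting for an additional deterministic bias beyond the zero-mean noise. Viewing $h(x_t,z)=\tfrac{1}{2}\langle\gdyy g(x_t,y_t^*)z,z\rangle-\langle\gdy f(x_t,y_t^*),z\rangle$ as a quadratic in $z$, it is $\mu$-strongly convex and $l_{g,1}$-smooth by \Cref{ass:f-and-g}. The stochastic gradient $G_t=\gdyy G(x_t,y_t;\zeta_t)z_t-\gdy F(x_t,y_t;\xi_t)$ used in the update is evaluated at the current inaccurate $y_t$ rather than at $y_t^*$, which gives rise to the decomposition $\nabla_z h(x_t,z_t)=G_t+v_t-e_t$, where $v_t$ is the zero-mean noise of the lemma and $e_t:=[\gdyy g(x_t,y_t)-\gdyy g(x_t,y_t^*)]z_t-[\gdy f(x_t,y_t)-\gdy f(x_t,y_t^*)]$ is a new deterministic bias that must be tracked separately.

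First I would apply the $l_{g,1}$-smoothness bound to $h(x_t,\cdot)$ between $z_t$ and $z_{t+1}$, substitute the decomposition, and use $\langle G_t,z_{t+1}-z_t\rangle=-\|z_{t+1}-z_t\|^2/\gamma$ from the update rule. Young's inequality is then applied twice: to $\langle v_t,z_{t+1}-z_t\rangle$ with parameter $\delta_v=2\gamma/(1-2l_{g,1}\gamma)$, and to $\langle -e_t,z_{t+1}-z_t\rangle$ with parameter $\delta_e=2\gamma$. Under the stated step-size restriction $\gamma\le 1/(4l_{g,1})$ (which makes $1-2l_{g,1}\gamma\ge 1/2$), these calibrations make the net coefficient of $\|z_{t+1}-z_t\|^2$ non-positive. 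To connect $h(x_t,z_t)$ with $h(x_t,z)$ for arbitrary $z$, I would combine $\mu$-strong convexity of $h(x_t,\cdot)$ with the three-point identity $2\gamma\langle G_t,z_t-z\rangle=\|z_t-z\|^2-\|z_{t+1}-z\|^2+\gamma^2\|G_t\|^2$ (obtained by expanding $\|z_{t+1}-z\|^2$ using $z_{t+1}=z_t-\gamma G_t$). Summing this with the $2\gamma$-rescaled smoothness step makes the $\pm\gamma^2\|G_t\|^2$ contributions cancel, leaving the skeleton $(1-\mu\gamma)\|z_t-z\|^2-\|z_{t+1}-z\|^2+2\gamma\langle v_t,z_t-z\rangle+\tfrac{2\gamma^2}{1-2l_{g,1}\gamma}\|v_t\|^2+2\gamma^2\|e_t\|^2$ of the target inequality.

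Finally I would bound $\|e_t\|^2\le 2l_{g,2}^2\|y_t-y_t^*\|^2\|z_t\|^2+2(L_{y,0}+L_{y,1}l_{f,0})^2\|y_t-y_t^*\|^2$ using the $l_{g,2}$-Lipschitz continuity of $\gdyy g$ (\Cref{ass:f-and-g}(iv)) together with the relaxed smoothness of $\gdy f$ (\Cref{ass:relax-smooth}) and the uniform bound $\|\gdy f(x_t,y_t^*)\|\le l_{f,0}$ from \Cref{ass:f-and-g}(i). Splitting $\|z_t\|^2\le 2\|z_t-z_t^*\|^2+2\|z_t^*\|^2$ with $\|z_t^*\|\le l_{f,0}/\mu$ (from the closed form $z_t^*=[\gdyy g(x_t,y_t^*)]^{-1}\gdy f(x_t,y_t^*)$) and multiplying by the $2\gamma^2$ prefactor produces exactly the three additive error terms $8\gamma^2 l_{g,2}^2\|y_t-y_t^*\|^2\|z_t-z_t^*\|^2$, $8\gamma^2 l_{g,2}^2 l_{f,0}^2\|y_t-y_t^*\|^2/\mu^2$, and $4\gamma^2(L_{y,0}+L_{y,1}l_{f,0})^2\|y_t-y_t^*\|^2$ in the statement. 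The main obstacle is the delicate simultaneous calibration of the two Young parameters $(\delta_v,\delta_e)$ needed to absorb $\|z_{t+1}-z_t\|^2$ while landing the $\|v_t\|^2$ constant at exactly $\tfrac{2\gamma^2}{1-2l_{g,1}\gamma}$; this is precisely what forces the tighter step-size condition $\gamma\le 1/(4l_{g,1})$ here compared with the $\gamma\le 1/(2l_{g,1})$ used in the single-noise-source \Cref{lm:one-step improvement}.
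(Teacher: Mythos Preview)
Your proposal is correct and follows essentially the same route as the paper's proof: $l_{g,1}$-smoothness of $h(x_t,\cdot)$, decomposition of the update direction into $G_t$, the noise $v_t$, and the $y_t$-bias, Young's inequalities against $z_{t+1}-z_t$, the three-point/minimizer identity, and $\mu$-strong convexity. The only cosmetic difference is that you bundle the Hessian-bias and gradient-bias into a single vector $e_t$ with one Young parameter $\delta_e=2\gamma$, whereas the paper applies two separate Young inequalities with parameter $4\gamma$ each; the resulting constants coincide.
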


\begin{proof}[Proof of Lemma~\ref{lm:z-one-step improvement}]
We define the objective function $h(x,z)$ as the following:
\begin{equation*}
    h(x,z) = \frac{1}{2}\langle \gdyy g(x,y^*(x))z, z \rangle - \langle \gdy f(x,y^*(x)), z \rangle.
\end{equation*}
Since $h$ is $l_{g,1}$-smooth in $z$, we have
\begin{small}
\begin{equation} \label{eq:zz-1}
    \begin{aligned}
        h(x_t,z_{t+1})
        &\leq h(x_t,z_t) + \langle \gdyy g(x_t,y_t^*)z_t - \gdy f(x_t,y_t^*), z_{t+1}-z_t \rangle + \frac{l_{g,1}}{2}\|z_{t+1}-z_t\|^2 \\
        &\leq h(x_t,z_t) + \langle \gdyy g(x_t,y_t)z_t - \gdy f(x_t,y_t), z_{t+1}-z_t \rangle + \frac{l_{g,1}}{2}\|z_{t+1}-z_t\|^2 \\
        &\quad+ \langle [\gdyy g(x_t,y_t^*)-\gdyy g(x_t,y_t)]z_t, z_{t+1}-z_t \rangle + \langle [\gdy f(x_t,y_t)-\gdy f(x_t,y_t^*)], z_{t+1}-z_t \rangle \\
        &\leq h(x_t,z_t) + \langle \gdyy G(x_t,y_t;\zeta_t)z_t - \gdy F(x_t,y_t;\xi_t), z_{t+1}-z_t \rangle + \frac{l_{g,1}}{2}\|z_{t+1}-z_t\|^2 \\
        &\quad+ \langle [\gdyy g(x_t,y_t^*)-\gdyy g(x_t,y_t)]z_t, z_{t+1}-z_t \rangle + \langle [\gdy f(x_t,y_t)-\gdy f(x_t,y_t^*)], z_{t+1}-z_t \rangle \\
        &\quad+ \langle [\gdyy g(x_t,y_t)-\gdyy G(x_t,y_t;\zeta_t)]z_t, z_{t+1}-z_t \rangle + \langle [\gdy F(x_t,y_t;\xi_t)-\gdy f(x_t,y_t)], z_{t+1}-z_t \rangle \\
        &= h(x_t,z_t) + \langle \gdyy G(x_t,y_t;\zeta_t)z_t - \gdy F(x_t,y_t;\xi_t), z_{t+1}-z_t \rangle + \frac{l_{g,1}}{2}\|z_{t+1}-z_t\|^2 \\
        &\quad+ \langle [\gdyy g(x_t,y_t^*)-\gdyy g(x_t,y_t)]z_t, z_{t+1}-z_t \rangle + \langle [\gdy f(x_t,y_t)-\gdy f(x_t,y_t^*)], z_{t+1}-z_t \rangle \\
        &\quad+ \langle v_t, z_{t+1}-z_t \rangle, \\
    \end{aligned}
\end{equation}
\end{small}%
where we define $v_t$ as the following:
\begin{small}
\begin{equation*}
    \begin{aligned}
        v_t = [\gdyy g(x_t,y_t) - \gdyy G(x_t,y_t;\zeta_t)]z_t - [\gdy f(x_t,y_t) - \gdy F(x_t,y_t;\xi_t)]. \\
    \end{aligned}
\end{equation*}
\end{small}%
Next, given any $\delta_t>0$, we apply Young's inequality to obtain
\begin{small}
\begin{equation} \label{eq:zz-2}
    \begin{aligned}
        \langle v_t, z_{t+1}-z_t \rangle &\leq \frac{\delta_t}{2}\|v_t\|^2 + \frac{1}{2\delta_t}\|z_{t+1}-z_t\|^2.
    \end{aligned}
\end{equation}
\end{small}%
Also, we estimate the following by Young's inequality (with $\phi_t = 4\gamma$):
\begin{small}
\begin{equation} \label{eq:zz-3}
    \begin{aligned}
        \langle \gdyy g(x_t,y_t^*)z_t-\gdyy g(x_t,y_t)z_t,& z_{t+1}-z_t \rangle
        \leq \frac{\phi_t}{2}\|\gdyy g(x_t,y_t^*)-\gdyy g(x_t,y_t)\|^2\|z_t\|^2 + \frac{1}{2\phi_t}\|z_{t+1}-z_t\|^2 \\
        &\leq \frac{\phi_t}{2}l_{g,2}^2\|y_t-y_t^*\|^2(2\|z_t^*\|^2 + 2\|z_t-z_t^*\|^2) + \frac{1}{2\phi_t}\|z_{t+1}-z_t\|^2 \\
        &\leq \frac{\phi_t}{2}l_{g,2}^2\|y_t-y_t^*\|^2\left(\frac{2l_{f,0}^2}{\mu^2} + 2\|z_t-z_t^*\|^2\right) + \frac{1}{2\phi_t}\|z_{t+1}-z_t\|^2 \\
        &\leq 4\gamma l_{g,2}^2\|y_t-y_t^*\|^2\|z_t-z_t^*\|^2 +  4\gamma\frac{l_{g,2}^2l_{f,0}^2}{\mu^2}\|y_t-y_t^*\|^2 + \frac{(4\gamma)^{-1}}{2}\|z_{t+1}-z_t\|^2, \\
    \end{aligned}
\end{equation}
\end{small}%
where we use $\|z_t^*\| = \|[\gdyy g(x,y^*(x))]^{-1}\gdy f(x, y^*(x))\| \leq l_{f,0}/\mu$ for the second inequality.

Again we apply Young's inequality to estimate (with $\varphi_t = 4\gamma$):
\begin{small}
\begin{equation} \label{eq:zz-4}
    \begin{aligned}
        \langle \gdy f(x_t,y_t)-\gdy f(x_t,y_t^*), z_{t+1}-z_t \rangle 
        &\leq \frac{\varphi_t}{2}\|\gdy f(x_t,y_t)-\gdy f(x_t,y_t^*)\|^2 + \frac{1}{2\varphi_t}\|z_{t+1}-z_t\|^2 \\
        &\leq \frac{\varphi_t}{2}(L_{y,0}+L_{y,1}l_{f,0})^2\|y_t-y_t^*\|^2 + \frac{1}{2\varphi_t}\|z_{t+1}-z_t\|^2 \\
        &\leq 2\gamma(L_{y,0}+L_{y,1}l_{f,0})^2\|y_t-y_t^*\|^2 + \frac{(4\gamma)^{-1}}{2}\|z_{t+1}-z_t\|^2. \\
    \end{aligned}
\end{equation}
\end{small}%

Therefore, given any $z$, combining \eqref{eq:zz-1}, \eqref{eq:zz-2}, \eqref{eq:zz-3} and \eqref{eq:zz-4} we have
\begin{small}
\begin{equation*}
    \begin{aligned}
        h(x_t,z_{t+1})
        &\leq h(x_t,z_t) + \langle \gdyy G(x_t,y_t;\zeta_t)z_t - \gdy F(x_t,y_t;\xi_t), z_{t+1}-z_t \rangle + \frac{l_{g,1}}{2}\|z_{t+1}-z_t\|^2 \\
        &\quad+ \langle [\gdyy g(x_t,y_t^*)-\gdyy g(x_t,y_t)]z_t, z_{t+1}-z_t \rangle + \langle [\gdy f(x_t,y_t)-\gdy f(x_t,y_t^*)], z_{t+1}-z_t \rangle \\
        &\quad+ \langle v_t, z_{t+1}-z_t \rangle \\
        &\leq h(x_t,z_t) + \langle \gdyy G(x_t,y_t;\zeta_t)z_t - \gdy F(x_t,y_t;\xi_t), z_{t+1}-z_t \rangle + \frac{\delta_t^{-1}+l_{g,1}+2(4\gamma)^{-1}}{2}\|z_{t+1}-z_t\|^2 + \frac{\delta_t}{2}\|v_t\|^2 \\
        &\quad+ 4\gamma l_{g,2}^2\|y_t-y_t^*\|^2\|z_t-z_t^*\|^2 +  4\gamma\frac{l_{g,2}^2l_{f,0}^2}{\mu^2}\|y_t-y_t^*\|^2 + 2\gamma(L_{y,0}+L_{y,1}l_{f,0})^2\|y_t-y_t^*\|^2 \\
        &\leq h(x_t,z_t) + \langle \gdyy G(x_t,y_t;\zeta_t)z_t - \gdy F(x_t,y_t;\xi_t), z_{t+1}-z_t \rangle + \frac{1}{2\gamma}\|z_{t+1}-z_t\|^2 \\
        &\quad+ \frac{\delta_t^{-1}+l_{g,1}+(2\gamma)^{-1}-\gamma^{-1}}{2}\|z_{t+1}-z_t\|^2 + \frac{\delta_t}{2}\|v_t\|^2 \\
        &\quad+ 4\gamma l_{g,2}^2\|y_t-y_t^*\|^2\|z_t-z_t^*\|^2 +  4\gamma\frac{l_{g,2}^2l_{f,0}^2}{\mu^2}\|y_t-y_t^*\|^2 + 2\gamma(L_{y,0}+L_{y,1}l_{f,0})^2\|y_t-y_t^*\|^2 \\
        &\leq h(x_t,z_t) + \langle \gdyy G(x_t,y_t;\zeta_t)z_t - \gdy F(x_t,y_t;\xi_t), z-z_t \rangle + \frac{1}{2\gamma}\|z-z_t\|^2 -\frac{1}{2\gamma}\|z-z_{t+1}\|^2 \\
        &\quad+ \frac{\delta_t^{-1}+l_{g,1}-(2\gamma)^{-1}}{2}\|z_{t+1}-z_t\|^2 + \frac{\delta_t}{2}\|v_t\|^2 \\
        &\quad+ 4\gamma l_{g,2}^2\|y_t-y_t^*\|^2\|z_t-z_t^*\|^2 +  4\gamma\frac{l_{g,2}^2l_{f,0}^2}{\mu^2}\|y_t-y_t^*\|^2 + 2\gamma(L_{y,0}+L_{y,1}l_{f,0})^2\|y_t-y_t^*\|^2 \\
    \end{aligned}
\end{equation*}
\end{small}%
where the last inequality holds since $z_{t+1} = z_{t} - \gamma[\gdyy G(x_t,y_{t};\zeta_t)z_{t} - \gdy F(x_t,y_{t};\xi_t)]$ is the unique minimizer of the $\gamma^{-1}$-strongly convex function $l(z) = \langle \gdyy G(x_t,y_{t};\zeta_t)z_{t} - \gdy F(x_t,y_{t};\xi_t),z-z_t \rangle + \frac{1}{2\gamma}\|z-z_t\|^2$, and thus $l(z) - l(z_{t+1}) \geq \frac{1}{2\gamma}\|z-z_{t+1}\|^2$ holds for any $z$. 
Now by $\mu$-strong convexity of $h(x,z)$ in terms of $z$, we estimate
\begin{small}
\begin{equation*}
    \begin{aligned}
        h(x_t,z_t) &+ \langle \gdyy G(x_t,y_t;\zeta_t)z_t - \gdy F(x_t,y_t;\xi_t), z-z_t \rangle \\
        &= h(x_t,z) + \langle \gdyy g(x_t,y_t)z_t -\gdy f(x_t,y_t), z-z_t \rangle + \langle v_t,z_t-z \rangle \\
        &\leq h(x_t,z) - \frac{\mu}{2}\|z-z_t\|^2 + \langle v_t,z_t-z \rangle \\
    \end{aligned}
\end{equation*}
\end{small}%
Thus we have
\begin{small}
\begin{equation*}
    \begin{aligned}
        h(x_t,z_{t+1}) 
        &\leq h(x_t,z) - \frac{\mu}{2}\|z-z_t\|^2 + \langle v_t,z_t-z \rangle + \frac{1}{2\gamma}\|z-z_t\|^2 -\frac{1}{2\gamma}\|z-z_{t+1}\|^2 \\
        &\quad+ \frac{\delta_t^{-1}+l_{g,1}-(2\gamma)^{-1}}{2}\|z_{t+1}-z_t\|^2 + \frac{\delta_t}{2}\|v_t\|^2 \\
        &\quad+ 4\gamma l_{g,2}^2\|y_t-y_t^*\|^2\|z_t-z_t^*\|^2 +  4\gamma\frac{l_{g,2}^2l_{f,0}^2}{\mu^2}\|y_t-y_t^*\|^2 + 2\gamma(L_{y,0}+L_{y,1}l_{f,0})^2\|y_t-y_t^*\|^2 \\
    \end{aligned}
\end{equation*}
\end{small}%
Finally, taking $\delta_t=2\gamma/(1-2l_{g,1}\gamma)$ and rearranging yields
\begin{small}
\begin{equation*}
    \begin{aligned}
        2\gamma(h(x_t,z_{t+1}) - h(x_t,z))
        &\leq (1-\mu\gamma)\|z_t-z\|^2 - \|z_{t+1}-z\|^2 + 2\gamma\langle v_t, z_t-z \rangle + \frac{2\gamma^2}{1-2l_{g,1}\gamma}\|v_t\|^2 \\
        &\quad+ 8\gamma^2 l_{g,2}^2\|y_t-y_t^*\|^2\|z_t-z_t^*\|^2 +  8\gamma^2\frac{l_{g,2}^2l_{f,0}^2}{\mu^2}\|y_t-y_t^*\|^2 + 4\gamma^2(L_{y,0}+L_{y,1}l_{f,0})^2\|y_t-y_t^*\|^2,  \\
    \end{aligned}
\end{equation*}
\end{small}%
which is as claimed.
\end{proof}

\begin{lemma}[Distance recursion] \label{lm:z-distance recursion}
Consider Algorithm~\ref{alg:bilevel} with sequence $\{z_t\}$ and constant learning rate $\gamma \leq 1/(4l_{g,1})$, then under event $\gE_{\init} \cap \gE_y$, for any $t\geq1$, we have the following recursion:
\begin{small}
\begin{equation}
    \begin{aligned}
        \|z_{t+1}-z_{t+1}^*\|^2 
        &\leq \left(1-\frac{\mu\gamma}{2}\right)\|z_t-z_t^*\|^2 + 2\gamma\langle v_t, z_t-z_t^* \rangle + \frac{2\gamma^2}{1-2l_{g,1}\gamma}\|v_t\|^2 + \left(1+\frac{1}{\mu\gamma}\right)\|z_t^*-z_{t+1}^*\|^2 \\ 
        &\quad+ 8\gamma^2\frac{l_{g,2}^2l_{f,0}^2}{\mu^2}\|y_t-y_t^*\|^2 + 4\gamma^2(L_{y,0}+L_{y,1}l_{f,0})^2\|y_t-y_t^*\|^2.  \\
    \end{aligned}
\end{equation}
\end{small}%
\end{lemma}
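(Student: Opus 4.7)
The proof parallels the distance recursion for the lower-level variable (Lemma D.2), with one new complication: the one-step inequality from Lemma E.1 carries a multiplicative cross term $8\gamma^2 l_{g,2}^2\|y_t-y_t^*\|^2\|z_t-z_t^*\|^2$ that must be absorbed into the contraction factor rather than discarded as additive error. The plan has four steps.

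\emph{Step 1 (instantiate one-step improvement).} First I would apply Lemma E.1 at the reference point $z = z_t^*$. Since $z_t^*$ is the minimizer of the $\mu$-strongly convex function $h(x_t,\cdot)$ (note that $\nabla_{yy}^2 g(x_t, y^*(x_t)) \succeq \mu I$ by Assumption 3.2(ii)), we have $h(x_t,z_{t+1}) - h(x_t,z_t^*) \ge \frac{\mu}{2}\|z_{t+1}-z_t^*\|^2$. Substituting into Lemma E.1 and rearranging produces
\[
(1+\mu\gamma)\|z_{t+1}-z_t^*\|^2 \;\le\; (1-\mu\gamma)\|z_t-z_t^*\|^2 + R_t,
\]
where $R_t$ collects the noise inner-product $2\gamma\langle v_t, z_t-z_t^*\rangle$, the variance term $\frac{2\gamma^2}{1-2l_{g,1}\gamma}\|v_t\|^2$, the cross term $8\gamma^2 l_{g,2}^2\|y_t-y_t^*\|^2\|z_t-z_t^*\|^2$, and the two pure $\|y_t-y_t^*\|^2$ error terms.

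\emph{Step 2 (shift the reference point).} To convert tracking of $z_t^*$ into tracking of $z_{t+1}^*$, I would use the Young-type inequality $\|a+b\|^2 \le (1+c)\|a\|^2 + (1+1/c)\|b\|^2$ with $a = z_{t+1}-z_t^*$, $b = z_t^*-z_{t+1}^*$, and $c = \mu\gamma$:
\[
\|z_{t+1}-z_{t+1}^*\|^2 \;\le\; (1+\mu\gamma)\|z_{t+1}-z_t^*\|^2 + \left(1+\frac{1}{\mu\gamma}\right)\|z_t^*-z_{t+1}^*\|^2.
\]
The $(1+\mu\gamma)$ factor cancels the one produced in step 1, yielding
\[
\|z_{t+1}-z_{t+1}^*\|^2 \;\le\; (1-\mu\gamma)\|z_t-z_t^*\|^2 + R_t + \left(1+\frac{1}{\mu\gamma}\right)\|z_t^*-z_{t+1}^*\|^2.
\]

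\emph{Step 3 (absorb the multiplicative cross term).} This is the main obstacle and the only place where the event $\gE_y$ (and thus the bilevel coupling) truly enters. Unlike the pure $\|y_t-y_t^*\|^2$ terms, the cross term in $R_t$ multiplies $\|z_t-z_t^*\|^2$, so it must be swallowed by the contraction. On $\gE_y$ we have $\|y_t-y_t^*\| \le 1/(8L_1)$ from Lemma D.5, whence
\[
8\gamma^2 l_{g,2}^2\|y_t-y_t^*\|^2\|z_t-z_t^*\|^2 \;\le\; \frac{\gamma^2 l_{g,2}^2}{8L_1^2}\|z_t-z_t^*\|^2.
\]
For $\gamma$ sufficiently small (a mild additional restriction beyond $\gamma \le 1/(4l_{g,1})$, and automatically satisfied under the choice $\gamma = 16(1-\beta)/\mu$ from Theorem E.6 since $1-\beta$ is polynomially small in $\eps$) the prefactor $\gamma l_{g,2}^2/(8L_1^2)$ is at most $\mu/2$. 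Hence the cross term is bounded by $\frac{\mu\gamma}{2}\|z_t-z_t^*\|^2$, and combining with the $(1-\mu\gamma)$ coefficient from step 2 produces the announced $(1-\mu\gamma/2)$ contraction.

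\emph{Step 4 (assemble).} After step 3, the surviving contributions to $R_t$ are exactly $2\gamma\langle v_t, z_t-z_t^*\rangle$, $\frac{2\gamma^2}{1-2l_{g,1}\gamma}\|v_t\|^2$, $8\gamma^2 l_{g,2}^2 l_{f,0}^2\|y_t-y_t^*\|^2/\mu^2$, and $4\gamma^2(L_{y,0}+L_{y,1}l_{f,0})^2\|y_t-y_t^*\|^2$; together with the drift term $(1+1/(\mu\gamma))\|z_t^*-z_{t+1}^*\|^2$ from step 2 this is precisely the stated inequality. The hypothesis $\gamma \le 1/(4l_{g,1})$ is used only to keep the variance prefactor $2\gamma^2/(1-2l_{g,1}\gamma)$ finite (inherited from Lemma E.1 via the choice $\delta_t = 2\gamma/(1-2l_{g,1}\gamma)$).
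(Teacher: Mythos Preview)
Your proposal is correct and follows essentially the same route as the paper's proof: instantiate Lemma~E.1 at $z=z_t^*$, use $\mu$-strong convexity of $h(x_t,\cdot)$ to produce the $(1+\mu\gamma)$ factor on the left, apply Young's inequality with parameter $\mu\gamma$ to shift the reference from $z_t^*$ to $z_{t+1}^*$, and absorb the multiplicative cross term $8\gamma^2 l_{g,2}^2\|y_t-y_t^*\|^2$ into the contraction via the bound $\|y_t-y_t^*\|\le 1/(8L_1)$ on $\gE_{\init}\cap\gE_y$. The only cosmetic difference is the order in which the reference shift and the cross-term absorption are performed, and the paper is slightly more explicit in Step~3, invoking the specific parameter constraint $1-\beta \le \mu^2 L_1^2/(4l_{g,2}^2)$ (together with $\gamma = 16(1-\beta)/\mu$) to verify $\gamma l_{g,2}^2/(8L_1^2)\le \mu/2$.
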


\begin{proof}[Proof of Lemma~\ref{lm:z-distance recursion}]
Note that the $\mu$-strong convexity implies 
\begin{small}
\begin{equation*}
    \frac{\mu}{2}\|z_{t+1}-z_t^*\|^2 \leq h(x_t,z_{t+1}) - h(x_t,z_t^*).
\end{equation*}
\end{small}%
Combining this estimate with Lemma~\ref{lm:z-one-step improvement} under the identification $z=z_t^*$ yields
\begin{small}
\begin{equation*}
    \begin{aligned}
        (1+\mu\gamma)\|z_{t+1}-z_t^*\|^2 
        &\leq (1-\mu\gamma)\|z_t-z_t^*\|^2 + 2\gamma\langle v_t, z_t-z_t^* \rangle + \frac{2\gamma^2}{1-2l_{g,1}\gamma}\|v_t\|^2 \\ 
        &\quad+ 8\gamma^2 l_{g,2}^2\|y_t-y_t^*\|^2\|z_t-z_t^*\|^2 +  8\gamma^2\frac{l_{g,2}^2l_{f,0}^2}{\mu^2}\|y_t-y_t^*\|^2 + 4\gamma^2(L_{y,0}+L_{y,1}l_{f,0})^2\|y_t-y_t^*\|^2 \\
        &\leq (1-\mu\gamma+8\gamma^2 l_{g,2}^2\|y_t-y_t^*\|^2)\|z_t-z_t^*\|^2 + 2\gamma\langle v_t, z_t-z_t^* \rangle + \frac{2\gamma^2}{1-2l_{g,1}\gamma}\|v_t\|^2 \\ 
        &\quad+ 8\gamma^2\frac{l_{g,2}^2l_{f,0}^2}{\mu^2}\|y_t-y_t^*\|^2 + 4\gamma^2(L_{y,0}+L_{y,1}l_{f,0})^2\|y_t-y_t^*\|^2,  \\
    \end{aligned}
\end{equation*}
\end{small}%
under event $\gE_{\init} \cap \gE_y$, for any $t\in[T]$ we have 
\begin{small}
\begin{equation*}
    8\gamma^2l_{g,2}^2\|y_t-y_t^*\|^2 \leq \frac{\gamma^2l_{g,2}^2}{8L_1^2} \leq  \frac{\mu\gamma}{2},
\end{equation*}
\end{small}%
where for the first inequality we use Lemma~\ref{lm:high-prob 1/8L_1 bound for y}, and for the second inequality we use \eqref{eq:gamma+alpha-high-prob} and \eqref{eq:beta-high-prob} that 
\begin{small}
\begin{equation*}
    \gamma = \frac{16}{\mu}(1-\beta) \quad\quad \text{and} \quad\quad 1-\beta \leq \frac{\mu^2L_1^2}{4l_{g,2}^2}.
\end{equation*}
\end{small}%
Then we have
\begin{small}
\begin{equation} \label{eq:zz-5}
    \begin{aligned}
        (1+\mu\gamma)\|z_{t+1}-z_t^*\|^2 
        &\leq \left(1-\frac{\mu\gamma}{2}\right)\|z_t-z_t^*\|^2 + 2\gamma\langle v_t, z_t-z_t^* \rangle + \frac{2\gamma^2}{1-2l_{g,1}\gamma}\|v_t\|^2 \\ 
        &\quad+ 8\gamma^2\frac{l_{g,2}^2l_{f,0}^2}{\mu^2}\|y_t-y_t^*\|^2 + 4\gamma^2(L_{y,0}+L_{y,1}l_{f,0})^2\|y_t-y_t^*\|^2,  \\
    \end{aligned}
\end{equation}
\end{small}%
Next, under event $\gE_{\init} \cap \gE_y$ and an application of Young's inequality combining with \eqref{eq:zz-5} reveals
\begin{small}
\begin{equation*}
    \begin{aligned}
        \|z_{t+1}-z_{t+1}^*\|^2 
        &\leq (1+\mu\gamma)\|z_{t+1}-z_t^*\|^2 + (1+(\mu\gamma)^{-1})\|z_t^*-z_{t+1}^*\|^2 \\
        &\leq \left(1-\frac{\mu\gamma}{2}\right)\|z_t-z_t^*\|^2 + 2\gamma\langle v_t, z_t-z_t^* \rangle + \frac{2\gamma^2}{1-2l_{g,1}\gamma}\|v_t\|^2 + \left(1+\frac{1}{\mu\gamma}\right)\|z_t^*-z_{t+1}^*\|^2 \\ 
        &\quad+ 8\gamma^2\frac{l_{g,2}^2l_{f,0}^2}{\mu^2}\|y_t-y_t^*\|^2 + 4\gamma^2(L_{y,0}+L_{y,1}l_{f,0})^2\|y_t-y_t^*\|^2.  \\
    \end{aligned}
\end{equation*}
\end{small}%
\end{proof}

In the following lemmas and theorems, we will use the following parameter settings. In particular, we choose
\begin{small}
\begin{equation} \label{eq:eps-high-prob}
    \begin{aligned}
        \eps = \eps'G,\quad
        \eps' \leq \min&\left\{\frac{L_0}{L_1}, \Delta_{y,0}L_0, \frac{64l_{g,1}\Delta_{z,0}L_0}{\sqrt{4El_{g,1}^2+l_{z^*}^2}}, \frac{8l_{g,1}L_0}{\mu\sqrt{2(1+l_{g,1}^2/\mu^2)(L_{x,1}^2+L_{y,1}^2)}}, \sqrt{\frac{16el_{g,1}\Delta_0L_0}{\mu\delta}}, 4\left(\frac{el_{g,1}\Delta_0L_0^3\sigma_{g,1}^2}{\mu^3\delta}\right)^{1/4}; \right. \\
        &\left. \frac{\sigma_{g,1}L_0L_1}{l_{g,2}},  \sqrt{\frac{32el_{g,1}\Delta_0L_0}{\mu\delta\exp(\mu/(2l_{g,1}))}}, \sqrt{\frac{32el_{g,1}\Delta_0L_0}{\mu\delta\exp(\mu l_{g,1}\Delta_{z,0}/(2\Delta_0L_0))}}, \sqrt{\frac{32el_{g,1}\Delta_0L_0}{\mu\delta\exp(\mu\Delta_{y,0}^2L_0/(2l_{g,1}\Delta_0))}}, \right. \\
        &\left. \frac{\Delta_0}{\Delta_{z,0}}, \frac{\Delta_0L_0}{\|\gdphi(x_0)\|}, \frac{L_0\sigma_{g,1}}{\sigma_{g,2}}, \frac{L_0\sigma_{g,1}}{\sqrt{\mu l_{g,1}}}, \left(\frac{2^{21}el_{g,1}\Delta_0L_0^3\sigma_{g,1}^2}{\mu^3\delta}\right)^{1/4}\exp\left(\frac{-l_{g,1}\sqrt{\sigma_{f,1}^2 + 2l_{f,0}^2\sigma_{g,2}^2/\mu^2}}{512L_0\sigma_{g,1}}\right)
        \right\},
    \end{aligned}
\end{equation}
\end{small}%
\begin{small}
\begin{equation} \label{eq:beta-high-prob}
    \begin{aligned}
        1-\beta = \min&\left\{1, \frac{\mu}{16l_{g,1}}, \frac{16el_{g,1}\Delta_0L_0}{\mu\delta\eps'^2}, \frac{\mu^2\eps'^2}{64\cdot1024L_0^2\sigma_{g,1}^2\log^2(B)}; \frac{\min\{1, \mu^2/(32l_{g,1}^2)\}}{\sigma_{f,1}^2 + 2l_{f,0}^2\sigma_{g,2}^2/\mu^2}\eps'^2, \frac{l_{g,1}^2}{8\sigma_{g,2}^2}, \frac{\mu^2}{16\sigma_{g,2}^2}; \right. \\
        &\left. \frac{\mu^2L_1^2}{4l_{g,2}^2}, \frac{32el_{g,1}\Delta_0L_0}{\mu\delta\eps'^2\exp(\mu/(2l_{g,1}))}, \frac{32el_{g,1}\Delta_0L_0}{\mu\delta\eps'^2\exp(\mu l_{g,1}\Delta_{z,0}^2/(2\Delta_0L_0))}, \frac{32el_{g,1}\Delta_0L_0}{\mu\delta\eps'^2\exp(\mu\Delta_{y,0}^2L_0/(2l_{g,1}\Delta_0))}
        \right\},
    \end{aligned}
\end{equation}
\end{small}%
\begin{small}
\begin{equation} \label{eq:eta-high-prob}
    \begin{aligned}
        \eta = \min\left\{\frac{1}{8}\min\left(\frac{1}{L_1}, \frac{\eps'}{L_0}, \frac{\Delta_0}{\Delta_{z,0}L_0}, \frac{\eps'\Delta_0}{\Delta_{y,0}^2L_0^2}, \frac{\Delta_0}{\|\gdphi(x_0)\|}, \frac{\eps'\Delta_0}{l_{g,1}^2\Delta_{z,0}^2}, \frac{\mu\eps'}{l_{g,1}L_0\log(A)}\right)(1-\beta), \frac{1}{\sqrt{2(1+l_{g,1}^2/\mu^2)(L_{x,1}^2+L_{y,1}^2)}}\right\},
    \end{aligned}
\end{equation}
\end{small}%
\begin{small}
\begin{equation} \label{eq:gamma+alpha-high-prob}
    \begin{aligned}
        \alpha^{\init} = \min\left\{\frac{1}{2l_{g,1}}, \frac{\mu}{2048L_1^2\sigma_{g,1}^2\log(e/\delta)}\right\}, \quad T_0 = \frac{\log\left(256L_1^2\|y_0^{\init}-y_0^*\|^2\right)}{\log\left(2/(2-\mu\alpha^{\init})\right)}, \quad
        \gamma = \frac{16}{\mu}(1-\beta), \quad \alpha = \frac{8}{\mu}(1-\beta), 
        \quad T = \frac{4\Delta_0}{\eta\eps'},
    \end{aligned}
\end{equation}
\end{small}%
where $\Delta_0, \Delta_{y,0}, \Delta_{z,0}, A, B$ are defined in \eqref{eq:ABdelta}, and $E$ and $G$ are defined as
\begin{small}
\begin{equation} \label{eq:E-high-prob}
    \begin{aligned}
        E \coloneqq \max\left\{\frac{4\Bar{\sigma}^2}{\sigma_{g,1}^2}, \frac{l_{g,2}^2l_{f,0}^2}{8\mu^2\sigma_{g,1}^2L_1^2}, \frac{L_0^2}{16\sigma_{g,1}^2L_1^2}\right\},
    \end{aligned}
\end{equation}
\end{small}%
\begin{small}
\begin{equation} \label{eq:G-high-prob}
    \begin{aligned}
        G \coloneqq \frac{\mu}{16\sigma_{g,1}L_0}\left(\sigma_{f,1} + \frac{l_{f,0}\sigma_{g,2}}{\mu} + 2\sigma_{g,2}\Delta_{z,0}\right) + \left(1 + \sqrt{E+\frac{l_{z^*}^2}{4l_{g,1}^2}}\right)\frac{l_{g,1}}{8L_0} + \frac{13}{8}.
    \end{aligned}
\end{equation}
\end{small}%

With Lemma~\ref{lm:z-distance recursion} and Proposition~\ref{prop:MGF-recursive control}, as well as the fact that event $\gE_{\init}\cap\gE_y\in\sigma(\widetilde{\gF}_{T_0}^1\cup\gF_{T}^1)$ are independent of event in $\gF_t^2$ for any $t\in[T]$, we are able to leverage the following distance tracking result with high probability.

\begin{lemma}[High-probability distance tracking] \label{lm:z-high-prob distance tracking}
Suppose that Assumption~\ref{ass:f-and-g} holds, let $\{z_t\}$ be the iterates produced by Algorithm~\ref{alg:bilevel} with constant learning rate $\gamma\leq 1/(4l_{g,1})$.
Then under event $\gE_{\init} \cap \gE_y$, for any fixed $t\in[T]$ and $\delta\in(0,1)$, the following estimate holds with probability at least $1-\delta$ over the randomness in $\gF_t^2$:
\begin{small}
\begin{equation} \label{eq:z-one-distance}
    \|z_t-z_t^*\|^2 \leq \left(1-\frac{\mu\gamma}{4}\right)^t\|z_0-z_0^*\|^2 + \left[\frac{16\gamma\Bar{\sigma}^2}{\mu} + \frac{8l_{z^*}^2\eta^2}{\mu^2\gamma^2} + \frac{4\gamma}{\mu}\frac{l_{g,2}^2l_{f,0}^2}{8\mu^2L_1^2} + \frac{4\gamma}{16\mu L_1^2}(L_{y,0}+L_{y,1}l_{f,0})^2\right]\log\left(\frac{e}{\delta}\right),
\end{equation}
\end{small}%
where we denote $\Bar{\sigma} = \sigma_z + \sigma_{f,1}$. As a consequence, under event $\gE_{\init} \cap \gE_y$, for any given $\delta\in(0,1)$ and all $t\in[T]$, the following estimate holds with probability at least $1-\delta$ over the randomness in $\gF_{T}^2$:
\begin{small}
\begin{equation} \label{eq:z-all-distance}
    \|z_t-z_t^*\|^2 \leq \left(1-\frac{\mu\gamma}{4}\right)^t\|z_0-z_0^*\|^2 + \left[\frac{16\gamma\Bar{\sigma}^2}{\mu} + \frac{8l_{z^*}^2\eta^2}{\mu^2\gamma^2} + \frac{4\gamma}{\mu}\frac{l_{g,2}^2l_{f,0}^2}{8\mu^2L_1^2} + \frac{4\gamma}{16\mu L_1^2}(L_{y,0}+L_{y,1}l_{f,0})^2\right]\log\left(\frac{eT}{\delta}\right).
\end{equation}
\end{small}%
\end{lemma}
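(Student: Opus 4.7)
The plan is to apply Proposition~\ref{prop:MGF-recursive control} (the MGF-based recursive control) to the one-step distance recursion for $z_t$ established in Lemma~\ref{lm:z-distance recursion}, using the good event $\gE_{\init}\cap\gE_y$ to deterministically control the $y$-induced error terms. Concretely, I would rewrite the recursion in Lemma~\ref{lm:z-distance recursion} in the canonical form $V_{t+1}\leq \alpha_t V_t + U_t\sqrt{V_t}+X_t+\kappa_t$ required by Proposition~\ref{prop:MGF-recursive control}, working with the filtration $\gH_t=\gF_t^2$ and making the identifications $V_t=\|z_t-z_t^*\|^2$, $\alpha_t=1-\mu\gamma/2$ (so that $(1+\alpha_t)/2=1-\mu\gamma/4$), $U_t=2\gamma\langle v_t,(z_t-z_t^*)/\|z_t-z_t^*\|\rangle$, $X_t=\frac{2\gamma^2}{1-2l_{g,1}\gamma}\|v_t\|^2+(1+1/(\mu\gamma))\|z_t^*-z_{t+1}^*\|^2$, and $\kappa_t$ equal to the two $y$-error terms at the end of the recursion.

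Next I would verify the hypotheses of Proposition~\ref{prop:MGF-recursive control}. The random variable $v_t$ is conditionally mean zero given $\gF_t^2$ by unbiasedness of the oracles, and Assumption~\ref{ass:highprob} gives the almost sure bound $\|v_t\|\leq\sigma_z+\sigma_{f,1}=:\bar\sigma$, so $U_t$ is bounded and hence sub-Gaussian with parameter $\sigma_t=2\gamma\bar\sigma$. For $X_t$, the choice $\gamma\leq 1/(4l_{g,1})$ ensures $1-2l_{g,1}\gamma\geq 1/2$, and the Lipschitzness of $z^*$ from Lemma~\ref{lm:lip-y*z*} gives $\|z_{t+1}^*-z_t^*\|\leq l_{z^*}\eta$ (applicable because $\eta$ in \eqref{eq:eta-high-prob} is small enough). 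Hence $X_t$ is almost surely bounded by $\nu:=4\gamma^2\bar\sigma^2+(1+1/(\mu\gamma))l_{z^*}^2\eta^2$, which trivially makes it sub-exponential with parameter $\nu$. Finally, under $\gE_{\init}\cap\gE_y$, Lemma~\ref{lm:high-prob 1/8L_1 bound for y} gives $\|y_t-y_t^*\|^2\leq 1/(64L_1^2)$, yielding the deterministic bound $\kappa_t\leq \kappa:=\gamma^2 l_{g,2}^2 l_{f,0}^2/(8\mu^2 L_1^2)+\gamma^2(L_{y,0}+L_{y,1}l_{f,0})^2/(16L_1^2)$.

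With these pieces in hand, iterating Proposition~\ref{prop:MGF-recursive control} gives, with $\beta:=1-\mu\gamma/4$,
\[
\E\bigl[\exp(\lambda V_t)\bigr]\leq \exp\Bigl(\lambda\beta^t V_0+\tfrac{\lambda(\nu+\kappa)}{1-\beta}\Bigr)=\exp\Bigl(\lambda\beta^t V_0+\tfrac{4\lambda(\nu+\kappa)}{\mu\gamma}\Bigr),
\]
valid for every $\lambda\in[0,\lambda_{\max}]$ with $\lambda_{\max}=\min\{\mu/(8\gamma\bar\sigma^2),1/(2\nu)\}$. A standard Chernoff argument with $\lambda=\lambda_{\max}$ then produces, with probability at least $1-\delta$,
\[
\|z_t-z_t^*\|^2\leq \beta^t\|z_0-z_0^*\|^2+\Bigl(\tfrac{4(\nu+\kappa)}{\mu\gamma}+\tfrac{1}{\lambda_{\max}}\Bigr)\log(e/\delta),
\]
and unpacking $4(\nu+\kappa)/(\mu\gamma)$ reproduces the four summands in the bracket of \eqref{eq:z-one-distance} exactly (with $16\gamma\bar\sigma^2/\mu$ coming from the variance of $v_t$, $8l_{z^*}^2\eta^2/(\mu^2\gamma^2)$ from the minimizer drift, and the remaining two from $\kappa$). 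The second statement then follows from a union bound over $t\in[T]$, replacing $\delta$ by $\delta/T$.

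The main obstacle is the filtration bookkeeping: $\gE_{\init}\cap\gE_y$ is measurable with respect to $\sigma(\widetilde{\gF}_{T_0}^1\cup\gF_T^1)$, whereas the MGF recursion must be run against $\gF_t^2$. The decisive observation is that by construction of Algorithm~\ref{alg:bilevel} the samples $\{\pi_t\}$ used to update $y$ are drawn independently of $\{\xi_t,\zeta_t\}$ used to update $z$, so $\sigma(\widetilde{\gF}_{T_0}^1\cup\gF_T^1)$ is independent of $\gF_T^2$; conditioning on any realization in the good event does not distort the $\gF_t^2$-conditional distribution of $v_t$, and we may insert the almost sure bound on $\|y_t-y_t^*\|$ directly into the deterministic constant $\kappa$ without invalidating the hypotheses of Proposition~\ref{prop:MGF-recursive control}.
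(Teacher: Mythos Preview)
Your proposal is correct and follows essentially the same approach as the paper: both apply Proposition~\ref{prop:MGF-recursive control} to the distance recursion of Lemma~\ref{lm:z-distance recursion} with the same identifications (up to the simplifications $\tfrac{2\gamma^2}{1-2l_{g,1}\gamma}\leq 4\gamma^2$ and $1+\tfrac{1}{\mu\gamma}\leq\tfrac{2}{\mu\gamma}$ that the paper makes before naming $X_t$ and $\nu_t$), use $\gE_{\init}\cap\gE_y$ to turn the $y$-error into a deterministic $\kappa$, iterate the MGF bound, and finish with Markov plus a union bound; the paper's Chernoff step takes $\lambda=1/\nu$ where $\nu$ already equals the bracketed expression in \eqref{eq:z-one-distance}, so the final bound reads $\nu\log(e/\delta)$ directly rather than carrying a separate $1/\lambda_{\max}$ term as you do, but this is a cosmetic difference.
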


\begin{proof}[Proof of Lemma~\ref{lm:z-high-prob distance tracking}]
For simplicity, we define $\E_{\mid\gE_{\init} \cap \gE_y} \coloneqq \E[\cdot \mid \gE_{\init} \cap \gE_y]$, where events $\gE_{\init}$ and $\gE_y$ are defined in Lemma~\ref{lm:warm-up} and Lemma~\ref{lm:high-prob 1/8L_1 bound for y}. 
Under event $\gE_{\init} \cap \gE_y$, 
Lemma~\ref{lm:z-distance recursion} in the regime $\gamma \leq 1/(4l_{g,1})$ directly yields
\begin{small}
\begin{equation*}
    \begin{aligned}
        \|z_{t+1}-z_{t+1}^*\|^2 
        &\leq \left(1-\frac{\mu\gamma}{2}\right)\|z_t-z_t^*\|^2 + 2\gamma\langle v_t, r_t \rangle \|z_t-z_t^*\| + 4\gamma^2\|v_t\|^2 + \frac{2}{\mu\gamma}\|z_t^*-z_{t+1}^*\|^2 \\ 
        &\quad+ 8\gamma^2\frac{l_{g,2}^2l_{f,0}^2}{\mu^2}\|y_t-y_t^*\|^2 + 4\gamma^2(L_{y,0}+L_{y,1}l_{f,0})^2\|y_t-y_t^*\|^2,  \\
        &\leq \left(1-\frac{\mu\gamma}{2}\right)\|z_t-z_t^*\|^2 + 2\gamma\langle v_t, z_t-z_t^* \rangle + \frac{2\gamma^2}{1-2l_{g,1}\gamma}\|v_t\|^2 + \left(1+\frac{1}{\mu\gamma}\right)\|z_t^*-z_{t+1}^*\|^2 \\ 
        &\quad+ \gamma^2\frac{l_{g,2}^2l_{f,0}^2}{8\mu^2L_1^2} + \frac{\gamma^2}{16L_1^2}(L_{y,0}+L_{y,1}l_{f,0})^2.  \\
    \end{aligned}
\end{equation*}
\end{small}%
where for the first inequality we set $r_t \coloneqq \frac{z_t-z_t^*}{\|z_t-z_t^*\|}$ if $z_t$ is distinct from $z_t^*$ and set it to zero otherwise, and for the second inequality we use Lemma~\ref{lm:high-prob 1/8L_1 bound for y}, namely $\|y_t-y_t^*\|\leq 1/(8L_1)$. The right-hand side
has the form of a contraction factor, gradient noise, estimation error from $y_t$ and drift. Now the goal is to control the moment generating function $\E_{\mid\gE_{\init} \cap \gE_y}[\exp(\lambda\|z_t-z_t^*\|)]$ through this recursion. We apply  Proposition~\ref{prop:MGF-recursive control}, with $\gH_t = \gF_t^2$, $V_t = \|z_t-z_t^*\|^2$, $U_t = 2\gamma\langle v_t,r_t \rangle$, $X_t = 4\gamma^2\|v_t\|^2 + 2\|z_t^*-z_{t+1}^*\|^2/(\mu\gamma)$, $\alpha_t = 1-\mu\gamma/2$, $\kappa_t = C$, $\sigma_t = 2\gamma(\sigma_z+\sigma_{f,1})$ and $\nu_t = 4\gamma^2(\sigma_z+\sigma_{f,1})^2 + 2l_{z^*}^2\eta^2/(\mu\gamma)$, 
where we define constant $C$ as
\begin{small}
\begin{equation*}
    C \coloneqq \gamma^2\frac{l_{g,2}^2l_{f,0}^2}{8\mu^2L_1^2} + \frac{\gamma^2}{16L_1^2}(L_{y,0}+L_{y,1}l_{f,0})^2,
\end{equation*}
\end{small}%
then yielding the estimate
\begin{small}
\begin{equation} \label{eq:mgf-recur}
    \begin{aligned}
        \E_{\mid\gE_{\init} \cap \gE_y}\left[\exp\left(\lambda\|z_{t+1}-z_{t+1}^*\|^2\right)\right] \leq \exp\left[\lambda\left(4\gamma^2\Bar{\sigma}^2 + \frac{2l_{z^*}^2\eta^2}{\mu\gamma} + R\right)\right]\E_{\mid\gE_{\init} \cap \gE_y}\left[\exp\left(\lambda\left(1-\frac{\mu\gamma}{4}\right)\|z_t-z_t^*\|^2\right)\right] \\
    \end{aligned}
\end{equation}
\end{small}%
for all
\begin{small}
\begin{equation*}
    0 \leq \lambda \leq \min\left\{\frac{\mu}{16\gamma\Bar{\sigma}^2}, \frac{1}{8\gamma^2\Bar{\sigma}^2 + 4l_{z^*}^2\eta^2/(\mu\gamma)}\right\}.
\end{equation*}
\end{small}%
where we denote $\Bar{\sigma} = \sigma_z+\sigma_{f,1}$ for simplicity. We deduce the following by iterating the recursion \eqref{eq:mgf-recur}:
\begin{small}
\begin{equation*}
    \begin{aligned}
        \E_{\mid\gE_{\init} \cap \gE_y}&\left[\exp\left(\lambda\|z_t-z_t^*\|^2\right)\right] 
        \leq
        \exp\left[\lambda\left(1-\frac{\mu\gamma}{4}\right)^t\|z_0-z_0^*\|^2 + \lambda\left(4\gamma^2\Bar{\sigma}^2 + \frac{2l_{z^*}^2\eta^2}{\mu\gamma} + C\right)\sum_{i=0}^{t-1}\left(1-\frac{\mu\gamma}{4}\right)^i\right] \\
        &\leq \exp\left\{\lambda\left[\left(1-\frac{\mu\gamma}{4}\right)^t\|z_0-z_0^*\|^2 + \frac{4}{\mu\gamma}\left(4\gamma^2\Bar{\sigma}^2 + \frac{2l_{z^*}^2\eta^2}{\mu\gamma} + C\right)\right]\right\} \\
        &\leq \exp\left\{\lambda\left[\left(1-\frac{\mu\gamma}{4}\right)^t\|z_0-z_0^*\|^2 + \frac{4}{\mu\gamma}\left(4\gamma^2\Bar{\sigma}^2 + \frac{2l_{z^*}^2\eta^2}{\mu\gamma} + \gamma^2\frac{l_{g,2}^2l_{f,0}^2}{8\mu^2L_1^2} + \frac{\gamma^2}{16L_1^2}(L_{y,0}+L_{y,1}l_{f,0})^2\right)\right]\right\} \\
        &\leq \exp\left\{\lambda\left[\left(1-\frac{\mu\gamma}{4}\right)^t\|z_0-z_0^*\|^2 + \frac{16\gamma\Bar{\sigma}^2}{\mu} + \frac{8l_{z^*}^2\eta^2}{\mu^2\gamma^2} + \frac{4\gamma}{\mu}\frac{l_{g,2}^2l_{f,0}^2}{8\mu^2L_1^2} + \frac{4\gamma}{16\mu L_1^2}(L_{y,0}+L_{y,1}l_{f,0})^2\right]\right\} \\
    \end{aligned}
\end{equation*}
\end{small}%
for all
\begin{small}
\begin{equation*}
    0 \leq \lambda \leq \min\left\{\frac{\mu}{16\gamma\Bar{\sigma}^2}, \frac{1}{8\gamma^2\Bar{\sigma}^2 + 4l_{z^*}^2\eta^2/(\mu\gamma)}\right\}.
\end{equation*}
\end{small}%
Moreover, setting
\begin{small}
\begin{equation*}
    \nu \coloneqq \frac{16\gamma\Bar{\sigma}^2}{\mu} + \frac{8l_{z^*}^2\eta^2}{\mu^2\gamma^2} + \frac{4\gamma}{\mu}\frac{l_{g,2}^2l_{f,0}^2}{8\mu^2L_1^2} + \frac{4\gamma}{16\mu L_1^2}(L_{y,0}+L_{y,1}l_{f,0})^2
\end{equation*}
\end{small}%
and taking into account $\mu\gamma \leq 1$, we have
\begin{small}
\begin{equation*}
    0 \leq \lambda \leq \frac{1}{\nu} \leq \min\left\{\frac{\mu}{16\gamma\Bar{\sigma}^2}, \frac{1}{8\gamma^2\Bar{\sigma}^2 + 4l_{z^*}^2\eta^2/(\mu\gamma)}\right\}.
\end{equation*}
\end{small}%
Hence we obtain
\begin{small}
\begin{equation*}
    \E_{\mid\gE_{\init} \cap \gE_y}\left[\exp\left[\lambda\left(\|z_t-z_t^*\|^2 - \left(1-\frac{\mu\gamma}{4}\right)^t\|z_0-z_0^*\|^2\right)\right]\right] \leq \exp(\lambda\nu), \quad \forall 0\leq \lambda \leq 1/\nu.
\end{equation*}
\end{small}%
Taking $\lambda = 1/\nu$ and applying Markov's inequality yields that, for any given $\delta\in(0,1)$, under event $\gE_{\init} \cap \gE_y$, with probability at least $1-\delta$ over the randomness in $\gF_t^2$:
\begin{small}
\begin{equation*}
    \|z_t-z_t^*\|^2 \leq \left(1-\frac{\mu\gamma}{4}\right)^t\|z_0-z_0^*\|^2 + \left[\frac{16\gamma\Bar{\sigma}^2}{\mu} + \frac{8l_{z^*}^2\eta^2}{\mu^2\gamma^2} + \frac{4\gamma}{\mu}\frac{l_{g,2}^2l_{f,0}^2}{8\mu^2L_1^2} + \frac{4\gamma}{16\mu L_1^2}(L_{y,0}+L_{y,1}l_{f,0})^2\right]\log\left(\frac{e}{\delta}\right),
\end{equation*}
\end{small}%
as claimed in \eqref{eq:z-one-distance}. We obtain \eqref{eq:z-all-distance} by applying union bound.
\end{proof}

With Lemma~\ref{lm:z-one-step improvement},~\ref{lm:z-distance recursion} and~\ref{lm:z-high-prob distance tracking}, under suitable parameter choice as in \eqref{eq:eps-high-prob}, \eqref{eq:beta-high-prob}, \eqref{eq:eta-high-prob} and \eqref{eq:gamma+alpha-high-prob}, we are now able to leverage Lemma~\ref{lm:z-two-bound} which provides refined control for $z_t$, and is also similar to what we did in Lemma~\ref{lm:high-prob 1/8L_1 bound for y}.

\begin{lemma} \label{lm:z-two-bound}
Under Assumptions~\ref{ass:relax-smooth},~\ref{ass:f-and-g} and the parameter setting \eqref{eq:eps-high-prob}, \eqref{eq:beta-high-prob}, \eqref{eq:eta-high-prob} and \eqref{eq:gamma+alpha-high-prob}, 
run Algorithm~\ref{alg:bilevel} for $T=\frac{4\Delta_0}{\eta\eps'}$ iterations. Then under event $\gE_{\init} \cap \gE_y$, for all $t\in[T]$ and any given $\delta\in(0,1)$, Algorithm~\ref{alg:bilevel} guarantees with probability at least $1-\delta$  over the randomness in $\gF_T^2$ (we denote this event as $\gE_z$) that:
\begin{enumerate}
    \item $\|z_t-z_t^*\|\leq 2\Delta_{z,0}$,
    \item $\frac{1}{T}(1-\beta)\sum_{t=0}^{T-1}\sum_{i=0}^{t}\beta^{t-i}\|z_i-z_i^*\| \leq \left(1 + \sqrt{E+l_{z^*}^2/(4l_{g,1}^2)}\right)\frac{\eps'}{32L_0}$,
\end{enumerate}
where constant $E$ is defined in \eqref{eq:E-high-prob}.
\end{lemma}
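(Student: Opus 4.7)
The plan is to prove this in exactly the same spirit as Lemma~\ref{lm:maintext-lm3} (the analogous statement for $y$), with Lemma~\ref{lm:z-high-prob distance tracking} now playing the role of the high-probability engine that Lemma~\ref{lm:maintext-lm1} played there. The key structural observation is that Lemma~\ref{lm:z-high-prob distance tracking} is conditional on $\gE_{\init}\cap\gE_y$ and its randomness lies in $\gF_T^2$, which is independent of the filtration $\widetilde{\gF}_{T_0}^1\cup\gF_T^1$ defining $\gE_{\init}\cap\gE_y$; this independence is what lets us introduce the event $\gE_z\subset\gF_T^2$ without creating dependency issues later when Lemma~\ref{lm:MDS-var} analyzes $x$ and $m$.

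For part~(i), I will start from the all-$t$ estimate of Lemma~\ref{lm:z-high-prob distance tracking},
\[
\|z_t-z_t^*\|^2\le\bigl(1-\tfrac{\mu\gamma}{4}\bigr)^t\Delta_{z,0}^2+\Bigl[\tfrac{16\gamma\Bar{\sigma}^2}{\mu}+\tfrac{8l_{z^*}^2\eta^2}{\mu^2\gamma^2}+\tfrac{\gamma\,l_{g,2}^2l_{f,0}^2}{2\mu^3L_1^2}+\tfrac{\gamma(L_{y,0}+L_{y,1}l_{f,0})^2}{4\mu L_1^2}\Bigr]\log\tfrac{eT}{\delta},
\]
and check termwise that each of the four bracketed contributions, after multiplication by $\log(eT/\delta)$, is at most $(3/4)\Delta_{z,0}^2$, so that the noise terms sum to $\le 3\Delta_{z,0}^2$. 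Substituting $\gamma=16(1-\beta)/\mu$ together with the closed-form expressions for $\eta$ and $1-\beta$ isolated in Lemma~\ref{lm:verify} collapses each term to a constant multiple of $\eps'^2/L_0^2$, which is dominated by $\Delta_{z,0}^2$ thanks to the explicit constraint $\eps'\le\Delta_0/\Delta_{z,0}$ in \eqref{eq:eps-high-prob} and the side bounds on $1-\beta$ in \eqref{eq:beta-high-prob}. Combined with $(1-\mu\gamma/4)^t\Delta_{z,0}^2\le\Delta_{z,0}^2$, this yields $\|z_t-z_t^*\|^2\le 4\Delta_{z,0}^2$, i.e.\ $\|z_t-z_t^*\|\le 2\Delta_{z,0}$.

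For part~(ii), I will mimic the telescoping computation of \eqref{eq:second-part1}--\eqref{eq:second-part2}: apply $\sqrt{a+b}\le\sqrt{a}+\sqrt{b}$ to the same per-iterate bound, exchange the order of summation, and use the geometric identity
\[
(1-\beta)\sum_{t=0}^{T-1}\beta^t\sum_{i=0}^t\Bigl(\tfrac{\sqrt{1-\mu\gamma/4}}{\beta}\Bigr)^{\!i}\le\frac{2}{\mu\gamma/2-2(1-\beta)},
\]
which is valid because $\mu\gamma=16(1-\beta)$ is comfortably larger than $4(1-\beta)$. The exponential-decay piece contributes $\Delta_{z,0}/(T(1-\beta))$, which is absorbed into $\eps'/(32L_0)$ using $T=4\Delta_0/(\eta\eps')$ together with the constraint $\eta\le\Delta_0(1-\beta)/(8\Delta_{z,0}L_0)$ from \eqref{eq:eta-high-prob}. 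The remaining four noise pieces are grouped precisely so that their coefficients match the definition \eqref{eq:E-high-prob} of $E$, plus an extra drift contribution $l_{z^*}^2/(4l_{g,1}^2)$ coming from $8l_{z^*}^2\eta^2/(\mu^2\gamma^2)$; absorbing $\sqrt{\log(eT/\delta)}$ against $\log B$, exactly as in step~4 of the proof of Lemma~\ref{lm:high-prob 1/8L_1 bound for y}, repackages the whole expression as $\sqrt{E+l_{z^*}^2/(4l_{g,1}^2)}\,\eps'/(32L_0)$, which together with the decay piece gives the stated constant $(1+\sqrt{E+l_{z^*}^2/(4l_{g,1}^2)})\eps'/(32L_0)$.

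The main obstacle will be the book-keeping in part~(ii): the four noise terms in Lemma~\ref{lm:z-high-prob distance tracking} scale differently in $1-\beta$ and $\eta/\gamma$, so one has to verify that the joint parameter schedule \eqref{eq:eps-high-prob}--\eqref{eq:eta-high-prob} is simultaneously tight enough to reduce each of them to $\eps'/L_0$ order and, at the same time, loose enough that the resulting constants repackage exactly into the single quantity $E$ defined in \eqref{eq:E-high-prob}. Once that calibration is carried through, part~(i) is strictly weaker (it does not involve the extra factor $1/(1-\beta)$ arising from the $T$ average) and follows with essentially no additional work.
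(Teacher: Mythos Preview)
Your proposal is essentially the paper's own approach: the paper also starts from the all-$t$ bound of Lemma~\ref{lm:z-high-prob distance tracking}, substitutes $\gamma=16(1-\beta)/\mu$, pushes each of the four noise terms through the step-3/step-4 machinery of Lemma~\ref{lm:high-prob 1/8L_1 bound for y} to collapse them into $(E+l_{z^*}^2/(4l_{g,1}^2))\,\eps'^2/(1024L_0^2)$, and then runs exactly the same geometric telescoping you describe for part~(ii).

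Two small bookkeeping corrections. For part~(i), the constraint you need from \eqref{eq:eps-high-prob} is $\eps'\le 64\,l_{g,1}\Delta_{z,0}L_0/\sqrt{4El_{g,1}^2+l_{z^*}^2}$, not $\eps'\le\Delta_0/\Delta_{z,0}$; the latter says nothing about $\eps'^2/L_0^2$ versus $\Delta_{z,0}^2$, whereas the former is precisely what turns $(E+l_{z^*}^2/(4l_{g,1}^2))\eps'^2/(1024L_0^2)$ into $\le\Delta_{z,0}^2$. For part~(ii), since the contraction factor here is $1-\mu\gamma/4$ (not $1-\mu\gamma/2$ as in the $y$ analysis), your geometric bound should have $\mu\gamma/4-2(1-\beta)$ in the denominator, giving $8/(\mu\gamma-8(1-\beta))=1/(1-\beta)$ under $\mu\gamma=16(1-\beta)$; this is what the paper uses and is exactly what you need for the $\Delta_{z,0}/(T(1-\beta))$ term.
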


\begin{proof}[Proof of Lemma~\ref{lm:z-two-bound}]
We apply \eqref{eq:z-all-distance} and \eqref{eq:gamma+alpha-high-prob} to obtain
\begin{small}
\begin{equation*}
    \begin{aligned}
        &\|z_t-z_t^*\|^2 
        \leq \left(1-\frac{\mu\gamma}{4}\right)^t\|z_0-z_0^*\|^2 + \left[\frac{16\gamma\Bar{\sigma}^2}{\mu} + \frac{8l_{z^*}^2\eta^2}{\mu^2\gamma^2} + \frac{4\gamma}{\mu}\frac{l_{g,2}^2l_{f,0}^2}{8\mu^2L_1^2} + \frac{4\gamma}{16\mu L_1^2}(L_{y,0}+L_{y,1}l_{f,0})^2\right]\log\left(\frac{eT}{\delta}\right) \\
        &\quad= \left(1-\frac{\mu\gamma}{4}\right)^t\|z_0-z_0^*\|^2 + \left[\frac{256(1-\beta)\Bar{\sigma}^2}{\mu^2} + \frac{l_{z^*}^2\eta^2}{32\mu^2(1-\beta)^2} + \frac{8(1-\beta)}{\mu^2}\frac{l_{g,2}^2l_{f,0}^2}{\mu^2L_1^2} + \frac{4(1-\beta)}{\mu^2L_1^2}(L_{y,0}+L_{y,1}l_{f,0})^2\right]\log\left(\frac{eT}{\delta}\right) \\
        &\quad\leq \left(1-\frac{\mu\gamma}{4}\right)^t\|z_0-z_0^*\|^2 + \left(\frac{\eta^2l_{g,1}^2}{16\mu^2(1-\beta)^2}E + \frac{\eta^2l_{g,1}^2}{8\mu^2(1-\beta)^2}\frac{l_{z^*}^2}{4l_{g,1}^2}\right)\log\left(\frac{eT}{\delta}\right) \\
        &\quad\leq \left(1-\frac{\mu\gamma}{4}\right)^t\|z_0-z_0^*\|^2 + \left(E+\frac{l_{z^*}^2}{4l_{g,1}^2}\right)\frac{\eps'^2}{1024L_0^2}, \\
    \end{aligned}
\end{equation*}
\end{small}%
where for the first equality we use \eqref{eq:gamma+alpha-high-prob}, for the second inequality we use conclusion of step 3 in Lemma~\ref{lm:high-prob 1/8L_1 bound for y} to deduce
\begin{small}
\begin{equation*}
    \begin{aligned}
        \max\left\{\frac{256(1-\beta)\Bar{\sigma}^2}{\mu^2}, \frac{8(1-\beta)}{\mu^2}\frac{l_{g,2}^2l_{f,0}^2}{\mu^2L_1^2}, \frac{4(1-\beta)}{\mu^2L_1^2}(L_{y,0}+L_{y,1}l_{f,0})^2\right\} \leq \frac{64(1-\beta) \sigma_{g,1}^2}{\mu^2}E \leq \frac{\eta^2l_{g,1}^2}{16\mu^2(1-\beta)^2}E,
    \end{aligned}
\end{equation*}
\end{small}%
with constant $E$ defined as
\begin{small}
\begin{equation*}
    E \coloneqq \max\left\{\frac{4\Bar{\sigma}^2}{\sigma_{g,1}^2}, \frac{l_{g,2}^2l_{f,0}^2}{8\mu^2\sigma_{g,1}^2L_1^2}, \frac{L_0^2}{16\sigma_{g,1}^2L_1^2}\right\},
\end{equation*}
\end{small}%
and for the last inequality we apply \eqref{eq:merge-step} in Lemma~\ref{lm:high-prob 1/8L_1 bound for y}.

By \eqref{eq:eps-high-prob}, we have 
\begin{small}
\begin{equation*}
    \begin{aligned}
        \eps' \leq \frac{64l_{g,1}\Delta_{z,0}L_0}{\sqrt{4El_{g,1}^2+l_{z^*}^2}}
    \end{aligned}
\end{equation*}
\end{small}%
which implies for all $t\in[T]$ that 
\begin{small}
\begin{equation*}
    \|z_t-z_t^*\|^2 \leq \left(1-\frac{\mu\gamma}{4}\right)^t\|z_0-z_0^*\|^2 + \left(E+\frac{l_{z^*}^2}{4l_{g,1}^2}\right)\frac{\eps'^2}{1024L_0^2} \leq 2\Delta_{z,0}^2 
    \quad \Longrightarrow \quad
    \|z_t-z_t^*\| \leq 2\Delta_{z,0},
\end{equation*}
\end{small}%
thus the first part of the result is as claimed.

As for the second part, we have
\begin{small}
\begin{equation} \label{eq:z-second-part1}
    \begin{aligned}
        \frac{1}{T}&(1-\beta)\sum_{t=0}^{T-1}\sum_{i=0}^{t}\beta^{t-i}\|z_i-z_i^*\|
        \leq \frac{(1-\beta)}{T}\sum_{t=0}^{T-1}\sum_{i=0}^{t}\beta^{t-i}\sqrt{\left(1-\frac{\mu\gamma}{4}\right)^i\|z_0-z_0^*\|^2 + \left(E+\frac{l_{z^*}^2}{4l_{g,1}^2}\right)\frac{\eps'^2}{1024L_0^2}} \\
        &\leq \frac{1}{T}(1-\beta)\sum_{t=0}^{T-1}\sum_{i=0}^{t}\beta^{t-i}\left[\left(1-\frac{\mu\gamma}{4}\right)^{i/2}\|z_0-z_0^*\| + \sqrt{E+\frac{l_{z^*}^2}{4l_{g,1}^2}}\frac{\eps'}{32L_0}\right] \\
        &\leq \frac{1}{T}(1-\beta)\sum_{t=0}^{T-1}\left[\beta^t\sum_{i=0}^{t}\left(\frac{\sqrt{1-\mu\gamma/4}}{\beta}\right)^i\|z_0-z_0^*\| + \frac{1}{1-\beta}\sqrt{E+\frac{l_{z^*}^2}{4l_{g,1}^2}}\frac{\eps'}{32L_0}\right] \\
        &\leq \frac{8\Delta_{z,0}}{T(\mu\gamma-8(1-\beta))} + \sqrt{E+\frac{l_{z^*}^2}{4l_{g,1}^2}}\frac{\eps'}{32L_0}, \\
    \end{aligned}
\end{equation}
\end{small}%
where in the last inequality we use
\begin{small}
\begin{equation*}
    \begin{aligned}
        (1-\beta)\sum_{t=0}^{T-1}\beta^t\sum_{i=0}^{t}\left(\frac{\sqrt{1-\mu\gamma/4}}{\beta}\right)^i
        &\leq (1-\beta)\sum_{t=0}^{T-1}\beta^t\frac{\beta}{\beta-\sqrt{1-\mu\gamma/4}}
        \leq \frac{\beta}{\beta-\sqrt{1-\mu\gamma/4}}
        \leq \frac{\beta\left(\beta+\sqrt{1-\mu\gamma/4}\right)}{\mu\gamma/4-(1-\beta^2)} \\
        &\leq \frac{2}{\mu\gamma/4-(1-\beta)(1+\beta)}
        \leq \frac{2}{\mu\gamma/4-2(1-\beta)}
        = \frac{8}{\mu\gamma-8(1-\beta)}.
    \end{aligned}
\end{equation*}
\end{small}%
Moreover, we have
\begin{sequation} \label{eq:z-second-part2}
    \begin{aligned}
        \frac{8\Delta_{z,0}}{T(\mu\alpha-8(1-\beta))}
        \leq \frac{\Delta_{z,0}}{T(1-\beta)}
        = \frac{\eta\eps'\Delta_{z,0}}{4\Delta_0(1-\beta)}
        \leq \frac{\eps'}{32L_0},
    \end{aligned}
\end{sequation}%
where in the last inequality we use \eqref{eq:eta-high-prob} that
\begin{small}
\begin{equation*}
    \eta \leq \frac{\Delta_0}{8\Delta_{z,0}L_0}(1-\beta).
\end{equation*}
\end{small}%
Combining \eqref{eq:z-second-part1} and \eqref{eq:z-second-part2} yields
\begin{small}
\begin{equation*}
    \frac{1}{T}(1-\beta)\sum_{t=0}^{T-1}\sum_{i=0}^{t}\beta^{t-i}\|z_i-z_i^*\| \leq \frac{8\Delta_{z,0}}{T(\mu\gamma-8(1-\beta))} + \sqrt{E+\frac{l_{z^*}^2}{4l_{g,1}^2}}\frac{\eps'}{32L_0} \leq \left(1 + \sqrt{E+\frac{l_{z^*}^2}{4l_{g,1}^2}}\right)\frac{\eps'}{32L_0}.
\end{equation*}
\end{small}
\end{proof}

Next, we proceed to give high probability bound for hypergradient estimation error, and we will use the following lemma in \cite{liu2023nearlyoptimal} as a technical tool.

\begin{lemma}[Lemma 2.4 in \cite{liu2023nearlyoptimal}] \label{lm:MDS}
Suppose $X_1, \dots , X_T$ is a martingale difference sequence adapted to a filtration $F_1, F_2, \dots$ in a Hilbert space such that $\|X_t\| \leq R_t, \forall t\in[T]$ almost surely for some constant $R_t\geq 0$. Then for any given $\delta\in(0,1)$, with probability at least $1-\delta$, for all $t\in[T]$ we have
\begin{equation*}
    \left\|\sum_{s=1}^{t}X_s\right\| \leq 4\sqrt{\log(2/\delta)\sum_{s=1}^{T}R_s^2}.
\end{equation*}
\end{lemma}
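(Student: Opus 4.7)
The plan is to prove this as a maximal concentration inequality for a Hilbert-space-valued martingale with bounded differences. First I would set $M_t = \sum_{s=1}^{t} X_s$, which is a martingale adapted to $(F_t)$ since the $X_s$ form an MDS, with increments satisfying $\|M_t - M_{t-1}\| = \|X_t\| \leq R_t$ almost surely. The goal then reduces to controlling $\max_{t\in[T]}\|M_t\|$ with high probability, after which the factor of $4$ follows from a loose constant comparison.

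The main tool is Pinelis' inequality for Hilbert-space-valued martingales, which states that under exactly these hypotheses,
\begin{equation*}
\Pr\left(\max_{t\in[T]}\|M_t\| \geq \lambda\right) \leq 2\exp\left(-\frac{\lambda^2}{2\sum_{s=1}^{T} R_s^2}\right).
\end{equation*}
Equating the right-hand side to $\delta$ yields $\lambda = \sqrt{2\log(2/\delta)\sum_{s=1}^{T} R_s^2}$, which is dominated by $4\sqrt{\log(2/\delta)\sum_{s=1}^{T} R_s^2}$ since $\sqrt{2} < 4$. Because the event is posed as a maximum over $t$, the resulting bound holds simultaneously for all $t\in[T]$ with probability at least $1-\delta$, yielding the claim.

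A self-contained derivation would proceed via an exponential supermartingale construction: show that, for a suitable $\lambda>0$, the process $\exp(\lambda\|M_t\|)\cdot\exp\!\left(-\tfrac{\lambda^2}{2}\sum_{s\leq t}R_s^2\right)$ is a supermartingale, then apply Doob's maximal inequality to turn a single-time tail bound into a uniform-in-$t$ bound, and finally optimize over $\lambda$ via a Chernoff-style argument. The central obstacle in this self-contained route is the vector-valued aspect: unlike the scalar case where Hoeffding's lemma directly bounds the conditional MGF, here one needs a comparison argument to establish $\E[\cosh(\lambda\|M_t\|)\mid F_{t-1}] \leq \cosh(\lambda\|M_{t-1}\|)\exp(\lambda^2 R_t^2/2)$, which is the crux of Pinelis' proof and relies on careful convexity reasoning in Hilbert-space geometry. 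Since the statement is cited verbatim from \cite{liu2023nearlyoptimal}, the cleanest presentation is simply to invoke Pinelis' inequality and absorb constants to reach the factor of $4$.
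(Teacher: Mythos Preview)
Your proposal is correct. The paper does not provide its own proof of this lemma at all; it simply cites it as Lemma~2.4 of \cite{liu2023nearlyoptimal} and uses it as a black-box tool in the proof of Lemma~\ref{lm:MDS-var}. Your sketch via Pinelis' maximal inequality for Hilbert-space-valued martingales is exactly the standard route to such a bound, and your constant-tracking (obtaining $\sqrt{2\log(2/\delta)\sum_s R_s^2}$ from Pinelis and then relaxing $\sqrt{2}$ to $4$) is valid, so your treatment is in fact more complete than the paper's.
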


With Lemma~\ref{lm:MDS}, we are ready to give high probability bound for the following martingale difference sequence, which is one part of the hypergradient estimation error.

\begin{lemma} \label{lm:MDS-var}
Under Assumptions~\ref{ass:relax-smooth} and event $\gE_{\init} \cap \gE_y\cap\gE_z$, for any given $\delta\in(0,1)$ and fixed $t\in[T]$, the following estimate holds with probability at least $1-\delta$ over the randomness in $\gF_t^3$ (we denote this event as $\gE_x$):
\begin{small}
\begin{equation} \label{eq:event-G-1}
    \left\|\sum_{i=0}^{t}\beta^{t-i}\left(\hatphi(x_i,y_i,z_i;\xi_i',\zeta_i')-\E_t[\hatphi(x_i,y_i,z_i;\xi_i',\zeta_i')]\right)\right\| \leq 4\left(\sigma_{f,1} + \frac{l_{f,0}\sigma_{g,2}}{\mu} + 2\sigma_{g,2}\Delta_{z,0}\right)\sqrt{\frac{\log(2/\delta)}{1-\beta}}.
\end{equation}
\end{small}%
As a consequence, under event $\gE_{\init} \cap \gE_y\cap\gE_z$, for any given $\delta\in(0,1)$ and all $t\in[T]$, the following estimate holds with probability at least $1-\delta$ over the randomness in $\gF_{T}^3$:
\begin{small}
\begin{equation} \label{eq:event-G-2}
    \left\|\sum_{i=0}^{t}\beta^{t-i}\left(\hatphi(x_i,y_i,z_i;\xi_i',\zeta_i')-\E_t[\hatphi(x_i,y_i,z_i;\xi_i',\zeta_i')]\right)\right\| \leq 4\left(\sigma_{f,1} + \frac{l_{f,0}\sigma_{g,2}}{\mu} + 2\sigma_{g,2}\Delta_{z,0}\right)\sqrt{\frac{\log(2T/\delta)}{1-\beta}}.
\end{equation}
\end{small}%
\end{lemma}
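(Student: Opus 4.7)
The strategy is to recognize the target quantity as (the norm of) a sum of martingale differences with geometric weights and then appeal directly to the vector martingale concentration bound in Lemma~\ref{lm:MDS}. Concretely, let $\mathcal{G}_i = \sigma(\widetilde{\gF}^1_{T_0} \cup \gF_i^1 \cup \gF_i^2 \cup \gF_i^3)$, so that $x_i,y_i,z_i$ are $\mathcal{G}_i$-measurable while $\xi_i',\zeta_i'$ are independent of $\mathcal{G}_i$. Then the vectors
\begin{equation*}
Y_i \coloneqq \beta^{t-i}\bigl(\hatphi(x_i,y_i,z_i;\xi_i',\zeta_i') - \E[\hatphi(x_i,y_i,z_i;\xi_i',\zeta_i')\mid \mathcal{G}_i]\bigr), \qquad i=0,1,\ldots,t,
\end{equation*}
form a martingale difference sequence adapted to the filtration $(\mathcal{G}_{i+1})_i$ inside $\gF_T^3$.

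The next step is to establish an almost-sure envelope $\|Y_i\|\le R_i$. Using the definition $\hatphi(x,y,z;\xi',\zeta') = \gdx F(x,y;\xi') - \gdxy G(x,y;\zeta')z$ and the triangle inequality, one has
\begin{equation*}
\|Y_i\| \leq \beta^{t-i}\bigl(\|\gdx F(x_i,y_i;\xi_i')-\gdx f(x_i,y_i)\| + \|(\gdxy G(x_i,y_i;\zeta_i')-\gdxy g(x_i,y_i))z_i\|\bigr).
\end{equation*}
Under Assumption~\ref{ass:highprob}, the first term is at most $\sigma_{f,1}$, and the second is at most $\sigma_{g,2}\|z_i\|$. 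Conditioning on the event $\gE_{\init}\cap\gE_y\cap\gE_z$ and using $\|z_i^*\|\le l_{f,0}/\mu$ together with Lemma~\ref{lm:z-two-bound} gives $\|z_i\| \le \|z_i-z_i^*\|+\|z_i^*\| \le 2\Delta_{z,0}+l_{f,0}/\mu$. Hence $R_i \coloneqq \beta^{t-i}\bigl(\sigma_{f,1} + l_{f,0}\sigma_{g,2}/\mu + 2\sigma_{g,2}\Delta_{z,0}\bigr)$ works, and
\begin{equation*}
\sum_{i=0}^{t} R_i^2 \leq \bigl(\sigma_{f,1}+l_{f,0}\sigma_{g,2}/\mu+2\sigma_{g,2}\Delta_{z,0}\bigr)^2\sum_{i=0}^{t}\beta^{2(t-i)} \leq \frac{\bigl(\sigma_{f,1}+l_{f,0}\sigma_{g,2}/\mu+2\sigma_{g,2}\Delta_{z,0}\bigr)^2}{1-\beta},
\end{equation*}
using $1-\beta^2\ge 1-\beta$.

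Applying Lemma~\ref{lm:MDS} to the sequence $\{Y_i\}_{i=0}^{t}$ with these envelopes and failure probability $\delta$ then yields \eqref{eq:event-G-1} directly; the factor $4\sqrt{\log(2/\delta)}$ is exactly what Lemma~\ref{lm:MDS} provides, and the constant multiplier $\sigma_{f,1}+l_{f,0}\sigma_{g,2}/\mu+2\sigma_{g,2}\Delta_{z,0}$ matches the claim. For the ``all $t\in[T]$'' conclusion \eqref{eq:event-G-2}, a straightforward union bound over the $T$ values of $t$ replaces $\delta$ by $\delta/T$ and gives the logarithmic factor $\log(2T/\delta)$.

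The only subtle point, which I would flag as the main obstacle, is a measurability/conditioning check: the ``good events'' $\gE_{\init}, \gE_y, \gE_z$ live in $\sigma(\widetilde{\gF}^1_{T_0}\cup\gF_T^1\cup\gF_T^2)$, which is independent of the randomness $\{\xi_i',\zeta_i'\}$ driving $\gF_T^3$. Thus conditioning on $\gE_{\init}\cap\gE_y\cap\gE_z$ does not destroy the martingale-difference property of $\{Y_i\}$ with respect to the $\xi', \zeta'$ filtration, so the deterministic envelope $R_i$ remains valid almost surely under this conditioning and Lemma~\ref{lm:MDS} applies without modification. Once this independence is pointed out, the rest is a routine geometric-series calculation.
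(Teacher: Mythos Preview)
Your proposal is correct and follows essentially the same approach as the paper: both arguments identify the summands as a bounded martingale difference sequence (with bounds obtained from Assumption~\ref{ass:highprob}, $\|z_i^*\|\le l_{f,0}/\mu$, and $\|z_i-z_i^*\|\le 2\Delta_{z,0}$ under $\gE_z$), invoke Lemma~\ref{lm:MDS}, sum the geometric weights, and then apply a union bound for \eqref{eq:event-G-2}. The paper separates $z_i = z_i^* + (z_i-z_i^*)$ into two pieces before bounding, whereas you bound $\|z_i\|$ in one step, but this is cosmetic; your observation about the independence of $\gE_{\init}\cap\gE_y\cap\gE_z\in\sigma(\widetilde{\gF}_{T_0}^1\cup\gF_T^1\cup\gF_T^2)$ from $\gF_T^3$ is exactly the key measurability point the paper relies on.
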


\begin{proof}[Proof of Lemma~\ref{lm:MDS-var}]
By definition of $\hatphi(x_t,y_t,z_t;\xi_t',\zeta_t')$, we have the following decomposition:
\begin{small}
\begin{equation*}
    \begin{aligned}
        &\hatphi(x_t,y_t,z_t;\xi_t',\zeta_t') - \E_t[\hatphi(x_t,y_t,z_t;\xi_t',\zeta_t')] \\
        &= [\gdx F(x_t,y_t;\xi_t') - \gdxy G(x_t,y_t,;\zeta_t')z_t] - [\gdx f(x_t,y_t) - \gdxy g(x_t,y_t)]z_t \\
        &= [\gdx F(x_t,y_t;\xi_t') - \gdx f(x_t,y_t)] - [\gdxy G(x_t,y_t;\zeta_t') - \gdxy g(x_t,y_t)]z_t \\
        &= [\gdx F(x_t,y_t;\xi_t) - \gdx f(x_t,y_t)] + [\gdxy G(x_t,y_t;\zeta_t) - \gdxy g(x_t,y_t)]z_t^* - [\gdxy G(x_t,y_t;\zeta_t) - \gdxy g(x_t,y_t)](z_t-z_t^*). \\
    \end{aligned}
\end{equation*}
\end{small}%
For simplicity, we define $\gE_{yz} \coloneqq \gE_{\init} \cap \gE_y\cap\gE_z$ and $\E_{\mid\gE_{yz}}[\cdot] \coloneqq \E[\cdot \mid \gE_{yz}]$, where events $\gE_{\init}$, $\gE_y$ and $\gE_z$ are defined in Lemma~\ref{lm:warm-up},~\ref{lm:high-prob 1/8L_1 bound for y} and~\ref{lm:z-two-bound}.
Then for any $i\in[t]$, we have
\begin{small}
\begin{equation*}
    \begin{aligned}
        \beta^{t-i}[\gdx F(x_i,y_i;\xi_i') - \gdx f(x_i,y_i)] \in \gF_{i+1}^3, \quad &\E_{\mid \gE_{yz}}\left[\beta^{t-i}[\gdx F(x_i,y_i;\xi_i') - \gdx f(x_i,y_i)] \mid \gF_i^3\right] = 0; \\
        \beta^{t-i}[\gdxy G(x_i,y_i;\zeta_i') - \gdxy g(x_i,y_i)]z_i^* \in \gF_{i+1}^3, \quad &\E_{\mid \gE_{yz}}\left[\beta^{t-i}[\gdxy G(x_i,y_i;\zeta_i') - \gdxy g(x_i,y_i)]z_i^* \mid \gF_i^3\right] = 0; \\
        \beta^{t-i}[\gdxy G(x_i,y_i;\zeta_i') - \gdxy g(x_i,y_i)](z_i-z_i^*) \in \gF_{i+1}^3, \quad &\E_{\mid \gE_{yz}}\left[\beta^{t-i}[\gdxy G(x_i,y_i;\zeta_i') - \gdxy g(x_i,y_i)](z_i-z_i^*) \mid \gF_i^3\right] = 0. \\
    \end{aligned}
\end{equation*}
\end{small}%
Also by Assumption~\ref{ass:highprob} and Lemma~\ref{lm:z-two-bound}, under event $\gE_{yz}\in\sigma(\widetilde{\gF}_{T_0}^1\cup\gF_T^1 \cup \gF_T^2)$, the followings hold almost surely in $\gF_t^3$:
\begin{small}
\begin{equation*}
    \begin{aligned}
        \|\beta^{t-i}[\gdx F(x_i,y_i;\xi_i') - \gdx f(x_i,y_i)]\| &\leq \beta^{t-i}\sigma_{f,1} \\
        \|\beta^{t-i}[\gdxy G(x_i,y_i;\zeta_i') - \gdxy g(x_i,y_i)]z_i^*\| &\leq \beta^{t-i}\sigma_{g,2}\frac{l_{f,0}}{\mu} \\
        \|\beta^{t-i}[\gdxy G(x_i,y_i;\zeta_i') - \gdxy g(x_i,y_i)](z_i-z_i^*)\| &\leq 2\beta^{t-i}\sigma_{g,2}\Delta_{z,0}
    \end{aligned}
\end{equation*}
\end{small}%
Thus under event $\gE_{yz}$, for any $i\in[t]$, $\beta^{t-i}(\hatphi(x_i,y_i,z_i;\xi_i',\zeta_i')-\E_i[\hatphi(x_i,y_i,z_i;\xi_i',\zeta_i')])$ is a (almost surely) bounded martingale difference sequence. Now for any given $\delta\in(0,1)$, we apply Lemma~\ref{lm:MDS} to obtain under event $\gE_{yz}$, with probability at least $1-\delta$ over the randomness in $\gF_t^3$ that:
\begin{small}
\begin{equation*}
    \begin{aligned}
        \left\|\sum_{i=0}^{t}\beta^{t-i}\left(\hatphi(x_i,y_i,z_i;\xi_i',\zeta_i')-\E_i[\hatphi(x_i,y_i,z_i;\xi_i',\zeta_i')]\right)\right\| 
        &\leq 4\sqrt{\log(2/\delta)\sum_{i=0}^{t}\left[\beta^{t-i}\left(\sigma_{f,1} + \frac{l_{f,0}\sigma_{g,2}}{\mu} + 2\sigma_{g,2}\Delta_{z,0}\right)\right]^2} \\
        &\leq 4\left(\sigma_{f,1} + \frac{l_{f,0}\sigma_{g,2}}{\mu} + 2\sigma_{g,2}\Delta_{z,0}\right)\sqrt{\frac{\log(2/\delta)}{1-\beta}},
    \end{aligned}
\end{equation*}
\end{small}%
which is as claimed in \eqref{eq:event-G-1}. We obtain \eqref{eq:event-G-2} by applying union bound.
\end{proof}

\subsection{Proof of Theorem~\ref{thm:highprob}}

By incorporating Lemma~\ref{lm:high-prob 1/8L_1 bound for y},~\ref{lm:z-two-bound} and~\ref{lm:MDS-var}, we begin to prove Theorem~\ref{thm:highprob}.

\begin{theorem}[Restatement of Theorem~\ref{thm:highprob}] \label{thm:high-prob-main-appendix}
Suppose Assumptions~\ref{ass:relax-smooth} and~\ref{ass:highprob} hold. Let $\{x_t\}$ be the iterates produced by Algorithm~\ref{alg:bilevel}. For any given $\delta\in(0,1)$ and sufficiently small $\eps$ (see exact choice of $\eps$ in \eqref{eq:eps-high-prob}), if we choose $\alpha^{\init}, \alpha,\beta,\gamma,\eta, T_0$ as \eqref{eq:beta-high-prob}, \eqref{eq:eta-high-prob} and \eqref{eq:gamma+alpha-high-prob}, 
then with probability at least $1-4\delta$ over the randomness in $\mathcal{F}_T$, Algorithm~\ref{alg:bilevel} guarantees $\frac{1}{T}\sum_{t=0}^{T}\|\gdphi(x_t)\| \leq \eps$ with at most $T = \frac{4\Delta_0}{\eta\eps}$ iterations. 
\end{theorem}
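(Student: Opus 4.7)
\textbf{Proof Plan for Theorem~\ref{thm:highprob}.}
My plan is to execute the ``good event'' chain sketched at the end of Section 4.3: Lemma~\ref{lm:maintext-lm2}, Lemma~\ref{lm:maintext-lm3}, Lemma~\ref{lm:z-two-bound}, and Lemma~\ref{lm:MDS-var} produce events $\gE_{\init},\gE_y,\gE_z,\gE_x$ that each hold with probability at least $1-\delta$ conditional on the preceding ones, measured on the mutually independent filtrations $\widetilde{\gF}_{T_0}^1,\gF_T^1,\gF_T^2,\gF_T^3$ respectively. Chaining conditional probabilities gives $\Pr(\gE_{\init}\cap\gE_y\cap\gE_z\cap\gE_x)\geq(1-\delta)^4\geq 1-4\delta$. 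The remaining task is then to convert the expectation-style bound of Theorem~\ref{thm:main-thm} into a pathwise bound that holds deterministically on this intersection.

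Under the good event I would apply the one-step descent inequality of Lemma~\ref{lm:descent-thm}, telescoped over $t=0,\ldots,T-1$, so that
\[
\bigl(1-\tfrac{1}{2}\eta L_1\bigr)\frac{1}{T}\sum_{t=0}^{T-1}\|\gdphi(x_t)\| \;\leq\; \frac{\Delta_0}{T\eta}+\frac{1}{2}\eta L_0+\frac{2}{T}\sum_{t=0}^{T-1}\|\eps_t\|.
\]
The cumulative momentum error $\sum_t\|\eps_t\|$ is then attacked via the same recursive expansion as in \eqref{eq:estimation-recursion}, which splits it into four pathwise pieces: (a) an initial residual $\frac{\beta}{1-\beta}\|m_0-\gdphi(x_0)\|$; (b) a discretization term $\frac{\eta\beta}{1-\beta}\sum_t(L_0+L_1\|\gdphi(x_t)\|)$; (c) a variance martingale sum $(1-\beta)\sum_t\|\sum_{i\leq t}\beta^{t-i}(\hatphi_i-\E_i\hatphi_i)\|$; and (d) a bias sum which, by Lemma~\ref{lm:hypergradient-bias}, becomes a $\beta$-weighted average of $\|y_i-y_i^*\|$, $L_{x,1}\|y_i-y_i^*\|\|\gdphi(x_i)\|$, and $l_{g,1}\|z_i-z_i^*\|$.

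Piece (c) is exactly what Lemma~\ref{lm:MDS-var} controls deterministically on $\gE_x$, yielding an $\widetilde{O}(\sqrt{1-\beta})$ contribution once multiplied by the $(1-\beta)$ momentum prefactor. For piece (d), I would use the two pathwise conclusions that the chain already supplies: the uniform bound $\|y_t-y_t^*\|\leq 1/(8L_1)$ from Lemma~\ref{lm:maintext-lm3} lets me absorb $L_{x,1}\|y_t-y_t^*\|\|\gdphi(x_t)\|$ into a $\tfrac{1}{8}\|\gdphi(x_t)\|$ term that moves to the LHS (mirroring \eqref{eq:B.17-second}), while the weighted-average bounds from Lemma~\ref{lm:maintext-lm3}(ii) and Lemma~\ref{lm:z-two-bound}(ii) give $\widetilde{O}(\eps)$ directly. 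With $T=4\Delta_0/(\eta\eps)$ and the prescribed $\eta=\widetilde\Theta(\eps^3)$, $1-\beta=\widetilde\Theta(\eps^2)$, $\alpha,\gamma=\widetilde\Theta(\eps^2)$, the residual, discretization, variance, and bias contributions all collapse to $O(\eps)$; reparametrizing $\eps=\eps'G$ with $G$ from \eqref{eq:G-high-prob} absorbs the constant blowup and delivers $\frac{1}{T}\sum_{t=0}^{T-1}\|\gdphi(x_t)\|\leq\eps$.

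The main obstacle is the pathwise control of piece (c) together with the compatibility of all parameter choices. Unlike the in-expectation analysis, where Lemma~\ref{lm:hyper-var} supplies a cheap $L^2$ bound and $\sum_t\E\|z_t-z_t^*\|^2$ is sublinear by Lemma~\ref{lm:sum-bound-z}, here I need the almost-sure $\|z_t-z_t^*\|\leq 2\Delta_{z,0}$ from Lemma~\ref{lm:z-two-bound} to invoke the martingale concentration of Lemma~\ref{lm:MDS-var}; this is exactly what forces the strengthening of Assumption~\ref{ass:noise-expectation} to Assumption~\ref{ass:highprob} (in particular the deterministic bound on $(\gdyy G-\gdyy g)z$), and what motivates the larger choice $\gamma=16(1-\beta)/\mu$ so that the coupling term $8\gamma^2 l_{g,2}^2\|y_t-y_t^*\|^2\|z_t-z_t^*\|^2$ appearing in Lemma~\ref{lm:z-distance recursion} is dominated by the $\mu\gamma/2$ contraction, preserving the $(1-\mu\gamma/4)$ effective contraction used in the proof of Lemma~\ref{lm:z-two-bound}.
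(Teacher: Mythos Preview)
Your proposal is correct and follows essentially the same route as the paper's proof: chain the four events via conditional probabilities, apply the pathwise descent inequality, expand $\sum_t\|\eps_t\|$ through \eqref{eq:estimation-recursion}, control the martingale piece with Lemma~\ref{lm:MDS-var} (which needs the uniform $\|z_t-z_t^*\|\leq 2\Delta_{z,0}$ from Lemma~\ref{lm:z-two-bound} and hence Assumption~\ref{ass:highprob}), and handle the bias piece via the weighted-sum bounds of Lemmas~\ref{lm:maintext-lm3}(ii) and~\ref{lm:z-two-bound}(ii), then rescale by $G$. One small correction on motivation: the reason $\gamma$ is enlarged to $16(1-\beta)/\mu$ is not to dominate the coupling term in Lemma~\ref{lm:z-distance recursion} (a larger $\gamma$ actually makes that inequality tighter; it is the extra constraint $1-\beta\leq \mu^2L_1^2/(4l_{g,2}^2)$ in \eqref{eq:beta-high-prob} that absorbs the coupling), but rather to ensure $\mu\gamma>8(1-\beta)$ so that the $\beta$-weighted geometric sum in the proof of Lemma~\ref{lm:z-two-bound}(ii) converges, exactly as $\mu\alpha>4(1-\beta)$ was needed for $y$ in \eqref{eq:second-part1}.
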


\begin{proof}[Proof of Theorem~\ref{thm:high-prob-main-appendix}]
By similar approach as in Lemma~\ref{lm:moving-average error in expectation}, we obtain
\begin{small}
\begin{equation}
    \begin{aligned}
        &\sum_{t=0}^{T-1}\|\eps_t\|
        \leq (1-\beta)\sum_{t=0}^{T-1}\left\|\sum_{i=0}^{t}\beta^{t-i}\left(\hatphi(x_i,y_i,z_i;\xi_i',\zeta_i')-\E_i[\hatphi(x_i,y_i,z_i;\xi_i',\zeta_i')]\right)\right\| + \frac{\eta L_1\beta}{1-\beta}\sum_{t=0}^{T-1}\|\gdphi(x_t)\| \\
        &\quad\quad+ (1-\beta)\sum_{t=0}^{T-1}\left\|\sum_{i=0}^{t}\beta^{t-i}\left(\E_{i}[\hatphi(x_i,y_i,z_i;\xi_i',\zeta_i')]-\gdphi(x_i)\right)\right\|
        + \frac{T\eta L_0\beta}{1-\beta} + \frac{\beta}{1-\beta}\|m_0-\gdphi(x_0)\| \\
        &\leq 4T(1-\beta)\left(\sigma_{f,1} + \frac{l_{f,0}\sigma_{g,2}}{\mu} + 2\sigma_{g,2}\Delta_{z,0}\right)\sqrt{\frac{\log(2T/\delta)}{1-\beta}} + \frac{\eta L_1\beta}{1-\beta}\sum_{t=0}^{T-1}\|\gdphi(x_t)\| \\
        &\quad+ (1-\beta)\sum_{t=0}^{T-1}\sum_{i=0}^{t}\beta^{t-i}L_{x,1}\|y_i-y_i^*\|\|\gdphi(x_i)\| + (1-\beta)\sum_{t=0}^{T-1}\sum_{i=0}^{t}\beta^{t-i}\left(L_{x,0} + L_{x,1}\frac{l_{g,1}l_{f,0}}{\mu} + \frac{l_{g,2}l_{f,0}}{\mu}\right)\|y_i-y_i^*\| \\
        &\quad+ (1-\beta)\sum_{t=0}^{T-1}\sum_{i=0}^{t}\beta^{t-i}l_{g,1}\|z_i-z_i^*\| + \frac{T\eta L_0\beta}{1-\beta} + \frac{\beta}{1-\beta}\|\gdphi(x_0)\| \\
        &\leq 4T(1-\beta)\left(\sigma_{f,1} + \frac{l_{f,0}\sigma_{g,2}}{\mu} + 2\sigma_{g,2}\Delta_{z,0}\right)\sqrt{\frac{\log(2T/\delta)}{1-\beta}} + \left(\frac{\eta L_1\beta}{1-\beta}+\frac{1}{8}\right)\sum_{t=0}^{T-1}\|\gdphi(x_t)\| + \frac{3T\eps'}{32} \\
        &\quad+ T\left(1 + \sqrt{E+\frac{l_{z^*}^2}{4l_{g,1}^2}}\right)\frac{l_{g,1}\eps'}{32L_0} + \frac{T\eta L_0\beta}{1-\beta} + \frac{\beta}{1-\beta}\|\gdphi(x_0)\| \\
    \end{aligned}
\end{equation}
\end{small}%
By \eqref{eq:aaa} (without taking expectation), we have
\begin{small}
\begin{equation*}
    \left(1-\frac{1}{2}\eta L_1\right)\frac{1}{T}\sum_{t=0}^{T-1}\|\gdphi(x_t)\| \leq \frac{\Delta_0}{T\eta} + \frac{1}{2}\eta L_0 + \frac{2}{T}\sum_{t=0}^{T-1}\|\eps_t\|.
\end{equation*}
\end{small}%
Under event $\gE_{\init}\cap\gE_y\cap\gE_z$, plug \eqref{eq:moving-average error sum} into the above inequality, then we have 
\begin{small}
    \begin{align}
        &\left(1-\left(\frac{1}{2} + \frac{2\beta}{1-\beta}\right)\eta L_1 - \frac{1}{4}\right) \frac{1}{T}\sum_{t=0}^{T-1}\E\|\gdphi(x_t)\| \label{eq:h-1} \\
        &\quad\quad\leq 8\left(\sigma_{f,1} + \frac{l_{f,0}\sigma_{g,2}}{\mu} + 2\sigma_{g,2}\Delta_{z,0}\right)\sqrt{(1-\beta)\log\left(\frac{2T}{\delta}\right)} + \frac{3\eps'}{16} + \left(1 + \sqrt{E+\frac{l_{z^*}^2}{4l_{g,1}^2}}\right)\frac{l_{g,1}\eps'}{16L_0} \label{eq:h-2} \\
        &\quad\quad\quad+ 
        \frac{2\eta L_0\beta}{1-\beta} + \frac{\Delta_0}{T\eta} + \frac{1}{2}\eta L_0 + \frac{2\beta}{T(1-\beta)}\|\gdphi(x_0)\|. \label{eq:h-3}
    \end{align}
\end{small}%
Now we proceed to bound \eqref{eq:h-1}, \eqref{eq:h-2} and \eqref{eq:h-3}, respectively.

For left-hand side \eqref{eq:h-1} of the inequality, by \eqref{eq:main-line-1-bound} we have 
\begin{small}
\begin{equation} \label{eq:h-1-bound}
    \left(1-\left(\frac{1}{2} + \frac{2\beta}{1-\beta}\right)\eta L_1 - \frac{1}{4}\right) \frac{1}{T}\sum_{t=0}^{T-1}\|\gdphi(x_t)\| \geq \frac{1}{2T}\sum_{t=0}^{T-1}\|\gdphi(x_t)\|.
\end{equation}
\end{small}%

For \eqref{eq:h-2} on right-hand side of the inequality, we have
\begin{small}
\begin{equation} \label{eq:h-2-bound}
    \begin{aligned}
        8\left(\sigma_{f,1} + \frac{l_{f,0}\sigma_{g,2}}{\mu} + 2\sigma_{g,2}\Delta_{z,0}\right)\sqrt{(1-\beta)\log\left(\frac{2T}{\delta}\right)} 
        &\leq 8\left(\sigma_{f,1} + \frac{l_{f,0}\sigma_{g,2}}{\mu} + 2\sigma_{g,2}\Delta_{z,0}\right)\sqrt{\frac{\mu^2}{64\sigma_{g,1}^2}\frac{\eps'^2}{1024L_0^2}} \\
        &= \frac{\mu}{\sigma_{g,1}}\left(\sigma_{f,1} + \frac{l_{f,0}\sigma_{g,2}}{\mu} + 2\sigma_{g,2}\Delta_{z,0}\right)\frac{\eps'}{32L_0}, \\
    \end{aligned}
\end{equation}
\end{small}%
where for the first inequality we use \eqref{eq:merge-step} in Lemma~\ref{lm:high-prob 1/8L_1 bound for y},
\begin{small}
\begin{equation*}
    \frac{64(1-\beta)\sigma_{g,1}^2}{\mu^2}\log\left(\frac{eT}{\delta}\right) \leq \frac{\eps'^2}{1024L_0^2}.
\end{equation*}
\end{small}%
For \eqref{eq:h-3} on right-hand side of the inequality, by \eqref{eq:main-line-2-bound} we have
\begin{small}
\begin{equation} \label{eq:h-3-bound}
    \begin{aligned}
        \frac{2\eta L_0\beta}{1-\beta} + \frac{\Delta_0}{T\eta} + \frac{1}{2}\eta L_0 + \frac{2\beta}{T(1-\beta)}\|\gdphi(x_0)\|
        \leq \frac{1}{4}\eps' + \frac{1}{4}\eps' + \frac{1}{16}\eps' + \frac{1}{16}\eps' = \frac{5}{8}\eps'.
    \end{aligned}
\end{equation}
\end{small}%
Combining \eqref{eq:h-1-bound}, \eqref{eq:h-2-bound} and \eqref{eq:h-3-bound} together yields
\begin{small}
\begin{equation*}
    \begin{aligned}
        \frac{1}{T}\sum_{t=0}^{T-1}\|\gdphi(x_t)\| 
        &\leq 2\left[\frac{\mu}{\sigma_{g,1}}\left(\sigma_{f,1} + \frac{l_{f,0}\sigma_{g,2}}{\mu} + 2\sigma_{g,2}\Delta_{z,0}\right)\frac{\eps'}{32L_0} + \frac{3\eps'}{16} + \left(1 + \sqrt{E+\frac{l_{z^*}^2}{4l_{g,1}^2}}\right)\frac{l_{g,1}\eps'}{16L_0} + \frac{5\eps'}{8}\right] \\
        &= \eps'\left[\frac{\mu}{16\sigma_{g,1}L_0}\left(\sigma_{f,1} + \frac{l_{f,0}\sigma_{g,2}}{\mu} + 2\sigma_{g,2}\Delta_{z,0}\right) + \left(1 + \sqrt{E+\frac{l_{z^*}^2}{4l_{g,1}^2}}\right)\frac{l_{g,1}}{8L_0} + \frac{13}{8}\right].
    \end{aligned}
\end{equation*}
\end{small}%
Recall constant $G$ in \eqref{eq:eps-high-prob} is defined as
\begin{small}
\begin{equation*}
    \begin{aligned}
        G \coloneqq \frac{\mu}{16\sigma_{g,1}L_0}\left(\sigma_{f,1} + \frac{l_{f,0}\sigma_{g,2}}{\mu} + 2\sigma_{g,2}\Delta_{z,0}\right) + \left(1 + \sqrt{E+\frac{l_{z^*}^2}{4l_{g,1}^2}}\right)\frac{l_{g,1}}{8L_0} + \frac{13}{8}.
    \end{aligned}
\end{equation*}
\end{small}%
Again by \eqref{eq:eps-high-prob}, we have $\eps'=\eps/G$, thus we obtain $\frac{1}{T}\sum_{t=0}^{T-1}\|\gdphi(x_t)\| \leq \eps$.
Also note that 
\begin{small}
\begin{equation*}
    \Pr(\gE_x \cap \gE_y \cap \gE_z \cap \gE_{\init}) = \Pr(\gE_x \mid \gE_y \cap \gE_z \cap \gE_{\init}) \cdot \Pr(\gE_z \mid \gE_y \cap \gE_{\init}) \cdot \Pr(\gE_y \mid \gE_{\init}) \cdot \Pr(\gE_{\init}) \geq (1-\delta)^4 \geq 1 - 4\delta.
\end{equation*}
\end{small}%
Therefore, with probability at least $1-4\delta$ over the randomness in $\mathcal{F}_T$, we have $\frac{1}{T}\sum_{t=0}^{T}\|\gdphi(x_t)\| \leq \eps$. 
\end{proof}

\section{Experimental Parameter Selection}
\subsection{Hyper-representation} \label{sec:exp_set_hr}
In the experiments, we utilized grid search to find the best lower-level and upper-level learning rates within the scope of $(0.001, 0.5)$ for various methods. Specifically, on the SNLI dataset, optimal learning rate pairs are determined as $(0.01, 0.01)$ for MAML, $(0.01, 0.05)$ for ANIL, $(0.01, 0.01)$ for StocBio, $(0.02, 0.1)$ for TTSA, $(0.01, 0.05)$ for SABA, $(0.05, 0.05)$ for MA-SOBA, and $(0.05, 0.1)$ for both BO-REP and SLIP. On ARD dataset, the best combinations are $(0.05, 0.1)$ for both MAML and ANIL, $(0.05, 0.05)$ for StocBio, $(0.1, 0.01)$ for TTSA, $(0.05, 0.05)$ for SABA, $(0.05, 0.1)$ for MA-SOBA, $(0.001, 0.01)$ for BO-REP, and $(0.01, 0.1)$ for SLIP.

The following settings are applied to both SNLI and ARD datasets: For double-loop frameworks such as MAML, ANIL, and StocBio, the inner-loop iteration count is searched among 5, 10, and 20, where 5 is the best choice for these methods. For approaches SABA, MA-SOBA, BO-REP, and SLIP, the step size for the linear system variable $z$ is consistently chosen as $0.01$, based on the optimal parameter tuning from range of $(0.001, 0.1)$. For the fully first-order method F$^2$SA, which incorporates three distinct decision variables, corresponding learning rates are fine-tuned from a range of $(0.001, 0.5)$, with the best configuration being $(0.05, 0.05, 0.01)$. The momentum parameter $\beta$ for MA-SOBA, BO-REP, and SLIP is set to $0.9$. The Lagrangian multiplier $\lambda$ in F$^2$SA is increased by $0.01$ with each outer update. BO-REP updates the lower-level variable at every 2 outer intervals and conducts 3 iterations for each update. For SLIP, the number of warm-start steps for lower-level updates is set to 3. The batch size is set to 50 for all the methods.

\subsection{Data Hyper-cleaning} \label{sec:exp_set_hc}
In data hyper-cleaning, we also employ a grid search technique for all baseliens to determine the optimal lower-level and upper-level learning rates from the range of  $(0.01, 0.1)$. The best learning rate pairings for lower-level and upper-level are $(0.05, 0.05)$ for StocBio, $(0.05, 0.01)$ for TTSA, $(0.1, 0.05)$ for both SABA and MA-SOBA, and $(0.05, 0.05)$ for BO-REP and SLIP. For the F$^2$SA algorithm, which manages three decision variables, the chosen learning rates were $(0.05, 0.05, 0.01)$. Additionally, the step size for addressing the linear system variable $z$ in SABA, MA-SOBA, BO-REP, and SLIP is set as $0.01$.

The number of inner loops in StocBio was fixed as 3, based on a selection range of ${3, 5, 10}$. For BO-REP, the update interval and the number of iterations per lower-level update are consistently set at 2 and 3, respectively. A uniform batch size of 512 was applied for all baselines. Other key experimental settings, such as the momentum parameters for MA-SOBA, BO-REP, and SLIP,as well as the increment of the Lagrangian multiplier in F$^2$SA, are the same as the specifications detailed in Section~\ref{sec:exp_set_hr}.

\section{Experiments with a Revised Epoch Definition}
\label{sec:epoch-def}

In this section, we clarify the notion of ``epoch'' in our previous experiments, where an epoch means a full pass over the validation set (for upper-level variable $x$ update). For a more comprehensive comparison, we re-conduct experiments where an epoch means a full pass over the training set (for lower-level variable $y$ update). In this case there are fewer $x$ updates than the previous run due to the warm-start phase. The results show that SLIP is still empirically better than other baselines.

\subsection{Hyper-representation Learning}

\begin{figure*}[!h]
\begin{center}
\subfigure[Training accuracy on SNLI]{\includegraphics[width=0.24\linewidth]{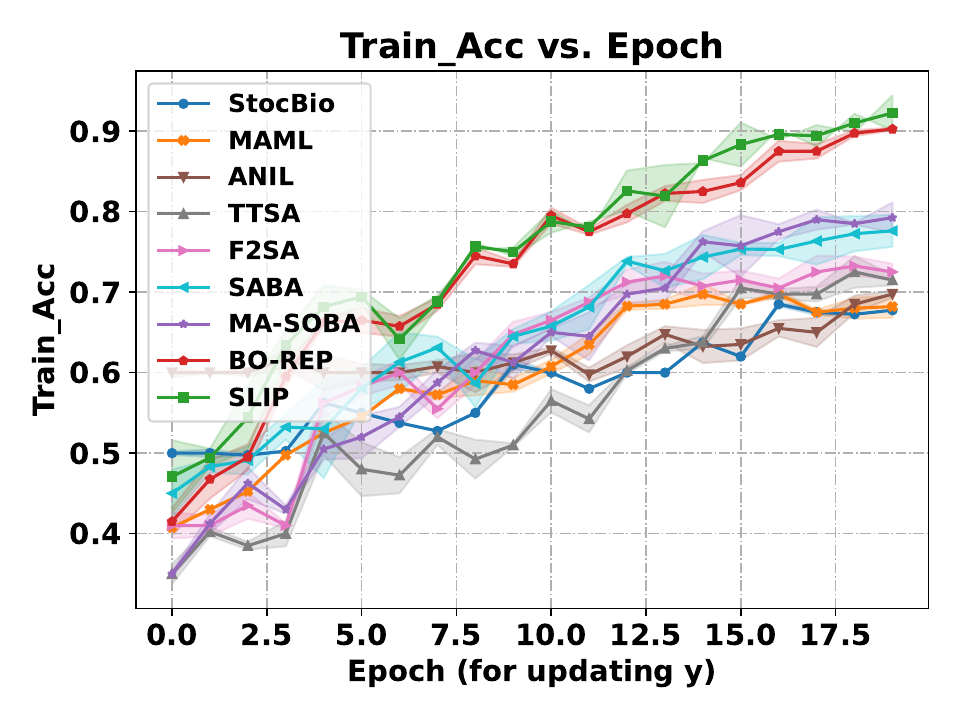}}   \   
\subfigure[Test accuracy on SNLI]{\includegraphics[width=0.24\linewidth]{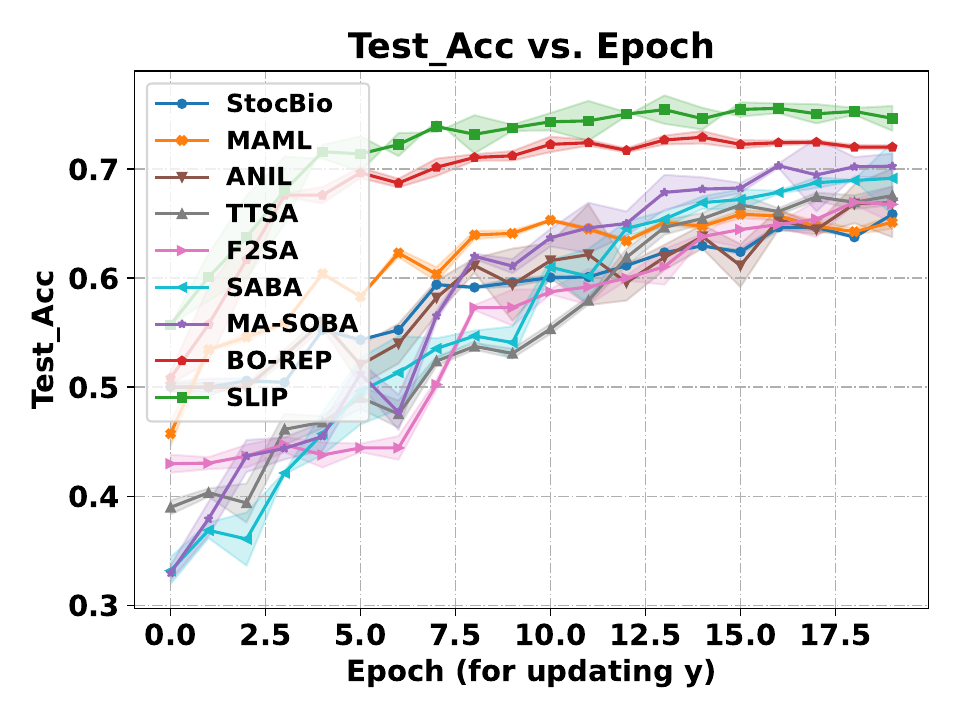}} \ 
\subfigure[Training accuracy on ARD]{\includegraphics[width=0.24\linewidth]{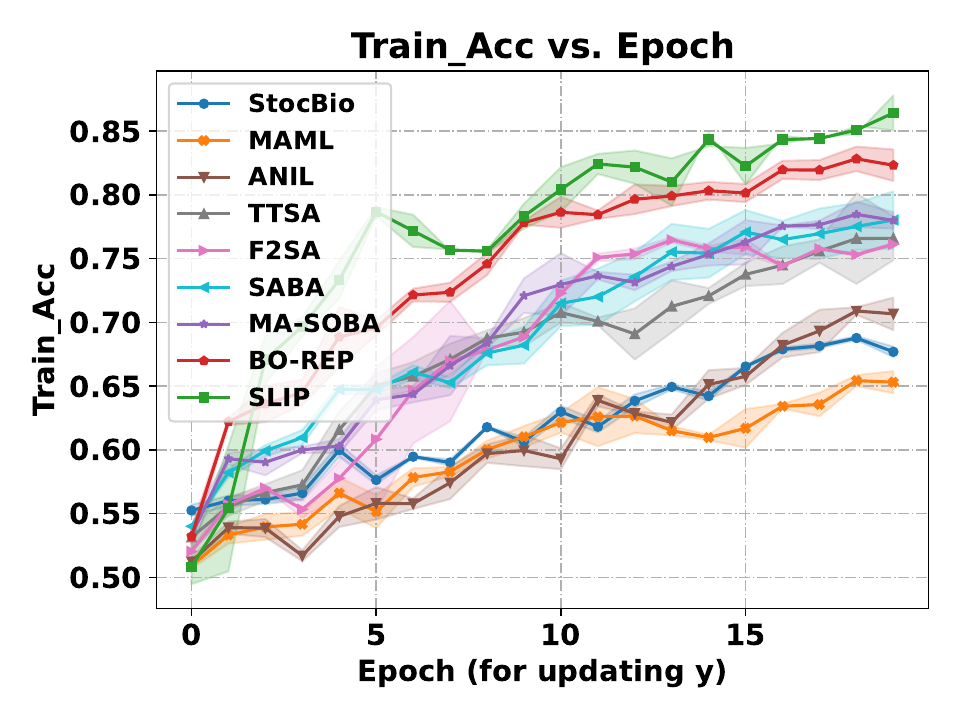}} \
\subfigure[Test accuracy on ARD]{\includegraphics[width=0.24\linewidth]{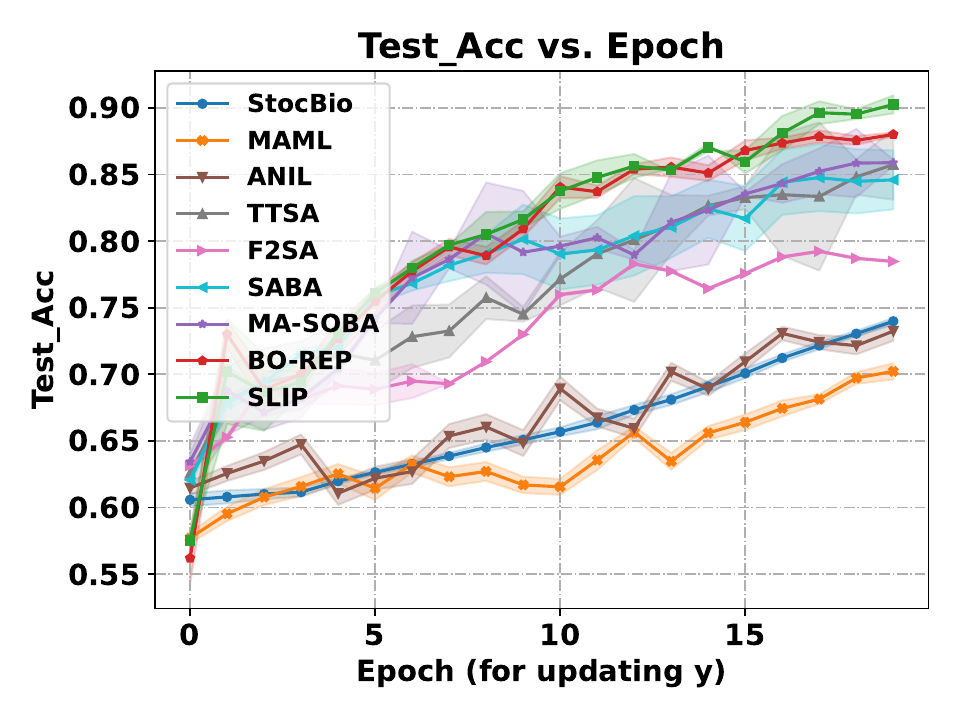}}
\end{center}
\vspace*{-0.2in}
\caption{Comparison with bilevel optimization baselines on Hyper-representation. Figure (a) and (b) are the results in the SNLI dataset. Figures (c) and (d) are the results of the Amazon Review Dataset (ARD). }
\label{fig:acc_HR_y}
\end{figure*}

\subsection{Data Hyper-cleaning}

\begin{figure*}[!h]
\begin{center}
\subfigure[Training acc with $p=0.2$]{\includegraphics[width=0.245\linewidth]{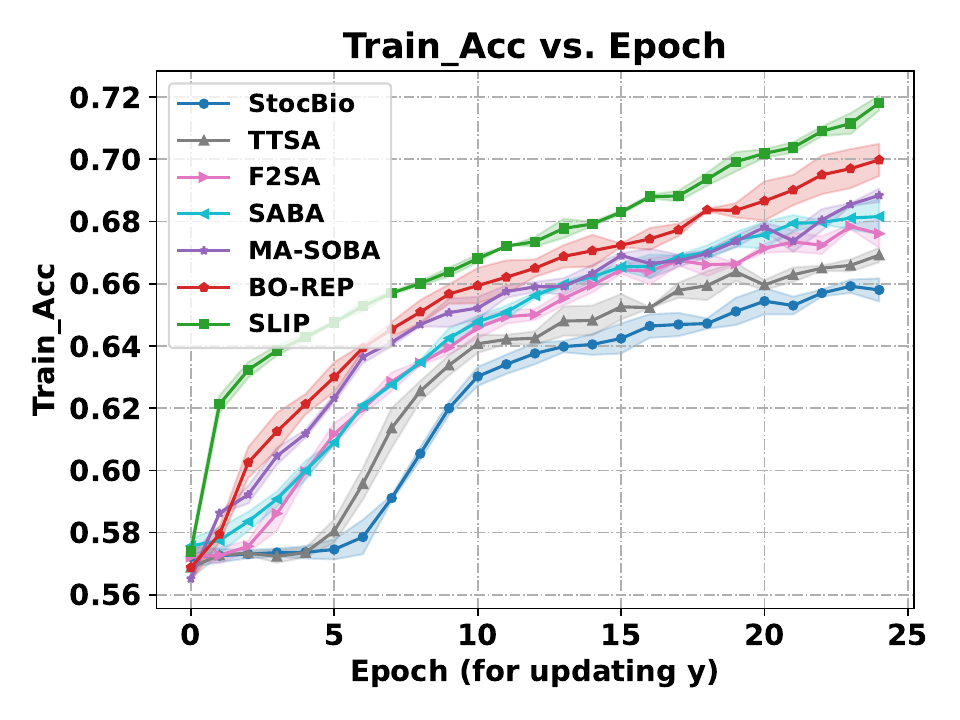}}   
\subfigure[Test acc with $p=0.2$]{\includegraphics[width=0.245\linewidth]{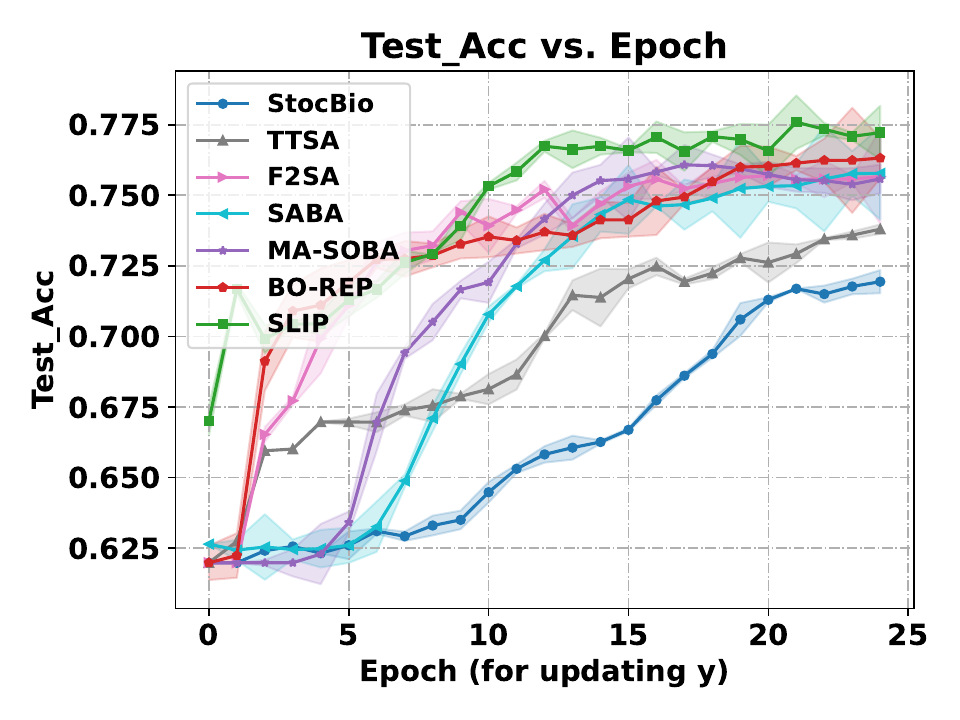}}   
\subfigure[Training acc with $p=0.4$]{\includegraphics[width=0.245\linewidth]{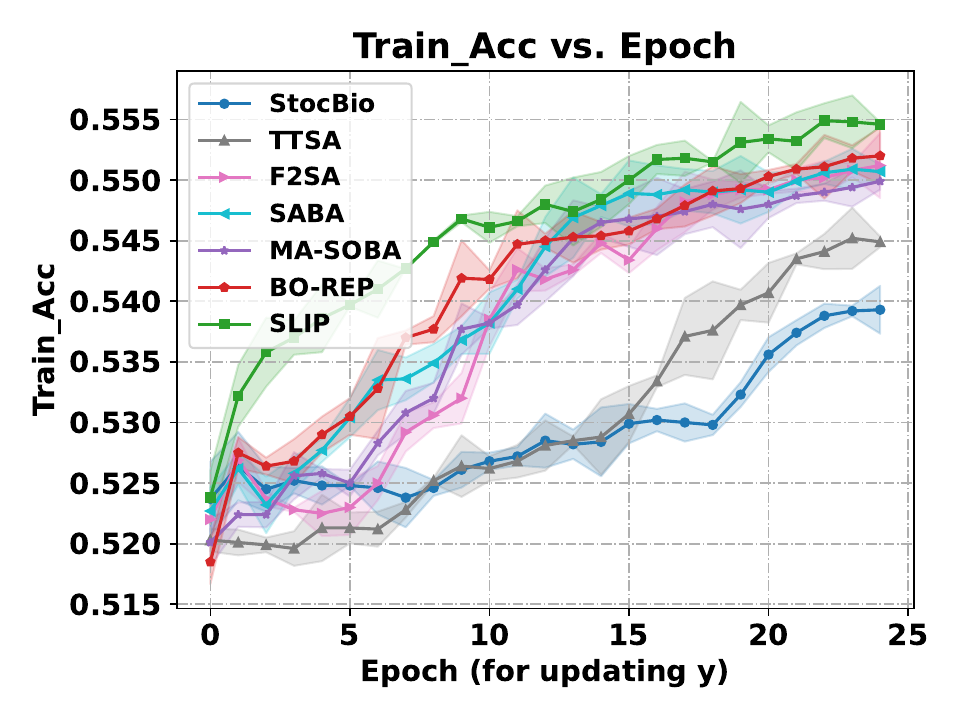}}  
\subfigure[Test acc with $p=0.4$]{\includegraphics[width=0.245\linewidth]{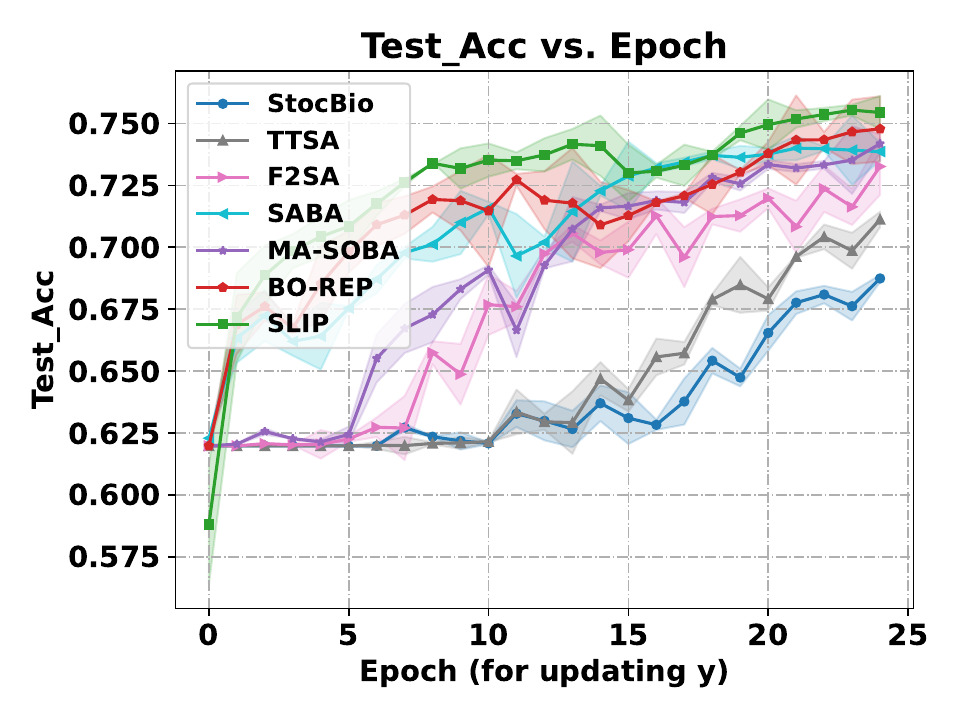}}
\end{center}
\vspace*{-0.2in}
\caption{Comparison with bilevel optimization baselinses on data hyper-cleaning. Figure (a), (b) are the results with the corruption rate $p=0.2$. Figure (c), (d) are the results with the corruption rate $p=0.4$. }
\label{fig:acc_HC_y}
\end{figure*}

\end{document}